\newcommand\myshade{85}
\colorlet{mylinkcolor}{YellowOrange}
\colorlet{mycitecolor}{Aquamarine}
\colorlet{myurlcolor}{violet}
\renewcommand{\hat}{\widehat}
\renewcommand{\tilde}{\widetilde}
\renewcommand{\bar}{\widebar}
\newcommand{\bfm}[1]{\ensuremath{\boldsymbol{#1}}} 
\def\bbone{\mathbbm{1}} 
\def\ba{\bfm a}   \def\bA{\bfm A}  
\def\bb{\bfm b}   \def\bB{\bfm B}  
   \def\bD{\bfm D}  
\def\be{\bfm e}   \def\bE{\bfm E}  \def\EE{\mathbb{E}}
\def\bff{\bfm f}    
\def\bg{\bfm g}     
   \def\bH{\bfm H}  
\def\bi{\bfm i}   \def\bI{\bfm I}  
\def\bj{\bfm j}   \def\bJ{\bfm J}
   \def\bM{\bfm M}  
     \def\OO{\mathbb{O}}
     \def\PP{\mathbb{P}}
   \def\bQ{\bfm Q}  \def\QQ{\mathbb{Q}}
     \def\RR{\mathbb{R}}
\def\bu{\bfm u}   \def\bU{\bfm U}  
\def\bv{\bfm v}   \def\bV{\bfm V}  
   \def\bX{\bfm X}  
\def\bz{\bfm z}   \def\bZ{\bfm Z}
\def\calA{{\cal  A}} 
\def\calB{{\cal  B}} \def\cB{{\cal  B}}
\def\calD{{\cal  D}} \def\cD{{\cal  D}}
\def\calE{{\cal  E}} \def\cE{{\cal  E}}
\def\calF{{\cal  F}} \def\cF{{\cal  F}}
\def\calH{{\cal  H}} \def\cH{{\cal  H}}
\def\calI{{\cal  I}} 
 \def\cK{{\cal  K}}
\def\calL{{\cal  L}} \def\cL{{\cal  L}}
\def\calM{{\cal  M}} \def\cM{{\cal  M}}
 \def\cN{{\cal  N}}
\def\calR{{\cal  R}} \def\cR{{\cal  R}}
 \def\cS{{\cal  S}}
 \def\cT{{\cal  T}}
 \def\cV{{\cal  V}}
\def\calX{{\cal  X}} \def\cX{{\cal  X}}
\def\calY{{\cal  Y}} \def\cY{{\cal  Y}}
\def\calZ{{\cal  Z}} \def\cZ{{\cal  Z}}
\newcommand{\bfsym}[1]{\ensuremath{\boldsymbol{#1}}}
            \def\bDelta {\bfsym {\Delta}}
 \def\btheta{\bfsym {\theta}}
              \def\bSigma{\bfsym \Sigma}
\providecommand{\abs}[1]{\left\lvert#1\right\rvert}
\providecommand{\norm}[1]{\left\lVert#1\right\rVert}
\providecommand{\bbrackets}[1]{\llbracket #1 \rrbracket}
\providecommand{\defeq}{:=}
\DeclarePairedDelimiterX{\infdivx}[2]{(}{)}{%
  #1 \; \delimsize\| \; #2%
}
\DeclareMathOperator{\diag}{diag}
\newcommand{\E}[1]{{\mathbb{E}} \left[ #1 \right]}
\DeclareMathOperator{\Var}{Var}
\DeclareMathOperator{\vect}{vec}
\newtheorem{definition}{Definition}[section]
\newtheorem{assumption}[definition]{Assumption}
\newtheorem{lemma}[definition]{Lemma}
\newtheorem{proposition}[definition]{Proposition}
\newtheorem{theorem}[definition]{Theorem}
\theoremstyle{definition}
\newtheorem{remark}{Remark}
\definecolor{royalpurple}{rgb}{0.47, 0.32, 0.66}
\definecolor{greenfresh}{HTML}{00897B}
\definecolor{bluefresh}{HTML}{1E88E5}
\definecolor{redfresh}{HTML}{E53935}
\definecolor{royalpurple}{rgb}{0.47, 0.32, 0.66}
\def\beq{\begin{equation}}
\def\eeq{\end{equation}}
\def\bet{\begin{theorem}}
\def\eet{\end{theorem}}
\def\bel{\begin{lemma}}
\def\eel{\end{lemma}}
\def\tr{\mbox{tr}}
\def\eps{\varepsilon}
\def\lam {\lambda}
\def\cond{\;|\;}
\newcommand{\mybib}{main,nn-lda}
\DeclareMathOperator*{\mat1}{\text{mat}_1}
\DeclareMathOperator*{\vec1}{\text{vec}}
\DeclareMathOperator*{\matk}{{\text{mat}}_m}
\DeclareMathOperator*{\mats}{{\text{mat}}_S}
\def\ideal{(\text{\footnotesize ideal})}
\renewcommand{\hat}{\widehat}
\renewcommand{\tilde}{\widetilde}
\newcommand{\widebar}{\overline}
\def\spacingset#1{\renewcommand{\baselinestretch}%
{#1}\small\normalsize} \spacingset{1}
\def\TITLE{High-Dimensional Tensor Discriminant Analysis: Low-Rank Discriminant Structure, Representation Synergy, and Theoretical Guarantees}
\newcommand{\blind}{0}
\begin{document}
\title{\bf \TITLE}

\if0\blind
{
  \author{
    Elynn Chen$^\dag$ \hspace{8ex}
    Yuefeng Han$^\sharp$ \hspace{8ex}
    Jiayu Li$^\flat$ \hspace{8ex} \\ \normalsize
    \medskip
    $^{\dag,\flat}$New York University \hspace{8ex}
    $^{\sharp}$ University of Notre Dame
    }
    \date{}
    \maketitle
} \fi

\if1\blind
{
  \author{}
  \date{}
  \maketitle
  \vspace{8ex}
} \fi

\bigskip
\begin{abstract}
\spacingset{1.38}
High-dimensional tensor-valued predictors arise in modern applications, increasingly as learned representations from neural networks. Existing tensor classification methods rely on sparsity or Tucker structures and often lack theoretical guarantees. Motivated by empirical evidence that discriminative signals concentrate along a few multilinear components, we introduce CP low-rank structure for the discriminant tensor, a modeling perspective not previously explored. Under a Tensor Gaussian Mixture Model, we propose high-dimensional CP low-rank Tensor Discriminant Analysis (CP-TDA) with Randomized Composite PCA (\textsc{rc-PCA}) initialization, that is essential for handling dependent and anisotropic noise under weaker signal strength and incoherence conditions, followed by iterative refinement algorithm. We establish global convergence and minimax-optimal misclassification rates.

To handle tensor data deviating from tensor normality, we develop the first semiparametric tensor discriminant model, in which learned tensor representations are mapped via deep generative models into a latent space tailored for CP-TDA. Misclassification risk decomposes into representation, approximation, and estimation errors. Numerical studies and real data analysis on graph classification demonstrate substantial gains over existing tensor classifiers and state-of-the-art graph neural networks, particularly in high-dimensional, small-sample regimes.
\end{abstract}

\noindent%
{\it Keywords:}
Tensor classification; 
Linear discriminant analysis; 
Tensor iterative projection; 
CP low-rank; 
High-dimensional data; 
Semiparametric tensor discriminant analysis;
Deep generative model. 
\vfill


\newpage
\spacingset{1.9} 
\addtolength{\textheight}{.1in}%


\section{Introduction}  \label{sec:intro}

High-dimensional tensor-valued data pervade modern scientific applications, from neuroimaging \citep{zhou2013tensor} and economics \citep{chen2021statistical,chen2020constrained,mo2025act} to recommendation systems \citep{bi2018multilayer} and climate analysis \citep{chen2024semi}. Beyond naturally occurring tensors, tensor predictors increasingly arise as learned representations from neural networks or outputs of multimodal integration pipelines \citep{hao2013linear,huang2025seeing}. These settings share two key challenges: (i) tensor dimensionality grows multiplicatively across modes, and (ii) discriminant structure governing class differences is often concentrated in a low-dimensional multilinear subspace \citep{wen2024tensorview,wu2025tensor}. Existing classification approaches, including sparse tensor linear discriminant analysis (LDA) \citep{pan2019covariate}, tensor regressions \citep{zhou2013tensor,XiaZhangZhou2022}, and deep neural networks \citep{Chien2018,jahromi2024variational}, address parts of this problem, but do not explicitly impose or exploit low-rank structure in the \emph{discriminant tensor} itself, nor do they provide comprehensive high-dimensional theoretical guarantees for misclassification risk.

This paper develops a new \emph{tensor discriminant analysis} (TDA) framework addressing both issues. Motivated by empirical evidence that discriminant signals in tensor predictors are often well-approximated by a few multilinear directions, especially for learned tensor features \citep{wen2024tensorview,wu2025tensor}, we introduce \emph{CP low-rank discriminant tensors}. To the best of our knowledge, the low-rank structure of the discriminant tensor has not been systematically studied, despite the prevalence of low multilinear rank in tensor representations. 
Under a Tensor Gaussian Mixture Model (TGMM) with common mode-wise covariances, we develop {\it the first tensor LDA with a CP low-rank discriminant tensor} (CP-TDA), which preserves multiway structure, reduces effective dimensionality, and yields interpretable mode-specific discriminant directions. 
The discriminant tensor is estimated via a new Randomized Composite PCA (\textsc{rc-PCA}) initialization that lies within the basin of attraction of the global optimum, followed by iterative refinement.

Modern tensor data, however, especially those arising from deep learning representations or multimodal integration, rarely follow tensor normal distributions. To address this broader regime, we propose Semiparametric Tensor Discriminant Networks (STDN), a semiparametric extension combining tensor discriminant analysis with deep learning. Unlike existing semiparametric LDA based on restrictive elementwise monotone transformations, we introduce a tensor Gaussianizing flow that transforms tensor representations, from any neural network encoder, into a latent space that approximately satisfies a TGMM. In this space, CP-TDA is the Bayes-optimal classifier under the surrogate model. This offers a principled alternative to dense neural network classifier heads: it preserves the interpretability of discriminant analysis while leveraging the representational power of deep neural networks. To the best of our knowledge, this is {\it the first use of deep learning to construct semiparametric tensor discriminant analysis with flexible nonlinear transformations}.

Beyond methodological contributions, we also provide new theoretical guarantees for CP-TDA and its semiparametric extension. For CP-TDA, we develop perturbation analysis for \textsc{rc-PCA} under dependent, heteroskedastic noise induced by sample discriminant tensors, relaxing the stringent eigen-gap and incoherence conditions used in prior CP decomposition work \citep{anandkumar2014guaranteed,sun2017provable,han2023cp}. This makes \textsc{rc-PCA} valuable beyond tensor LDA, including CP decomposition and regression where existing initializations fail. We establish non-asymptotic global convergence rates for the high-dimensional CP low-rank discriminant tensor and derive the first minimax-optimal misclassification rates for CP-TDA. For the semiparametric framework, we obtain excess misclassification risk bounds that separate three sources of error: CP low-rank approximation error, statistical error under CP-TDA, and Gaussianizing flow representation error, thereby providing guarantees absent in current deep-learning-based tensor classifiers.

Finally, we illustrate the practical benefits of our framework on two bioinformatics graph datasets, using deep learning encoders to produce tensor representations. The proposed semiparametric tensor LDA consistently outperforms both existing tensor classifiers on tensor features and state-of-the-art graph neural networks. These results underscore the central message of this work: carefully designed statistical structure, here, CP low-rank discriminant modeling under TGMM, can be effectively combined with modern deep learning to achieve both interpretability and strong empirical performance, not as a competitor to deep learning but as a statistically principled complement.

\subsection{Related Work and Our Distinction}

\textbf{Tensor decomposition and regression.}
Tensor methods have become essential for multiway data analysis, with advances in tensor decomposition \citep{sun2017provable,zhang2018tensor,chen2024distributed}, tensor regression \citep{zhou2013tensor,li2017parsimonious,XiaZhangZhou2022,chen2024factor}, and tensor clustering \citep{SunLi2019,HanLuoWangZhang2022,LuoZhang2022,hu2023multiway}. Most work exploits low-rank structure, with Tucker-type models widely used for parameter reduction \citep{XiaZhangZhou2022,HanWillettZhang2022}. In contrast, CP-based low-rank modeling is far less developed: \cite{zhou2013tensor} employ a block relaxation algorithm for a highly non-convex CP tensor generalized linear model without principled initialization, leading to unreliable convergence in high dimensions (see Appendix \ref{sec:zhoulimit}).

\noindent\textbf{High-dimensional vector discriminant analysis.}
Vector-based discriminant analysis extends Fisher's LDA to high dimensions through sparse and direct LDA methods \citep{mai2012direct,cai2019high}, feature screening for multiclass LDA \citep{Pan2016}, and high-dimensional quadratic discriminant analysis \citep{cai2021convex}. However, these approaches operate on vectorized data, losing multiway structure and mode-specific interactions inherent in tensor data.

\noindent\textbf{Tensor classification.}
Recent tensor classifiers include support tensor machines \citep{hao2013linear,guoxian2016}, tensor logistic regression \citep{Liuregression2020,SONG2023430}, and tensor neural networks \citep{Chien2018,jahromi2024variational,wen2024tensorview}. Tensor multilinear discriminant analysis \citep{lai2013sparse,Franck2023} extracts multilinear features but relies on subsequent vector classifiers. While empirically effective, these methods typically lack theoretical guarantees and ignore mode-wise covariance structure. On the theoretical side, \cite{HanWillettZhang2022} analyze tensor logistic regression under i.i.d. entries for each tensor predictor, while \cite{pan2019covariate} study tensor LDA with sparsity in a TGMM, and \cite{li2022tucker} and \cite{wang2024parsimonious} develop Tucker low-rank and envelope-based discriminant structures under group sparsity. Against this backdrop, we develop CP low-rank tensor LDA under a TGMM with common mode-wise covariances and extend it to semiparametric tensor LDA with a learned Gaussianizing flow, providing non-asymptotic misclassification bounds that account for both low-rank structure and latent distributional mismatch.

\subsection{Notation and Paper Organization}\label{sec:notation}

Let $[M]=\{1,\ldots,M\}$ and $\bbone(\cdot)$ denote the indicator function. For matrix $\bA = (a_{ij})\in \RR^{m\times n}$, the Frobenius and spectral norms are $\|\bA\|_{\rm F} = (\sum_{ij} a_{ij}^2)^{1/2}$ and $\|\bA\|_{2}$, respectively. For squared matrix $\bA = (a_{ij})\in \RR^{m\times m}$, denote its smallest and largest eigenvalues as $\lambda_{\min}(\bA)$ and $\lambda_{\max}(\bA)$, respectively. For sequences $\{a_n\}$ and $\{b_n\}$, we write $a_n=O(b_n)$ if $|a_n|\leq C |b_n|$, $a_n\asymp b_n$ if $1/C \leq a_n/b_n\leq C$, and $a_n=o(b_n)$ if $\lim_{n\to\infty} a_n/b_n =0$, for some constant $C$ and sufficiently large $n$. Similarly, $a_n\lesssim b_n$ (resp. $a_n\gtrsim b_n$) means $a_n\le Cb_n$ (resp. $a_n\ge Cb_n$). 

An $M$-th order tensor is a multi-dimensional array $\calX \in \RR^{d_1 \times \cdots \times d_M}$, with index $\calI = (i_1, \dots, i_M)$. The inner product $\langle \mathcal{X},\mathcal{Y} \rangle = \sum_{\calI} \calX_{\calI} \calY_{\calI}$ induces the Frobenius norm $\| \mathcal{X} \|_{\rm F} = \langle \mathcal{X},\mathcal{X} \rangle^{1/2}$. For $\mathnormal{A} \in \mathbb{R}^{\tilde{d_m} \times d_m}$, the mode-$m$ product $\mathcal{X} \times_m \mathnormal{A}$ yields an $M$-th order tensor of size $d_1 \times \cdots \times d_{m-1} \times \tilde{d_m} \times d_{m+1} \times \cdots \times d_M$, where $(\mathcal{X} \times_m A)_{i_1, \ldots, i_{m-1}, j, i_{m+1}, \ldots, i_M} = \sum_{i_m=1}^{d_m} \mathcal{X}_{i_1, \ldots, i_M} A_{j, i_m}$. The mode-$m$ matricization $\text{mat}_m(\mathcal{X})$ reshapes $\cX$ into a $d_m \times \prod_{k\neq m} d_k$ matrix, where element $\mathcal{X}_{i_1\dots i_M}$ maps to position $(i_m, j)$, with $j=1+\sum\nolimits_{k \neq m} (i_k -1) \prod_{l<k,l \neq m}d_l$. For $S \subseteq \{1, \dots, M\}$, the multi-mode matricization ${\rm mat}_S(\calX)$ yields a $\prod_{m \in S} d_m$-by-$\prod_{m \notin S} d_m$ matrix with row index
$i = 1 + \sum_{m \in S} (i_m - 1) \prod_{\ell \in S, \ell < m} d_\ell$ and column index
$j = 1 + \sum_{m \notin S} (i_m - 1) \prod_{\ell \notin S, \ell < m} d_\ell$. Vectorization is denoted $\text{vec}(\mathcal{X}) \in \mathbb{R}^d$ with $d=\prod_{m=1}^M d_m$. 

The Tensor Normal distribution generalizes matrix normal to higher-orders \citep{hoff2011TN}. For \(\calZ \in \RR^{d_1 \times \cdots \times d_M}\) i.i.d. \(N(0,1)\) elements, mean tensor \(\mathcal{M} \in \mathbb{R}^{d_1 \times \cdots \times d_M}\) and mode covariance matrices \(\Sigma_m \in \mathbb{R}^{d_m \times d_m}\) for \(m\in [M]\), define \(\calX=\calM + \calZ \times_{m=1}^M \Sigma_m^{1/2}\). 
Then, \(\calX\) follows a tensor normal distribution \(\calX \sim \cT\cN(\cM; \bSigma)\) with \(\bSigma := [\Sigma_m]_{m=1}^M\), which is equivalent to \(\vect(\calX) \sim \cN(\vect(\cM); \Sigma_M\otimes\cdots\otimes\Sigma_1)\), where \(\otimes\) denotes the Kronecker product. 

\noindent\textbf{Organization.}
The rest of the paper is organized as follows. Section~\ref{sec:model} introduces tensor discriminant analysis with a CP low-rank discriminant tensor, including \textsc{rc-PCA} initialization and an iterative refinement procedure. Section~\ref{sec:theorems} establishes theoretical guarantees with estimation error bounds and minimax-optimal misclassification risk rates. Section~\ref{sec:Tensor LDA-TNN} proposes semiparametric CP-TDA with a learned Gaussianizing flow based on deep neural networks. Section~\ref{sec:simu} reports simulation studies, and Section~\ref{sec:appl} illustrates empirical performance on real world data. Section~\ref{sec:conclude} concludes. All proofs, lemmas and additional experiments are deferred to the supplementary material.

\section{CP Tensor Discriminant Analysis: The Parametric Model}
\label{sec:model}

In this section, we develop the parametric foundation of our approach by studying tensor discriminant analysis under the Tensor Gaussian Mixture Model. We introduce a CP low-rank structure for the discriminant tensor and formulate the resulting CP-structured Tensor LDA (CP-TDA) estimator together with its high-dimensional estimation procedure. This parametric framework serves as the theoretical backbone of our methodology and provides key structural insights (low-rank discriminants, initialization requirements, and contraction behavior) that enable the semiparametric representation-learning extension with neural network encoders and flows in Section \ref{sec:Tensor LDA-TNN}.

\subsection{Tensor Gaussian Mixture Model and Oracle Tensor LDA}
\label{sec:oracle lda}
We consider a high-dimensional tensor predictor $\calX\in\RR^{d_1 \times d_2 \cdots \times d_M}$ and class label $Y \in [K]$ under the Tensor Gaussian Mixture Model (TGMM) \citep{mai2021doubly},  
\begin{equation} \label{eqn:tgmm2}
(\calX \cond Y = k) \sim \cT\cN(\cM_k; \bSigma_k),
\quad \pi_k \defeq \PP(Y = k), 
\quad \sum\nolimits_{k=1}^K \pi_k = 1, 
\quad \text{for any } k \in [K],
\end{equation}
where $\calM_k$ is the class mean tensor, $\bSigma_k= [\Sigma_m]_{m=1}^M$ represents within-class covariance matrices, and $0<\pi_k<1$ is the prior probability for class $k$. 
We focus on binary classification ($K = 2$) with common covariance $\bSigma_1=\bSigma_2=\bSigma$. 
Given independent samples from the two tensor-normal distributions, the goal is to classify future observations.

When parameters $\btheta := (\pi_1, \pi_2, \cM_1, \cM_2, \bSigma)$ are known, the Bayes-optimal tensor LDA rule is:
\begin{equation}
\label{eqn:lda-rule}
\Upsilon(\cX) 
= \bbone\big\{ \langle \cX - \cM, \; \calB \rangle +\log (\pi_2/\pi_1 ) \ge 0 \big\},
\end{equation}
where $\bbone(\cdot)$ is the indicator function, $\cM = (\cM_1 + \cM_2)/2$, $\cD=(\cM_2-\cM_1)$, and the discriminant tensor $\calB = \cD \times_{m=1}^M \Sigma_m^{-1}$. 
This generalizes classical vector LDA to the tensor setting and is optimal under TGMM \citep{mai2021doubly}. The optimal misclassification error is 
$R_{\rm opt}=\pi_1\phi(\Delta^{-1}\log(\pi_2/\pi_1)-\Delta/2)+\pi_2(1-\phi(\Delta^{-1}\log(\pi_2/\pi_1)+\Delta/2)),$
where $\phi$ is the standard normal CDF and $\Delta=\sqrt{\langle \calB, \; \cD \rangle}$ is the signal-to-noise ratio (SNR). 

Classical vector LDA can be recovered by vectorizing $\calX$ but incurs severe computational and statistical burdens. 
Existing high-dimensional LDA methods impose sparsity on a vectorized discriminant, ignoring multiway structure and becoming infeasible even for moderate tensor dimensions \citep{mai2012direct,cai2019high}; for instance, the adaptive procedure in \cite{cai2019high} involves linear programs with 27,000 constraints for a $30 \times 30 \times 30$ tensor.
Existing tensor discriminant analysis methods impose sparsity on the discriminant tensor \citep{pan2019covariate} or Tucker low-rank tensor envelope \citep{wang2024parsimonious}, but none model a CP low-rank discriminant tensor.

Empirical evidence shows that discriminative signals in tensor data often concentrate along a few multilinear directions \cite[see][and our empirical results]{wen2024tensorview}. When class separation is driven by specific discriminative directions in each mode, their multilinear interaction naturally forms a rank-1 outer product structure: ($w \cdot \ba_1 \circ \ba_2 \circ \cdots \circ \ba_M$), where $\ba_m$ identifies the key direction in mode $m$ and $\circ$ denotes the tensor outer product. When multiple discriminative patterns exist, this motivates a CP low-rank structure with rank $R\ll d_m$ for the discriminant tensor:
\begin{equation}\label{eqn:lda-cp}
    \calB =\sum\nolimits_{r=1}^R w_r \cdot (\ba_{r1}\circ\ba_{r2}\circ\cdots\circ \ba_{rM}), 
\end{equation}
where each rank-1 component captures one multilinear pattern, $w_r>0$ represents its signal strength, and $\ba_{rm}$ are unit vectors in $\RR^{d_m}$. 
Beyond empirical motivation, low-rank structure is essential for consistent high-dimensional LDA. As shown by \cite{cai2021convex}, no data-driven method can achieve optimal misclassification error $R_{\rm opt}$ in high dimensions ($d\gtrsim n$) with unknown means, even given identity covariances. Under the CP low-rank assumption, however, our CP-TDA rule with carefully designed estimation of the discriminant tensor achieves minimax optimal misclassification rates, as established in Theorems \ref{thm:class-upp-bound} and \ref{thm:class-lower-bound}.


\begin{remark}
While Tucker decomposition $\calB =\cF\times_1 \bU_1\times_2\cdots\times_M \bU_M$ with core tensor $\calF \in \RR^{r_1 \times r_2 \times \cdots \times r_M}$ and orthogonal loading matrices $\bU_i \in \RR^{d_i \times r_i}$ offers an alternative low-rank structure, CP provides distinct advantages for discriminant analysis. First, when class separation is driven by specific discriminative directions in each mode, these directions naturally interact through outer products to form rank-1 CP components, making CP the natural decomposition choice. Second, CP offers methodological advantages \citep{han2023cp,erichson2020randomized}: it is uniquely identified up to sign changes and permutations, avoiding Tucker's ambiguity from invertible transformations of the core tensor and loading matrices. Although CP can be viewed as Tucker with a superdiagonal core, we allow non-orthogonal CP bases $\{ \ba_{rm}, r\in [R] \}$, offering greater flexibility than Tucker's typical orthonormal representation. Moreover, CP's signal strength sequence is more parsimonious than Tucker's core tensor, enhancing both interpretability and estimation efficiency. 
\end{remark}

\begin{remark}
Our framework can readily incorporate covariates in the same way as \cite{pan2019covariate}: first regress out covariate effects from the tensor observations, then apply CP-TDA to the residual tensors. The theoretical guarantees in our paper extend naturally to this preprocessing step.    
\end{remark}

\subsection{Estimation of CP Low-Rank Discriminant Tensor} \label{sec:method}


Given the oracle Tensor LDA rule, tensor classification reduces to estimating the discriminant tensor $\cB$ from data and plugging the estimate into the rule. The natural sample analogue, obtained from sample means $\widebar\calX^{(k)}$ and estimated mode-wise covariances $\widehat\Sigma_m$, is
\begin{equation}\label{eqn:lda-discrim-tensor}
\widehat\calB = (\widebar\calX^{(2)}-\widebar\calX^{(1)}) \times_{m=1}^{M} \widehat\Sigma_m^{-1},
\end{equation}
where observations $\calX_1^{(k)}, \ldots, \calX_{n_k}^{(k)}$ are i.i.d. from class $k=1,2$, $\widehat\Sigma_m = (n_1+n_2)^{-1} d_{-m}^{-1}\sum_{k=1}^2\sum_{i=1}^{n_k}$ $ \matk(\calX_{i}^{(k)}-\widebar\calX^{(k)}){\matk}^{\top}(\calX_{i}^{(k)}-\widebar\calX^{(k)})$ with $d_{-m}=d/d_m$ for $m\in[M]$, and for identifiability we rescale $\widehat\Sigma_M $ by a factor $ \hat C_{\sigma}^{-1}$ with $\hat C_{\sigma} = \prod_{m=1}^M \hat\Sigma_{m,11}/\hat{\Var}(\cX_{1,1\cdots1})$. 
With $n=\min(n_1, n_2)$, following \cite{drton2021existence}, $\hat\Sigma_m$ is positive definite and \eqref{eqn:lda-discrim-tensor} is well-defined with probability 1 if $n_{}> d_m/d_{-m}$. This condition is mild for comparable mode dimensions. 
The sample discriminant tensor $\hat\calB$ serves as a perturbed version of the true $\calB$ that can be substantially improved under CP low-rankness \eqref{eqn:lda-cp}.

Unlike standard tensor decomposition settings, the perturbation error $\widehat{\calB}-\calB$ exhibits complex dependence and anisotropy, inherited from cross-sectional dependence among tensor normal distributions and mode-wise covariance inversions. This leads to noisier perturbations and additional analytical difficulty. Moreover, as discussed in the previous subsection, the sample discriminant tensor does not leverage low-rankness and thus cannot achieve optimal misclassification risk.


The next subsections introduce our solution to denoise $\hat\calB$: a new Randomized Composite PCA (\textsc{rc-PCA}) initialization superior to existing CP initialization, followed by an iterative refinement procedure. Together they yield a reliable estimator of the high-dimensional CP low-rank discriminant tensor $\calB$.

\subsubsection{\textsc{rc-PCA}: CP Initialization under Weaker Incoherence}

Estimating a CP low-rank discriminant tensor requires an iterative refinement procedure beginning with the sample estimator $\widehat{\mathcal{B}}$ in \eqref{eqn:lda-discrim-tensor}; details appear in Section \ref{sec:iterative-refinement}.
Because denoising $\hat\cB$ is a highly non-convex problem with exponentially many stationary points \citep{arous2019landscape}, initialization becomes the central bottleneck: without a suitable starting point within a narrow basin of attraction, iterative refinement fails to reach the true discriminant tensor at the optimal statistical rate. The difficulty is heightened here because $\widehat{\mathcal{B}}-\mathcal{B}$ exhibits complex dependent noise.

To address this, we introduce Randomized Composite PCA (\textsc{rc-PCA}), a new initialization method (Algorithm \ref{alg:initialize-cp}). By combining randomized projections with composite PCA, \textsc{rc-PCA} relaxes the signal-to-noise ratio and incoherence requirements among CP bases, yielding initial CP basis estimates lying within the contraction region of the refinement algorithm. With polynomially many random attempts, \textsc{rc-PCA} reaches the attraction ball of the global optimum with high probability (Theorem \ref{thm:cp-initilization}).

Our strategy handles two scenarios: significant gaps between signal strengths $\{w_r, r \in [R]\}$, and comparable signal strengths. For the multi-mode matricization,
\[ {\rm mat}_S(\calB) = \sum\limits_{r=1}^R w_r \vect(\circ_{m\in S} \ba_{rm}) \vect(\circ_{m\in S^c} \ba_{rm})^\top, \]
where $S, S^c$ partition $[M]$. For orthogonal tensor components, signal strengths correspond directly to singular values. 
For non-orthogonal cases, Proposition \ref{prop:singular_value_gap} shows that under mild non-orthogonality (incoherence condition), the difference between signal strengths and singular values remains small, allowing us to use singular value gaps as proxies for signal strength differences.

\begin{proposition}
\label{prop:singular_value_gap}
Let $\bA_S= (\ba_{1,S},\ldots, \ba_{R,S})\in \RR^{d_S\times R}$ and $\bA_{S^c}= (\ba_{1,S^c},\ldots, \ba_{R,S^c})\in \RR^{d_{S^c}\times R}$, where $\ba_{r,S} = \vect(\circ_{m\in S} \ba_{rm})$ and $\ba_{r,S^c} = \vect(\circ_{m\in S^c} \ba_{rm})$. Then
$ \mathrm{mat}_S(\calB) = \bA_S W \bA_{S^c}^\top$ with \(W = \mathrm{diag}(w_1, w_2, \ldots, w_R)\). For $\delta = \| \bA_S^\top \bA_S - I_R \|_2 \vee \| \bA_{S^c}^\top \bA_{S^c} - I_R \|_2$ and $w_1 \geq \cdots \geq w_R$, the $r$-th largest singular values \(\lambda_r\) of \(\mathrm{mat}_S(\calB)\) and the signal strengths \(w_r\) satisfy: $|\lambda_r - w_r| \leq \sqrt{2}\delta w_1.$
\end{proposition}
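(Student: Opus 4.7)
The plan is to reduce the singular values of $\mathrm{mat}_S(\calB) = \bA_S W \bA_{S^c}^\top$, whose factors $\bA_S, \bA_{S^c}$ are only approximately orthonormal, to singular values of a well-conditioned $R \times R$ matrix, then invoke multiplicative Weyl-type inequalities for a matrix product. Concretely, define $E_S := \bA_S^\top \bA_S - I_R$ and $E_{S^c} := \bA_{S^c}^\top \bA_{S^c} - I_R$, so $\|E_S\|_2, \|E_{S^c}\|_2 \leq \delta$. Under $\delta < 1$, the Gram matrices $I_R + E_S$ and $I_R + E_{S^c}$ are strictly positive definite, and hence the principal square roots $U_S := (I_R + E_S)^{1/2}$ and $U_{S^c} := (I_R + E_{S^c})^{1/2}$ exist as symmetric positive definite matrices with spectra in $[\sqrt{1-\delta},\sqrt{1+\delta}]$.

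The core whitening step is the polar-type factorization $\bA_S = Q_S U_S$ with $Q_S := \bA_S U_S^{-1}$, and likewise $\bA_{S^c} = Q_{S^c} U_{S^c}$; a direct calculation of $Q_S^\top Q_S$ verifies that $Q_S$ and $Q_{S^c}$ have orthonormal columns. Substituting yields $\mathrm{mat}_S(\calB) = Q_S (U_S W U_{S^c}) Q_{S^c}^\top$, and since pre- and post-multiplication by matrices with orthonormal columns preserves singular values, $\lambda_r = \sigma_r(U_S W U_{S^c})$ for every $r \leq R$. Applying the multiplicative Weyl inequality $\sigma_{\min}(X)\,\sigma_r(Y) \leq \sigma_r(XY) \leq \|X\|_2 \,\sigma_r(Y)$ twice, first with $(X,Y) = (U_S, W U_{S^c})$ and then with $(X,Y) = (W, U_{S^c})$ (using the transpose form on the inner factor), and inserting $\sigma_{\min}(U_S),\sigma_{\min}(U_{S^c}) \geq \sqrt{1-\delta}$, $\|U_S\|_2,\|U_{S^c}\|_2 \leq \sqrt{1+\delta}$, and $\sigma_r(W) = w_r$, yields the two-sided bound $(1-\delta) w_r \leq \lambda_r \leq (1+\delta) w_r$, and hence $|\lambda_r - w_r| \leq \delta w_r \leq \delta w_1 \leq \sqrt{2}\,\delta w_1$.

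The argument is technically routine once the whitening is set up; its only delicate point is that the square roots $U_S, U_{S^c}$ require $\delta < 1$. For $\delta \geq 1$ the claim is essentially vacuous, since a single application of the product Weyl bound already gives $\lambda_r \leq \|\bA_S\|_2\,\|W\|_2\,\|\bA_{S^c}\|_2 \leq (1+\delta) w_1$ without any whitening, so the argument covers the meaningful regime without loss. An alternative route via the symmetric dilation of $\mathrm{mat}_S(\calB)$ combined with Weyl's perturbation theorem for Hermitian matrices is available but requires essentially the same whitening preamble and yields comparable constants.
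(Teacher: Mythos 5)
Your proof is correct, and it takes a genuinely different route from the paper's.

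The paper derives the proposition as a corollary of Lemma~\ref{lemma-transform-ext} (Proposition~5 of \cite{han2023tensor}): it replaces $\bA_S,\bA_{S^c}$ by the orthonormal polar factors $\bU,\bV$ from their SVDs, bounds the \emph{additive} perturbation $\|\bA_S W \bA_{S^c}^\top - \bU W \bV^\top\|_2 \le \sqrt{2}\,\delta\, w_1$, and then invokes Weyl's additive singular-value perturbation inequality, noting that $\bU W \bV^\top$ has singular values exactly $w_1,\ldots,w_R$. You instead run a \emph{multiplicative} argument: factor $\bA_S = Q_S U_S$ with $U_S=(\bA_S^\top\bA_S)^{1/2}$ and $Q_S$ having orthonormal columns, reduce to the $R\times R$ core $U_S W U_{S^c}$, and sandwich $\lambda_r$ with the multiplicative Weyl bounds $\sigma_{\min}(X)\sigma_r(Y)\le\sigma_r(XY)\le\|X\|_2\sigma_r(Y)$. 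Interestingly, both routes rest on essentially the same polar-type factorization, but yours buys a strictly sharper constant: $|\lambda_r - w_r| \le \delta\, w_r$, which dominates the stated $\sqrt{2}\,\delta\, w_1$ since $w_r \le w_1 < \sqrt{2}\,w_1$, and moreover scales with $w_r$ rather than $w_1$, which is meaningful when the signal strengths are well separated. The paper's additive route is perhaps more convenient downstream because the intermediate quantity $\bU W \bV^\top$ also serves as the target for Wedin's perturbation bound in the proof of Theorem~\ref{thm:cp-initilization}, whereas the multiplicative bound you prove controls only the singular values and not the singular vectors.

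One small caveat in your edge-case discussion: for $\delta \ge 1$ the crude upper bound $\lambda_r \le (1+\delta)w_1$ that you quote does \emph{not} by itself verify the claim for all $1 \le \delta < \sqrt{2}+1$ (take $w_r$ much smaller than $w_1$). However, the tighter version $\lambda_r \le \|\bA_S\|_2\|\bA_{S^c}\|_2\,\sigma_r(W) \le (1+\delta)w_r$, which follows from the same product Weyl inequality applied to each factor without any whitening, does give $\lambda_r - w_r \le \delta w_r \le \sqrt{2}\delta w_1$, and the lower direction is immediate from $\lambda_r \ge 0$ and $\sqrt{2}\delta w_1 \ge \sqrt{2}w_1 > w_r$. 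So the claim does extend to $\delta \ge 1$; your argument only needs this minor repair there.
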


Algorithm \ref{alg:initialize-cp} first applies Composite PCA (CPCA) for large-gap scenarios. When CPCA fails due to insufficient spectral gaps, it employs Procedure \ref{alg:initialize-random}, which (1) performs random projections to enlarge the gap of the leading singular value from the rest, enabling CPCA to extract candidate CP bases associated with the dominant component, and (2) clusters these candidates to obtain representative CP bases. As shown in Appendix \ref{append:proof:initia}, polynomially many random projections ensure coverage of all desirable initializations with high probability and provable bounds.

\begin{algorithm}[h!]
    \SetKwInOut{Input}{Input}
    \SetKwInOut{Output}{Output}
    \Input{Initial tensor $\hat\calB$, CP rank $R$, $S \subset [M]$, small constant $0<c_0<1$.}

    If $S=\emptyset$, pick $S \subset [M]$ to maximize $\min(d_S,d/d_S)$ with $d_S = \prod_{m\in S}d_m$ and $d = \prod_{m=1}^M d_m$.

    Unfold $\hat\calB$ to be a $d_S\times (d/d_S)$ matrix ${\rm mat}_S(\hat\calB)$.

    Compute  $\widehat\lambda_r,\widehat \bu_r, \widehat \bv_r$, $1\le r\le R$ as the top $R$ components in the SVD ${\rm mat}_S(\hat\calB) = \sum_{r}\widehat\lambda_r \widehat \bu_r \widehat \bv_r^\top$. Set $\widehat \lambda_0=\infty$ and $\widehat\lambda_{R+1}=0$.

\If{$\min\{|\widehat\lambda_r-\widehat\lambda_{r-1}|,|\widehat\lambda_r-\widehat\lambda_{r+1}| \} > c_0 \widehat\lambda_R $  }{
 Compute $\widehat \ba_{rm}^{\rm rcpca}$ as the top left singular vector of $\matk (\widehat \bu_r)\in\RR^{d_m\times (d_S/d_m)}$ for $m \in S$, or $\matk (\widehat \bv_r)\in\RR^{d_m\times (d_{S^c}/d_m)}$ for $m \in S^c$.
}
\Else{
Form disjoint index sets $I_1,...,I_N$ from all contiguous indices $1\le r\le R$ that do not satisfy the above criteria of the eigengap.

For each $I_j$, form $d_{S}\times (d/d_{S})$ matrix $\Xi_{j}=\sum_{\ell\in I_j} \widehat \lambda_{\ell} \widehat \bu_{\ell}\widehat \bv_{\ell}^\top$, reshape it into a tensor $\Xi_j\in \RR^{d_1\times\cdots\times d_M}$. Then run Procedure \ref{alg:initialize-random} on $\Xi_j$ to obtain $\widehat \ba_{rm}^{\rm rcpca}$ for all $r\in I_j, m\in [M]$.
}

    \Output{Warm initialization $\hat \ba_{rm}^{\rm rcpca}, 1\le r\le R, 1\le m\le M$.}
    \caption{Randomized Composite PCA (\textsc{rc-PCA})}
    \label{alg:initialize-cp}
\end{algorithm}
\vspace{-10pt}

\SetAlgorithmName{Procedure}{procedure}{List of Procedures}
\begin{algorithm}[h!]
\caption{Randomized Projection}\label{alg:initialize-random}
    \SetKwInOut{Input}{Input}
    \SetKwInOut{Output}{Output}
    \Input{Noisy tensor $\Xi\in\RR^{d_1\times\cdots\times d_M}$, rank $s$, $S_1 \subset [M]\backslash\{1\}$, number of random projections $L$, tuning parameter $\nu$.}

    If $S_1=\emptyset$, pick $S_1$ to maximize $\min(d_{S_1},d_{-1}/d_{S_1})$ with $d_{S_1} = \prod_{m\in S_1}d_m$ and $d_{-1} = \prod_{m=2}^M d_m$. Let $S_1\vee S_1^c = [M]\backslash\{1\}$.
    
   \For{$\ell = 1$ to $L$}{
        Randomly draw a standard Gaussian vector $\theta\sim\cN(0, I_{d_1})$. 
        
        Compute $\Xi\times_{1}\theta$, unfold it to be $d_{S_1}\times (d_{-1}/d_{S_1})$ matrix, and compute its leading singular value $\eta_\ell$ and singular vector $\widetilde \bu_{\ell}, \widetilde \bv_{\ell}$. 
        
        Compute $\widetilde \ba_{\ell m}$ as the top left singular vector of $\matk(\widetilde  \bu_{\ell})\in\RR^{d_m\times (d_{S_1}/d_m)}$ for $m \in S_1$, or $\matk (\widetilde \bv_{\ell})\in\RR^{d_m\times (d_{S_1^c}/d_m)}$ for $m \in S_1^c$. 
        
        Compute $\widetilde \ba_{\ell 1}=\Xi\times_{m=2}^M \widetilde  \ba_{\ell m}/\norm{\Xi\times_{m=2}^M \widetilde  \ba_{\ell m}}_2$.
    }
    \For{$r = 1$ to $s$}{
        Among $\ell \in [L]$, choose $\ell^*$ with tuple $(\widetilde \ba_{\ell^* m},1\le m\le M)$ that correspond to the largest $\|\Xi\times_{m=1}^{M} \widetilde  \ba_{\ell m}\|_2$. Set it to be $\widehat \ba_{rm}^{\rm rcpca}=\widetilde \ba_{\ell^* m}$.

        Remove all $\ell \in [L]$ with $\max_{1\le m\le M} |\widetilde  \ba_{\ell m}^\top \widehat \ba_{rm}^{\rm rcpca}|>\nu$.
    }

    \Output{Warm initialization $\hat \ba_{rm}^{\rm rcpca}, 1\le r\le s, 1\le m\le M$.}
\end{algorithm}
\SetAlgorithmName{Algorithm}{algorithm}{List of Algorithms}

\subsubsection{Iterative Refinement of the CP Low-Rank Discriminant Tensor} \label{sec:iterative-refinement}

Our refinement procedure employs iterative tensor projection (Algorithm \ref{alg:tensorlda-cp}). Each iteration performs simultaneous orthogonalized projections across tensor modes to refine the CP components $\ba_{rm},\; r \in [R]$, followed by signal strength estimation $w_r, \; r \in [R]$. This approach preserves the signal strength of $\calB$ while reducing the perturbation error $\hat\calB - \calB$, thereby improving the signal-to-noise ratio and estimation accuracy.

To illustrate, consider estimating mode-$m$ CP bases $\{\ba_{r m}\}_{r\in[R]}$ given true mode-$\ell$ CP bases $\{\ba_{r\ell}\}_{r\in[R]}$ for all $\ell\neq m$. Define $\bA_{\ell}=[\ba_{1\ell},\ldots,\ba_{R\ell}]$ and its right inverse $\bB_{\ell} = \bA_{\ell}(\bA_{\ell}^{\top} \bA_{\ell})^{-1} = [\bb_{1\ell},...,\bb_{R\ell}]$, where $\ba_{k\ell}^\top\bb_{r\ell }=\bbone\{k=r\}$. Projecting the noisy $\hat\calB$ with $\bb_{r \ell}$ on all modes $\ell \ne m$ isolates the signal component containing only $\ba_{rm}$:
\begin{equation} \label{eq:cp-ideal}
\bz_{rm} 
= \hat\calB \times_{\ell=1, \ell\ne m}^{M} \bb_{r\ell}^\top
= \underbrace{w_{r} \ba_{r m}}_\text{signal}
+ \underbrace{ (\hat\calB - \calB) \times_{\ell=1, \ell\ne m}^{M} \bb_{r\ell}^\top}_\text{noise}.
\end{equation}
The signal retains strength $w_r$ along $\ba_{rm}$, while noise is filtered through projections. When the true $\{\bB_{\ell},\; \ell \neq m\}$ are known, the noise term is small relative to the signal, enabling signal recovery with theoretical guarantees via perturbation theory \citep{wedin1972perturbation}. In practice, with unknown $\{\bB_{\ell},\; \ell \neq m\}$, we iteratively update CP bases $\{\ba_{rm},\; r \in [R], m \in [M]\}$ at iteration $t$:
\begin{align*}
\bz_{rm}^{(t)} &= \hat\calB \times_1 \hat\bb_{r1}^{(t)\top} \times_2 \cdots \times_{m-1} \hat\bb_{r,m-1}^{(t)\top} \times_{m+1} \hat\bb_{r, m+1}^{(t-1)\top} \times_{m+2} \cdots \times_M \hat\bb_{rM}^{(t-1)\top},\quad 
\hat \ba_{rm}^{(t)} = \bz_{rm}^{(t)}/\| \bz_{rm}^{(t)} \|_2.
\end{align*}
The projection error becomes 
$
\bz_{rm} - \bz_{rm}^{(t)} = w_r\ba_{rm} - \sum_{i=1}^R \widetilde w_{i,r} \ba_{im} + \be_{rm} - \tilde \be_{rm}, 
$
where $\widetilde w_{i,r} = w_i \prod_{\ell=1}^{m-1} [ \ba_{i,\ell}^{\top} \hat \bb_{r\ell}^{(t)} ] \prod_{\ell=m+1}^{M} [ \ba_{i,\ell}^{\top} \hat \bb_{r\ell}^{(t-1)} ]$, $\be_{rm} = (\hat\calB - \calB)\times_{\ell=1}^{M} \bb_{r,\ell}^\top$, and $\tilde \be_{rm} = (\hat\calB - \calB)\times_{\ell=1}^{m-1} \widehat \bb_{r,\ell}^{(t)\top} \times_{\ell=m+1}^{M} \widehat \bb_{r,\ell}^{(t-1)\top}$.
The multiplicative measure $\widetilde w_{i,r}$ decays rapidly as $\ba_{i,\ell}^{\top} \hat \bb_{r\ell}^{(t)}$ approaches zero for $i \neq r$ with increasing iterations, since $\widetilde w_{i,r}$ contains $M-1$ multiplicative terms. The remaining projected noise is controlled by the signal-to-noise ratio, allowing us to establish upper bounds on estimation error and ensure accuracy of the recovered CP bases.

\begin{algorithm}[htpb!]
    \SetKwInOut{Input}{Input}
    \SetKwInOut{Output}{Output}
    \Input{Initial tensor $\hat\calB$, 
    CP rank $R$, warm-start $\hat \ba_{rm}^{(0)}, 1\le r\le R, 1\le m\le M$, tolerance parameter $\epsilon>0$, maximum number of iterations $T$}
    
    Let $t=0$, and compute $(\hat \bb_{1m}^{(0)},\ldots, \hat \bb_{rm}^{(0)})$ as the right inverse of $(\hat \ba_{1m}^{(0)}, \ldots, \hat \ba_{rm}^{(0)})^\top$.
    
    \Repeat{$t = T$ {\bf or} $\max_{r,m}\| \hat \ba_{rm}^{(t)}  \hat \ba_{rm}^{(t)\top} - \hat \ba_{rm}^{(t-1)} \hat \ba_{rm}^{(t-1)\top}\|_{2}\le \epsilon$}{
        Set $t=t+1$. 
        
        \For{$m = 1$ to $M$}{
            \For{$r = 1$ to $R$}{
                Compute $\bz_{rm}^{(t)} = \hat\calB \times_1 \hat\bb_{r1}^{(t)\top} \times_2 \cdots \times_{m-1} \hat\bb_{r,m-1}^{(t)\top} \times_{m+1} \hat\bb_{r, m+1}^{(t-1)\top} \times_{m+2} \cdots \times_M \hat\bb_{rM}^{(t-1)\top}$
                
                
                Compute $\hat \ba_{rm}^{(t)} = \bz_{rm}^{(t)}/\| \bz_{rm}^{(t)} \|_2$
            }
            Compute $(\hat \bb_{1m}^{(t)},\ldots, \hat \bb_{rm}^{(t)})$ as the right inverse of $(\hat \ba_{1m}^{(t)}, \ldots, \hat \ba_{rm}^{(t)})^\top$
            
            Set $(\hat \bb_{1m}^{(t+1)},..., \hat \bb_{rm}^{(t+1)})=(\hat \bb_{1m}^{(t)},..., \hat \bb_{rm}^{(t)})$ 
        }
        Compute $\hat w_{r}^{(t)} = \big|\hat\calB\times_{m=1}^M (\hat\bb_{rm}^{(t)})^\top\big|, 1\le r\le R$
        }

    \Output{$\hat  \ba_{rm}=\hat \ba_{rm}^{(t)}$, 
        $\hat w_{r} = \big|\hat\calB\times_{m=1}^M (\hat  \bb_{rm}^{(t)})^\top\big|$, 
        $\hat\calB^{\rm cp}=\sum_{r=1}^R \hat w_r \circ_{m=1}^M \hat\ba_{rm}$, $1\le r\le R,  m\le M$}
    \caption{Discriminant Tensor Iterative Projection for CP low-rank (DISTIP-CP) }
    \label{alg:tensorlda-cp}
\end{algorithm}

\begin{remark}[CP-TDA Rule]
Finally, given the estimated discriminant tensor $\hat\calB^{\rm cp}$, the high-dimensional CP-TDA rule is given by
\begin{equation}
\hat\Upsilon_{\rm cp}(\cX^*) = \bbone\left\{ \langle \cX^* - (\widebar\calX^{(1)}+\widebar\calX^{(2)})/2, \; \hat\calB^{\rm cp} \rangle + \log(\hat\pi_2/\hat\pi_1) \ge 0 \right\}.   \label{eqn:lda-rule-cp}
\end{equation}
This assigns $\cX^*$ to class $1$ when $\hat\Upsilon_{\rm cp}(\cX^*)=0$ or $2$ when $\hat\Upsilon_{\rm cp}(\cX^*)=1$.
\end{remark}

\section{Theoretical Analysis of CP-TDA}
\label{sec:theorems}


In this section, we establish the statistical properties of our algorithms, providing consistency guarantees and statistical error rates for the estimated discriminant tensor and misclassification error under regularity conditions. We define $d=\prod_{m=1}^M d_m$, $d_{-m}=d/d_m$, $d_{\min}=\min\{d_1,\cdots,d_M\}$. We measure the distance between estimated and true CP bases using $\|\hat\ba_{rm}{\hat\ba_{rm}}^{\top} - \ba_{rm}\ba_{rm}^{\top}\|_2=\sqrt{1-(\hat\ba_{rm}^\top\ba_{rm})^2}$. Assume $w_1 \ge w_2 \ge \cdots \ge w_R$. As $\| \ba_{im}\|_2^2=1$, we quantify the correlation among columns of $\bA_m$ as
\vspace{-2em}
\begin{align}\label{corr-k}
\delta_m = \| \bA_m^\top \bA_m - I_{R}\|_{2}, \quad \text{with} \quad \delta_{\max} =\max\{\delta_1,\cdots, \delta_M \}. 
\end{align}

\noindent
\textbf{Global convergence.} We first analyze the global convergence of Algorithm \ref{alg:tensorlda-cp}, given a suitable warm initialization.
Define the following quantities in terms of the initialization error $\psi_0$: 
\begin{align}
\alpha&=\sqrt{(1-\delta_{\max})(1-1/(4R))}-(R^{1/2}+1)\psi_0, \label{eq:alpha} \\  
\rho &= 2\alpha^{1-M}\sqrt{R-1}(w_1/w_R) \psi_{0}^{M-2}      \label{eq:rho} , \\
\psi^{\rm ideal}_r &= \frac{ \sqrt{ \sum_{k=1}^M d_k}}{\sqrt{n}  w_r}   +   \frac{w_1 }{ w_r} \max_{1\le k\le M} \sqrt{\frac{d_k}{n  d_{-k}}} , \quad \psi^{\rm ideal} =\psi^{\rm ideal}_R .
\end{align}
The following theorem provides the convergence rates for the CP low-rank discriminant tensor. 

\begin{theorem}\label{thm:cp-converge}
Suppose $C_0^{-1} \leq \lambda_{\min}(\otimes_{m=1}^M \Sigma_m) \leq \lambda_{\max}(\otimes_{m=1}^M \Sigma_m) \leq C_0$ for some constant $C_0 > 0$. Assume $n_{1} \asymp n_{2}$ and $n=\min\{n_{1},n_{2}\}$.
Let $\Omega_0 = \{\underset{r\in[R],m\in[M]}{\max}\|\hat\ba_{rm}^{(0)}\hat\ba_{rm}^{(0)\top} - \ba_{rm}\ba_{rm}^{\top})\|_2 \le \psi_0 \}$ for any initial estimates $\hat\ba_{rm}^{(0)}$. Suppose $\alpha>0$, $\rho<1$, with the quantities defined in \eqref{eq:alpha} and \eqref{eq:rho}.
After at most $T=O(\log \log (\psi_0/\psi^{\rm ideal}))$ iterations, Algorithm \ref{alg:tensorlda-cp} produces final estimates satisfying 
\begin{align} \label{eqn:a-bound}
\|\hat{\ba}_{rm}\hat{\ba}_{rm}^\top - \ba_{rm}\ba_{rm}^{\top}\|_2 &\le C\psi^{\rm ideal}_r, \quad 
\abs{\hat w_r -w_r} \le C w_r \psi^{\rm ideal}_r,  \\
\norm{\hat \cB^{\rm cp} - \cB}_{\rm F} &\le C\frac{ \sqrt{\sum_{k=1}^M d_k R}}{\sqrt{n} }   +   C w_1\sqrt{R}  \max_{1\le k\le M} \sqrt{\frac{d_k}{n  d_{-k}}}, \label{eqn:b-bound}
\end{align}
with probability at least $\PP(\Omega_0) - n^{-c} -\sum_{m=1}^M\exp(-c d_m)$. 
\end{theorem}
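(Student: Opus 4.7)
The plan is to combine a perturbation analysis of the sample discriminant tensor $\hat\cB-\cB$ with a contraction argument on the mode-wise angle errors $\psi_t := \max_{r,m}\|\hat\ba_{rm}^{(t)}\hat\ba_{rm}^{(t)\top} - \ba_{rm}\ba_{rm}^\top\|_2$. I would first decompose
\[
\hat\cB - \cB = \bigl((\widebar\cX^{(2)} - \widebar\cX^{(1)}) - \cD\bigr)\times_{m=1}^M \hat\Sigma_m^{-1} + \cD\times_{m=1}^M \hat\Sigma_m^{-1} - \cD\times_{m=1}^M \Sigma_m^{-1},
\]
and control each piece after projecting on $M-1$ modes by dual vectors. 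The first piece is conditionally Gaussian: after plugging in $\hat\Sigma_m^{-1}$ and using $\lambda_{\min}(\otimes_m\Sigma_m) \ge C_0^{-1}$, tensor-normal concentration yields $\|(\widebar\cX^{(2)} - \widebar\cX^{(1)} - \cD)\times_{\ell\ne m}\bu_\ell^\top\|_2 \lesssim \sqrt{d_m/n}$ with high probability. The second piece telescopes across the $M$ inverse covariances, and combined with the sharp operator-norm rate $\|\hat\Sigma_m - \Sigma_m\|_2 \lesssim \sqrt{d_m/(nd_{-m})}$ for tensor-normal mode covariances, contributes at most $w_1\max_k\sqrt{d_k/(nd_{-k})}$. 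Summing the two rates delivers the projected-noise bound of order $w_r\psi^{\rm ideal}_r$ for the $r$-th component.

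Next I would analyze one iteration using the decomposition given in Section \ref{sec:iterative-refinement}:
\[
\bz_{rm}^{(t)} - w_r\ba_{rm} = -\sum_{i\ne r} \widetilde w_{i,r}^{(t)} \ba_{im} + \tilde\be_{rm}^{(t)}, \quad \widetilde w_{i,r}^{(t)} = w_i\prod_{\ell<m}\bigl[\ba_{i\ell}^\top\hat\bb_{r\ell}^{(t)}\bigr]\prod_{\ell>m}\bigl[\ba_{i\ell}^\top\hat\bb_{r\ell}^{(t-1)}\bigr].
\]
The key step is the cross-inner-product bound $|\ba_{i\ell}^\top \hat\bb_{r\ell}| \lesssim \psi_{t-1}/\alpha$ for $i\ne r$, obtained by writing $\hat\bb_{r\ell}$ as the $r$-th column of the right inverse $\hat\bA_\ell(\hat\bA_\ell^\top\hat\bA_\ell)^{-1}$, invoking Proposition \ref{prop:singular_value_gap} (via $\delta_{\max}$) to control $\sigma_{\min}(\hat\bA_\ell)\ge\sqrt{1-\delta_{\max}} - O(\psi_{t-1})$, and applying standard right-inverse perturbation. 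Since $\widetilde w_{i,r}^{(t)}$ is the product of $M-1$ such small factors, the off-diagonal bias is of order $w_1\sqrt{R-1}\,\alpha^{1-M}\psi_{t-1}^{M-1}$. After the normalization $\hat\ba_{rm}^{(t)} = \bz_{rm}^{(t)}/\|\bz_{rm}^{(t)}\|_2$, a sine-subtraction identity translates this into the recursion
\[
\psi_t \;\le\; 2\alpha^{1-M}\sqrt{R-1}\,(w_1/w_R)\,\psi_{t-1}^{M-1} \;+\; C\psi^{\rm ideal}.
\]
Under the hypotheses $\alpha>0$ and $\rho<1$ (with $\psi_0$ in place of $\psi_{t-1}$) this recursion is strictly contractive; because the leading term scales as $\psi_{t-1}^{M-1}$, it is superlinearly contractive for $M\ge 3$, yielding $T = O(\log\log(\psi_0/\psi^{\rm ideal}))$ iterations to reach the statistical floor.

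Once $\psi_T \lesssim \psi^{\rm ideal}$, the first claimed bound on $\hat\ba_{rm}\hat\ba_{rm}^\top - \ba_{rm}\ba_{rm}^\top$ is immediate. The weight bound follows from $\hat w_r = |\hat\cB\times_{m=1}^M \hat\bb_{rm}^\top|$: the diagonal contribution recovers $w_r$, the cross-terms are controlled by the same product argument as above, and the residual projected noise contributes $w_r\psi^{\rm ideal}_r$. The Frobenius bound on $\hat\cB^{\rm cp}-\cB$ is then obtained by a one-mode-at-a-time telescoping expansion of $\sum_r \hat w_r\,\hat\ba_{r1}\!\circ\!\cdots\!\circ\!\hat\ba_{rM} - \sum_r w_r\,\ba_{r1}\!\circ\!\cdots\!\circ\!\ba_{rM}$, triangle inequality, and Cauchy--Schwarz over the $R$ rank-one components to produce the $\sqrt{R}$ factor in \eqref{eqn:b-bound}. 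The main obstacle is the perturbation step: the noise $\hat\cB-\cB$ has anisotropic Kronecker covariance $\otimes_m\Sigma_m^{-1}$ and depends on the data-driven inverses $\hat\Sigma_m^{-1}$, while the dual vectors $\hat\bb_{r\ell}^{(t)}$ themselves are functions of the data. Decoupling these dependencies---via an $\varepsilon$-net over admissible dual vectors in a neighborhood of $\bb_{r\ell}$ together with conditional tensor-normal concentration---so as to preserve the sharp $\sqrt{d_m/n}$ and $w_1\sqrt{d_k/(nd_{-k})}$ rates without parasitic logarithmic or polynomial factors is by far the most delicate part of the argument.
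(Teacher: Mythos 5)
Your proposal tracks the paper's proof closely: the same perturbation decomposition of $\hat\cB-\cB$, the same cross–inner-product control on the dual vectors $\hat\bb_{r\ell}^{(t)}$ leading to a contraction with leading term of order $\psi_{t-1}^{M-1}$, the same $\log\log$ iteration count from super-linear contraction, and the same triangle-inequality/Cauchy--Schwarz assembly of the Frobenius bound with its $\sqrt{R}$ factor. Two small remarks: (i) the bound $\sigma_{\min}(\bA_\ell)\ge\sqrt{1-\delta_{\max}}$ comes directly from the definition of $\delta_m=\|\bA_m^\top\bA_m-I_R\|_2$ in \eqref{corr-k}, not from Proposition~\ref{prop:singular_value_gap}, which concerns singular values of the matricized discriminant tensor rather than the Gram matrix of a single mode's loading vectors; and (ii) the decoupling you flag as the main obstacle is handled in the paper more crudely but sufficiently, by bounding $\|\cE\times_{\ell\ne m}\hat\bg_{r\ell}^{(t)\top}\|_2 \le \phi_{t,m-1}\|\Delta\| + \|\cE\times_{\ell\ne m}\bg_{r\ell}^\top\|_2$ with $\|\Delta\|$ the global tensor operator norm (controlled once and for all via Gaussian concentration), rather than via a local $\varepsilon$-net in a neighborhood of $\bb_{r\ell}$.
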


The statistical convergence rates in \eqref{eqn:a-bound}, \eqref{eqn:b-bound} comprise two components: the first matches conventional tensor CP decomposition rates, while the second reflects the estimation accuracy of the mode-$k$ sample precision matrices $\hat\Sigma_k^{-1}$. By updating each CP basis vector $\ba_{rm}$ separately, our algorithm removes bias from non-orthogonal loading vectors.
Our theoretical analysis employs large deviation inequalities and spectrum perturbation theory \citep{cai2018rate,han2020iterative} to analyze the projected discriminant tensor. Notably, we accommodate correlated noise tensor entries, unlike most existing tensor decomposition work assuming i.i.d. entries \citep{zhang2018tensor,wang2020learning}. The proof appears in Section \ref{sec:proof-cp-converge}.

\begin{remark}
In Theorem \ref{thm:cp-converge}, the initialization accuracy $\psi_0$ must be sufficiently small to ensure $\alpha>0$, $\rho<1$, which impose sharper quantitative requirements on the warm start. The condition $\alpha>0$ requires $R^{1/2}\psi_0$ to be small, with the extra $R^{1/2}$ factor arising from inverting the estimated $A_m^\top A_m$ in constructing mode-$m$ projections. The condition $\rho<1$ ensures that the update map is contractive in a neighborhood of the true CP bases, so the iterates stay within this basin of attraction and the error decays geometrically down to $\psi^{\text{ideal}}$.    
\end{remark}


\noindent
\textbf{Convergence of initialization.}
Theorem \ref{thm:cp-initilization} establishes performance bounds for \textsc{rc-PCA} initialization (Algorithm \ref{alg:initialize-cp}), depending on CP basis incoherence (the degree of non-orthogonality). We consider two complementary regimes: (i) large spectral separation and (ii) small spectral separation, where the large-separation analysis no longer applies.
Recall $ \bA_m= (\ba_{1m},\ldots,\ba_{Rm})$. We measure correlation via
\vspace{-2em}
\begin{align}\label{corr-all}
\delta = \| \bA_S^\top \bA_S - I_{R}\|_{2} \vee \| \bA_{S^c}^\top \bA_{S^c} - I_{R}\|_{2},
\end{align}
where $\bA_S= (\ba_{1,S},\ldots, \ba_{R,S})\in \RR^{d_S\times R}$ and $\bA_{S^c}= (\ba_{1,S^c},\ldots, \ba_{R,S^c})\in \RR^{d_{S^c}\times R}$, with $S$ maximizing $\min(d_S,d_{S^c})$, $d_S=\prod_{m\in S} d_m, d_{S^c}=\prod_{m\in S^c} d_m$, and $\ba_{r,S} = \vect(\circ_{m\in S} \ba_{rm}), \ba_{r,S^c} = \vect(\circ_{m\in S^c} \ba_{rm})$.

\begin{theorem}\label{thm:cp-initilization}
Suppose $C_0^{-1} \leq \lambda_{\min}(\otimes_{m=1}^M \Sigma_m) \leq \lambda_{\max}(\otimes_{m=1}^M \Sigma_m) \leq C_0$ for some constant $C_0 > 0$. Assume $n_{1} \asymp n_{2}$ and $n=\min\{n_{1},n_{2}\}$.

\noindent (i) Large spectral separation. If $\min\{w_r-w_{r+1},w_{r}-w_{r-1}\} \ge c_0 w_R$ for all $1\le r\le R$, with $w_0=\infty, w_{R+1}=0$, and $c_0$ is sufficiently small constant, then with probability at least $1-n^{-c}-\sum_{m=1}^M \exp(-c d_m)$, \textsc{rc-PCA} (Algorithm \ref{alg:initialize-cp}) yields the following error bounds, 
\begin{align}\label{thm:initial:eq1}
\|\widehat \ba_{rm}^{\rm rcpca}\widehat \ba_{rm}^{\rm rcpca\top}  - \ba_{rm} \ba_{rm}^\top \|_{2} &\le \big(1+2\sqrt{2}(w_1/w_R)\big)\delta+C \phi_{1},
\end{align}
for all $1\le r\le R$, $1\le m\le M$, where $C$ is some positive constants, and
\begin{align}\label{thm:initial:eq2}
\phi_{1} &= \frac{\sqrt{d_S}+\sqrt{d_{S^c}}}{\sqrt{n} w_R} +  \frac{w_1}{w_R} \max_{1\le k\le M} \sqrt{\frac{d_k}{n  d_{-k}}}  .    
\end{align}

\noindent (ii) Small spectral separation. When condition (i) fails, assume $w_1\asymp w_R$ and the number of random projections $L\ge C_0 d_1^2 \vee C_0 d_1 R^{2(w_1/w_R)^2}$. Then with probability at least $1-n^{-c}-d_1^{-c}-\sum_{m=2}^M e^{-c d_m}$, \textsc{rc-PCA} (Algorithm \ref{alg:initialize-cp}) yields the following error bounds,
\begin{align}\label{thm:initial:eq1*}
\|\widehat \ba_{rm}^{\rm rcpca}\widehat \ba_{rm}^{\rm rcpca\top}  - \ba_{rm} \ba_{rm}^\top \|_{2} &\le C \sqrt{ \delta_{\max} }+ C \sqrt{\phi_{2} }.
\end{align}
where $C$ is some positive constants, and
\vspace{-1em}
\begin{align}\label{thm:initial:eq3}
\phi_{2} &= \frac{\sqrt{d_1 d_{S_1}}+\sqrt{d_1 d_{S_1^c}} }{\sqrt{n} w_R} +  \frac{w_1}{w_R} \max_{1\le k\le M} \sqrt{\frac{d_k}{n  d_{-k}}}  .    
\end{align}
\end{theorem}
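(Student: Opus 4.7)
}

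The plan is to split the argument into two common ingredients, then diverge for the two spectral regimes. The common ingredients are: (a) a master bound on the sample perturbation $\hat\calB - \calB$ after various matricizations and mode products, and (b) a mechanism to convert a singular-subspace error for $\mathrm{mat}_S(\hat\calB)$ into CP-basis errors through the secondary matricization steps of Algorithms \ref{alg:initialize-cp}--\ref{alg:initialize-random}. First I would establish a uniform noise bound of the form $\|\mathrm{mat}_S(\hat\calB-\calB)\|_2 \lesssim (\sqrt{d_S}+\sqrt{d_{S^c}})/\sqrt{n} + w_1\max_k\sqrt{d_k/(nd_{-k})}$, with probability $1-n^{-c}-\sum_m e^{-cd_m}$. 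The first term comes from the Gaussian sample-mean difference $\bar\calX^{(2)}-\bar\calX^{(1)}$ under the assumed boundedness of $\otimes_m\Sigma_m$, via standard Gaussian matrix concentration; the second term arises from plugging in the estimated precisions $\hat\Sigma_m^{-1}$, using the mode-wise operator-norm rate $\|\hat\Sigma_m^{-1}-\Sigma_m^{-1}\|_2 \lesssim \sqrt{d_m/(nd_{-m})}$ combined with $\|\calB\|_{\rm F}\lesssim w_1\sqrt{R}$.

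For part (i), Proposition \ref{prop:singular_value_gap} guarantees that the population singular values of $\mathrm{mat}_S(\calB)$ lie within $\sqrt{2}\delta w_1$ of the $w_r$, so under the gap $\min\{w_r-w_{r-1},w_r-w_{r+1}\}\ge c_0 w_R$ and mild $\delta$, the eigen-gap test in Algorithm \ref{alg:initialize-cp} triggers the non-random branch. I would apply Wedin's $\sin\Theta$ theorem to $\mathrm{mat}_S(\hat\calB)$ to bound $\|\hat\bu_r\hat\bu_r^\top-\bar\bu_r\bar\bu_r^\top\|_2$ by the master noise bound divided by $c_0 w_R$, where $\bar\bu_r$ is the population left singular vector. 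Since $\bA_S$ is only approximately orthonormal, $\bar\bu_r$ differs from $\vect(\circ_{m\in S}\ba_{rm})$ by a quantity controlled by $\delta$; tracking this difference (a direct linear-algebra calculation using $\|\bA_S^\top\bA_S-I_R\|_2\le\delta$ and $w_1/w_R$) produces the $\bigl(1+2\sqrt{2}(w_1/w_R)\bigr)\delta$ term in \eqref{thm:initial:eq1}. A second application of Wedin's theorem to the reshaped $\matk(\hat\bu_r)\in\RR^{d_m\times d_S/d_m}$, whose population counterpart is a rank-$1$ matrix with leading direction $\ba_{rm}$ and constant leading singular value, transfers the subspace error to $\ba_{rm}\ba_{rm}^\top$ up to absolute constants.

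For part (ii), the central new tool is the random projection in Procedure \ref{alg:initialize-random}. Drawing $\theta\sim\cN(0,I_{d_1})$ and forming $\hat\calB\times_1\theta$ produces an order-$(M{-}1)$ tensor whose population signal is $\sum_r w_r(\theta^\top\ba_{r1})\circ_{m=2}^M\ba_{rm}$. I would show that, for each target index $r_0\in[R]$, a single Gaussian draw makes $|\theta^\top\ba_{r_0,1}|$ exceed $(w_1/w_R)\max_{r\ne r_0}|\theta^\top\ba_{r,1}|$ by a constant factor with probability at least of order $(d_1 R^{2(w_1/w_R)^2})^{-1}$; this follows from Gaussian anti-concentration for $|\theta^\top\ba_{r_0,1}|$ (giving the $1/\sqrt{d_1}$ factor) together with tail bounds for the other $R-1$ projections (giving the $R^{-2(w_1/w_R)^2}$ factor), using $\delta_{\max}$ to control the near-independence of these projections. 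A union bound over $r_0\in[R]$ and $L\gtrsim d_1^2\vee d_1 R^{2(w_1/w_R)^2}$ trials then ensures every component has at least one good $\theta$. Conditional on a good projection, the projected tensor has an effective leading spectral gap and the part (i) pipeline (SVD plus two rounds of reshape/top singular vector) recovers $\ba_{r_0,m}$, but with a $\sqrt{\cdot}$ loss (producing $\sqrt{\delta_{\max}}+\sqrt{\phi_2}$) because dominance here is only by a constant factor rather than a sharp $c_0 w_R$ gap. Finally, picking $\ell^*$ by the largest $\|\Xi\times_{m=1}^M\tilde\ba_{\ell m}\|_2$ selects the best approximation for each rank-$1$ component, and the pruning step that removes $\ell$ with $\max_m|\tilde\ba_{\ell m}^\top\hat\ba_{rm}^{\rm rcpca}|>\nu$ guarantees that the $R$ output tuples correspond to distinct components.

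The main obstacle is the random-projection analysis in part (ii): obtaining the per-trial dominance probability with the sharp exponent $2(w_1/w_R)^2$ requires careful Gaussian anti-concentration and ratio computations that are sensitive to the incoherence $\delta_{\max}$, and the global union bound must trade success probability against the polynomial budget $L$ and ambient dimension $d_1$ while still yielding the probability statement $1-n^{-c}-d_1^{-c}-\sum_{m=2}^M e^{-cd_m}$. A secondary challenge is the coupling between the Gaussian projection $\theta$ and the sample noise $\hat\calB-\calB$; I would handle this by conditioning on $\theta$, reusing the master noise bound on $(\hat\calB-\calB)\times_1\theta$ with the deterministic vector $\theta/\|\theta\|_2$, and absorbing $\|\theta\|_2\asymp\sqrt{d_1}$ into the stated rates.
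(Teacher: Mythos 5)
Your part (i) outline is essentially the paper's argument: master noise bound on $\mathrm{mat}_S(\hat\calB-\calB)$, Lemma~\ref{lemma-transform-ext}-style control of the discrepancy between the approximately-orthonormal $\bA_S$ and the exact singular subspace, Wedin on $\mathrm{mat}_S(\hat\calB)$, then a second reduction to the per-mode vectors. One technical difference: for the second reduction the paper invokes Lemma~\ref{prop-rank-1-approx}, which converts the multiway-subspace error $\|\hat\ba_{r,S}\hat\ba_{r,S}^\top - \ba_{r,S}\ba_{r,S}^\top\|_2$ directly into a per-mode error without any gap hypothesis; your proposed ``second application of Wedin to $\matk(\hat\bu_r)$'' would require you to separately verify that the reshaped population matrix is exactly rank one with unit top singular value and that the reshaping preserves the perturbation bound, which the dedicated lemma handles in one stroke. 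Both routes work; the paper's is cleaner.

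For part (ii) the high-level strategy (random projection $\Rightarrow$ eigengap $\Rightarrow$ Wedin $\Rightarrow$ recover $\ba_{r1}$ by multilinear contraction $\Rightarrow$ pick/prune) is right, but there are two genuine gaps. First, you attribute the factor $d_1$ in the required budget $L$ to a per-trial success probability of order $(d_1 R^{2(w_1/w_R)^2})^{-1}$, but that is not where it comes from. The paper's argument is a max-of-Gaussians bound: over a batch of $L$ projections, $\max_\ell \ba_{i1}^\top\theta_\ell \gtrsim \sqrt{\log L}$ with constant probability, while the off-component projections stay $\lesssim \sqrt{\log R}$. The condition $L \gtrsim R^{2(w_1/w_R)^2}$ makes the signal term dominate, and the separate condition $L \gtrsim d_1$ is needed so that $\sqrt{\log L} \gtrsim \sqrt{\log d_1}$, because the noise term $\|\mathrm{mat}_{S_1}(\cE\times_1\theta)\|_2$ carries a $\sqrt{\log d_1}$ factor (from the matrix Gaussian series / Tropp bound, eq.~\eqref{eq:init-e}) that must be matched by the signal boost before dividing. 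Relatedly, your plan to condition on $\theta$ and ``absorb $\|\theta\|_2 \asymp \sqrt{d_1}$'' gives the crude bound $\sqrt{d_1}\,\|\cE\|_{\rm op}$, which in general does not reproduce the $\sqrt{d_1 d_{S_1}}+\sqrt{d_1 d_{S_1^c}}$ dependence in $\phi_2$; the refined matrix-Gaussian-series bound is what yields the stated rate.

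Second, the pick-and-prune step is dismissed too quickly. The paper's Step~II is an induction over factors $r=1,\dots,R$: assuming the first $r-1$ estimates are already within $\phi_0$ of distinct true components, it argues by contradiction that the selector $\ell_* = \arg\max_\ell \|\Xi\times_{m=1}^M \tilde\ba_{\ell m}\|_2$ cannot land far from $\ba_r$ (using that far-away candidates produce a strictly smaller objective $(1-C'\phi_0^2)w_i$ versus $(1-3\phi_0^2)w_i$ for the good one), and it requires a compatibility condition on the pruning threshold, $R\nu^M(w_1/w_R)\le c\phi_0^2$, so that already-claimed components do not re-enter. Stating merely that the norm criterion ``selects the best approximation'' and the pruning ``guarantees distinctness'' leaves this argument --- and in particular the square-root loss that produces $C\sqrt{\delta_{\max}}+C\sqrt{\phi_2}$ rather than the linear rate of part (i) --- unproved.
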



The bounds in \eqref{thm:initial:eq1} and \eqref{thm:initial:eq1*} comprise bias from non-orthogonal loading vectors (first term) and stochastic error from noise and precision matrices estimations (second term). When the eigengap condition is not met, randomized projection yields the slower rate in \eqref{thm:initial:eq1*} compared to \eqref{thm:initial:eq1}. For case (ii), a generalized result for arbitrary eigen ratios $w_1/w_R$ appears in Section \ref{append:proof:initia}.

Using Algorithm \ref{alg:initialize-cp} to initialize Algorithm \ref{alg:tensorlda-cp}, with $\psi_0$ as the maximum of the right-hand sides of \eqref{thm:initial:eq1} or \eqref{thm:initial:eq1*}, the condition in Theorem \ref{thm:cp-converge} satisfies with $\PP(\Omega_0)\ge 1-n^{-c}-d_1^{-c}-\sum_{m=2}^M e^{-c d_m}$.

\begin{remark} 
Our initialization advances existing tensor CP methods in several ways. First, unlike \cite{han2023tensor,han2023cp}, we handle repeated singular values in tensor CP decomposition. Second, \textsc{rc-PCA} requires weaker incoherence conditions for the non-orthogonality of the CP basis vectors under small CP rank. For example, for 3-way tensors, \cite{anandkumar2014guaranteed} requires incoherence condition $\max_{i\neq j}\max_{m} \ba_{im}^\top \ba_{jm}  \le {\rm polylog}(d_{\min})/\sqrt{d_{\min}}$, while our condition $\psi_0\lesssim 1$ yields $\max_{i\neq j}\max_{m} \ba_{im}^\top \ba_{jm}  \lesssim R^{-2}$, which is weaker when $R\lesssim d_{\min}^{1/4}$.

\end{remark}


\noindent
\textbf{Misclassification error.}
We now establish the misclassification error rates for high-dimensional tensor LDA with CP low-rank structure. We derive both upper and lower bounds for the excess misclassification risk, demonstrating that Algorithm \ref{alg:tensorlda-cp} achieves optimal rates.

Under the TGMM and CP-TDA rule defined in \eqref{eqn:lda-rule-cp}, the classifier's performance is measured by its misclassification error
\vspace{-2em}
\begin{align*}
\cR_{\btheta}(\hat\Upsilon_{\rm cp}) = \PP_{\btheta}\big({\rm label}(\cX^*) \neq \hat\Upsilon_{\rm cp}(\cX^*)\big),    
\end{align*}
where $\PP_{\btheta}$ denotes the probability with respect to $\cX^* \sim \pi_1 \cT\cN(\cM_1; \bSigma) + \pi_2 \cT\cN(\cM_2; \bSigma)$ and ${\rm label}(\cX^*)$ denotes the true class of $\cX^*$. In this paper, we use the excess misclassification risk relative to the optimal misclassification error, $\cR_{\btheta}(\hat\Upsilon_{\rm cp}) -\cR_{\rm opt}(\btheta)$, to measure the performance of the classifier $\hat\Upsilon_{\rm cp}$.

\begin{theorem}[Upper bound of misclassification rate]
\label{thm:class-upp-bound}
Assume the conditions of Theorem \ref{thm:cp-converge} hold, and $ w_1\sqrt{\max_m d_m^2 R/(nd)}+\sqrt{\sum_{m=1}^M d_m R/n} = o(\Delta)$ as $n \rightarrow \infty$, where the SNR is $\Delta=\sqrt{\langle \calB, \; \cD \rangle}$. 

\noindent (i) For bounded $\Delta\le c_0$ with some $c_0>0$, then with probability at least $1-n^{-c}-d_1^{-c}-\sum_{m=2}^M e^{-cd_m }$, the misclassification rate of classifier $\hat\Upsilon_{\rm cp}$ satisfies
\begin{equation}\label{eqn:lda-mis-cp1}
\cR_{\btheta}(\hat\Upsilon_{\rm cp}) -\cR_{\rm opt}(\btheta) \le C\left(\frac{\sum_{m=1}^M d_m R }{n} \right), 
\end{equation}
for some constants $C,c>0$.\\
(ii) For $\Delta\to\infty$ as $n\to \infty$, then there exists $\vartheta_n=o(1)$, with probability at least $1-n^{-c}-d_1^{-c}-\sum_{m=2}^M e^{-cd_m }$, the misclassification rate of classifier $\hat\Upsilon_{\rm cp}$ satisfies
\begin{equation}\label{eqn:lda-mis-cp2}
\cR_{\btheta}(\hat\Upsilon_{\rm cp}) -\cR_{\rm opt}(\btheta) \le C \exp\left\{-\left(\frac18+\vartheta_n\right)\Delta^2 \right\} \left(\frac{\sum_{m=1}^M d_m R }{n}  \right), 
\end{equation}
for some constants $C,c>0$.
\end{theorem}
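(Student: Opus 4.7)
The plan is to condition on the training data, write the misclassification rate in closed form using the conditional Gaussianity of $\cX^*\mid Y=k$, Taylor-expand around the oracle score, and invoke an LDA-specific cancellation so the excess risk comes out \emph{quadratic} in the plug-in errors rather than linear; the rates from Theorem~\ref{thm:cp-converge} together with elementary Gaussian concentration then complete the bound.

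\textbf{Step 1 (closed form).} Conditional on the training sample, $\langle \cX^*-\bar\cM,\hat\cB^{\rm cp}\rangle+\log(\hat\pi_2/\hat\pi_1)$ is Gaussian with mean $\hat T_k := \langle \cM_k-\bar\cM,\hat\cB^{\rm cp}\rangle+\log(\hat\pi_2/\hat\pi_1)$ and variance $\hat\sigma^2 := \vect(\hat\cB^{\rm cp})^\top(\bigotimes_m \Sigma_m)\vect(\hat\cB^{\rm cp})$, so $\cR_\btheta(\hat\Upsilon_{\rm cp})=\pi_1\Phi(G_1)+\pi_2\Phi(-G_2)$ with $G_k:=\hat T_k/\hat\sigma$, while the oracle analogue is $\cR_{\rm opt}(\btheta)=\pi_1\Phi(G_1^*)+\pi_2\Phi(-G_2^*)$ with $G_k^*=\mp\Delta/2+c/\Delta$ and $c:=\log(\pi_2/\pi_1)$.

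\textbf{Step 2 (Taylor and cancellation).} I would Taylor-expand $\Phi(G_k)-\Phi(G_k^*)=\phi(G_k^*)(G_k-G_k^*)+\tfrac12\Phi''(\xi_k)(G_k-G_k^*)^2$ and invoke the identity $\pi_1\phi(G_1^*)=\pi_2\phi(-G_2^*)$ (which follows from $G_2^{*2}-G_1^{*2}=2c$) to pair the linear terms into $-\pi_1\phi(G_1^*)\{\langle\cD,\hat\cB^{\rm cp}\rangle/\hat\sigma-\Delta\}$. The structural identity $\vect(\cD)=(\bigotimes_m\Sigma_m)\vect(\cB)$ makes the Rayleigh-type functional $f(\cB'):=\langle\cD,\cB'\rangle/\sqrt{\vect(\cB')^\top(\bigotimes_m\Sigma_m)\vect(\cB')}$ stationary at $\cB'=\cB$ with maximum value $\Delta$, so a direct second-order expansion in $h:=\hat\cB^{\rm cp}-\cB$ yields $|\langle\cD,\hat\cB^{\rm cp}\rangle/\hat\sigma-\Delta|\lesssim\|h\|_F^2/\Delta$. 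Consequently both the ``surviving'' first-order term and the genuine second-order remainder are quadratic in the estimation error.

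\textbf{Step 3 (error bookkeeping and assembly).} Writing $G_k-G_k^* = (\hat T_k-T_k^*)/\hat\sigma+T_k^*(1/\hat\sigma-1/\Delta)$ and $\hat T_k-T_k^* = -\tfrac12\langle\cD,\hat\cB^{\rm cp}-\cB\rangle+\langle\cM-\bar\cM,\hat\cB^{\rm cp}\rangle+\log(\hat\pi_2\pi_1/(\hat\pi_1\pi_2))$, I would bound $\|\hat\cB^{\rm cp}-\cB\|_F$ via Theorem~\ref{thm:cp-converge} (the SNR condition $w_1\sqrt{\max_m d_m^2 R/(nd)}=o(\Delta)$ absorbs the precision-matrix term into the CP term), $|\langle\bar\cM-\cM,\cB\rangle|$ by its Gaussian law with variance $\lesssim \Delta^2/n$, the cross term $\langle\bar\cM-\cM,\hat\cB^{\rm cp}-\cB\rangle$ via Cauchy--Schwarz combined with $\|\bar\cM-\cM\|_F^2\lesssim d/n$, $|\hat\sigma-\Delta|$ by Lipschitz continuity of the $\bSigma$-norm in $\hat\cB^{\rm cp}$, and $|\hat\pi_k-\pi_k|\lesssim n^{-1/2}$ by Hoeffding. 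After assembly the relevant squared error is of order $\sum_m d_m R/(n\Delta^2)$ up to strictly lower-order terms. Case~(i) follows with $\phi\le(2\pi)^{-1/2}$. For case~(ii), $\phi(G_k^*)=(2\pi)^{-1/2}\exp\{-\Delta^2/8-c^2/(2\Delta^2)\mp c/2\}$ already furnishes the $\exp\{-(1/8+\vartheta_n)\Delta^2\}$ factor with $\vartheta_n=o(1)$, and the SNR assumption guarantees $|G_k-G_k^*|=o(\Delta)$ so that $\phi(\xi_k)$ inherits the same exponential decay. The main obstacle is Step~2: the first-order cancellation yielding the quadratic rate (which is what matches the minimax lower bound of Theorem~\ref{thm:class-lower-bound}) must survive all plug-ins $(\hat\sigma,\bar\cM,\hat\pi_k)$, not only $\hat\cB^{\rm cp}$; a secondary difficulty is the statistical dependence between $\bar\cM$ and $\hat\cB^{\rm cp}$ in the cross term, which I would route through the population $\cB$ so that only strictly lower-order contributions remain.
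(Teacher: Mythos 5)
Your overall architecture — closed form in the Gaussian CDF, Taylor expansion, the identity $\pi_1\phi(G_1^*)=\pi_2\phi(-G_2^*)$ to pair the linear terms, and the observation that the Rayleigh quotient $f(\cB')=\langle\cD,\cB'\rangle/\sqrt{\vect(\cB')^\top(\bigotimes_m\Sigma_m)\vect(\cB')}$ is stationary at $\cB'=\cB$ so the plug-in error enters quadratically — is exactly the paper's strategy (the paper packages the Rayleigh-quotient quadraticity as Lemma~\ref{lemma:tensor norm inequality} and routes the $\bar\cM,\hat\pi$ terms through the intermediate quantity $\cR^*$, but this is the same cancellation you perform directly). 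Steps 1 and 2 are correct.

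Step 3 has a genuine gap in the error bookkeeping. You propose to control the cross term $\langle\bar\cM-\cM,\hat\cB^{\rm cp}-\cB\rangle$ by Cauchy--Schwarz with $\|\bar\cM-\cM\|_F^2\lesssim d/n$, where $d=\prod_m d_m$. This yields $\sqrt{d/n}\cdot\|\hat\cB^{\rm cp}-\cB\|_F \asymp \sqrt{d/n}\cdot\sqrt{\sum_m d_m R/n}$, which, after dividing by $\hat\sigma\asymp\Delta$ and squaring in the Taylor remainder, contributes $(d/n)(\sum_m d_m R/n)/\Delta^2$. This is \emph{not} $O(\sum_m d_m R/n)$ unless $d\lesssim n\Delta^2$, which is not assumed — indeed the theorem is aimed precisely at the high-dimensional regime $d\gg n$. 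The same obstruction ruins case~(ii), since you need $|G_k-G_k^*|=o(\Delta)$ to keep $\phi(\xi_k)$ at the right exponential scale. The missing idea is a low-rank concentration bound: because $\hat\cB^{\rm cp}$ is by construction a CP rank-$R$ tensor (and $\cB$ is too), the inner product $\langle\bar\calX^{(k)}-\cM_k,\hat\cB^{\rm cp}\rangle$ concentrates at the rate governed by the intrinsic dimension $R\sum_m d_m$, not the ambient $d$. The paper's Lemma~\ref{lemma:low-rank-tensor} establishes exactly this via an $\epsilon$-net over the CP rank-$R$ cone, giving $\sup_{\|\cV\|_F\le 1,\,\text{CP rank }R}|\langle\bar\cX-\mu,\cV\rangle|\lesssim\sqrt{\sum_m d_m R/n}$ with high probability, which is applied with $\cV=\hat\cB^{\rm cp}/\|\hat\cB^{\rm cp}\|_F$ and $\|\hat\cB^{\rm cp}\|_F\lesssim\Delta$. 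The uniform-over-the-manifold formulation also sidesteps the $\bar\cM$-vs-$\hat\cB^{\rm cp}$ dependence you flagged. Without this lemma (or an equivalent), your bookkeeping loses a factor of $\sqrt{d}/\sqrt{\sum_m d_m R}$ and does not match the claimed rate.
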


Compared with Theorem \ref{thm:cp-converge}, the excess misclassification rates in \eqref{eqn:lda-mis-cp1} and \eqref{eqn:lda-mis-cp2} only contain one part. The error induced by the estimation accuracy of the mode-$m$ precision matrix becomes negligible in the excess misclassification rates.

To understand the difficulty of the tensor classification problem, it is essential to obtain the minimax lower bounds for the excess misclassification risk. We especially consider the following parameter space of CP low-rank discriminant tensors,
\begin{align*}
\calH= \Big\{\;& \btheta=(\cM_1, \cM_2, \bSigma): \cM_1,\cM_2 \in \RR^{d_1 \times \cdots \times d_M}, \; \bSigma = [\Sigma_m]_{m=1}^M, \; \Sigma_m \in \RR^{d_m \times d_m}, \; \text{for some} \; C_0>0, \\
&C_0^{-1} \leq \lambda_{\min}(\otimes_{m=1}^M \Sigma_m) \leq \lambda_{\max}(\otimes_{m=1}^M \Sigma_m) \leq C_0, \; \calB =\sum_{r=1}^R w_r \circ_{m=1}^M \ba_{rm}\; \text{with}\; \|\ba_{rm}\|_2=1 \Big\}   . 
\end{align*}
The following lower bound holds over $\cH$.

\begin{theorem}[Lower bound of misclassification rate] \label{thm:class-lower-bound}
Under the TGMM, the minimax risk of excess misclassification error over the parameter space $\calH$ satisfies the following conditions.

\noindent (i) For $c_1<\Delta \le c_2$ with constants $c_1,c_2>0$, for any $\gamma>0$, there exists constant $C_{\gamma}>0$ such that
\begin{equation}\label{eqn:lda-lbd-cp1}
\inf_{\hat \Upsilon_{\rm cp}} \sup_{\btheta \in \calH} \PP\left( \cR_{\btheta}(\hat\Upsilon_{\rm cp}) -\cR_{\rm opt}(\btheta) \ge C_{\gamma}  \frac{\sum_{m=1}^M d_m R}{n}  \right) \ge 1-\gamma   .   
\end{equation}
(ii) For $\Delta\to\infty$ as $n\to \infty$, for any $\gamma>0$, there exists constant $C_{\gamma}>0$ and $\vartheta_n=o(1)$ such that
\begin{equation}\label{eqn:lda-lbd-cp2}
\inf_{\hat \Upsilon_{\rm cp}} \sup_{\btheta \in \calH} \PP\left(\cR_{\btheta}(\hat\Upsilon_{\rm cp}) -\cR_{\rm opt}(\btheta) \ge C_{\gamma} \exp\left\{-\left(\frac18+\vartheta_n\right)\Delta^2 \right\} \frac{\sum_{m=1}^M d_m R }{n}  \right) \ge 1-\gamma   .   
\end{equation}
\end{theorem}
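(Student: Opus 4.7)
The plan is to combine an Assouad-type reduction over a hypercube-indexed sub-family of CP-rank-$R$ discriminants with a local expansion of the Bayes misclassification probability. The factor $\exp\{-(1/8+\vartheta_n)\Delta^2\}$ will emerge from the Gaussian density on the Bayes decision boundary, while the $\sum_m d_m R/n$ factor reflects the $R\sum_m d_m$ degrees of freedom of a CP-rank-$R$ tensor; case (i) then follows from case (ii) by absorbing the bounded Gaussian factor into constants.

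First I would restrict to the least-favorable sub-family $\calH_0 \subset \calH$ in which $\Sigma_m = I_{d_m}$, $\pi_1=\pi_2=1/2$, $\cM_1=-\cD/2$, $\cM_2=\cD/2$, so that $\calB=\cD$ and the Bayes rule is $\Upsilon_\btheta(\cX)=\bbone\{\langle\cX,\calB\rangle\ge 0\}$. The identity
\[ \cR_\btheta(\hat\Upsilon)-\cR_{\rm opt}(\btheta) = \EE_\btheta\bigl[|2\eta_\btheta(\cX)-1|\,\bbone\{\hat\Upsilon(\cX)\ne\Upsilon_\btheta(\cX)\}\bigr], \]
together with the fact that $|2\eta_\btheta-1|$ is a monotone function of $|\langle\cX,\calB\rangle|/\|\calB\|_{\rm F}$, reduces the task to lower-bounding the mass of a thin slab around the Bayes hyperplane weighted by the Gaussian density there.

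Next I would build the packing. Pick orthonormal anchors $\ba_{rm}^{(0)}\in\RR^{d_m}$ and signal strengths $w_r$ so that $\Delta=\sqrt{\sum_r w_r^2}$ matches the target regime of case (i) or case (ii). For each $(r,m)$ let $\{\bv_{rmj}\}_{j=1}^{d_m-R}$ be an orthonormal basis of $\mathrm{span}\{\ba_{r'm}^{(0)}:r'\in[R]\}^\perp$. For $\omega\in\{-1,+1\}^N$ with $N=R\sum_m(d_m-R)\asymp R\sum_m d_m$, set $\ba_{rm}^{(\omega)}\propto \ba_{rm}^{(0)}+\epsilon\sum_j \omega_{rmj}\bv_{rmj}$ normalized to unit length, and $\calB_\omega=\sum_r w_r\circ_m \ba_{rm}^{(\omega)}$, which remains CP-rank-$R$ and in $\calH_0$.

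Finally I would control information and excess risk simultaneously. Because $\Sigma=I$, convexity of KL and the Gaussian KL formula bound the per-sample KL between the mixtures indexed by $\omega$ and a Hamming-neighbor $\omega'$ by a constant times $\|\calB_\omega-\calB_{\omega'}\|_{\rm F}^2\asymp w_1^2\epsilon^2$ per flipped coordinate, so choosing $\epsilon^2\asymp 1/(nw_1^2)$ keeps $n\cdot\mathrm{KL}=O(1)$ per bit. For the risk side, flipping a single $(r,m,j)$-bit perturbs the decision hyperplane of $\calB$ by a slab of width $\asymp\epsilon$ inside which $|2\eta_\btheta-1|\asymp \phi(\Delta/2)\,|\langle\cX,\calB\rangle|/\|\calB\|_{\rm F}$; integrating against $\cN(\pm\cD/2,I)$ gives per-bit excess risk $\gtrsim \phi(\Delta/2)\epsilon^2$. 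Summing over the $N$ bits and applying Assouad's lemma (or Fano coupled with a difficult-test reduction) yields the in-probability lower bound of order $\phi(\Delta/2)\sum_m d_m R/n$ with probability $\ge 1-\gamma$ for any fixed $\gamma>0$, which matches (\ref{eqn:lda-lbd-cp1})-(\ref{eqn:lda-lbd-cp2}) because $\phi(\Delta/2)$ is bounded below in case (i) and $\phi(\Delta/2)\asymp \exp\{-(1/8+o(1))\Delta^2\}$ in case (ii). The hardest step will be the sharp tracking of the constant $1/8$ in the exponent for case (ii): it requires an exact Gaussian-density computation on the Bayes hyperplane plus careful slab-volume matching so that lower-order errors collapse into the $\vartheta_n=o(1)$ correction, whereas the CP-rank-$R$ constraint is preserved automatically because only one factor per rank component is perturbed.
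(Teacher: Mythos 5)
Your proposal takes a genuinely different route from the paper, and the most important difference is also where the gap is. The paper does not use Assouad at all; it first reduces the excess risk $\cR_\btheta(\hat\Upsilon)-\cR_{\rm opt}(\btheta)$ to the ``disagreement risk'' $L_\btheta(\hat\Upsilon)=\PP_\btheta(\hat\Upsilon\ne\Upsilon_\btheta)$ via the bound $\cR_\btheta-\cR_{\rm opt}\gtrsim \Delta e^{\Delta^2/8} L_\btheta^2$ (Lemma~\ref{lemma:the first reduction}), establishes a quasi-triangle inequality $L_{\theta}(\hat\Upsilon)+L_{\tilde\theta}(\hat\Upsilon)\gtrsim \Delta^{-1}e^{-\Delta^2/8}\|\cM-\tilde\cM\|_{\rm F}$ (Lemma~\ref{lemma:probability inequality}), and then runs a Fano-type argument (the Tsybakov variant, Lemma~\ref{lemma:Tsybakov variant}) over Varshamov--Gilbert packings. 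The packings are built in $M+1$ separate sub-families $\cH_0,\cH_1,\dots,\cH_M$ (one perturbing the core weights, one per mode perturbing loadings), and the $\sum_m d_m R/n$ rate appears when the separate bounds are combined. The quadratic dependence on $L_\btheta$ is why the final rate is linear in $N/n$ rather than $\sqrt{N/n}$ despite the packing separation being $\sqrt{N/n}$ in Frobenius norm.

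The concrete gap in your sketch is the step ``summing over the $N$ bits and applying Assouad's lemma.'' Assouad's per-bit accumulation requires the loss to decompose across hypercube coordinates (as $\|\hat\theta-\theta\|^2$ does). Excess misclassification risk is a single scalar that does not decompose: if $\hat\Upsilon$ happens to sit on the Bayes boundary for $\theta_\omega$, there is no reason its excess risk under $\theta_{\omega'}$ accumulates linearly in the Hamming distance. Worse, Lemma~\ref{lemma:the first reduction} shows that excess risk scales like $L_\btheta^2$, i.e.\ it is a \emph{square} of a quantity that does satisfy a Hamming-compatible inequality; plugging this square into Assouad is exactly where the usual per-bit additivity breaks. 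The paper sidesteps this by lower-bounding $L_\btheta$ first (where the Hamming-type inequality holds linearly) and squaring at the very end. Your sketch also produces an expectation lower bound (Assouad's native output), while Theorem~\ref{thm:class-lower-bound} asserts an in-probability bound; the Tsybakov variant gives that directly, whereas an Assouad-based argument would need an additional step. Finally, your claimed per-bit excess risk $\gtrsim\phi(\Delta/2)\epsilon^2$ carries an implicit $1/w_1^2$ that does not obviously disappear in the regime $\Delta\to\infty$ with $R$ fixed (where $w_1\asymp\Delta$); the paper's construction avoids this by putting all the large signal in a fixed direction ($\lambda\be_1$) and perturbing only at scale $\epsilon=c/\sqrt n$. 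Your packing idea and the role of $\phi(\Delta/2)\asymp e^{-\Delta^2/8}$ are both on the right track, but you would need to either prove a Hamming-compatible inequality for $L_\btheta$ (reproducing Lemma~\ref{lemma:probability inequality}) and then run Fano, or carry out a genuinely non-standard Assouad argument tailored to the $L_\btheta^2$ structure.
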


Comparing the upper bounds of the excess misclassification risk \eqref{eqn:lda-mis-cp1}--\eqref{eqn:lda-mis-cp2} with lower bounds \eqref{eqn:lda-lbd-cp1}--\eqref{eqn:lda-lbd-cp2}, our convergence rates are minimax optimal.


\section{Semiparametric CP-TDA via Representation Learning}
\label{sec:Tensor LDA-TNN}

Although TGMM provides a clean parametric setting where CP-TDA is optimal, real data such as images, graphs, and multimodal observations do not naturally follow tensor-normal structure. Existing semiparametric mixture models based on coordinatewise monotone transformations \citep{liu2012high, han2013coda} are too restrictive for modern high-dimensional data. To extend CP-TDA to general settings, we propose a new semiparametric approach where a neural network encoder first maps inputs into tensor-valued features, and a tensor Gaussianizing flow then transforms these features into a latent space that closely approximates TGMM. CP-TDA is applied to this latent representation, combining the expressive power of deep learning with the robustness, parsimony, and interpretability of CP-TDA, and enabling improved empirical performance while preserving theoretical guarantees of the discriminant model. We call this Semiparametric CP-TDA framework Semiparametric Tensor Discriminant Networks (STDN).

\noindent
\textbf{Architecture Overview.} STDN augments the parametric CP-TDA framework with three components. 
First, a tensor encoder $h_\beta$ maps raw observations $\calZ$ with class labels $Y \in [K]$ (e.g., images, graphs, text, or vectors) into tensor-valued features $\cX_\beta(\cZ) \in \mathbb{R}^{d_1 \times \cdots \times d_M}$ using architecture-agnostic modules (e.g., convolutional, graph-based or transformer-style) whose role is purely representational, producing multiway features that preserve the structural richness of the input domain. 

Second, a new tensor Gaussianizing flow $g_\varphi$ (FlowTGMM) transforms encoder outputs into latent tensor $\cX^{(L)} = g_\varphi(\cX_\beta(\cZ)) \in \mathbb{R}^{d_1 \times \cdots \times d_M}$ whose class-conditional distributions approximate TGMM. Built on the tensor extension of RealNVP flow \citep{dinh2017density}, FlowTGMM learns flexible, mode-aware, class-conditioned transformations mapping encoder features to a latent space where each class approximately follows a tensor normal distribution with common mode-wise covariances. 

Finally, the CP-TDA head applies CP low-rank discriminant analysis from Section \ref{sec:model} to the latent tensors. Unlike fully connected softmax classifiers, it imposes parsimonious multilinear structure with $O(R\sum_m d_m)$ parameters, yielding mode-wise interpretability and sample efficiency. This creates a semiparametric model where deep neural networks learn features while the classifier maintains statistically optimal CP low-rank discriminant structure rather than a generic softmax layer.

To the best of our knowledge, this is the first tensor classifier combining deep representation learning and flow-based deep generative models with a statistically optimal low-rank tensor discriminant analysis, achieving both neural network flexibility and rigorous theoretical guarantees.

\subsection{Tensor Gaussianizing Flow (FlowTGMM)}
\label{subsec:FlowTGMM}
Encoder outputs $\mathcal{X}_\beta(\mathcal{Z})$ rarely follow TGMM naturally. In practice, they exhibit heterogeneous feature scales, non-Gaussian tails, and complex local geometry that violate the tensor-normal assumption required for CP-TDA's optimality. To bridge this gap, we propose FlowTGMM, a learnable transformation that maps encoder outputs into a latent tensor space $\mathcal{X}^{(L)} = g_\varphi(\mathcal{X}_\beta(\mathcal{Z})) \in \mathbb{R}^{d_1 \times \cdots \times d_M}$ whose class-conditional distributions closely approximate TGMM. This alignment enables the CP-TDA head to operate in precisely the setting where it achieves statistical efficiency, interpretability, and minimax-optimal misclassification rates.

Specifically, FlowTGMM adopts Tensor RealNVP (Definition~\ref{def:realnvp}), a tensor-structured extension of RealNVP \citep{dinh2017density} that uses mode-wise invertible linear mixing and affine coupling layers to preserve multiway structure. 
Unlike traditional semiparametric mixture models that rely on restrictive coordinatewise monotone transformations, FlowTGMM learns rich, nonlinear, mode-aware transformations that adapt to the complexity of high-dimensional tensor data while maintaining computational tractability and theoretical rigor.

\begin{definition}[Tensor RealNVP]
\label{def:realnvp}
Consider a pre-specified binary mask tensor $\cK^{(\ell)} \in \{0,1\}^{d_1 \times \cdots \times d_M}$ with supports $\calA_1^{(\ell)} = \{\bi=(i_1,\dots,i_M) : \cK^{(\ell)}_{\bi} = 1 \}, \; \calA_0^{(\ell)} = \{\bi=(i_1,\dots,i_M) : \cK^{(\ell)}_{\bi} = 0\}$. Each mapping $g^{(\ell)} : \RR^{d_1 \times \cdots \times d_M} \rightarrow \RR^{d_1 \times \cdots \times d_M}$ operates on $\cX^{(\ell-1)}$ in two steps: (i) Mode-wise linear mixer $\widetilde{\cX}^{(\ell)} := \cX^{(\ell-1)} \times_{m=1}^M \bH_m^{(\ell)}$ with pre-specified orthogonal matrices $\bH_m^{(\ell)} \in \RR^{d_m \times d_m}$. (ii) Tensor affine coupling: partition $\widetilde{\cX}^{(\ell)}$ into $\widetilde{\cX}_{\calA_1^{(\ell)}}^{(\ell)}, \widetilde{\cX}_{\calA_0^{(\ell)}}^{(\ell)} \in \RR^{d_1 \times \cdots \times d_M}$ according to the mask. Let $\upsilon^{(\ell)}, t^{(\ell)}$ be tensor-valued conditioners mapping $\widetilde{\cX}_{\calA_1^{(\ell)}}^{(\ell)}$ to tensors supported on $\calA_0^{(\ell)}$ of the same shape. Define
\begin{align*}
\calY_{\calA_1^{(\ell)}}^{(\ell)}:= \widetilde{\cX}_{\calA_1^{(\ell)}}^{(\ell)}, \quad
\calY_{\calA_0^{(\ell)}}^{(\ell)}:= \widetilde{\cX}_{\calA_0^{(\ell)}}^{(\ell)} \odot \exp\left(\upsilon^{(\ell)}\left(\widetilde{\cX}_{\calA_1^{(\ell)}}^{(\ell)}\right)\right) + t^{(\ell)}\left(\widetilde{\cX}_{\calA_1^{(\ell)}}^{(\ell)}\right),
\end{align*}
where $\odot$ and $\exp(\cdot)$ denote elementwise multiplication and exponentiation. The output is $\cX^{(\ell)} := \calY_{\calA_1^{(\ell)}}^{(\ell)} + \calY_{\calA_0^{(\ell)}}^{(\ell)}$. Composing $L$ such mappings defines the full Tensor RealNVP flow:
\[
g_\varphi := g^{(L)} \circ \cdots \circ g^{(1)}, \quad \cX^{(0)} := \cX_\beta(\cZ), \quad \cX^{(\ell)} := g^{(\ell)}(\cX^{(\ell-1)}), \ \ell = 1, \ldots, L,
\]
with parameters $\varphi = \{\upsilon^{(\ell)}, t^{(\ell)} : \ell = 1, \ldots, L \}$. The overall flow output is $\cX^{(L)} = g_\varphi(\cX_\beta(\cZ))$, which preserves the tensor order and dimensions $(d_1, \ldots, d_M)$.
\end{definition}

We propose a class-aware flow training scheme that maximizes the TGMM likelihood of latent tensors given class information. Specifically, we train $g_{\varphi}$ so that transformed tensor features $\cX^{(L)} = g_\varphi(\cX_\beta(\cZ))$ approximately follow, conditional on class, a TGMM with common mode-wise covariances $[\Sigma_m]_{m=1}^M$. This yields an objective that combines class-conditional tensor-normal log-likelihood with flow-induced Jacobian correction, encouraging latent representations of each class to concentrate around their respective means with common mode-wise covariances, precisely the setting in which CP-TDA is optimal.


\begin{proposition}[FlowTGMM]
\label{prop:log-likelihood}
Let $g_\varphi = g^{(L)} \circ \cdots \circ g^{(1)}$ be the Tensor RealNVP from Definition \ref{def:realnvp} with $\cX_i^{(0)}=\cX_\beta(\cZ_i),\; \cX_i^{(\ell)} = g^{(\ell)}(\cX_i^{(\ell-1)}), \; \ell = 1, \ldots, L$. Let $f(\cdot \mid Y = k; \calM_k, \bSigma)$ denote the tensor normal density with class means $\calM_k$ and mode-wise covariances $\bSigma=[\Sigma_m]_{m=1}^M$, as defined in Section \ref{sec:notation}. Given a labeled dataset $\{(\calZ_i, Y_i)\}_{i=1}^{n_0}$, the maximum likelihood estimator $\widehat{\varphi}$ minimizes
\begin{equation}
\label{eqn:objective function}
\calL_{n_0,\text{Flow}}(\varphi) = -\sum_{i=1}^{n_0} \left[ \log f(\cX_i^{(L)} \mid Y = Y_i; \hat \calM_{Y_i}, \hat \bSigma) + \sum_{\ell=1}^L \log \left| \det \bJ_{g^{(\ell)}}(\cX_i^{(\ell-1)}) \right| \right],
\end{equation}
where $\hat \calM_{Y_i}$ and $\hat \bSigma$ are computed from latent features $\cX_i^{(L)}$. Here, the log-Jacobian for each mapping $g^{(\ell)}$ with orthogonal mode-mixer $\bH_m^{(\ell)}$ and affine coupling conditioners $\upsilon^{(\ell)}, t^{(\ell)}$, is defined as
\begin{equation}
\label{eqn:Jacobian}
\log \left| \det \bJ_{g^{(\ell)}}(\cX_i^{(\ell-1)}) \right| = \sum_{\bj \in \calA_0^{(\ell)}} \upsilon^{(\ell)}(\tilde \cX_{i,\mathcal{A}_1^{(\ell)}}^{(\ell)})_{\bj}. 
\end{equation}
\end{proposition}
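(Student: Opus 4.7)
The plan is to apply the change-of-variables formula for normalizing flows, specialized to the tensor structure of Tensor RealNVP. Two ingredients matter: (a) the orthogonality of the mode-wise mixers $\bH_m^{(\ell)}$, and (b) the block-triangular dependence structure of the affine coupling induced by the mask $\calK^{(\ell)}$. The argument proceeds in four steps.

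First, I would apply the change-of-variables formula: since $g_\varphi$ is a composition of invertible mappings on $\RR^{d_1\times\cdots\times d_M}$, the class-conditional density of the encoder output $\cX_{\beta,i}:=\cX_\beta(\cZ_i)$ induced by the generative model is $p_\varphi(\cX_{\beta,i}\mid Y_i=k) = f\big(g_\varphi(\cX_{\beta,i})\mid Y=k;\calM_k,\bSigma\big)\,\big|\det \bJ_{g_\varphi}(\cX_{\beta,i})\big|$. Summing $-\log p_\varphi$ across the $n_0$ training examples produces the first term in \eqref{eqn:objective function}, plus a Jacobian contribution. Next, I would invoke the chain rule on $g_\varphi = g^{(L)}\circ\cdots\circ g^{(1)}$ to obtain $\log|\det \bJ_{g_\varphi}(\cX_i^{(0)})| = \sum_{\ell=1}^L \log|\det \bJ_{g^{(\ell)}}(\cX_i^{(\ell-1)})|$, yielding the layer-wise sum in \eqref{eqn:objective function}. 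It then remains to compute $\log|\det \bJ_{g^{(\ell)}}|$ for a single layer.

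I would decompose each layer as $g^{(\ell)}=\Psi^{(\ell)}\circ\Phi^{(\ell)}$, where $\Phi^{(\ell)}:\cX\mapsto\cX\times_{m=1}^M\bH_m^{(\ell)}$ is the mode-wise mixer and $\Psi^{(\ell)}:\widetilde\cX^{(\ell)}\mapsto\cX^{(\ell)}$ is the affine coupling. After vectorization, $\Phi^{(\ell)}$ has matrix representation $\bH_M^{(\ell)}\otimes\cdots\otimes\bH_1^{(\ell)}$, whose absolute determinant equals $\prod_m|\det\bH_m^{(\ell)}|^{d/d_m}=1$ since every $\bH_m^{(\ell)}$ is orthogonal; thus $\Phi^{(\ell)}$ contributes $0$ to the log-Jacobian. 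For $\Psi^{(\ell)}$, I would order the vectorized coordinates so that indices $\bj\in\calA_1^{(\ell)}$ precede those in $\calA_0^{(\ell)}$. Since $\calY_{\calA_1^{(\ell)}}^{(\ell)}=\widetilde\cX_{\calA_1^{(\ell)}}^{(\ell)}$ is independent of $\widetilde\cX_{\calA_0^{(\ell)}}^{(\ell)}$, while $\calY_{\calA_0^{(\ell)}}^{(\ell)}$ depends on $\widetilde\cX_{\calA_0^{(\ell)}}^{(\ell)}$ only through an elementwise product with $\exp(\upsilon^{(\ell)}(\widetilde\cX_{\calA_1^{(\ell)}}^{(\ell)}))$, the Jacobian of $\Psi^{(\ell)}$ is block lower-triangular with diagonal blocks $\bI$ on $\calA_1^{(\ell)}$ and $\diag(\exp(\upsilon^{(\ell)}(\cdot)_\bj))_{\bj\in\calA_0^{(\ell)}}$ on $\calA_0^{(\ell)}$. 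Hence $|\det \bJ_{\Psi^{(\ell)}}|=\prod_{\bj\in\calA_0^{(\ell)}}\exp(\upsilon^{(\ell)}(\widetilde\cX_{\calA_1^{(\ell)}}^{(\ell)})_\bj)$, and taking logs recovers \eqref{eqn:Jacobian}. Combining with the zero contribution from $\Phi^{(\ell)}$ and summing over layers gives \eqref{eqn:objective function}.

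\textbf{Main obstacle.} The hardest part will be the block-triangular argument in the last step: I must carefully vectorize the tensor, arrange coordinates according to the mask $\calK^{(\ell)}$, and verify two decoupling facts, namely that $\calY_{\calA_1^{(\ell)}}^{(\ell)}$ has zero partial derivative with respect to every $\widetilde\cX^{(\ell)}_\bj$ with $\bj\in\calA_0^{(\ell)}$, and that the elementwise scaling by $\exp(\upsilon^{(\ell)})$ induces only diagonal dependence within the $\calA_0^{(\ell)}$ block with no cross-coordinate coupling. Both facts follow directly from the construction once the coordinates are properly ordered, after which the determinant collapses to a single product over $\calA_0^{(\ell)}$ and the log-Jacobian identity is immediate. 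The remainder is routine bookkeeping of the change-of-variables formula and the chain rule.
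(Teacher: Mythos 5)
Your proposal is correct and follows essentially the same route as the paper's proof: change of variables to pull the latent TGMM density back to the encoder output, chain rule to decompose $\log|\det\bJ_{g_\varphi}|$ into per-layer terms, Kronecker-determinant computation showing the orthogonal mode-wise mixer contributes zero, and a block lower-triangular analysis of the affine coupling giving $|\det\bJ_{\Psi^{(\ell)}}|=\prod_{\bj\in\calA_0^{(\ell)}}\exp(\upsilon^{(\ell)}(\widetilde\cX_{\calA_1^{(\ell)}}^{(\ell)})_\bj)$. The decoupling facts you flag as the ``main obstacle'' are exactly what the paper verifies when it exhibits the Jacobian in block form, so there is no gap.
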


\begin{remark}
The sample log-likelihood \eqref{eqn:objective function} decomposes into two components: (i) the first term $\log f(\cX_i^{(L)} \mid Y = Y_i; \hat \calM_{Y_i}, \hat \bSigma)$ measures how well the latent tensor $\cX_i^{(L)}$ fits the class-conditional tensor-normal model, encouraging TGMM alignment; (ii) the second term $\sum_{\ell=1}^{L} \log |\det \bJ_{g^{(\ell)}}(\cX_i^{(\ell-1)})|$ accounts for the flow-induced change of variables, ensuring proper density normalization. Each log-Jacobian in \eqref{eqn:Jacobian} consists of affine coupling contributions from $v^{(\ell)}$; the mode-wise mixer contribution vanishes since the mixers are orthogonal, and the term $t^{(\ell)}$ disappear because it only affects off-diagonal entries in the triangular Jacobian matrix. See proofs in Appendix \ref{app:proof1}.
\end{remark}

\subsection{CP-TDA Cross-Entropy Head}
The CP-TDA head operates on latent tensors $\{\cX_i^{(L)}\}_{i=1}^{n_0}$ produced by the encoder-flow pipeline, where $\cX_i^{(L)} = g_\varphi(h_\beta(\cZ_i))$.
It first computes statistics from the latent tensors as in 
Section \ref{sec:method}, including class priors $\widehat{\pi}_k = n_k/(n_1 + n_2)$, class means $\widehat{\cM}_k^{(L)}$, overall mean $\widehat{\cM}^{(L)} = (\widehat{\cM}_1^{(L)} + \widehat{\cM}_2^{(L)})/2$, mean difference $\widehat{\cD}^{(L)} = \widehat{\cM}_2^{(L)} - \widehat{\cM}_1^{(L)}$, mode-wise covariances $\{\widehat{\Sigma}_m^{(L)}\}$, and the sample discriminant tensor $\widehat{\cB}^{(L)} = \widehat{\cD}^{(L)} \times_{m=1}^M (\widehat{\Sigma}_m^{(L)})^{-1}$. It then refines $\widehat{\cB}^{(L)}$ via \textsc{rc-PCA} initialization followed by DISTIP-CP (Algorithms \ref{alg:initialize-cp}-\ref{alg:tensorlda-cp}), yielding a CP rank-$R$ discriminant tensor
$\widehat\calB^{{\rm cp},(L)} = \sum_{r=1}^R \widehat w_r \left( \widehat{\ba}_{r1} \circ \cdots \circ \widehat{\ba}_{rM}\right)$. For input $\cZ^*$, the latent representation is $\cX^{*(L)} = g_\varphi(h_\beta(\cZ^*))$, and the classifier score is
\[
s_{\beta,\varphi}(\cZ^*) = \left\langle \widehat\calB^{{\rm cp},(L)}, \cX^{*(L)} - \calM^{(L)} \right\rangle + \log (\widehat{\pi}_2/\widehat{\pi}_1),
\]
which follows the decision rule in \eqref{eqn:lda-rule-cp}.
During training, we construct logits $\{-\frac{1}{2} s_{\beta,\varphi}(\cZ^*), \frac{1}{2} s_{\beta,\varphi}(\cZ^*)\}$ with cross-entropy loss. By constraining the discriminant tensor to CP rank $R$ and using common mode-wise covariances, the CP-TDA head achieves substantial parameter efficiency, reducing complexity from $O(\prod_m d_m)$ to $O(R \sum_m d_m)$, while providing mode-wise interpretability and maintaining the minimax-optimal guarantees from Section~\ref{sec:theorems}. The complete semiparametric CP-TDA (STDN) is summarized in Algorithm~\ref{alg:snapshot-training} with additional training details provided in Appendix \ref{app:training}.

\subsection{Theoretical Guarantees of Semiparametric CP-TDA (STDN)}

Having obtained network parameters $\hat{\beta}$ and $\hat{\varphi}$ from Algorithm \ref{alg:snapshot-training}, we now analyze the theoretical performance of the resulting classifier. Let $\PP_{\hat\beta,\hat\varphi,k}$ denote the true class-conditional distribution of latent tensor $\cX^{(L)} \mid Y = k$ with priors $\pi_k$, and write $\PP_{\hat\beta,\hat\varphi} = (\PP_{\hat\beta,\hat\varphi,1}, \PP_{\hat\beta,\hat\varphi,2}, \pi_1, \pi_2)$. To enable theoretical analysis, we introduce two population benchmarks. First, since the flow may not produce perfect TGMM distributions, we construct a moment-matched TGMM surrogate $\QQ_{\hat\beta,\hat\varphi}$ with class means $\calM_k = \mathbb{E}[\cX^{(L)} \mid Y = k]$, common mode-wise covariances $[\Sigma_m]_{m=1}^M$, and priors $\pi_k$, yielding oracle LDA with $\calD = \calM_2 - \calM_1$, discriminant tensor $\calB = \calD \times_{m=1}^M \Sigma_m^{-1}$, and SNR $\Delta = \sqrt{\langle \calB, \calD \rangle}$, as in Section \ref{sec:oracle lda}. Second, since $\mathcal{B}$ may not be CP low-rank while our method constrains to rank $R$ for efficiency, we define the oracle CP rank-$R$ discriminant tensor as the best rank-$R$ approximation:
\[
\mathcal{B}_R \in \arg\max_{\widetilde{\mathcal{B}}_R \in \mathcal{C}_R} \langle \mathcal{D}, \widetilde{\mathcal{B}}_R \rangle \quad \text{s.t.} \quad \| \widetilde{\mathcal{B}}_R \times_{m=1}^M \Sigma_m^{1/2} \|_F = \| \mathcal{B} \times_m \Sigma_m^{1/2} \|_F,
\]
where $\mathcal{C}_R$ denotes the cone of CP rank-$R$ tensors, yielding SNR $\Delta_R = \sqrt{\langle \mathcal{D}, \mathcal{B}_R \rangle}$ and efficiency loss $\delta_{\rm cp} := \Delta - \Delta_R \geq 0$.
The semiparametric classifier $\widehat{\Upsilon}_{\hat\beta,\hat\varphi}$ uses the estimated CP rank-$R$ discriminant tensor $\widehat{\calB}_{R}^{(L)}$ from empirical estimates $\{\widehat{\pi}_k, \widehat{\calM}_k^{(L)}, [\widehat{\Sigma}_m^{(L)}]\}$. The following theorem bounds the excess misclassification risk relative to the optimal error $\calR_{\PP_{\hat\beta,\hat\varphi}}^*$.

\begin{theorem}[Misclassification error rate]
\label{thm:semi-risk}
Assume the conditions of Theorem \ref{thm:class-upp-bound} and
Assumption~\ref{assump:flow_regularity} in the appendix hold. (i) For bounded $\Delta_R\le\Delta\le c_0$ with some $c_0>0$, with probability at least $1-n^{-c}-d_1^{-c}-\sum_{m=2}^M e^{-cd_m }$, the misclassification rate of classifier $\widehat{\Upsilon}_{\hat\beta,\hat\varphi}$ satisfies
\begin{align}\label{eqn:semilda-mis1}
\calR_{\PP_{\hat\beta,\hat\varphi}}(\widehat{\Upsilon}_{\hat\beta,\hat\varphi}) - \calR_{\PP_{\hat\beta,\hat\varphi}}^* \le   C \Big(\delta_{\rm cp} +  \frac{\sum_m d_m R}{n} + \Big(\frac{\log n}{n}\Big)^{1/4} \Big), 
\end{align}
where $n=\min\{n_1,n_2\}$ and $C,c>0$ are constants.\\
(ii) For $\Delta \ge \Delta_R \to\infty$ as $n\to \infty$, then there exists $\vartheta_n=o(1)$, with probability at least $1-n^{-c}-d_1^{-c}-\sum_{m=2}^M e^{-cd_m }$, the misclassification rate of classifier $\widehat{\Upsilon}_{\hat\beta,\hat\varphi}$ satisfies
\begin{align}\label{eqn:semilda-mis2}
\calR_{\PP_{\hat\beta,\hat\varphi}}(\widehat{\Upsilon}_{\hat\beta,\hat\varphi}) - \calR_{\PP_{\hat\beta,\hat\varphi}}^* \le C\Big( \delta_{\rm cp}\  e^{- \big(\frac{(\Delta_R/\Delta)^4}{8}+\vartheta_n \big) \Delta^2} + \frac{\sum_{m=1}^M d_m R }{n} e^{- (\frac18+\vartheta_n )\Delta_R^2 }  + \Big(\frac{\log n}{n}\Big)^{1/4} \Big), 
\end{align}
where $n=\min\{n_1,n_2\}$ and $C,c>0$ are constants.

\end{theorem}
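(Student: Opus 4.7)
The plan is to decompose the excess risk into three sources matching the three terms in \eqref{eqn:semilda-mis1}--\eqref{eqn:semilda-mis2}: (a) a \emph{representation gap} between the true latent distribution $\PP_{\hat\beta,\hat\varphi}$ and its moment-matched TGMM surrogate $\QQ_{\hat\beta,\hat\varphi}$; (b) the \emph{parametric estimation error} for $\calB_R$ from the latent features via Algorithms~\ref{alg:initialize-cp}--\ref{alg:tensorlda-cp}; and (c) a \emph{population CP approximation gap} between the oracle discriminant $\calB$ and its best rank-$R$ projection $\calB_R$. Let $\Upsilon^*_{\QQ}$ denote the Bayes classifier under $\QQ_{\hat\beta,\hat\varphi}$ and $\Upsilon^*_{\QQ,R}$ its oracle CP rank-$R$ analogue (using $\calB_R$ in \eqref{eqn:lda-rule-cp}). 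I telescope
\begin{align*}
\calR_{\PP_{\hat\beta,\hat\varphi}}(\widehat\Upsilon_{\hat\beta,\hat\varphi}) - \calR^{*}_{\PP_{\hat\beta,\hat\varphi}}
&= \underbrace{\big[\calR_{\PP}(\widehat\Upsilon) - \calR_{\QQ}(\widehat\Upsilon) + \calR_{\QQ}^{*} - \calR_{\PP}^{*}\big]}_{\text{(I) representation}} \\
&\quad + \underbrace{\big[\calR_{\QQ}(\widehat\Upsilon) - \calR_{\QQ}(\Upsilon^{*}_{\QQ,R})\big]}_{\text{(II) estimation}} + \underbrace{\big[\calR_{\QQ}(\Upsilon^{*}_{\QQ,R}) - \calR_{\QQ}^{*}\big]}_{\text{(III) CP approximation}}.
\end{align*}

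For (I), since misclassification risk is a bounded functional of the generating measure, each piece is bounded by $2\, d_{\rm TV}(\PP_{\hat\beta,\hat\varphi}, \QQ_{\hat\beta,\hat\varphi})$ uniformly in the classifier. Under Assumption~\ref{assump:flow_regularity}, the flow MLE obtained from \eqref{eqn:objective function} satisfies a standard bracketing-entropy MLE bound of order $\log n/n$ in squared Hellinger distance for the class-conditional flow-induced density; passing from Hellinger to TV via $d_{\rm TV}\le\sqrt{2}\,d_{\rm H}$ produces the $(\log n/n)^{1/4}$ rate appearing in both \eqref{eqn:semilda-mis1} and \eqref{eqn:semilda-mis2}. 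The subtle point is that the MLE targets its own flow-induced density rather than the moment-matched TGMM surrogate $\QQ$; the comparison is handled by the fact that $\QQ$ is precisely the TGMM projection of the flow under the matched means and pooled mode-wise covariances used inside \eqref{eqn:objective function}, so the same Hellinger bound transfers.

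For (II), the TGMM surrogate $\QQ_{\hat\beta,\hat\varphi}$ satisfies exactly the parametric model of Section~\ref{sec:oracle lda} with CP rank-$R$ discriminant $\calB_R$ and SNR $\Delta_R$, so Theorem~\ref{thm:class-upp-bound} applies verbatim after verifying, via Assumption~\ref{assump:flow_regularity}, that the spectral conditions on $\otimes_m \Sigma_m$ and the sample-size balance $n_1\asymp n_2$ are preserved by the flow. This yields $C\sum_m d_m R/n$ in the bounded-SNR regime \eqref{eqn:semilda-mis1} and $Ce^{-(1/8+\vartheta_n)\Delta_R^2}\sum_m d_m R/n$ in the diverging-SNR regime \eqref{eqn:semilda-mis2}. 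For (III), I use the closed-form LDA risk from Section~\ref{sec:oracle lda} to reduce the gap to the Gaussian CDF expression evaluated at the oracle separation $\Delta$ versus the effective separation induced by the misaligned CP direction, $\Delta_{\rm eff} = \langle\calB_R,\calD\rangle/\|\calB_R\times_{m=1}^M\Sigma_m^{1/2}\|_{\rm F} = \Delta_R^2/\Delta$ (using the Frobenius-norm constraint in the definition of $\calB_R$). In the bounded regime, first-order Taylor expansion of $\phi$ gives $O(\Delta-\Delta_{\rm eff})=O(\delta_{\rm cp})$; in the diverging regime, Gaussian tail expansion at $\Delta_{\rm eff}/2$ produces the damped factor $e^{-((\Delta_R/\Delta)^4/8+\vartheta_n)\Delta^2}$, with the fourth power arising from $\Delta_{\rm eff}^{2}=\Delta_R^{4}/\Delta^{2}$ in the exponent of the Gaussian CDF.

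The main obstacle is step~(I): obtaining the $(\log n/n)^{1/4}$ TV bound for the MLE-trained flow. Three issues must be handled together: controlling the bracketing entropy of the Tensor RealNVP class with its affine-coupling conditioners and orthogonal mode-mixers, dealing with the \emph{profile} likelihood induced by plugging $\widehat\calM_k,\widehat\bSigma$ into \eqref{eqn:objective function} (so that the effective empirical process involves score functions in both the flow parameters and the nuisance TGMM parameters), and relating the resulting Hellinger-type MLE rate for the flow density to TV distance from the \emph{moment-matched} surrogate $\QQ$ rather than from the flow's own density. The secondary technical point is the fourth-power exponent $(\Delta_R/\Delta)^4$ in step~(III), which requires a careful second-order expansion of the Gaussian tail around the misaligned direction $\calB_R$ to avoid picking up an extra $\Delta^{2}$ loss factor.
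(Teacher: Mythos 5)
Your three-term telescoping decomposition is exactly the paper's (Step~1 distributional mismatch via $\mathrm{TV}(\PP,\QQ)$, Step~2 CP-rank-$R$ structural bias under $\QQ$, Step~3 parametric estimation error under $\QQ$ via Theorem~\ref{thm:class-upp-bound}), and your analysis of (II) and (III) — including the effective margin $\Delta_{\rm eff}=\Delta_R^2/\Delta$ and the origin of the fourth power $(\Delta_R/\Delta)^4$ from $\Delta_{\rm eff}^2=\Delta_R^4/\Delta^2$ in the Gaussian tail — is substantively the same as the paper's proof.

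Your Step~(I), however, has a concrete gap. You claim a bracketing-entropy MLE bound of order $\log n/n$ \emph{in squared Hellinger distance}, then invoke $d_{\rm TV}\le\sqrt{2}\,d_{\rm H}$ to arrive at $(\log n/n)^{1/4}$. The arithmetic does not close: $d_{\rm H}^2\lesssim \log n/n$ gives $d_{\rm TV}\lesssim (\log n/n)^{1/2}$, not $(\log n/n)^{1/4}$. To land at the stated rate via a Hellinger route you would need $d_{\rm H}^2\lesssim(\log n/n)^{1/2}$, i.e., a rate slower by a square root than what you assert — and that slower rate is precisely what the paper actually establishes, by a different tool. The paper does not use bracketing-entropy MLE theory at all. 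It proves the exact identity (their display~\eqref{eq:KL_decomposition})
\[
\sum_{k}\pi_k\,\mathrm{KL}\!\left(\PP_{\hat\beta,\hat\varphi,k}\,\|\,\QQ_{\hat\beta,\hat\varphi,k}\right)
=\cL^{*}_{\rm Flow}(\hat\varphi)-C_\beta,
\]
so that the weighted KL between the latent law and its moment-matched TGMM surrogate equals the population FlowTGMM loss up to a $\varphi$-independent constant. Together with the distributional-universality of Tensor RealNVP (which drives $\inf_{\varphi'}\mathrm{KL}$ to zero), this reduces the problem to bounding $\cL^{*}_{\rm Flow}(\hat\varphi)-\inf_{\varphi'}\cL^{*}_{\rm Flow}(\varphi')$, which Lemma~\ref{lem:flow_rate} controls by a Rademacher-complexity argument on the bounded loss class at rate $(\log n/n)^{1/2}$. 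Pinsker's inequality then delivers $\mathrm{TV}\lesssim(\log n/n)^{1/4}$. This KL identity is also what resolves your "subtle point" — the mismatch between the flow's own induced density and the surrogate $\QQ$ — because the TGMM log-density appearing inside $\cL^{*}_{\rm Flow}$ is by construction the density of $\QQ$, so no transfer argument is needed. You flag the profile-likelihood issue and the surrogate mismatch as obstacles but do not supply an argument that closes them; the paper's Step~1 does, and doing so is the non-routine part of the proof.
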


The bounds in \eqref{eqn:semilda-mis1} and \eqref{eqn:semilda-mis2} decompose into three terms isolating distinct error sources: (i) population level approximation bias $\delta_{\rm cp}$ from restricting the discriminant tensor $\cB$ to a CP rank-$R$ discriminant tensor $\cB_R$, which is typically small in practice; (ii) statistical estimation error, matching the minimax-optimal rate for CP-TDA under TGMM from Theorem~\ref{thm:class-upp-bound}; and (iii) flow-induced mismatch of order $(\log n/n)^{1/4}$, arising from optimization and generalization error of the FlowTGMM objective. This flow mismatch rate is consistent with conventional rates in flow-based generative models \citep{irons2022triangular}. As shown in Appendix~\ref{append:proof2}, Tensor RealNVP is expressive enough to approximate the latent TGMM arbitrarily well, so no additional approximation error arises beyond this statistical mismatch.




\section{Simulation Study}
\label{sec:simu}






Our simulations confirm the theoretical properties of CP-TDA: \textsc{rc-PCA} reliably provides a valid warm start and DISTIP-CP converges to the true CP components. 
Under ideal CP low-rank setting, CP-TDA achieves substantially lower estimation and classification errors than existing tensor classifiers. Even when the true discriminant deviates from CP structure, CP-TDA remains superior, demonstrating strong robustness under model misspecification. Additional studies in Appendix \ref{append:simulation} examines: (i) heavy-tailed non-Gaussian distributions, (ii) heteroscedastic covariances violating the common-covariance assumption, (iii) rank misspecification and data-driven rank selection, (iv) ultra high-dimensional settings, and (v) class imbalance. Across all settings, CP-TDA maintains superior performance, confirming its practical reliability beyond theoretical assumptions.

\noindent
\underline{\textbf{Data generation.}} We generate data under the TGMM with order-$3$ tensors of dimension $(d_1,d_2,d_3)$ $=(30,30,30)$. Each class contains $n_1=n_2=100$ training samples, with $1000$ test samples for evaluation. Mode-wise covariances $\bSigma=[\Sigma_m]_{m=1}^3$ have unit diagonals with off-diagonal elements $3/d_m$. 
We construct $\{\ba_{rm}\}$ using both orthogonal bases and non-orthogonal bases defined by $\tilde \ba_{1m} = \ba_{1m}$ and $\tilde \ba_{rm} = \frac{\ba_{1m} + \eta \ba_{rm}}{ \norm{\ba_{1m} + \eta \ba_{rm}}_2}$ for $r \ge 2$, where $\eta = (\vartheta^{-2/M} - 1)^{1/2}$, $\vartheta = \delta / (r-1)$ and $\delta=0.1$. We examine two eigen-structure regimes: (i) small spectral separation, where all components have equal signal strength $w_r\in\{1.5, 2.0, 2.5\}$; and (ii) large spectral separation, where $w_{r}/w_{r+1}=1.25$ with $w_{\rm max}\in\{2, 3, 4\}$, yielding one dominant component. Class means are $\calM_1=0$ and $\calM_2=\calB \times_{m=1}^M \Sigma_m$, where $\calB=\sum_{r=1}^R w_r a_{r1}\circ\cdots\circ a_{rM}$. Each configuration is repeated 100 times.


\noindent
\underline{\textbf{\textsc{rc-PCA} ensures global convergence of DISTIP-CP.}} 
Figure \ref{fig:1} illustrates the behavior of different initialization schemes for DISTIP-CP with non-orthogonal bases under both small and large spectral separation regimes. It displays the logarithm of the CP basis estimation error, measured by $\max_{r,m} \| \hat{\ba}_{rm} \hat{\ba}_{rm}^\top - \ba_{rm} \ba_{rm}^\top \|_2$, across configurations with varying signal strengths. 

Alternating Rank-1 Least Squares (ARLS, \cite{anandkumar2014guaranteed}), the standard CP initialization, fails under small signal strength, where the sample discriminant tensor has a weak signal to noise ratio that violates its assumptions. In such cases, ARLS either diverges or converges to a local optimum, failing to enter the contraction region of the global optimum.

In contrast, \textsc{rc-PCA} consistently provides a reliable warm start in all cases, after which DISTIP-CP rapidly converges to the true CP components, matching Theorem \ref{thm:cp-initilization}. Initialization is thus the fundamental bottleneck in discriminant tensor estimation: without a proper warm start, global recovery is unattainable, whereas \textsc{rc-PCA} reliably enables it under weaker signal requirements.
This suggests that \textsc{rc-PCA} may also benefit other CP-based tensor methods whose performance depends critically on initialization quality.
Figures \ref{fig:2}-\ref{fig:3} in Appendix \ref{subsec:others} further confirm \textsc{rc-PCA}'s reliability under order-3 tensors with orthogonal CP bases and order-4 tensors.

\begin{figure}[ht]
    \centering
    \resizebox{0.8\textwidth}{!}{
    \begin{subfigure}[b]{0.43\textwidth}
        \centering
        \includegraphics[width=\textwidth]{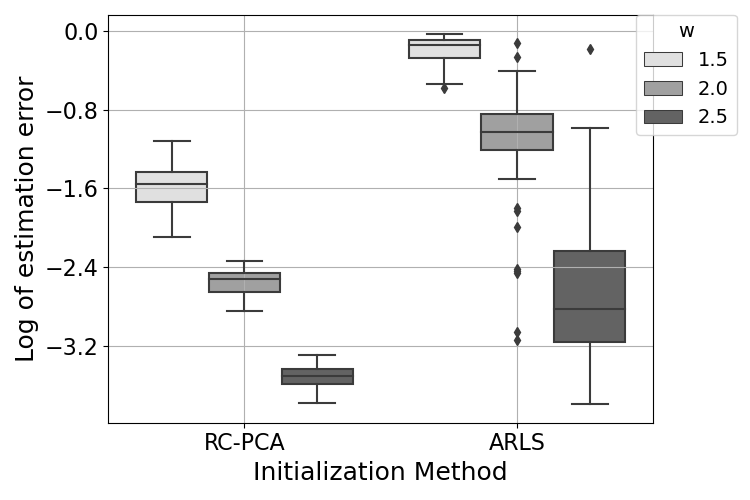}
        \caption{Small spectral separation}
        \label{fig:1a}
    \end{subfigure}
    \hspace{0.1\textwidth}
    \begin{subfigure}[b]{0.43\textwidth}
        \centering
        \includegraphics[width=\textwidth]{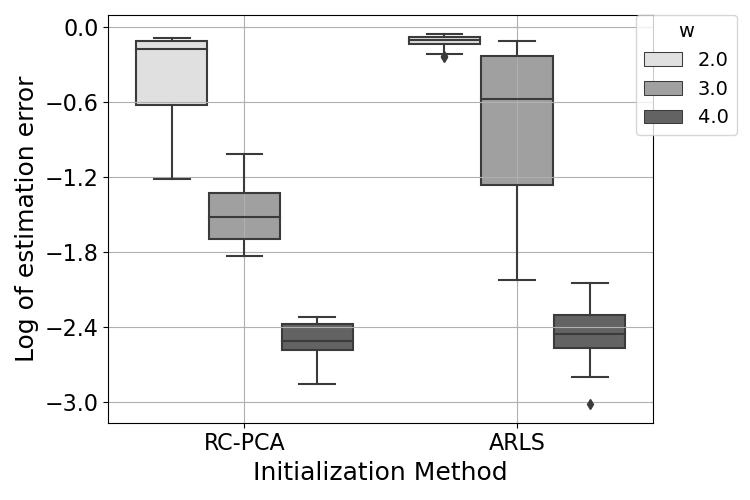}
        \caption{Large spectral separation}
        \label{fig:1b}
    \end{subfigure}
    }
    \caption{\small Logarithmic CP basis estimation error for \textsc{rc-PCA} and ARLS with non-orthogonal bases under (a) small spectral separation (equal $w_r$) and (b) large spectral separation (geometric decay with $w_{r}/w_{r+1} = 1.25$).}
    \label{fig:1}
\end{figure}

\noindent
\underline{\textbf{Performance under CP-structured discriminant model.}} Table \ref{tab:1-2} compares CP-TDA with leading tensor-based classifiers, including the sample discriminant tensor, TuckerLDA, CATCH \citep{pan2019covariate}, DATED \citep{wang2024parsimonious}, GuoSTM \citep{guoxian2016}, LaiSTDA \citep{lai2013sparse}, and ZhouLogitCP \citep{zhou2013tensor}. Implementation details are provided in Appendix \ref{subsec:others}. Across all settings, CP-TDA achieves substantially lower discriminant-tensor estimation error and misclassification error. 
Appendix \ref{subsec:others} extends these results to order-3 tensors with orthogonal CP bases (Table \ref{tab:3-4}) and order-4 tensors with both orthogonal and non-orthogonal bases (Tables \ref{tab:5-6}-\ref{tab:7-8}), under both spectral separation regimes. Overall, DISTIP-CP with \textsc{rc-PCA} initialization provides uniformly strong performance across signal levels, incoherence regimes,  and covariance structures.

\begin{table}[ht]
\centering
\resizebox{0.85\textwidth}{!}{
\begin{tabular}{cccccccc}
\toprule
\multirow{2}{*}{Metric} & \multirow{2}{*}{Algorithms} & \multicolumn{3}{c}{Small spectral separation} & \multicolumn{3}{c}{Large spectral separation} \\
\cmidrule(lr){3-5} \cmidrule(lr){6-8}
& & $w = 1.5$ & $w = 2.0$ & $w = 2.5$ & $w_{max} = 2$ & $w_{max} = 3$ & $w_{max} = 4$ \\
\midrule
\multirow{6}{*}{EstError} 
& Sample $\calB$ & 5.24\textsubscript{(0.19)} & 5.03\textsubscript{(0.15)} & 3.97\textsubscript{(0.10)} & 5.63\textsubscript{(0.15)} & 4.71\textsubscript{(0.15)} & 3.49\textsubscript{(0.12)} \\
& \cellcolor{gray!20} CP-TDA & \cellcolor{gray!20} 0.93\textsubscript{(0.02)} & \cellcolor{gray!20} 0.86\textsubscript{(0.02)} & \cellcolor{gray!20} 0.67\textsubscript{(0.01)} & \cellcolor{gray!20} 1.07\textsubscript{(0.02)} & \cellcolor{gray!20} 0.91\textsubscript{(0.02)} & \cellcolor{gray!20} 0.56\textsubscript{(0.01)} \\
& TuckerLDA & 1.23\textsubscript{(0.02)} & 1.02\textsubscript{(0.01)} & 0.84\textsubscript{(0.01)} & 1.34\textsubscript{(0.02)} & 1.07\textsubscript{(0.02)} & 0.81\textsubscript{(0.01)} \\
& CATCH & 1.10\textsubscript{(0.00)} & 1.01\textsubscript{(0.00)} & 1.00\textsubscript{(0.00)} & 1.09\textsubscript{(0.00)} & 1.00\textsubscript{(0.00)} & 0.98\textsubscript{(0.00)} \\
& DATED & 2.30\textsubscript{(0.01)} & 2.05\textsubscript{(0.01)} & 1.85\textsubscript{(0.00)} & 2.44\textsubscript{(0.01)} & 2.10\textsubscript{(0.01)} & 1.69\textsubscript{(0.00)} \\
& ZhouLogitCP & 1.96\textsubscript{(0.07)} & 1.89\textsubscript{(0.05)} & 1.49\textsubscript{(0.04)} & 2.11\textsubscript{(0.05)} & 1.77\textsubscript{(0.05)} & 1.30\textsubscript{(0.04)} \\
\midrule
\multirow{8}{*}{Misclass} 
& Sample $\calB$ & 0.25\textsubscript{(0.02)} & 0.12\textsubscript{(0.01)} & 0.06\textsubscript{(0.01)} & 0.28\textsubscript{(0.02)} & 0.17\textsubscript{(0.02)} & 0.07\textsubscript{(0.01)} \\
& \cellcolor{gray!20} CP-TDA & \cellcolor{gray!20} 0.08\textsubscript{(0.01)} & \cellcolor{gray!20} 0.03\textsubscript{(0.00)} & \cellcolor{gray!20} 0.00\textsubscript{(0.00)} & \cellcolor{gray!20} 0.11\textsubscript{(0.01)} & \cellcolor{gray!20} 0.05\textsubscript{(0.00)} & \cellcolor{gray!20} 0.00\textsubscript{(0.00)} \\
& TuckerLDA & 0.13\textsubscript{(0.01)} & 0.05\textsubscript{(0.01)} & 0.02\textsubscript{(0.00)} & 0.15\textsubscript{(0.01)} & 0.07\textsubscript{(0.00)} & 0.01\textsubscript{(0.00)} \\
& CATCH & 0.36\textsubscript{(0.03)} & 0.29\textsubscript{(0.02)} & 0.22\textsubscript{(0.02)} & 0.38\textsubscript{(0.03)} & 0.31\textsubscript{(0.02)} & 0.28\textsubscript{(0.02)} \\
& DATED & 0.37\textsubscript{(0.01)} & 0.33\textsubscript{(0.01)} & 0.26\textsubscript{(0.01)} & 0.39\textsubscript{(0.01)} & 0.34\textsubscript{(0.01)} & 0.29\textsubscript{(0.01)} \\
& GuoSTM & 0.28\textsubscript{(0.01)} & 0.23\textsubscript{(0.01)} & 0.18\textsubscript{(0.01)} & 0.26\textsubscript{(0.01)} & 0.20\textsubscript{(0.01)} & 0.17\textsubscript{(0.01)} \\
& LaiSTDA & 0.42\textsubscript{(0.01)} & 0.37\textsubscript{(0.01)} & 0.30\textsubscript{(0.01)} & 0.43\textsubscript{(0.01)} & 0.39\textsubscript{(0.01)} & 0.33\textsubscript{(0.01)} \\
& ZhouLogitCP & 0.33\textsubscript{(0.02)} & 0.29\textsubscript{(0.01)} & 0.23\textsubscript{(0.01)} & 0.35\textsubscript{(0.01)} & 0.30\textsubscript{(0.01)} & 0.21\textsubscript{(0.01)} \\
\bottomrule
\end{tabular}
}
\caption{\small Discriminant tensor estimation error (EstError, $\|\widehat{\mathcal{B}} - \mathcal{B}\|_F/\|\mathcal{B}\|_F$) and misclassification rates (Misclass) for binary classification with non-orthogonal CP bases, under small spectral separation and large spectral separation. Best performance are highlighted in gray.}
\label{tab:1-2}
\end{table}

\noindent
\underline{\textbf{Performance under misspecified discriminant structures.}}
We next evaluate robustness when the true discriminant tensor departs from the ideal CP structure. Two misspecification mechanisms are considered: (i) a Tucker low-rank tensor with controlled off-diagonal energy, and (ii) a CP tensor contaminated by sparse perturbation. Detailed descriptions are provided in Appendix~\ref{subsec:structureRobust}. 

Tables \ref{tab:robustness}-\ref{tab:robustness_Berror} show misclassification rates and, when available, estimation errors. Across all degrees of misspecification, moderate to large Tucker off-diagonal energy ($\alpha=0.3,0.5,0.7$) and sparse perturbations of varying magnitude ($\tau=0.2,0.4,0.6$), CP-TDA remains the top performer. In the Tucker setting, its misclassification error is substantially lower than that of TuckerLDA, CATCH, GuoSTM, LaiSTDA, ZhouLogitCP, and DATED, even when the true structure is closer to Tucker than to CP. In the CP-plus-sparse setting, CP-TDA again achieves the lowest misclassification rates and estimation errors, while several competing methods deteriorate sharply or fail to recover meaningful structure.

\begin{table}[ht]
\centering
\resizebox{0.9\textwidth}{!}{
\begin{tabular}{lccccccc}
\toprule
Setting & \cellcolor{gray!20} CP-TDA & TuckerLDA & CATCH & GuoSTM & LaiSTDA & ZhouLogitCP & DATED \\
\midrule
$\alpha=0.3$   & \cellcolor{gray!20} 0.055\textsubscript{(0.004)} & 0.131\textsubscript{(0.006)} & 0.218\textsubscript{(0.003)} & 0.169\textsubscript{(0.001)} & 0.465\textsubscript{(0.005)} & 0.260\textsubscript{(0.012)} & 0.194\textsubscript{(0.006)} \\
$\alpha=0.5$   & \cellcolor{gray!20} 0.049\textsubscript{(0.003)} & 0.076\textsubscript{(0.005)} & 0.179\textsubscript{(0.003)} & 0.096\textsubscript{(0.001)} & 0.467\textsubscript{(0.005)} & 0.256\textsubscript{(0.012)} & 0.158\textsubscript{(0.006)} \\
$\alpha=0.7$   & \cellcolor{gray!20} 0.023\textsubscript{(0.004)} & 0.025\textsubscript{(0.002)} & 0.131\textsubscript{(0.003)} & 0.059\textsubscript{(0.001)} & 0.446\textsubscript{(0.006)} & 0.207\textsubscript{(0.013)} & 0.131\textsubscript{(0.005)} \\
\midrule
$\tau=0.2$   & \cellcolor{gray!20} 0.034\textsubscript{(0.002)} & 0.096\textsubscript{(0.006)} & 0.245\textsubscript{(0.003)} & 0.114\textsubscript{(0.001)} & 0.464\textsubscript{(0.004)} & 0.250\textsubscript{(0.013)} & 0.213\textsubscript{(0.006)} \\
$\tau=0.4$   & \cellcolor{gray!20} 0.022\textsubscript{(0.002)} & 0.096\textsubscript{(0.005)} & 0.129\textsubscript{(0.002)} & 0.108\textsubscript{(0.001)} & 0.463\textsubscript{(0.004)} & 0.284\textsubscript{(0.011)} & 0.190\textsubscript{(0.005)} \\
$\tau=0.6$ & \cellcolor{gray!20} 0.016\textsubscript{(0.002)} & 0.092\textsubscript{(0.003)} & 0.109\textsubscript{(0.001)} & 0.094\textsubscript{(0.001)} & 0.354\textsubscript{(0.003)} & 0.217\textsubscript{(0.006)} & 0.148\textsubscript{(0.004)} \\
\bottomrule
\end{tabular}
}
\caption{\small Misclassification rates under misspecified discriminant structures. Upper panel: Tucker low-rank discriminant with varied off-diagonal energy ratio $\alpha$ (larger $\alpha$ means greater deviation from CP). Lower panel: CP discriminant tensor with sparse perturbations of varied magnitude $\tau$ at sparsity level $\rho=0.01$. }
\label{tab:robustness}
\end{table}

\begin{table}[ht]
\centering
\resizebox{0.9\textwidth}{!}{
\begin{tabular}{lccccccc}
\toprule
Setting & \cellcolor{gray!20} CP-TDA & TuckerLDA & CATCH & GuoSTM & LaiSTDA & ZhouLogitCP & DATED \\
\midrule
$\alpha=0.3$   & \cellcolor{gray!20} 0.895\textsubscript{(0.015)} & 1.006\textsubscript{(0.015)} & 1.003\textsubscript{(0.000)} & NA & NA & 2.884\textsubscript{(0.092)} & 2.042\textsubscript{(0.003)} \\
$\alpha=0.5$   & \cellcolor{gray!20} 0.798\textsubscript{(0.013)} & 0.892\textsubscript{(0.013)} & 1.003\textsubscript{(0.000)} & NA & NA & 2.665\textsubscript{(0.068)} & 1.932\textsubscript{(0.003)} \\
$\alpha=0.7$   & \cellcolor{gray!20} 0.763\textsubscript{(0.013)} & 0.764\textsubscript{(0.012)} & 1.002\textsubscript{(0.000)} & NA & NA & 2.464\textsubscript{(0.088)} & 1.812\textsubscript{(0.003)} \\
\midrule
$\tau=0.2$   & \cellcolor{gray!20} 0.693\textsubscript{(0.014)} & 0.916\textsubscript{(0.014)} & 1.003\textsubscript{(0.000)} & NA & NA & 2.424\textsubscript{(0.059)} & 1.925\textsubscript{(0.003)} \\
$\tau=0.4$   & \cellcolor{gray!20} 0.671\textsubscript{(0.011)} & 0.889\textsubscript{(0.012)} & 1.002\textsubscript{(0.000)} & NA & NA & 2.283\textsubscript{(0.056)} & 1.846\textsubscript{(0.002)} \\
$\tau=0.6$   & \cellcolor{gray!20} 0.622\textsubscript{(0.011)} & 0.811\textsubscript{(0.012)} & 1.001\textsubscript{(0.000)} & NA & NA & 2.083\textsubscript{(0.050)} & 1.694\textsubscript{(0.002)} \\
\bottomrule
\end{tabular}
}
\caption{\small Discriminant tensor estimation errors ($\|\widehat{\mathcal{B}} - \mathcal{B}\|_F/\|\mathcal{B}\|_F$) under misspecified discriminant structures. Settings follow Table~\ref{tab:robustness}.}
\label{tab:robustness_Berror}
\end{table}

\section{Real Data Analysis}
\label{sec:appl}

We evaluate the proposed Semiparametric Tensor Discriminant Networks (STDN) on graph classification tasks from the widely used D\&D and PROTEINS datasets from the \href{https://chrsmrrs.github.io/datasets/}{TUDataset} benchmark. These datasets are high-dimensional, structurally rich, and sample-limited, which is precisely the regime where statistical efficiency and principled discriminant modeling are essential.

Our framework employs a tensor neural network encoder to generate multiway graph representations, a tensor Gaussianizing flow that maps these representations into a TGMM-aligned latent space, and a CP-TDA head for final classification. This architecture combines the expressive capacity of learned representations with the parsimony, interpretability, and statistical optimality of CP-TDA, and is encoder-agnostic (any network producing tensor-valued features can be used).

For our experiments, we adopt the state-of-the-art encoder from TTG-NN \citep{wen2024tensorview}, which integrates persistence-image topology, graph convolutions, and tensor low-rank transformation layers. To obtain tensor-valued features, we truncate the encoder before vectorization and replace its final vector attention module with a lightweight multiway attention layer, yielding features of shape $(2H \times H \times H)$. These tensor features are fed into the FlowTGMM + CP-TDA head, and the encoder and flow are trained end-to-end. We set $H=32$, producing tensor features of dimension $64 \times 32 \times 32$. Training details are provided in Appendix \ref{subsec:realdata}. All results follow the evaluation protocol of TTG-NN and \cite{xu2018powerful}, using 10-fold cross-validation. 


\noindent
\underline{\textbf{Performance relative to neural network baselines.}} 
Table~\ref{tab:graph_network} compares STDN against 14 state-of-the-art graph neural network classifiers; additional kernel method comparisons are in Appendix~\ref{subsec:realdata}. STDN achieves the highest accuracy on both D\&D and PROTEINS, outperforming all baselines including recent state-of-the-art models. These gains arise because deep neural networks typically employ dense linear classifier heads that do not exploit multiway structure, suffer from curse of dimensionality (akin to nonparametric regression) in small samples, and lack statistical guarantees. In contrast, our CP-TDA head imposes a parsimonious multilinear discriminant model, yields an optimal decision rule, and provides a theoretically grounded regularizations. The empirical improvements confirm that combining deep tensor representations with statistically principled discriminant analysis is materially more effective than purely neural network classifiers.

\begin{table}[ht]
\centering
\resizebox{0.8\textwidth}{!}{
\begin{tabular}{lcc}
\toprule
\textbf{Model} & \textbf{DD (acc ± sd, \%)} & \textbf{PROTEINS (acc ± sd, \%)} \\
\midrule
GCN \citep{kipf2017semi} & 79.12 ± 3.07 & 70.31 ± 1.93 \\
ChebNet \citep{defferrard2016convolutional} & N/A & 75.50 ± 0.40 \\
GIN \citep{xu2018powerful} & 75.40 ± 2.60 & 76.16 ± 2.76 \\
DGCNN \citep{zhang2018end} & 79.37 ± 0.94 & 75.54 ± 0.94 \\
DiffPool \citep{ying2018hierarchical} & 77.90 ± 2.40 & 73.63 ± 3.60 \\
MinCutPool \citep{bianchi2020spectral} & 77.60 ± 3.10 & 76.52 ± 2.58 \\
EigenGCN \citep{ma2019graph} & 75.90 ± 3.90 & 74.10 ± 3.10 \\
SAGPool \citep{lee2019self} & 76.45 ± 0.97 & 71.86 ± 0.97 \\
HaarPool \citep{wang2020haar} & 77.40 ± 3.40 & 73.23 ± 2.51 \\
PersLay \citep{carriere2020perslay} & N/A & 74.80 ± 0.30 \\
FC-V \citep{obray2021filtration} & N/A & 74.54 ± 0.48 \\
SIN \citep{bodnar2021weisfeiler} & N/A & 76.50 ± 3.40 \\
TOGL \citep{horn2021topological} & 75.70 ± 2.10 & 76.00 ± 3.90 \\
TTG-NN \citep{wen2024tensorview} & 80.90 ± 2.57 & 77.62 ± 3.92 \\
\midrule
\cellcolor{gray!20} STDN & \cellcolor{gray!20} 81.86 ± 2.19 & \cellcolor{gray!20} 78.93 ± 2.07 \\
\bottomrule
\end{tabular}
}
\caption{\small Graph classification accuracy on D\&D and PROTEINS datasets (mean $\pm$ standard deviation). STDN (highlighted) combines a neural network encoder with CP-TDA head, achieving the highest accuracy.}
\label{tab:graph_network}
\end{table}

\noindent
\underline{\textbf{Performance relative to statistical tensor classifiers.}} 
To compare directly against statistical tensor classifiers, we extract tensor features from graphs via the trained encoder-flow pipeline and evaluate CATCH, DATED, GuoSTM, LaiSTDA, TuckerLDA, and ZhouLogitCP under identical 10-fold cross-validation. Table~\ref{tab:real_data} shows STDN achieves the best performance on both datasets. Competing methods rely on sparsity or Tucker structures, or lack direct discriminant modeling, failing to capture the most informative multilinear discriminant directions in the encoded features. By constraining the discriminant tensor to CP low-rank form, STDN recovers the dominant components of the decision boundary more accurately, yielding substantially lower misclassification error.

\begin{table}[ht]
\centering
\resizebox{0.75\textwidth}{!}{
\begin{tabular}{lcc}
\toprule
\textbf{Method} & \textbf{DD (acc ± sd, \%)} & \textbf{PROTEINS (acc ± sd, \%)} \\
\midrule
CATCH \citep{pan2019covariate} & 77.31 ± 4.04 & 76.25 ± 1.94 \\
DATED \citep{wang2024parsimonious} & 78.40 ± 3.72 & 75.18 ± 1.25 \\
GuoSTM \citep{guoxian2016} & 77.14 ± 4.35 & 74.29 ± 2.52 \\
LaiSTDA \citep{lai2013sparse} & 76.30 ± 3.75 & 74.29 ± 1.82 \\
TuckerLDA & 78.49 ± 3.48 & 74.82 ± 1.01 \\
ZhouLogitCP \citep{zhou2013tensor} & 76.97 ± 3.72 & 76.07 ± 1.97 \\
\midrule
\cellcolor{gray!20} STDN & \cellcolor{gray!20} 81.86 ± 2.19 & \cellcolor{gray!20} 78.93 ± 2.07 \\
\bottomrule
\end{tabular}
}
\caption{\small Comparison with statistical tensor classifiers on D\&D and PROTEINS datasets (mean $\pm$ standard deviation). All methods use identical tensor features from the encoder-flow pipeline. STDN (highlighted) with CP-TDA head achieves the highest accuracy.}
\label{tab:real_data}
\end{table}

Additionally, the learned tensor representations exhibit numerically low multilinear rank. While not conclusive, this empirical pattern suggests that CP low-rank discriminant structure may be present in real data, providing practical justification for our modeling assumptions.

Across neural network and statistical baselines, STDN consistently achieves the best predictions. These results demonstrate that statistical discriminant structure and deep representation learning are mutually reinforcing: the encoder-flow pipeline provides rich tensor features, and CP-TDA extracts essential low-rank discriminant directions with theoretical guarantees and superior empirical accuracy.

\section{Conclusion}\label{sec:conclude}

This paper develops a new framework for high-dimensional tensor discriminant analysis that addresses key limitations of existing tensor classification approaches. We introduce CP low-rank structure for the discriminant tensor, a modeling perspective not previously explored, and develop CP-TDA algorithms with \textsc{rc-PCA} initialization that is essential for entering the global contraction region. Unlike prior CP methods, \textsc{rc-PCA} remains stable under weaker signal strength and with non-orthogonal CP bases, enabling global convergence of DISTIP-CP in regimes where standard initialization fails. Our theoretical analysis provides sharp perturbation bounds under correlated noise and establishes the first minimax-optimal misclassification rates for tensor discriminant analysis.

Beyond the correctly specified TGMM setting, we develop a semiparametric extension using learned tensor representations (STDN) and show, both theoretically and empirically, that CP low-rank discriminant analysis performs strongly even when the true model deviates from CP low-rankness. These results collectively demonstrate that statistical structure (here, CP-based discriminant modeling) can complement and strengthen modern representation learning, yielding interpretable, sample efficient, and robust tensor classifiers. Future work includes extending these ideas to broader tensor learning tasks, such as tensor regression and tensor clustering.

\section*{Data availability statement}

The data that support the findings of this study are openly available in TUDataset at \url{https://chrsmrrs.github.io/datasets/}.


%
%

\ \\
\spacingset{1.18} 
\bibliographystyle{apalike} 
\bibliography{\mybib}

\begin{thebibliography}{}

\bibitem[Anandkumar et~al., 2014a]{anandkumar2014tensor}
Anandkumar, A., Ge, R., Hsu, D., Kakade, S.~M., and Telgarsky, M. (2014a).
\newblock Tensor decompositions for learning latent variable models.
\newblock {\em Journal of Machine Learning Research}, 15:2773--2832.

\bibitem[Anandkumar et~al., 2014b]{anandkumar2014guaranteed}
Anandkumar, A., Ge, R., and Janzamin, M. (2014b).
\newblock Guaranteed non-orthogonal tensor decomposition via alternating
  rank-$1 $ updates.
\newblock {\em arXiv preprint arXiv:1402.5180}.

\bibitem[Arous et~al., 2019]{arous2019landscape}
Arous, G.~B., Mei, S., Montanari, A., and Nica, M. (2019).
\newblock The landscape of the spiked tensor model.
\newblock {\em Communications on Pure and Applied Mathematics},
  72(11):2282--2330.

\bibitem[Bartlett et~al., 2017]{bartlett2017spectrally}
Bartlett, P.~L., Foster, D.~J., and Telgarsky, M.~J. (2017).
\newblock Spectrally-normalized margin bounds for neural networks.
\newblock {\em Advances in Neural Information Processing Systems}, 30.

\bibitem[Bartlett and Mendelson, 2002]{bartlett2002rademacher}
Bartlett, P.~L. and Mendelson, S. (2002).
\newblock Rademacher and {G}aussian complexities: Risk bounds and structural
  results.
\newblock {\em Journal of Machine Learning Research}, 3:463--482.

\bibitem[Bi et~al., 2018]{bi2018multilayer}
Bi, X., Qu, A., and Shen, X. (2018).
\newblock Multilayer tensor factorization with applications to recommender
  systems.
\newblock {\em The Annals of Statistics}, 46(6B):3308--3333.

\bibitem[Bianchi et~al., 2020]{bianchi2020spectral}
Bianchi, F.~M., Grattarola, D., and Alippi, C. (2020).
\newblock Spectral clustering with graph neural networks for graph pooling.
\newblock In {\em International Conference on Machine Learning}, pages
  874--883. PMLR.

\bibitem[Bodnar et~al., 2021]{bodnar2021weisfeiler}
Bodnar, C., Frasca, F., Wang, Y., Otter, N., Montufar, G.~F., Lio, P., and
  Bronstein, M. (2021).
\newblock Weisfeiler and lehman go topological: Message passing simplicial
  networks.
\newblock In {\em International Conference on Machine Learning}, pages
  1026--1037. PMLR.

\bibitem[Cai and Zhang, 2018]{cai2018rate}
Cai, T.~T. and Zhang, A. (2018).
\newblock Rate-optimal perturbation bounds for singular subspaces with
  applications to high-dimensional statistics.
\newblock {\em The Annals of Statistics}, 46(1):60--89.

\bibitem[Cai et~al., 2010]{cai2010optimal}
Cai, T.~T., Zhang, C.-H., and Zhou, H.~H. (2010).
\newblock {Optimal rates of convergence for covariance matrix estimation}.
\newblock {\em The Annals of Statistics}, 38(4):2118 -- 2144.

\bibitem[Cai and Zhang, 2019]{cai2019high}
Cai, T.~T. and Zhang, L. (2019).
\newblock High dimensional linear discriminant analysis: optimality, adaptive
  algorithm and missing data.
\newblock {\em Journal of the Royal Statistical Society: Series B (Statistical
  Methodology)}, 81(4):675--705.

\bibitem[Cai and Zhang, 2021]{cai2021convex}
Cai, T.~T. and Zhang, L. (2021).
\newblock A convex optimization approach to high-dimensional sparse quadratic
  discriminant analysis.
\newblock {\em The Annals of Statistics}, 49(3):1537--1568.

\bibitem[Carri{\`e}re et~al., 2020]{carriere2020perslay}
Carri{\`e}re, M., Chazal, F., Ike, Y., Lacombe, T., Royer, M., and Umeda, Y.
  (2020).
\newblock Perslay: A neural network layer for persistence diagrams and new
  graph topological signatures.
\newblock In {\em International Conference on Artificial Intelligence and
  Statistics}, pages 2786--2796. PMLR.

\bibitem[Chen et~al., 2025]{chen2024distributed}
Chen, E., Chen, X., Jing, W., and Zhang, Y. (2025).
\newblock Distributed tensor principal component analysis.
\newblock {\em Journal of the American Statistical Association}, pages 1--13.

\bibitem[Chen et~al., 2024a]{chen2024factor}
Chen, E., Fan, J., and Zhu, X. (2024a).
\newblock Factor augmented matrix regression.
\newblock {\em arXiv preprint arXiv:2405.17744}.

\bibitem[Chen and Fan, 2021]{chen2021statistical}
Chen, E.~Y. and Fan, J. (2021).
\newblock Statistical inference for high-dimensional matrix-variate factor
  models.
\newblock {\em Journal of the American Statistical Association},
  118(542):1038–1055.

\bibitem[Chen et~al., 2020]{chen2020constrained}
Chen, E.~Y., Tsay, R.~S., and Chen, R. (2020).
\newblock Constrained factor models for high-dimensional matrix-variate time
  series.
\newblock {\em Journal of the American Statistical Association},
  115(530):775--793.

\bibitem[Chen et~al., 2024b]{chen2024semi}
Chen, E.~Y., Xia, D., Cai, C., and Fan, J. (2024b).
\newblock Semi-parametric tensor factor analysis by iteratively projected
  singular value decomposition.
\newblock {\em Journal of the Royal Statistical Society Series B: Statistical
  Methodology}, page qkae001.

\bibitem[Chen et~al., 2022]{chen2022rejoinder}
Chen, R., Yang, D., and Zhang, C.-H. (2022).
\newblock Rejoinder.
\newblock {\em Journal of the American Statistical Association},
  117(537):128--132.

\bibitem[Chien and Bao, 2018]{Chien2018}
Chien, J.-T. and Bao, Y.-T. (2018).
\newblock Tensor-factorized neural networks.
\newblock {\em IEEE Transactions on Neural Networks and Learning Systems},
  29(5):1998--2011.

\bibitem[Defferrard et~al., 2016]{defferrard2016convolutional}
Defferrard, M., Bresson, X., and Vandergheynst, P. (2016).
\newblock Convolutional neural networks on graphs with fast localized spectral
  filtering.
\newblock In {\em Advances in Neural Information Processing Systems},
  volume~29, pages 3844--3852.

\bibitem[Dinh et~al., 2017]{dinh2017density}
Dinh, L., Sohl-Dickstein, J., and Bengio, S. (2017).
\newblock Density estimation using real {NVP}.
\newblock In {\em International Conference on Learning Representations}.

\bibitem[Drton et~al., 2021]{drton2021existence}
Drton, M., Kuriki, S., and Hoff, P. (2021).
\newblock Existence and uniqueness of the {K}ronecker covariance {MLE}.
\newblock {\em The Annals of Statistics}, 49(5):2721--2754.

\bibitem[Dufrenois et~al., 2023]{Franck2023}
Dufrenois, F., El~Ichi, A., and Jbilou, K. (2023).
\newblock Multilinear discriminant analysis using tensor-tensor products.
\newblock {\em Journal of Mathematical Modeling}, 11(1):83--101.

\bibitem[Erichson et~al., 2020]{erichson2020randomized}
Erichson, N.~B., Manohar, K., Brunton, S.~L., and Kutz, J.~N. (2020).
\newblock Randomized {CP} tensor decomposition.
\newblock {\em Machine Learning: Science and Technology}, 1(2):025012.

\bibitem[Fang and Anderson, 1990]{fang1990statistical}
Fang, K.-T. and Anderson, T.~W. (1990).
\newblock {\em Statistical Inference in Elliptically Contoured and Related
  Distributions}.
\newblock Allerton Press Inc., New York.

\bibitem[Guo et~al., 2016]{guoxian2016}
Guo, X., Huang, X., Zhang, L., Zhang, L., Plaza, A., and Benediktsson, J.~A.
  (2016).
\newblock Support tensor machines for classification of hyperspectral remote
  sensing imagery.
\newblock {\em IEEE Transactions on Geoscience and Remote Sensing},
  54(6):3248--3264.

\bibitem[Han et~al., 2013]{han2013coda}
Han, F., Zhao, T., and Liu, H. (2013).
\newblock {CODA}: High dimensional copula discriminant analysis.
\newblock {\em Journal of Machine Learning Research}, 14(19):629--671.

\bibitem[Han et~al., 2022a]{HanLuoWangZhang2022}
Han, R., Luo, Y., Wang, M., and Zhang, A.~R. (2022a).
\newblock Exact clustering in tensor block model: Statistical optimality and
  computational limit.
\newblock {\em Journal of the Royal Statistical Society Series B: Statistical
  Methodology}, 84(5):1666--1698.

\bibitem[Han et~al., 2022b]{HanWillettZhang2022}
Han, R., Willett, R., and Zhang, A.~R. (2022b).
\newblock An optimal statistical and computational framework for generalized
  tensor estimation.
\newblock {\em The Annals of Statistics}, 50(1):1--29.

\bibitem[Han et~al., 2024a]{han2020iterative}
Han, Y., Chen, R., Yang, D., and Zhang, C.-H. (2024a).
\newblock Tensor factor model estimation by iterative projection.
\newblock {\em Annals of Statistics}, 52(6):2641--2667.

\bibitem[Han et~al., 2024b]{han2023cp}
Han, Y., Yang, D., Zhang, C.-H., and Chen, R. (2024b).
\newblock {CP} factor model for dynamic tensors.
\newblock {\em Journal of the Royal Statistical Society Series B: Statistical
  Methodology}, page qkae036.

\bibitem[Han and Zhang, 2023]{han2023tensor}
Han, Y. and Zhang, C.-H. (2023).
\newblock Tensor principal component analysis in high dimensional {CP} models.
\newblock {\em IEEE Transactions on Information Theory}, 69(2):1147--1167.

\bibitem[Hao et~al., 2013]{hao2013linear}
Hao, Z., He, L., Chen, B., and Yang, X. (2013).
\newblock A linear support higher-order tensor machine for classification.
\newblock {\em IEEE Transactions on Image Processing}, 22(7):2911--2920.

\bibitem[Hoff, 2011]{hoff2011TN}
Hoff, P. (2011).
\newblock Separable covariance arrays via the {T}ucker product, with
  applications to multivariate relational data.
\newblock {\em Bayesian Analysis}, 6:179--196.

\bibitem[Horn et~al., 2021]{horn2021topological}
Horn, M., De~Brouwer, E., Moor, M., Moreau, Y., Rieck, B., and Borgwardt, K.
  (2021).
\newblock Topological graph neural networks.
\newblock In {\em International Conference on Learning Representations (ICLR)}.

\bibitem[Hu and Wang, 2023]{hu2023multiway}
Hu, J. and Wang, M. (2023).
\newblock Multiway spherical clustering via degree-corrected tensor block
  models.
\newblock {\em IEEE Transactions on Information Theory}, 69(6):3880--3919.

\bibitem[Huang et~al., 2025]{huang2025seeing}
Huang, Z., Zhang, E., Cai, Y., Qiu, W., Yang, C., Chen, E., Zhang, X., Ying,
  R., Zhou, D., and Yan, Y. (2025).
\newblock Seeing through the brain: New insights from decoding visual stimuli
  with fmri.
\newblock {\em arXiv preprint arXiv:2510.16196}.

\bibitem[Irons et~al., 2022]{irons2022triangular}
Irons, N.~J., Scetbon, M., Pal, S., and Harchaoui, Z. (2022).
\newblock Triangular flows for generative modeling: Statistical consistency,
  smoothness classes, and fast rates.
\newblock In {\em International Conference on Artificial Intelligence and
  Statistics}, pages 10161--10195. PMLR.

\bibitem[Ishikawa et~al., 2023]{Ishikawa2023UAPINN}
Ishikawa, I., Teshima, T., Tojo, K., Oono, K., Ikeda, M., and Sugiyama, M.
  (2023).
\newblock Universal approximation property of invertible neural networks.
\newblock {\em Journal of Machine Learning Research}, 24(287):1--68.

\bibitem[Jahromi and Or{\'u}s, 2024]{jahromi2024variational}
Jahromi, S. and Or{\'u}s, R. (2024).
\newblock Variational tensor neural networks for deep learning.
\newblock {\em Scientific Reports}, 14:19017.

\bibitem[Kipf and Welling, 2017]{kipf2017semi}
Kipf, T.~N. and Welling, M. (2017).
\newblock Semi-supervised classification with graph convolutional networks.
\newblock In {\em International Conference on Learning Representations (ICLR)}.

\bibitem[Kolda and Bader, 2009]{kolda2009tensor}
Kolda, T.~G. and Bader, B.~W. (2009).
\newblock Tensor decompositions and applications.
\newblock {\em SIAM Review}, 51(3):455--500.

\bibitem[Koltchinskii and Panchenko, 2002]{koltchinskii2002empirical}
Koltchinskii, V. and Panchenko, D. (2002).
\newblock Empirical margin distributions and bounding the generalization error
  of combined classifiers.
\newblock {\em The Annals of Statistics}, 30(1):1--50.

\bibitem[Lai et~al., 2013]{lai2013sparse}
Lai, Z., Xu, Y., Yang, J., Tang, J., and Zhang, D. (2013).
\newblock Sparse tensor discriminant analysis.
\newblock {\em IEEE Transactions on Image Processing}, 22(10):3904--3915.

\bibitem[Lee et~al., 2019]{lee2019self}
Lee, J., Lee, I., and Kang, J. (2019).
\newblock Self-attention graph pooling.
\newblock In {\em International Conference on Machine Learning}, pages
  3734--3743. PMLR.

\bibitem[Li et~al., 2024]{li2022tucker}
Li, J., Mai, Q., and Zhang, X. (2024).
\newblock The tucker low-rank classification model for tensor data.
\newblock {\em Statistica Sinica}, in press.

\bibitem[Li and Zhang, 2017]{li2017parsimonious}
Li, L. and Zhang, X. (2017).
\newblock Parsimonious tensor response regression.
\newblock {\em Journal of the American Statistical Association},
  112(519):1131--1146.

\bibitem[Liu et~al., 2012]{liu2012high}
Liu, H., Han, F., Yuan, M., Lafferty, J., and Wasserman, L. (2012).
\newblock High-dimensional semiparametric {G}aussian copula graphical models.
\newblock {\em The Annals of Statistics}, 40(4):2293--2326.

\bibitem[Liu et~al., 2020]{Liuregression2020}
Liu, J., Wu, Z., Xiao, L., Sun, J., and Yan, H. (2020).
\newblock Generalized tensor regression for hyperspectral image classification.
\newblock {\em IEEE Transactions on Geoscience and Remote Sensing},
  58(2):1244--1258.

\bibitem[Luo and Zhang, 2022]{LuoZhang2022}
Luo, Y. and Zhang, A.~R. (2022).
\newblock Tensor clustering with planted structures: Statistical optimality and
  computational limits.
\newblock {\em The Annals of Statistics}, 50(1):584--613.

\bibitem[Ma et~al., 2019]{ma2019graph}
Ma, Y., Wang, S., Aggarwal, C.~C., and Tang, J. (2019).
\newblock Graph convolutional networks with eigenpooling.
\newblock In {\em Proceedings of the 25th ACM SIGKDD International Conference
  on Knowledge Discovery \& Data Mining}, pages 723--731.

\bibitem[Ma, 2013]{ma2013sparse}
Ma, Z. (2013).
\newblock {Sparse principal component analysis and iterative thresholding}.
\newblock {\em The Annals of Statistics}, 41(2):772 -- 801.

\bibitem[Mai et~al., 2021]{mai2021doubly}
Mai, Q., Zhang, X., Pan, Y., and Deng, K. (2021).
\newblock A doubly enhanced {EM} algorithm for model-based tensor clustering.
\newblock {\em Journal of the American Statistical Association}, pages 1--15.

\bibitem[Mai et~al., 2012]{mai2012direct}
Mai, Q., Zou, H., and Yuan, M. (2012).
\newblock A direct approach to sparse discriminant analysis in ultra-high
  dimensions.
\newblock {\em Biometrika}, 99(1):29--42.

\bibitem[Mo et~al., 2025]{mo2025act}
Mo, J., Li, J., Zhang, D., and Chen, E. (2025).
\newblock Act-tensor: Tensor completion framework for financial dataset
  imputation.
\newblock In {\em Proceedings of the 6th ACM International Conference on AI in
  Finance}, pages 265--273.

\bibitem[Morris et~al., 2016]{morris2016faster}
Morris, C., Kriege, N.~M., Kersting, K., and Mutzel, P. (2016).
\newblock Faster kernels for graphs with continuous attributes via hashing.
\newblock In {\em 2016 IEEE 16th International Conference on Data Mining
  (ICDM)}, pages 1095--1100. IEEE.

\bibitem[O'Bray et~al., 2021]{obray2021filtration}
O'Bray, L., Rieck, B., and Borgwardt, K. (2021).
\newblock Filtration curves for graph representation.
\newblock In {\em Proceedings of the 27th ACM SIGKDD Conference on Knowledge
  Discovery \& Data Mining}, pages 1267--1275.

\bibitem[Pan et~al., 2016]{Pan2016}
Pan, R., Wang, H., and Li, R. (2016).
\newblock Ultrahigh-dimensional multiclass linear discriminant analysis by
  pairwise sure independence screening.
\newblock {\em Journal of the American Statistical Association},
  111(513):169--179.

\bibitem[Pan et~al., 2019]{pan2019covariate}
Pan, Y., Mai, Q., and Zhang, X. (2019).
\newblock Covariate-adjusted tensor classification in high dimensions.
\newblock {\em Journal of the American Statistical Association},
  114(527):1305--1319.

\bibitem[Shervashidze et~al., 2011]{shervashidze2011weisfeiler}
Shervashidze, N., Schweitzer, P., Van~Leeuwen, E.~J., Mehlhorn, K., and
  Borgwardt, K.~M. (2011).
\newblock Weisfeiler-lehman graph kernels.
\newblock {\em Journal of Machine Learning Research}, 12(9):2539--2561.

\bibitem[Song et~al., 2023]{SONG2023430}
Song, Z., Wang, L., Xu, X., and Zhao, W. (2023).
\newblock Doubly robust logistic regression for image classification.
\newblock {\em Applied Mathematical Modelling}, 123:430--446.

\bibitem[Sun and Li, 2019]{SunLi2019}
Sun, W.~W. and Li, L. (2019).
\newblock Dynamic tensor clustering.
\newblock {\em Journal of the American Statistical Association},
  114(528):1894--1907.

\bibitem[Sun et~al., 2017]{sun2017provable}
Sun, W.~W., Lu, J., Liu, H., and Cheng, G. (2017).
\newblock Provable sparse tensor decomposition.
\newblock {\em Journal of the Royal Statistical Society: Series B (Statistical
  Methodology)}, 3(79):899--916.

\bibitem[Tang et~al., 2025]{tang2025revisit}
Tang, R., Chhor, J., Klopp, O., and Zhang, A.~R. (2025).
\newblock Revisit cp tensor decomposition: Statistical optimality and fast
  convergence.
\newblock {\em arXiv preprint arXiv:2505.23046}.

\bibitem[Teshima et~al., 2020]{Teshima2020CouplingINN}
Teshima, T., Ishikawa, I., Tojo, K., Oono, K., Ikeda, M., and Sugiyama, M.
  (2020).
\newblock Coupling-based invertible neural networks are universal
  diffeomorphism approximators.
\newblock In {\em Advances in Neural Information Processing Systems},
  volume~33, pages 3362--3373.

\bibitem[Tropp et~al., 2015]{tropp2015introduction}
Tropp, J.~A. et~al. (2015).
\newblock An introduction to matrix concentration inequalities.
\newblock {\em Foundations and Trends{\textregistered} in Machine Learning},
  8(1-2):1--230.

\bibitem[Tsybakov, 2009]{tsybakov2009}
Tsybakov, A.~B. (2009).
\newblock {\em Introduction to Nonparametric Estimation}.
\newblock Springer Series in Statistics. Springer.

\bibitem[Wang and Li, 2020]{wang2020learning}
Wang, M. and Li, L. (2020).
\newblock Learning from binary multiway data: Probabilistic tensor
  decomposition and its statistical optimality.
\newblock {\em Journal of Machine Learning Research}, 21(154):1--38.

\bibitem[Wang et~al., 2024]{wang2024parsimonious}
Wang, N., Wang, W., and Zhang, X. (2024).
\newblock Parsimonious tensor discriminant analysis.
\newblock {\em Statistica Sinica}, 34:157--180.

\bibitem[Wang et~al., 2023]{wang2023regression}
Wang, N., Zhang, X., and Mai, Q. (2023).
\newblock High-dimensional tensor response regression using the t-distribution.

\bibitem[Wang et~al., 2020]{wang2020haar}
Wang, Y.~G., Li, M., Ma, Z., Montufar, G., Zhuang, X., and Fan, Y. (2020).
\newblock Haar graph pooling.
\newblock In {\em International Conference on Machine Learning}, pages
  9952--9962. PMLR.

\bibitem[Wedin, 1972]{wedin1972perturbation}
Wedin, P. (1972).
\newblock Perturbation bounds in connection with singular value decomposition.
\newblock {\em BIT Numerical Mathematics}, 12(1):99--111.

\bibitem[Wen et~al., 2024]{wen2024tensorview}
Wen, T., Chen, E., and Chen, Y. (2024).
\newblock Tensor-view topological graph neural network.
\newblock In {\em International Conference on Artificial Intelligence and
  Statistics}, pages 4330--4338. PMLR.

\bibitem[Wu et~al., 2025]{wu2025tensor}
Wu, Y., Mo, J., Chen, E., and Chen, Y. (2025).
\newblock Tensor-fused multi-view graph contrastive learning.
\newblock In {\em Pacific-Asia Conference on Knowledge Discovery and Data
  Mining}, pages 16--28. Springer.

\bibitem[Xia et~al., 2022]{XiaZhangZhou2022}
Xia, D., Zhang, A.~R., and Zhou, Y. (2022).
\newblock Inference for low-rank tensors—no need to debias.
\newblock {\em The Annals of Statistics}, 50(2):1220--1245.

\bibitem[Xu et~al., 2018]{xu2018powerful}
Xu, K., Hu, W., Leskovec, J., and Jegelka, S. (2018).
\newblock How powerful are graph neural networks?
\newblock In {\em International Conference on Learning Representations (ICLR)}.

\bibitem[Ying et~al., 2018]{ying2018hierarchical}
Ying, Z., You, J., Morris, C., Ren, X., Hamilton, W., and Leskovec, J. (2018).
\newblock Hierarchical graph representation learning with differentiable
  pooling.
\newblock In {\em Advances in Neural Information Processing Systems},
  volume~31, pages 4800--4810.

\bibitem[Yurchenko, 2021]{yurchenko2021}
Yurchenko, Y. (2021).
\newblock Matrix variate and tensor variate laplace distributions.

\bibitem[Zhang and Xia, 2018]{zhang2018tensor}
Zhang, A. and Xia, D. (2018).
\newblock Tensor {SVD}: Statistical and computational limits.
\newblock {\em IEEE Transactions on Information Theory}, 64(11):7311--7338.

\bibitem[Zhang et~al., 2018]{zhang2018end}
Zhang, M., Cui, Z., Neumann, M., and Chen, Y. (2018).
\newblock An end-to-end deep learning architecture for graph classification.
\newblock In {\em Proceedings of the AAAI Conference on Artificial
  Intelligence}, volume~32.

\bibitem[Zheng and Wu, 2020]{zheng2020nonparametric}
Zheng, C. and Wu, Y. (2020).
\newblock Nonparametric estimation of multivariate mixtures.
\newblock {\em Journal of the American Statistical Association},
  115(531):1456--1471.

\bibitem[Zhou et~al., 2013]{zhou2013tensor}
Zhou, H., Li, L., and Zhu, H. (2013).
\newblock Tensor regression with applications in neuroimaging data analysis.
\newblock {\em Journal of the American Statistical Association},
  108(502):540--552.

\end{thebibliography}

%
%
\clearpage
\setcounter{page}{1}
\begin{appendices}
    \begin{center}
        {\Large Supplementary Material of ``\TITLE''}

    \end{center}
    

\section{Additional Simulation Results}
\label{append:simulation}

\subsection{Additional results for \textsc{rc-PCA} initialization and performance under CP-structured discriminant model}
\label{subsec:others}
This subsection complements Section \ref{sec:simu} by examining \textsc{rc-PCA}'s benefits and CP-TDA's performance across different tensor orders and basis orthogonality conditions. 

\noindent\underline{\textbf{Setup.}} For order-3 tensors, we adopt the configuration from Section \ref{sec:simu} with dimensions $(30, 30, 30)$, sample sizes $n_1 = n_2 = 100$, and CP rank $R = 5$, but use orthogonal CP bases instead of non-orthogonal ones. For order-4 tensors, we set dimensions $d_1 = d_2 = d_3 = d_4 = 20$, sample sizes $n_1 = n_2 = 150$, and CP rank $R = 5$, using identity covariance matrices $\Sigma_m$ for $m \in [4]$. Signal strengths follow two regimes: (i) small spectral separation with equal signal strength $w_r \in \{2.5, 3.0, 3.5\}$; and (ii) large spectral separation where $w_{r}/w_{r+1} = 1.25$ with $w_{\max} \in \{3, 4, 5\}$.

\begin{figure}[htbp]
    \centering
    \resizebox{0.7\textwidth}{!}{
    \begin{subfigure}[b]{0.425\textwidth}
        \centering
        \includegraphics[width=\textwidth]{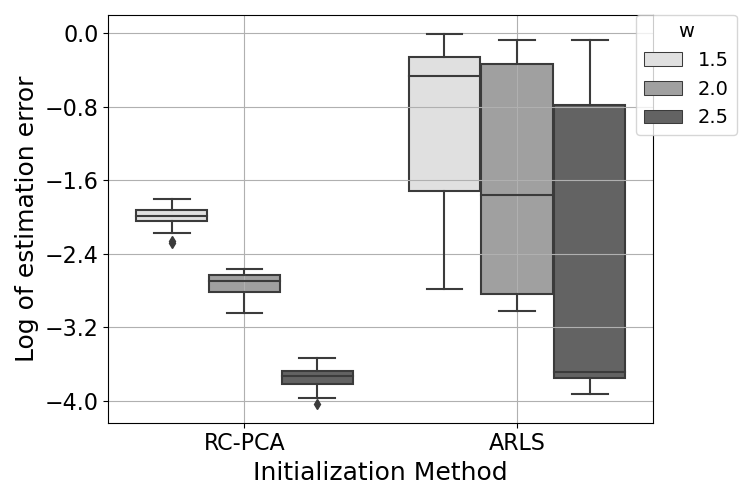}
        \caption{Small spectral separation}
        \label{fig:2a}
    \end{subfigure}
    \hspace{0.02\textwidth}
    \begin{subfigure}[b]{0.425\textwidth}
        \centering
        \includegraphics[width=\textwidth]{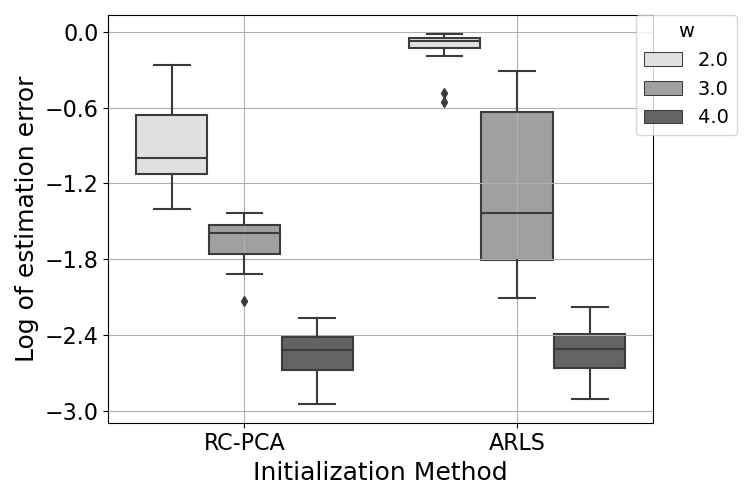}
        \caption{Large spectral separation}
        \label{fig:2b}
    \end{subfigure}
    }
   \caption{\small Logarithmic CP basis estimation error for \textsc{rc-PCA} and ARLS with orthogonal bases on order-3 tensors under (a) small spectral separation (equal $w_r$) and (b) large spectral separation (geometric decay with $w_{r}/w_{r+1} = 1.25$).}
    \label{fig:2}
\end{figure}

\begin{table}[htbp]
\centering
\resizebox{0.75\textwidth}{!}{
\begin{tabular}{cccccccc}
\toprule
\multirow{2}{*}{Metric} & \multirow{2}{*}{Algorithms} & \multicolumn{3}{c}{Small spectral separation} & \multicolumn{3}{c}{Large spectral separation} \\
\cmidrule(lr){3-5} \cmidrule(lr){6-8}
& & $w = 1.5$ & $w = 2.0$ & $w = 2.5$ & $w_{max} = 2$ & $w_{max} = 3$ & $w_{max} = 4$ \\
\midrule
\multirow{6}{*}{EstError} 
& Sample $\mathcal{B}$ & 5.10\textsubscript{(0.19)} & 4.89\textsubscript{(0.15)} & 3.86\textsubscript{(0.10)} & 5.47\textsubscript{(0.15)} & 4.58\textsubscript{(0.15)} & 3.39\textsubscript{(0.12)} \\
& \cellcolor{gray!20} CP-TDA & \cellcolor{gray!20} 0.93\textsubscript{(0.02)} & \cellcolor{gray!20} 0.81\textsubscript{(0.02)} & \cellcolor{gray!20} 0.60\textsubscript{(0.02)} & \cellcolor{gray!20} 1.04\textsubscript{(0.02)} & \cellcolor{gray!20} 0.80\textsubscript{(0.01)} & \cellcolor{gray!20} 0.58\textsubscript{(0.01)} \\
& TuckerLDA & 1.38\textsubscript{(0.02)} & 1.22\textsubscript{(0.02)} & 0.85\textsubscript{(0.02)} & 1.39\textsubscript{(0.02)} & 1.10\textsubscript{(0.01)} & 0.81\textsubscript{(0.01)} \\
& CATCH & 1.01\textsubscript{(0.00)} & 1.00\textsubscript{(0.00)} & 1.00\textsubscript{(0.00)} & 1.01\textsubscript{(0.00)} & 1.00\textsubscript{(0.00)} & 1.00\textsubscript{(0.00)} \\
& DATED & 2.28\textsubscript{(0.01)} & 2.08\textsubscript{(0.01)} & 1.75\textsubscript{(0.01)} & 2.33\textsubscript{(0.01)} & 2.00\textsubscript{(0.01)} & 1.60\textsubscript{(0.00)} \\
& ZhouLogitCP & 2.14\textsubscript{(0.09)} & 2.09\textsubscript{(0.06)} & 1.63\textsubscript{(0.04)} & 2.24\textsubscript{(0.10)} & 1.74\textsubscript{(0.07)} & 1.28\textsubscript{(0.04)} \\
\midrule
\multirow{8}{*}{Misclass} 
& Sample $\mathcal{B}$ & 0.25\textsubscript{(0.02)} & 0.12\textsubscript{(0.01)} & 0.06\textsubscript{(0.01)} & 0.28\textsubscript{(0.02)} & 0.17\textsubscript{(0.02)} & 0.07\textsubscript{(0.01)} \\
& \cellcolor{gray!20}  CP-TDA & \cellcolor{gray!20} 0.09\textsubscript{(0.00)} & \cellcolor{gray!20} 0.04\textsubscript{(0.00)} & \cellcolor{gray!20} 0.00\textsubscript{(0.00)} & \cellcolor{gray!20} 0.11\textsubscript{(0.00)} & \cellcolor{gray!20} 0.03\textsubscript{(0.00)} & \cellcolor{gray!20} 0.00\textsubscript{(0.00)} \\
& TuckerLDA & 0.19\textsubscript{(0.01)} & 0.11\textsubscript{(0.01)} & 0.01\textsubscript{(0.00)} & 0.16\textsubscript{(0.01)} & 0.06\textsubscript{(0.00)} & \cellcolor{gray!20} 0.00\textsubscript{(0.00)} \\
& CATCH & 0.33\textsubscript{(0.00)} & 0.29\textsubscript{(0.00)} & 0.26\textsubscript{(0.00)} & 0.34\textsubscript{(0.00)} & 0.29\textsubscript{(0.00)} & 0.22\textsubscript{(0.00)} \\
& DATED & 0.31\textsubscript{(0.00)} & 0.28\textsubscript{(0.00)} & 0.24\textsubscript{(0.00)} & 0.32\textsubscript{(0.00)} & 0.29\textsubscript{(0.00)} & 0.23\textsubscript{(0.01)} \\
& GuoSTM & 0.28\textsubscript{(0.00)} & 0.25\textsubscript{(0.00)} & 0.21\textsubscript{(0.00)} & 0.28\textsubscript{(0.00)} & 0.26\textsubscript{(0.00)} & 0.22\textsubscript{(0.00)} \\
& LaiSTDA & 0.35\textsubscript{(0.00)} & 0.32\textsubscript{(0.00)} & 0.28\textsubscript{(0.00)} & 0.36\textsubscript{(0.00)} & 0.33\textsubscript{(0.00)} & 0.29\textsubscript{(0.00)} \\
& ZhouLogitCP & 0.28\textsubscript{(0.00)} & 0.25\textsubscript{(0.00)} & 0.21\textsubscript{(0.01)} & 0.28\textsubscript{(0.01)} & 0.25\textsubscript{(0.01)} & 0.18\textsubscript{(0.01)} \\
\bottomrule
\end{tabular}
}
\caption{\small Discriminant tensor estimation error (EstError, $\|\widehat{\mathcal{B}} - \mathcal{B}\|_F/\|\mathcal{B}\|_F$) and misclassification rates (Misclass) for binary classification with orthogonal CP bases on order-3 tensors, under small spectral separation and large spectral separation. Best performance are highlighted in gray.}
\label{tab:3-4}
\end{table}

\noindent\underline{\textbf{Implementation details.}} 
We compare CP-TDA with several established tensor-based classifiers. 
As a baseline without structural constraints, we include the sample discriminant tensor obtained from the plug-in estimator 
$\widehat{\calB} = (\overline{\cX}^{(2)} - \overline{\cX}^{(1)}) \times_{m=1}^{M} \widehat{\Sigma}_m^{-1}$ 
in equation \eqref{eqn:lda-discrim-tensor}. TuckerLDA applies Tucker decomposition 
$\cB = \mathcal{F} \times_1 \bU_1 \times_2 \cdots \times_M \bU_M$ to the sample discriminant tensor with Tucker ranks $(5,5,5)$ for order-3 tensors and $(5,5,5,5)$ for order-4 tensors. 

We evaluate CATCH (Covariate-Adjusted Tensor Classification in High-dimensions; \cite{pan2019covariate}), implemented via the authors’ publicly available R package; DATED \citep{wang2024parsimonious}, which uses tensor envelope structure for dimensionality reduction with mode-wise dimensions $(5,5,5)$ for order-3 tensors and $(5,5,5,5)$ for order-4 tensors; GuoSTM \citep{guoxian2016}, a support tensor machine using multilinear principal component analysis (MPCA) with output dimensions $(5,5,5)$ for order-3 tensors and $(5,5,5,5)$ for order-4 tensors; and LaiSTDA \citep{lai2013sparse}, which imposes sparsity on discriminant directions with the same mode-wise dimensions. We also compare with ZhouLogitCP \citep{zhou2013tensor}, a tensor generalized linear model that fits logistic regression with CP low-rank coefficients, fitted with ridge penalties $\rho \in \{0.001, 0.005, 0.01, 0.05, 0.1, 0.5\}$ at CP rank $R = 5$ and selected via BIC. 

\begin{figure}[htbp!]
    \centering
    \resizebox{0.7\textwidth}{!}{
    \begin{tabular}{cc}
        \begin{subfigure}[b]{0.475\textwidth}
            \centering
            \includegraphics[width=\textwidth]{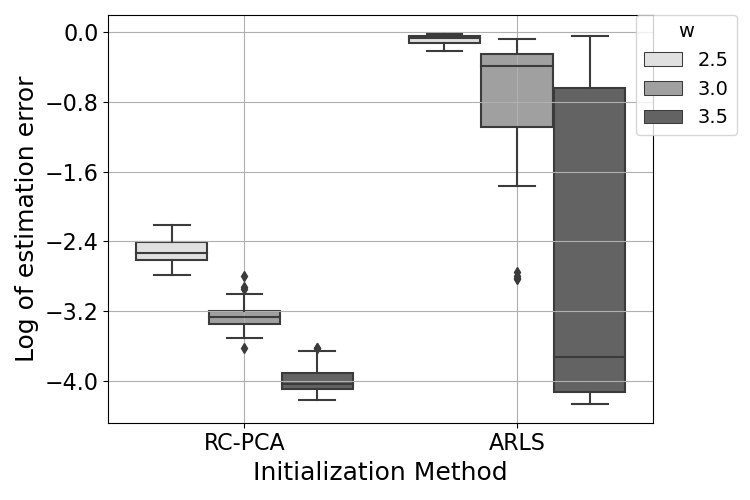}
            \caption{Small spectral separation and orthogonal bases}
            \label{fig:3a}
        \end{subfigure}
        &
        \begin{subfigure}[b]{0.475\textwidth}
            \centering
            \includegraphics[width=\textwidth]{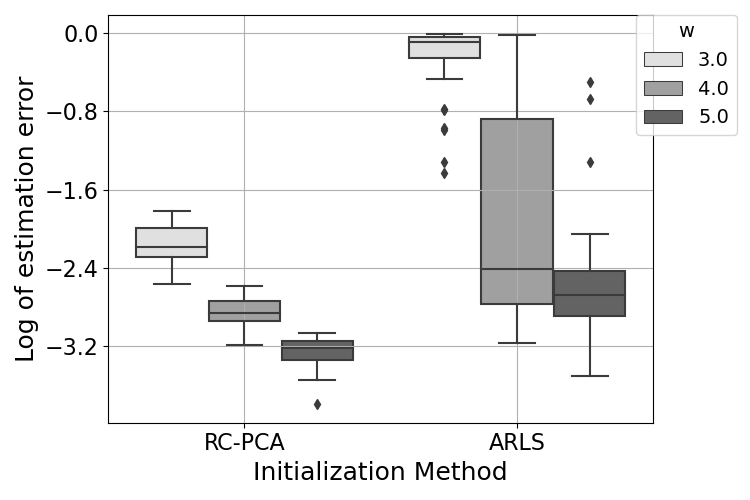}
            \caption{Large spectral separation and orthogonal bases}
            \label{fig:3b}
        \end{subfigure}
        \\[1em]
        \begin{subfigure}[b]{0.475\textwidth}
            \centering
            \includegraphics[width=\textwidth]{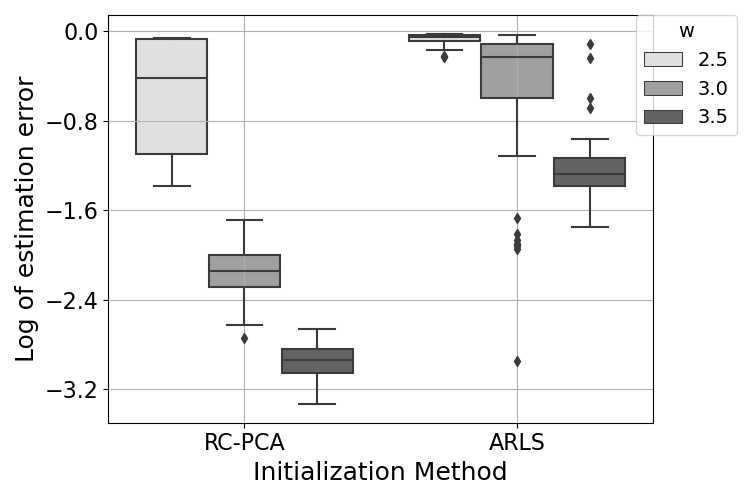}
            \caption{Small spectral separation and non-orthogonal bases}
            \label{fig:4a}
        \end{subfigure}
        &
        \begin{subfigure}[b]{0.475\textwidth}
            \centering
            \includegraphics[width=\textwidth]{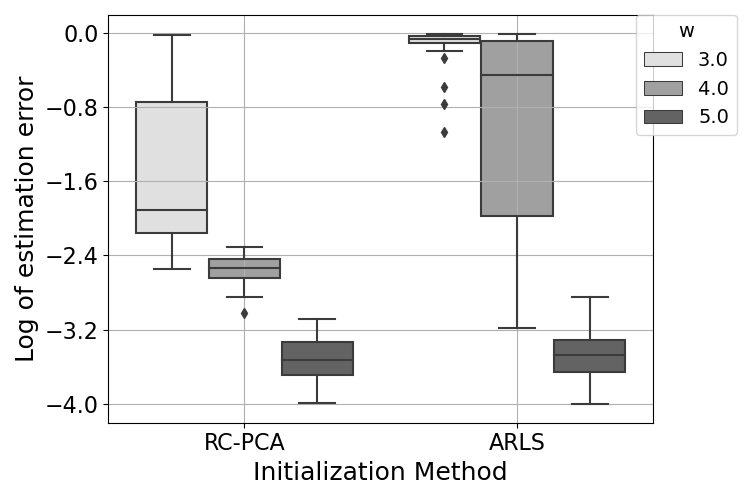}
            \caption{Large spectral separation and non-orthogonal bases}
            \label{fig:4b}
        \end{subfigure}
    \end{tabular}
    }
    \caption{\small Logarithmic CP basis estimation error for \textsc{rc-PCA} and ARLS with orthogonal and non-orthogonal bases on order-4 tensors under (a,c) small spectral separation (equal $w_r$) and (b,d) large spectral separation (geometric decay with $w_{r}/w_{r+1} = 1.25$).}
    \label{fig:3}
\end{figure}

\noindent\underline{\textbf{Results.}} Figures \ref{fig:2}-\ref{fig:3} and Tables \ref{tab:5-6}-\ref{tab:7-8} present results across different tensor orders and basis orthogonality conditions. \textsc{rc-PCA} consistently provides reliable initialization across all settings, while ARLS fails under weak signal conditions (Figures \ref{fig:2}-\ref{fig:3}). Performance comparisons show that CP-TDA achieves substantially lower estimation and classification errors than competing methods across all configurations (Tables \ref{tab:5-6}-\ref{tab:7-8}). Results for order-3 tensors with orthogonal bases and order-4 tensors with both orthogonal and non-orthogonal bases are consistent with the order-3 non-orthogonal setting in Section \ref{sec:simu}, confirming CP-TDA's robustness across tensor orders and basis types.

\begin{table}[htbp!]
\centering
\resizebox{0.75\textwidth}{!}{
\begin{tabular}{cccccccc}
\toprule
\multirow{2}{*}{Metric} & \multirow{2}{*}{Algorithms} & \multicolumn{3}{c}{Small spectral separation} & \multicolumn{3}{c}{Large spectral separation} \\
\cmidrule(lr){3-5} \cmidrule(lr){6-8}
& & $w = 2.5$ & $w = 3.0$ & $w = 3.5$ & $w_{max} = 3$ & $w_{max} = 4$ & $w_{max} = 5$ \\
\midrule
\multirow{6}{*}{EstError} 
& Sample $\mathcal{B}$ & 5.92\textsubscript{(0.54)} & 4.48\textsubscript{(0.37)} & 2.88\textsubscript{(0.24)} & 5.37\textsubscript{(0.29)} & 4.33\textsubscript{(0.28)} & 2.57\textsubscript{(0.24)} \\
& \cellcolor{gray!20} CP-TDA & \cellcolor{gray!20} 0.95\textsubscript{(0.02)} & \cellcolor{gray!20} 0.86\textsubscript{(0.03)} & \cellcolor{gray!20} 0.57\textsubscript{(0.03)} & \cellcolor{gray!20} 0.96\textsubscript{(0.05)} & \cellcolor{gray!20} 0.75\textsubscript{(0.02)} & \cellcolor{gray!20} 0.59\textsubscript{(0.02)} \\
& TuckerLDA & 1.21\textsubscript{(0.07)} & 1.04\textsubscript{(0.05)} & 0.91\textsubscript{(0.02)} & 1.24\textsubscript{(0.07)} & 1.05\textsubscript{(0.03)} & 0.81\textsubscript{(0.03)} \\
& CATCH & 1.11\textsubscript{(0.00)} & 1.11\textsubscript{(0.00)} & 1.11\textsubscript{(0.00)} & 1.11\textsubscript{(0.00)} & 1.11\textsubscript{(0.00)} & 1.11\textsubscript{(0.00)} \\
& DATED & 1.34\textsubscript{(0.07)} & 1.28\textsubscript{(0.04)} & 1.23\textsubscript{(0.02)} & 1.42\textsubscript{(0.07)} & 1.30\textsubscript{(0.03)} & 1.23\textsubscript{(0.03)} \\
& ZhouLogitCP & 3.91\textsubscript{(0.14)} & 3.39\textsubscript{(0.10)} & 2.65\textsubscript{(0.08)} & 3.78\textsubscript{(0.15)} & 3.32\textsubscript{(0.09)} & 2.16\textsubscript{(0.08)} \\
\midrule
\multirow{8}{*}{Misclass} 
& Sample $\mathcal{B}$ & 0.36\textsubscript{(0.02)} & 0.21\textsubscript{(0.01)} & 0.04\textsubscript{(0.01)} & 0.30\textsubscript{(0.02)} & 0.19\textsubscript{(0.01)} & 0.01\textsubscript{(0.00)} \\
& \cellcolor{gray!20} CP-TDA & \cellcolor{gray!20} 0.06\textsubscript{(0.01)} & \cellcolor{gray!20} 0.02\textsubscript{(0.00)} & \cellcolor{gray!20} 0.00\textsubscript{(0.00)} & \cellcolor{gray!20} 0.07\textsubscript{(0.01)} & \cellcolor{gray!20} 0.00\textsubscript{(0.00)} & \cellcolor{gray!20} 0.00\textsubscript{(0.00)} \\
& TuckerLDA & 0.14\textsubscript{(0.03)} & 0.08\textsubscript{(0.01)} & \cellcolor{gray!20} 0.00\textsubscript{(0.00)} & 0.18\textsubscript{(0.02)} & 0.04\textsubscript{(0.01)} & \cellcolor{gray!20} 0.00\textsubscript{(0.00)} \\
& CATCH & 0.38\textsubscript{(0.01)} & 0.34\textsubscript{(0.01)} & 0.29\textsubscript{(0.01)} & 0.39\textsubscript{(0.00)} & 0.36\textsubscript{(0.01)} & 0.31\textsubscript{(0.01)} \\
& DATED & 0.39\textsubscript{(0.01)} & 0.36\textsubscript{(0.01)} & 0.32\textsubscript{(0.01)} & 0.40\textsubscript{(0.01)} & 0.38\textsubscript{(0.01)} & 0.34\textsubscript{(0.01)} \\
& GuoSTM & 0.40\textsubscript{(0.01)} & 0.37\textsubscript{(0.01)} & 0.33\textsubscript{(0.01)} & 0.41\textsubscript{(0.00)} & 0.38\textsubscript{(0.01)} & 0.35\textsubscript{(0.01)} \\
& LaiSTDA & 0.41\textsubscript{(0.01)} & 0.38\textsubscript{(0.01)} & 0.34\textsubscript{(0.01)} & 0.42\textsubscript{(0.01)} & 0.39\textsubscript{(0.01)} & 0.36\textsubscript{(0.01)} \\
& ZhouLogitCP & 0.47\textsubscript{(0.01)} & 0.43\textsubscript{(0.01)} & 0.38\textsubscript{(0.02)} & 0.46\textsubscript{(0.01)} & 0.41\textsubscript{(0.02)} & 0.35\textsubscript{(0.01)} \\
\bottomrule
\end{tabular}
}
\caption{\small Discriminant tensor estimation error (EstError, $\|\widehat{\mathcal{B}} - \mathcal{B}\|_F/\|\mathcal{B}\|_F$) and misclassification rates (Misclass) for binary classification with orthogonal CP bases on order-4 tensors, under small spectral separation and large spectral separation. Best performance are highlighted in gray.}
\label{tab:5-6}
\end{table}

\begin{table}[htbp!]
\centering
\resizebox{0.75\textwidth}{!}{
\begin{tabular}{cccccccc}
\toprule
\multirow{2}{*}{Metric} & \multirow{2}{*}{Algorithms} & \multicolumn{3}{c}{Small spectral separation} & \multicolumn{3}{c}{Large Signal Strengths} \\
\cmidrule(lr){3-5} \cmidrule(lr){6-8}
& & $w = 2.5$ & $w = 3.0$ & $w = 3.5$ & $w_{max} = 3$ & $w_{max} = 4$ & $w_{max} = 5$ \\
\midrule
\multirow{6}{*}{EstError} 
& Sample $\mathcal{B}$ & 5.94\textsubscript{(0.42)} & 4.24\textsubscript{(0.27)} & 2.94\textsubscript{(0.19)} & 4.77\textsubscript{(0.24)} & 3.76\textsubscript{(0.21)} & 2.36\textsubscript{(0.15)} \\
& \cellcolor{gray!20} CP-TDA & \cellcolor{gray!20} 1.04\textsubscript{(0.02)} & \cellcolor{gray!20} 0.94\textsubscript{(0.03)} & \cellcolor{gray!20} 0.62\textsubscript{(0.03)} & \cellcolor{gray!20} 1.05\textsubscript{(0.05)} & \cellcolor{gray!20} 0.82\textsubscript{(0.02)} & \cellcolor{gray!20} 0.64\textsubscript{(0.02)} \\
& TuckerLDA & 1.32\textsubscript{(0.07)} & 1.13\textsubscript{(0.05)} & 0.99\textsubscript{(0.02)} & 1.35\textsubscript{(0.07)} & 1.15\textsubscript{(0.03)} & 0.88\textsubscript{(0.03)} \\
& CATCH & 1.12\textsubscript{(0.00)} & 1.11\textsubscript{(0.00)} & 1.11\textsubscript{(0.00)} & 1.12\textsubscript{(0.00)} & 1.11\textsubscript{(0.00)} & 1.11\textsubscript{(0.00)} \\
& DATED & 1.55\textsubscript{(0.06)} & 1.48\textsubscript{(0.03)} & 1.45\textsubscript{(0.02)} & 1.65\textsubscript{(0.07)} & 1.51\textsubscript{(0.03)} & 1.45\textsubscript{(0.02)} \\
& ZhouLogitCP & 4.32\textsubscript{(0.16)} & 3.76\textsubscript{(0.12)} & 2.96\textsubscript{(0.10)} & 4.18\textsubscript{(0.18)} & 3.69\textsubscript{(0.10)} & 2.33\textsubscript{(0.08)} \\
\midrule
\multirow{8}{*}{Misclass} 
& Sample $\mathcal{B}$ & 0.38\textsubscript{(0.02)} & 0.23\textsubscript{(0.01)} & 0.05\textsubscript{(0.01)} & 0.32\textsubscript{(0.02)} & 0.20\textsubscript{(0.01)} & 0.01\textsubscript{(0.00)} \\
& \cellcolor{gray!20} CP-TDA & \cellcolor{gray!20} 0.08\textsubscript{(0.01)} & \cellcolor{gray!20} 0.00\textsubscript{(0.00)} & \cellcolor{gray!20} 0.00\textsubscript{(0.00)} & \cellcolor{gray!20} 0.09\textsubscript{(0.02)} & \cellcolor{gray!20} 0.02\textsubscript{(0.00)} & \cellcolor{gray!20} 0.00\textsubscript{(0.00)} \\
& TuckerLDA & 0.16\textsubscript{(0.01)} & 0.03\textsubscript{(0.01)} & \cellcolor{gray!20} 0.00\textsubscript{(0.00)} & 0.19\textsubscript{(0.01)} & 0.04\textsubscript{(0.00)} & \cellcolor{gray!20} 0.00\textsubscript{(0.00)} \\
& CATCH & 0.39\textsubscript{(0.01)} & 0.35\textsubscript{(0.01)} & 0.30\textsubscript{(0.01)} & 0.40\textsubscript{(0.01)} & 0.37\textsubscript{(0.01)} & 0.32\textsubscript{(0.01)} \\
& DATED & 0.40\textsubscript{(0.01)} & 0.37\textsubscript{(0.01)} & 0.33\textsubscript{(0.00)} & 0.41\textsubscript{(0.01)} & 0.38\textsubscript{(0.01)} & 0.35\textsubscript{(0.01)} \\
& GuoSTM & 0.41\textsubscript{(0.01)} & 0.38\textsubscript{(0.01)} & 0.34\textsubscript{(0.01)} & 0.42\textsubscript{(0.01)} & 0.39\textsubscript{(0.01)} & 0.36\textsubscript{(0.01)} \\
& LaiSTDA & 0.42\textsubscript{(0.01)} & 0.39\textsubscript{(0.01)} & 0.35\textsubscript{(0.01)} & 0.43\textsubscript{(0.01)} & 0.40\textsubscript{(0.01)} & 0.37\textsubscript{(0.01)} \\
& ZhouLogitCP & 0.48\textsubscript{(0.01)} & 0.44\textsubscript{(0.01)} & 0.38\textsubscript{(0.02)} & 0.47\textsubscript{(0.01)} & 0.42\textsubscript{(0.02)} & 0.36\textsubscript{(0.01)} \\
\bottomrule
\end{tabular}
}
\caption{\small Discriminant tensor estimation error (EstError, $\|\widehat{\mathcal{B}} - \mathcal{B}\|_F/\|\mathcal{B}\|_F$) and misclassification rates (Misclass) for binary classification with non-orthogonal CP bases on order-4 tensors, under small spectral separation and large spectral separation. Best performance are highlighted in gray.}
\label{tab:7-8}
\end{table}

\subsection{Robustness to deviations from a CP discriminant tensor}\label{subsec:structureRobust}
\noindent\underline{\textbf{Setup.}} We examine CP-TDA's robustness when the true discriminant tensor 
deviates from the ideal CP low-rank structure, using $M = 3$ with dimensions $(30, 30, 30)$, 
sample sizes $n_1 = n_2 = 100$, and 1000 test samples. Mode-wise covariances 
$\bSigma = [\Sigma_m]_{m=1}^3$ have unit diagonals with off-diagonal elements 
$3/d_m$. Class means are $\cM_1 = 0$ and $\cM_2 = \calB \times_{m=1}^3 \Sigma_m$. 
Two misspecification mechanisms of $\calB$ are considered:

\noindent \textbf{1. Tucker low-rank with small off-diagonals.} Fix Tucker ranks $r_1 = r_2 = r_3 = 5$ and form the Tucker core as
$$
\mathcal{G} = \mathcal{G}_{\text{diag}} + \alpha\mathcal{G}_{\text{off}}, \quad \mathcal{G}_{\text{diag}}(i, j, k) = 
\begin{cases}
2 & \text{if } i = j = k \\
0 & \text{otherwise}
\end{cases}, \quad \mathcal{G}_{\text{off}}(i, j, k) \sim 
\begin{cases}
0 & \text{if } i = j = k \\
\mathcal{N}(0,1) & \text{otherwise}
\end{cases}
$$
where $\mathcal{G}_{\text{diag}} \in \mathbb{R}^{5 \times 5 \times 5}$ is a super-diagonal core tensor and $\mathcal{G}_{\text{off}} \in \mathbb{R}^{5\times 5\times 5}$ is an off-diagonal perturbation tensor with non-diagonal entries drawn independently from a standard normal distribution. We normalize $\mathcal{G}_{\text{off}}$ by scaling it to satisfy $\|\mathcal{G}_{\text{off}}\|_F = \|\mathcal{G}_{\text{diag}}\|_F$. The off-diagonal energy ratio $\alpha \in \{0.3, 0.5, 0.7\}$ controls the magnitude of deviation from CP low-rank structure. Let $\{\bA_m \in \mathbb{R}^{30 \times 5}\}_{m=1}^3$ have orthonormal columns. The discriminant tensor is $\mathcal{B} = \mathcal{G} \times_1 \bA_1 \times_2 \bA_2 \times_3 \bA_3$, which reduces to CP low-rank tensor when $\alpha = 0$.

\noindent \textbf{2. CP plus sparse perturbation.} Let $\mathcal{B}_{\text{CP}} = \sum_{r=1}^{R} \lambda_r \, \ba_{1r} \otimes \ba_{2r} \otimes \ba_{3r}$ with $ R = 5, \; \lambda_r = 2$, and orthonormal bases $\{\ba_{mr}\}_{m,r}$. Draw a sparse perturbation $\mathcal{U}$ by selecting a random subset $\Omega \subset [d_1] \times [d_2] \times [d_3]$ of size $K = \lfloor \rho \, d_1 d_2 d_3 \rfloor$, where each index $(i, j, k)$ is sampled uniformly without replacement from all possible indices. Then construct
$$
\mathcal{U}(i, j, k) = 
\begin{cases}
\pm\gamma & \text{if } (i,j,k) \in \Omega \\
0 & \text{if } (i,j,k) \notin \Omega
\end{cases}
$$
where for each $(i,j,k) \in \Omega$, sign is chosen with equal probability, and $\gamma = \tau\|\mathcal{B}_{\text{CP}}\|_F/\sqrt{K}$, so that $\|\mathcal{U}\|_F = \tau\|\mathcal{B}_{\text{CP}}\|_F$. We then set $\mathcal{B} = \mathcal{B}_{\text{CP}} + \mathcal{U}$ with $\rho=0.01$ and $\tau \in \{0.2, 0.4, 0.6\}$.

\subsection{Non-Gaussian tensor predictors}\label{subsec:nonGaussian}
\noindent\underline{\textbf{Setup.}} This subsection assesses performance when tensor predictors follow non-Gaussian distributions, specifically tensor elliptical distributions with heavy tails. We consider two families:
(i) The tensor $t$-distribution, $\mathcal{T}t(\mathcal{M},\bSigma,\nu)$ (Definition 1 in \cite{wang2023regression}), where degrees of freedom $\nu$ govern tail heaviness, with smaller $\nu$ producing heavier tails. 
(ii) The tensor-variate (asymmetric) Laplace distribution, $\mathcal{TL}(\mathcal{M},\bSigma,\lambda)$ (Definition 2.3 in \cite{yurchenko2021}), where $\lambda$ is the rate parameter. Both are implemented via exact samplers taking a mean tensor $\mathcal{M}$ and mode-wise covariance matrices $\bSigma=\{\Sigma_m\}_{m=1}^M$; here we use identity covariances. 

We adopt the following setting: tensor order $M = 3$ with dimensions $(30, 30, 30)$, sample sizes $n_1 = n_2 = 200$, and the true discriminant tensor $\mathcal{B}$ has CP rank $R=5$ with orthonormal bases and equal weights $w_r=5$. Class means are $\mathcal{M}_1=\mathbf{0}$ and $\mathcal{M}_2=\mathcal{B}$. For the heavy-tail controls, we use $\nu\in\{2,5\}$ for the tensor-$t$ case and $\lambda\in\{2.0,4.0\}$ for the tensor Laplace case. Each configuration is repeated 100 times.

\noindent\underline{\textbf{Implementation details.}} Methods are evaluated with true ranks/dimensions: GuoSTM uses MPCA output dimensions $(5,5,5)$; LaiSTDA and DATED use mode-wise dimensions $(5,5,5)$; TuckerLDA uses Tucker ranks $(5,5,5)$; ZhouLogitCP is trained with ridge penalties $\rho\in\{0.001,0.005,0.01, \\ 0.05,0.1,0.5\}$ at rank $R=5$; and CP-TDA uses true rank $R=5$. 

\begin{table}[htbp!]
\centering
\resizebox{0.85\textwidth}{!}{
\begin{tabular}{lccccccc}
\toprule
Setting & CP-TDA & TuckerLDA & CATCH & GuoSTM & LaiSTDA & ZhouLogitCP & DATED \\
\midrule
$\nu=2$ 
& \cellcolor{gray!20} 0.770\textsubscript{(0.011)} & 0.894\textsubscript{(0.012)} & 0.999\textsubscript{(0.000)} & NA & NA & 1.289\textsubscript{(0.026)} & 0.9996\textsubscript{(0.0001)} \\
$\nu=5$ 
& \cellcolor{gray!20} 0.327\textsubscript{(0.009)} & 0.432\textsubscript{(0.008)} & 0.993\textsubscript{(0.000)} & NA & NA & 1.321\textsubscript{(0.025)} & 0.986\textsubscript{(0.001)} \\
\midrule
$\lambda=2$ 
& \cellcolor{gray!20} 0.409\textsubscript{(0.008)} & 0.513\textsubscript{(0.008)} & 0.994\textsubscript{(0.000)} & NA & NA & 1.329\textsubscript{(0.042)} & 0.990\textsubscript{(0.000)} \\
$\lambda=4$ 
& \cellcolor{gray!20} 0.653\textsubscript{(0.004)} & 0.757\textsubscript{(0.004)} & 0.997\textsubscript{(0.000)} & NA & NA & 1.292\textsubscript{(0.024)} & 0.998\textsubscript{(0.000)} \\
\bottomrule
\end{tabular}
}
\caption{\small Discriminant tensor estimation error ($\|\widehat{\mathcal{B}} - \mathcal{B}\|_F/\|\mathcal{B}\|_F$)  for the tensor-$t$ distribution (upper) and tensor-Laplace distribution (lower). Best performance are highlighted in gray.}
\label{tab:nonGaussian_Berror}
\end{table}

\begin{table}[htbp!]
\centering
\resizebox{0.85\textwidth}{!}{
\begin{tabular}{lccccccc}
\toprule
Setting & CP-TDA & TuckerLDA & CATCH & GuoSTM & LaiSTDA & ZhouLogitCP & DATED \\
\midrule
$\nu=2$   & \cellcolor{gray!20} 0.075\textsubscript{(0.019)} & 0.127\textsubscript{(0.023)} & 0.432\textsubscript{(0.006)} & 0.497\textsubscript{(0.002)} & 0.460\textsubscript{(0.004)} & 0.262\textsubscript{(0.009)} & 0.444\textsubscript{(0.009)} \\
$\nu=5$   & \cellcolor{gray!20} 0.001\textsubscript{(0.000)} & 0.002\textsubscript{(0.000)} & 0.239\textsubscript{(0.004)} & 0.453\textsubscript{(0.012)} & 0.448\textsubscript{(0.005)} & 0.227\textsubscript{(0.008)} & 0.205\textsubscript{(0.007)} \\
\midrule
$\lambda=2$ & \cellcolor{gray!20} 0.002\textsubscript{(0.000)} & 0.003\textsubscript{(0.000)} & 0.253\textsubscript{(0.003)} & 0.478\textsubscript{(0.007)} & 0.431\textsubscript{(0.004)} & 0.211\textsubscript{(0.010)} & 0.223\textsubscript{(0.005)} \\
$\lambda=4$ & \cellcolor{gray!20} 0.013\textsubscript{(0.001)} & 0.015\textsubscript{(0.001)} & 0.351\textsubscript{(0.003)} & 0.496\textsubscript{(0.004)} & 0.467\textsubscript{(0.004)} & 0.292\textsubscript{(0.008)} & 0.328\textsubscript{(0.006)} \\
\bottomrule
\end{tabular}
}
\caption{\small Misclassification rates for tensor-$t$ distribution (upper) and tensor-Laplace distribution (lower). Best performance are highlighted in gray.}
\label{tab:nonGaussian}
\end{table}

\noindent\underline{\textbf{Results.}} Tables \ref{tab:nonGaussian_Berror}-\ref{tab:nonGaussian} report discriminant tensor estimation errors and misclassification rates under tensor-$t$ and tensor-Laplace distributions. Theoretically, based on \cite{fang1990statistical}, when tensor predictors follow a tensor elliptical distribution with known class means $\mathcal{M}_k$ and common covariance matrices $\bSigma$, Fisher's rule remains optimal for Tensor LDA and achieves the minimax misclassification rate. Our simulation results align with this theory: CP-TDA exhibits pronounced robustness to heavy-tailed non-Gaussian settings, consistently achieving the lowest estimation errors and misclassification rates across both distribution families. Moreover, performance improves as distributions become lighter-tailed (larger $\nu$ or smaller $\lambda$), with CP-TDA retaining substantial advantages over all competing methods.

\subsection{Robustness to heteroscedastic mode-wise covariance matrices}
\noindent\underline{\textbf{Setup and implementation details.}} We study violations of the common mode-wise covariance assumption in CP-TDA using order $M = 3$ with dimensions $(30, 30, 30)$, true discriminant tensor $\mathcal{B}$ of CP rank $R = 5$ with orthonormal bases and equal weights $w_r=5$. Unlike Section \ref{sec:simu}, mode-wise covariance matrices may differ between classes. For class $k \in \{1, 2\}$, we set ${\bSigma}^{(k)} = \{\Sigma_m^{(k)}\}_{m=1}^3$ with $\Sigma_m^{(k)} = \sigma_k \bI_{d_m}$ for $m = 1, 2, 3$, and consider three heteroscedastic configurations $(\sigma_1, \sigma_2) \in \{(1, 2), (1, 2.5), (1, 3)\}$. All competing methods use the same ranks and tuning parameters as in Appendix \ref{subsec:nonGaussian}. We additionally implement Tensor Gaussianizing Flow (FlowTGMM) proposed in Section \ref{subsec:FlowTGMM} to handle this heteroscedastic covariance setting. FlowTGMM transforms tensor class-conditional distributions to approximate TGMM with common mode-wise covariance, after which CP-TDA is applied for classification.

\begin{table}[htbp]
\centering
\resizebox{0.95\textwidth}{!}{
\begin{tabular}{lccccccc}
\toprule
Setting & CP-TDA & TuckerLDA & CATCH & GuoSTM & LaiSTDA & ZhouLogitCP & DATED \\
\midrule
$(\sigma_1, \sigma_2) = (1, 2)$ & \cellcolor{gray!20} 0.026\textsubscript{(0.001)} & 0.035\textsubscript{(0.001)} & 0.353\textsubscript{(0.002)} & 0.457\textsubscript{(0.006)} & 0.400\textsubscript{(0.004)} & 0.339\textsubscript{(0.017)} & 0.459\textsubscript{(0.002)} \\
$(\sigma_1, \sigma_2) = (1, 2.5)$ & \cellcolor{gray!20} 0.102\textsubscript{(0.002)} & 0.152\textsubscript{(0.003)} & 0.342\textsubscript{(0.003)} & 0.440\textsubscript{(0.009)} & 0.372\textsubscript{(0.003)} & 0.377\textsubscript{(0.011)} & 0.479\textsubscript{(0.001)} \\
$(\sigma_1, \sigma_2) = (1, 3)$ & \cellcolor{gray!20} 0.220\textsubscript{(0.005)} & 0.337\textsubscript{(0.004)} & 0.327\textsubscript{(0.002)} & 0.458\textsubscript{(0.010)} & 0.340\textsubscript{(0.003)} & 0.366\textsubscript{(0.007)} & 0.483\textsubscript{(0.001)} \\
\bottomrule
\end{tabular}
}
\caption{\small Misclassification rates under heteroscedastic mode-wise covariance matrices with varying $(\sigma_1, \sigma_2)$. Best performance are highlighted in gray.}
\label{tab:nonshared_cov}
\end{table}

\noindent\underline{\textbf{Results.}} Table \ref{tab:nonshared_cov} presents misclassification rates under three heteroscedastic configurations with increasing differences between class-specific covariances. CP-TDA demonstrates robustness to moderate violations of the common covariance assumption, maintaining the lowest misclassification rates across all settings. Note that when Tensor Gaussianizing Flow (FlowTGMM) is used to transform the heteroscedastic setting to approximate homoscedasticity before applying CP-TDA, the misclassification rate drops to 0.00 even for the original covariance configuration $(\sigma_1, \sigma_2) = (1, 3)$. This demonstrates the effectiveness of FlowTGMM in addressing covariance heterogeneity. Overall, CP-TDA remains robust under moderate covariance violations, and combining it with FlowTGMM provides a principled and effective solution for more severe heteroscedasticity.

\subsection{Data-driven rank selection and robustness to rank misspecification}
\label{subsec:rankSelection}
\noindent\underline{\textbf{Setup and implementation details.}} We study performance when the true rank is unknown and must be selected from data. Settings mirror Section \ref{sec:simu}: order-3 discriminant tensor of dimensions $(30, 30, 30)$ with true CP rank $R = 5$, non-orthogonal CP bases. We consider two signal strength configurations: equal weights $(w_r = 2)$ versus unequal weights $(w_{\max} = 4)$. 

Rank selection proceeds via 10-fold cross-validation over $R \in \{3, 4, 5, 6, 7, 10\}$. Methods configure ranks as follows: GuoSTM sets MPCA output dimensions to $(R,R,R)$; LaiSTDA uses mode-wise dimensions $(R,R,R)$; DATED selects latent ranks $(R,R,R)$; TuckerLDA chooses Tucker rank $(R,R,R)$; and CP-TDA uses CP rank $R$. ZhouLogitCP performs model selection via BIC over ranks $R \in \{3, 4, 5, 6, 7, 10\}$ and ridge penalties $\{0.001, 0.005, 0.01, 0.05, 0.1, 0.5\}$. CATCH requires no rank selection.

\noindent\underline{\textbf{Results.}} Table \ref{tab:rank_selection} presents misclassification rates and most frequently selected ranks across different methods. CP-TDA and TuckerLDA consistently select the correct rank $R = 5$ across both signal configurations, achieving the lowest misclassification errors. Most competing methods overselect ranks (selecting $R = 9$), ZhouLogitCP selects a smaller rank $(R = 3)$ due to BIC's penalty for model complexity; both result in substantially higher errors.

\begin{table}[htbp!]
\centering
\resizebox{0.95\textwidth}{!}{
\begin{tabular}{lccccccc}
\toprule
Setting & CP-TDA & TuckerLDA & CATCH & GuoSTM & LaiSTDA & ZhouLogitCP & DATED \\
\midrule
$w_r = 2$, misclass & 0.001\textsubscript{(0.000)} & 0.023\textsubscript{(0.000)} & 0.188\textsubscript{(0.005)} & 0.028\textsubscript{(0.001)} & 0.381\textsubscript{(0.009)} & 0.164\textsubscript{(0.016)} & 0.291\textsubscript{(0.010)} \\
$w_r = 2$, rank & 5  & 5 & NA & 9 & 9 & 3 & 9 \\
\midrule
$w_{\max} = 4$, misclass & 0.000\textsubscript{(0.000)} & 0.003\textsubscript{(0.000)} & 0.109\textsubscript{(0.005)} & 0.008\textsubscript{(0.000)} & 0.323\textsubscript{(0.008)} & 0.130\textsubscript{(0.016)} & 0.246\textsubscript{(0.009)} \\
$w_{\max} = 4$, rank & 5  & 5 & NA & 9 & 9 & 3 & 9 \\
\bottomrule
\end{tabular}
}
\caption{\small Data-driven rank selection performance. Rows labeled ``misclas'' report mean misclassification error (standard deviation in subscript). Rows labeled ``rank'' show the most frequently selected rank across 100 replicates. CATCH does not require rank selection (marked ``NA'').}
\label{tab:rank_selection}
\end{table}

\noindent\underline{\textbf{Further assess rank specification.}} We examine CP-TDA’s robustness to rank misspecification by fitting with intentionally incorrect ranks $R \in \{1,2,3,4,6,7,10\}$. Table~\ref{tab:rank_misspec} shows that CP-TDA performs stably across a wide range of misspecified ranks. When $R < 5$, the proposed method recovers the strongest discriminant components first, and progressively incorporates additional signal as $R$ increases toward the true rank, leading to steadily improving misclassification rates and estimation errors. When $R > 5$, the inclusion of extra components introduces mild noise, but performance remains close to optimal even for substantially overspecified ranks. The key tradeoff is signal versus noise: underspecification loses discriminant information, while overspecification dilutes signal with noise. Overall, CP-TDA exhibits strong robustness to both misspecification settings.

\begin{table}[htbp!]
\centering
\resizebox{0.95\textwidth}{!}{
\begin{tabular}{lcccccccc}
\toprule
Setting & $R=1$ & $R=2$ & $R=3$ & $R=4$ & $R=5$ & $R=6$ & $R=7$ & $R=10$ \\
\midrule
$w_r=2$, misclass & 0.064\textsubscript{(0.003)} & 0.037\textsubscript{(0.003)} & 0.020\textsubscript{(0.002)} & 0.012\textsubscript{(0.003)} & 0.000\textsubscript{(0.000)} & 0.016\textsubscript{(0.002)} & 0.020\textsubscript{(0.002)} & 0.029\textsubscript{(0.003)} \\
$w_r=2$, EstError & 0.945\textsubscript{(0.003)} & 0.943\textsubscript{(0.007)} & 0.910\textsubscript{(0.009)} & 0.910\textsubscript{(0.016)} & 0.907\textsubscript{(0.012)} & 1.058\textsubscript{(0.016)} & 1.144\textsubscript{(0.019)} & 1.325\textsubscript{(0.022)} \\
\midrule
$w_{\max}=4$, misclass & 0.021\textsubscript{(0.001)} & 0.009\textsubscript{(0.001)} & 0.006\textsubscript{(0.001)} & 0.005\textsubscript{(0.001)} & 0.000\textsubscript{(0.001)} & 0.007\textsubscript{(0.001)} & 0.010\textsubscript{(0.001)} & 0.013\textsubscript{(0.001)} \\
$w_{\max}=4$, EstError & 0.817\textsubscript{(0.005)} & 0.742\textsubscript{(0.010)} & 0.731\textsubscript{(0.014)} & 0.754\textsubscript{(0.013)} & 0.715\textsubscript{(0.012)} & 0.885\textsubscript{(0.019)} & 0.948\textsubscript{(0.019)} & 1.086\textsubscript{(0.021)} \\
\bottomrule
\end{tabular}
}
\caption{\small Robustness of CP-TDA to rank misspecification under both equal weights ($w_r = 2$) and unequal weights ($w_{\max} = 4$) settings.}
\label{tab:rank_misspec}
\end{table}

\subsection{Ultra high-dimensional tensor predictors}\label{sec:highdim}
\noindent\underline{\textbf{Setup and implementation details.}}
We investigate performance when tensor predictors are ultra high-dimensional. The discriminant tensor $\cB$ has CP rank $R = 2$ with non-orthogonal bases as in Section \ref{sec:simu} and equal weights $w_1 = w_2 = 2$. Tensor dimensions $(256, 128, 128)$ mimic the neuroimaging setup of \citet{zhou2013tensor}, so that ambient dimension far exceeds sample size. Each configuration is repeated 100 times. All methods follow the implementation in Appendix \ref{subsec:others}, adapted to this ultra high-dimensional setting with true rank $R = 2$.

\noindent\underline{\textbf{Results.}} Table \ref{tab:highdim} summarizes discriminant tensor estimation errors and misclassification rates in the ultra high-dimensional regime. CP-TDA achieves the best performance among all methods; TuckerLDA ranks second with noticeably larger errors, while other competing methods struggle in this extreme setting. Notably, ZhouLogitCP performs poorly despite this setting mimicking their neuroimaging application. This is because their logistic regression framework with ridge penalties, while theoretically motivated, faces substantial computational and statistical challenges in recovering CP low-rank structure when $d \gg n$, as discussed in Appendix \ref{sec:zhoulimit}.

\begin{table}[htbp!]
\centering
\resizebox{0.95\textwidth}{!}{
\begin{tabular}{lccccccc}
\toprule
Metric & CP-TDA & TuckerLDA & CATCH & GuoSTM & LaiSTDA & ZhouLogitCP & DATED \\
\midrule
Misclass & \cellcolor{gray!20} 0.004\textsubscript{(0.003)} & 0.015\textsubscript{(0.006)} & 0.452\textsubscript{(0.012)} & 0.408\textsubscript{(0.012)} & 0.493\textsubscript{(0.010)} & 0.327\textsubscript{(0.033)} & 0.507\textsubscript{(0.018)} \\
EstError & \cellcolor{gray!20} 0.589\textsubscript{(0.043)} & 0.870\textsubscript{(0.015)} & NA & NA & NA & 1.696\textsubscript{(0.148)} & 1.001\textsubscript{(0.000)} \\
\bottomrule
\end{tabular}
}
\caption{\small Ultra high-dimensional tensor setting with dimensions $(256,128,128)$ and true CP rank $R=2$. The first row reports mean misclassification rates. The second row reports mean normalized estimation error $\|\widehat{\mathcal{B}}-\mathcal{B}\|_F / \|\mathcal{B}\|_F$. NA indicates no discriminant tensor estimate is output. Best performance are highlighted in gray.}
\label{tab:highdim}
\end{table}

\subsection{Class imbalance}
\label{subsec:unbalanced}
\noindent\underline{\textbf{Setup and implementation details.}}
We evaluate performance under class imbalance using tensor dimensions $(30,30,30)$, true CP rank $R = 5$ and non-orthogonal CP bases with equal weights ($w_r = 2.5$). Total training size is fixed at 200 with three class proportions: (i) $n_1 = 40$, $n_2 = 160$ ($2:8$); (ii) $n_1 = 60$, $n_2 = 140$ ($3:7$); and (iii) $n_1 = 80$, $n_2 = 120$ ($4:6$), where $n_1$ and $n_2$ denote samples from classes 1 and 2, respectively. All methods follow the implementation in Appendix \ref{subsec:others}, with class proportions as the only change.

\noindent\underline{\textbf{Results.}} Table~\ref{tab:unbalanced} shows that CP-TDA consistently attains the lowest misclassification rates across all imbalance levels. Its accuracy remains stable even under severe class imbalance, demonstrating strong robustness to varying class proportions, whereas competing methods deteriorate markedly as 
imbalance increases.

\begin{table}[htbp!]
\centering
\resizebox{0.95\textwidth}{!}{
\begin{tabular}{lccccccc}
\toprule
Setting & CP-TDA & TuckerLDA & CATCH & GuoSTM & LaiSTDA & ZhouLogitCP & DATED \\
\midrule
$(n_1, n_2) = (40, 160)$ & \cellcolor{gray!20} 0.080\textsubscript{(0.005)} & 0.162\textsubscript{(0.007)} & 0.423\textsubscript{(0.004)} & 0.221\textsubscript{(0.018)} & 0.450\textsubscript{(0.004)} & 0.260\textsubscript{(0.011)} & 0.460\textsubscript{(0.004)} \\
$(n_1, n_2) = (60, 140)$ & \cellcolor{gray!20} 0.026\textsubscript{(0.001)} & 0.030\textsubscript{(0.002)} & 0.258\textsubscript{(0.005)} & 0.103\textsubscript{(0.025)} & 0.441\textsubscript{(0.005)} & 0.116\textsubscript{(0.010)} & 0.430\textsubscript{(0.004)} \\
$(n_1, n_2) = (80, 120)$ & \cellcolor{gray!20} 0.003\textsubscript{(0.000)} & 0.012\textsubscript{(0.001)} & 0.132\textsubscript{(0.004)} & 0.079\textsubscript{(0.024)} & 0.423\textsubscript{(0.005)} & 0.089\textsubscript{(0.010)} & 0.412\textsubscript{(0.006)} \\
\bottomrule
\end{tabular}
}
\caption{\small Misclassification rates under class imbalance with total training size $200$. Three imbalance ratios are evaluated: $2:8, \;3:7$, and $4:6$. Best performance are highlighted in gray.}
\label{tab:unbalanced}
\end{table}

\section{Additional Real Data Results}
\label{subsec:realdata}

\noindent\underline{\textbf{Training details.}} Hyperparameters are tuned by grid search: encoder learning rates $\{10^{-4},10^{-3}\}$, flow learning rates $\{3\times10^{-4},10^{-3}\}$, CP rank $R\in\{8,12,18,24\}$, and gradient clipping $\{10,50,100\}$. We use weight decay $10^{-4}$, $K=12$ flow blocks, batch size $64$, and set $H=32$, producing tensor features of dimension $64 \times 32 \times 32$.

\noindent\underline{\textbf{Comparison with graph kernel methods.}}
\begin{table}[htbp]
\centering
\resizebox{0.75\textwidth}{!}{
\begin{tabular}{lcc}
\toprule
\textbf{Model} & \textbf{D\&D (acc $\pm$ sd, \%)} & \textbf{PROTEINS (acc $\pm$ sd, \%)} \\
\midrule
HGK-SP \citep{morris2016faster}       & 78.26 $\pm$ 0.76 & 75.54 $\pm$ 0.94 \\
HGK-WL \citep{morris2016faster}       & 79.01 $\pm$ 0.43 & 74.53 $\pm$ 0.35 \\
WL \citep{shervashidze2011weisfeiler}     & 79.78 $\pm$ 0.36 & 73.06 $\pm$ 0.47 \\
\midrule
\rowcolor{gray!20}
\textbf{STDN (ours)}               & \textbf{81.86 $\pm$ 2.19} & \textbf{78.93 $\pm$ 2.07} \\
\bottomrule
\end{tabular}
}
\caption{\small Graph classification accuracy on D\&D and PROTEINS datasets (mean $\pm$ standard deviation). STDN (highlighted) combines a neural network encoder  
with a CP-TDA head, achieving the highest accuracy.}
\label{tab:kernels_std}
\end{table}

\section{Limitations of ZhouLogitCP \citep{zhou2013tensor}}
\label{sec:zhoulimit}
In this section, we examine the limitations of ZhouLogitCP \citep{zhou2013tensor} and demonstrate the advantages of our CP-TDA for high-dimensional tensor classification.

ZhouLogitCP has three key limitations. First, its optimization is highly non-convex, and the objective function may have exponentially many stationary points in high dimensions \citep{arous2019landscape}. Proposition 1 in \cite{zhou2013tensor} assumes the set of stationary points (modulo scaling and permutation indeterminacy) are isolated, similar to Assumption 3 of \cite{zheng2020nonparametric}, but this assumption is violated as dimensions increase. 
Second, the block relaxation algorithm is a variant of Alternating Least Squares (ALS), which requires good initialization for global convergence \citep{kolda2009tensor, tang2025revisit}. ZhouLogitCP uses random initialization, which may converge to local optima arbitrarily far from the global solution, even with multiple restarts. In contrast, our \textsc{rc-PCA} initialization provides theoretical guarantees for global convergence. Notably, existing work on high-dimensional logistic regression \citep{HanWillettZhang2022} requires i.i.d. elements of each tensor predictor to construct proper initialization while ours works under correlated elements.
Third, regarding sample efficiency, ZhouLogitCP requires sample size at least as large as the effective number of parameters $p_e = R(\sum_{m=1}^M d_m - M + 1)$, otherwise necessitating penalization. CP-TDA only requires $\widehat{\Sigma}_m$ to be positive definite, imposing the mild condition on the sample size $\min\{n_1,n_2\} > d_m/d_{-m}$ for each mode $m$.

To isolate and demonstrate the impact of initialization quality, we design an experiment that eliminates the sample efficiency issue. We adopt the following setting: keep the CP-discriminant configuration used earlier but fix the true rank at $R = 3$ with equal weights $w = 2.0$ and tensor dimensions $(30, 30, 30)$. Moreover, we set sample sizes $n_1 = n_2 = 200$ so that we satisfy \cite{zhou2013tensor}'s sample requirement, allowing their algorithm to run without penalization.

We compare two initialization mechanisms. First, we use \textsc{rc-PCA} initialization with rank $R = 3$, yielding $\{\hat \ba_{rm}^{\text{rcpca}}, r\in [3], m \in[3] \}$. We form basis matrices
\[
\bA_1^{(0)} = [\widehat{\ba}_{11}^{\text{rcpca}}, \ldots, \widehat{\ba}_{1R}^{\text{rcpca}}], \quad 
\bA_2^{(0)} = [\widehat{\ba}_{21}^{\text{rcpca}}, \ldots, \widehat{\ba}_{2R}^{\text{rcpca}}], \quad 
\bA_3^{(0)} = [\widehat{\ba}_{31}^{\text{rcpca}}, \ldots, \widehat{\ba}_{3R}^{\text{rcpca}}],
\]
and use $(\bA_1^{(0)}, \bA_2^{(0)}, \bA_3^{(0)})$ as a single warm start for ZhouLogitCP's block relaxation algorithm with fixed rank $R = 3$ and no ridge penalty. Second, we try varying numbers of random initializations $K \in \{10, 50, 100, 200, 500, 1000, 2000\}$, selecting for each $K$ the initialization with largest training log-likelihood $\ell_{\max}^{(K)}$ (equivalent to smallest BIC since rank and penalty are fixed). Results from 50 replications are reported in Table \ref{tab:init_comparison}.

\begin{table}[htbp!]
\centering
\resizebox{1.05\textwidth}{!}{%
\begin{tabular}{lcccccccccc}
\toprule
Metric & CP-TDA & \makecell{ZhouLogitCP\\w/ RC-PCA} & K=10 & K=20 & K=50 & K=100 & K=200 & K=500 & K=1000 & K=2000 \\
\midrule
Misclass & 0.062\textsubscript{(0.09)} & 0.182\textsubscript{(0.026)} & 0.368\textsubscript{(0.033)} & 0.376\textsubscript{(0.033)} & 0.371\textsubscript{(0.036)} & 0.379\textsubscript{(0.048)} & 0.376\textsubscript{(0.033)} & 0.369\textsubscript{(0.039)} & 0.370\textsubscript{(0.028)} & 0.368\textsubscript{(0.026)} \\
Log-Lik & NA & 5.529\textsubscript{(0.904)} & 8.677\textsubscript{(0.808)} & 9.472\textsubscript{(0.897)} & 9.965\textsubscript{(0.874)} & 10.550\textsubscript{(0.789)} & 11.248\textsubscript{(0.771)} & 11.674\textsubscript{(0.707)} & 12.527\textsubscript{(0.645)} & 13.416\textsubscript{(0.613)} \\
\bottomrule
\end{tabular}%
}
\caption{\small Comparison of initialization strategies for ZhouLogitCP. ZhouLogitCP w/ \textsc{rc-PCA} uses a single warm start from \textsc{rc-PCA}; $K$ denotes the number of random initializations. The first row reports mean misclassification rate (standard deviation in subscript); the second row reports mean training log-likelihood (standard deviation in subscript).}
\label{tab:init_comparison}
\end{table}

Table~\ref{tab:init_comparison} demonstrates two key findings. First, for ZhouLogitCP, \textsc{rc-PCA} initialization substantially outperforms random initialization, confirming the superiority of our principled approach. However, even with \textsc{rc-PCA} initialization, ZhouLogitCP still substantially underperforms CP-TDA, likely due to inefficiency of the noisy log-likelihood under moderate sample sizes. Second, while increasing random initializations from $K=10$ to $K=2000$ monotonically improves training log-likelihood, test misclassification rate remains nearly constant. This indicates that the empirical loss landscape is too noisy and irregular that none of the random initializations land near the global optimum, resulting in poor generalizability even with thousands of restarts.

\section{Training Algorithm for Semiparametric CP-TDA (STDN)}
\label{app:training}
Each training epoch consists of two phases. In the outer snapshot phase, we pass the entire training set through the encoder-flow pipeline to obtain latent tensors $\{\cX_i^{(L)}\}_{i=1}^{n_0}$ under the current parameters $(\beta,\varphi)$ and compute the
plug-in statistics $\{\hat\pi_k, \hat\cM_k^{(L)}, \hat\Sigma_m^{(L)}, \hat\cB^{cp,(L)}\}$ on the full training set. These quantities are frozen for the subsequent inner optimization phase. 

In the inner phase, we traverse the training data in mini-batches. For each mini-batch, we
first minimize the empirical cross-entropy loss using logits constructed from
$\cX^{(L)} = g_\varphi(h_\beta(\cZ))$ and the frozen plug-in statistics, backpropagating gradients
only through $h_\beta$ to update the encoder parameters $\beta$. We then minimize the mini-batch
FlowTGMM loss $\cL_{n_0,\mathrm{Flow}}$ in Proposition~\ref{prop:log-likelihood}, again with
$\{\hat\cM_k^{(L)},\hat\Sigma_m^{(L)}\}$ held fixed, to update the flow parameters $\varphi$.
Algorithm~\ref{alg:snapshot-training} summarizes the complete snapshot-training procedure.

\begin{algorithm}[h!]
    \SetKwInOut{Input}{Input}
    \SetKwInOut{Output}{Output}
    \Input{Dataset $\{(\cZ_i, Y_i)\}_{i=1}^{n_0}$; encoder $h_\beta$; flow $g_\varphi$; CP rank $R$; batch size $B$; number of epochs $E$; learning rates $\eta_\beta, \eta_\varphi$.}
    \Output{Trained $(\hat\beta, \hat\varphi)$ and classifier head parameters $(\widehat\calB^{{\rm cp},(L)}, \widehat{\calM}_k^{(L)}, \hat\pi_1, \hat\pi_2)$.}
    
    Initialize $\beta$, $\varphi$. \\
    \For{epoch $e = 1, \ldots, E$}{
        \textbf{Outer snapshot phase}: \\
        Compute $\cX_i^{(0)}=h_\beta(\cZ_i)$ and $\cX_i^{(L)}=g_\varphi(\cX^{(0)})$ for all $i \in [n_0]$.
        
        Compute $\{\widehat{\pi}_k, \hat\calM_k^{(L)}, \widehat{\Sigma}_m^{(L)}, \hat\calB^{(L)}\}$ from $\{\cX_i^{(L)}, Y_i\}_{i=1}^{n_0}$ via equation \eqref{eqn:lda-discrim-tensor}.
        
        Refine $\widehat{\calB}^{(L)}$ using Algorithms~\ref{alg:initialize-cp}-\ref{alg:tensorlda-cp} to obtain the CP rank-$R$ tensor $\widehat{\calB}^{{\rm cp},(L)}$.
        
        \textbf{Inner optimization phase}: \\
        \For{each mini-batch $\{(\cZ_j, Y_j)\}_{j=1}^B$}{
                Compute $\cX_j^{(L)}= g_\varphi(h_\beta(\cZ_j))$ for all $j \in [B]$.
                
                \textit{Encoder step:} Compute scores $s_j \leftarrow \langle \widehat\calB^{{\rm cp},(L)}, \cX_j^{(L)} - \frac{\widehat{\calM}_0^{(L)}+\widehat{\calM}_1^{(L)}}{2} \rangle + \log(\hat\pi_2/\hat\pi_1)$. 
               Update $\beta \leftarrow \beta - \eta_{\beta} \nabla_{\beta} \frac{1}{B} \sum_{j\in [B]} [Y_j \log \sigma(s_j) + (1-Y_j)\log(1-\sigma(s_j))]$, where $\sigma$ is the logistic sigmoid.
                
                \textit{Flow step:} update $\varphi \leftarrow \varphi - \eta_\varphi \nabla_\varphi \frac{1}{B} \sum_{j=1}^B \Big[-\log f(\cX_j^{(L)} \mid Y_j; \widehat{\calM}_{Y_j}^{(L)}, \widehat{\bSigma}^{(L)}) - \sum_{\ell=1}^L \log |\det \bJ_{g^{(\ell)}}(\cX_j^{(\ell-1)})| \Big]$.
            }
        }
        \Return{$(\hat\beta, \hat\varphi, \widehat\calB^{{\rm cp},(L)}, \widehat{\calM}_k^{(L)}, \hat\pi_1, \hat\pi_2)$}
    \caption{Snapshot Training for Semiparametric Tensor Discriminant Network}
    \label{alg:snapshot-training}
\end{algorithm}

\section{Proofs for Theorems in Section \ref{sec:theorems}} \label{append:proof:main}

\subsection{Proofs for Initialization (Theorem \ref{thm:cp-initilization})} \label{append:proof:initia}

\subsubsection*{Proof I: Noise Decomposition}

The initial estimator $\hat\cB$ can be viewed as perturbed version of the ground truth $\cB$ where the noise is actually the estimation error. For notational simplicity, we sometimes give explicit expressions only in the case of $m=1$ and $M=3$. 
We first decompose the perturbing error under $M=3$ without loss of generality. The extension towards higher tensor order $M>3$ is straightforward. 
Let $C_{m,\sigma}=(1-2/n_L)\tr(\otimes_{k\neq m}\Sigma_{k})/d_{-m}, \; C_\sigma = \prod_{m=1}^M C_{m,\sigma}=(1-2/n_L)^M [\tr(\bSigma)/d]^{M-1}=(1-2/n_L)^M[\prod_{m=1}^M \tr(\Sigma_m)/d]^{M-1}$.
Recall that from \eqref{eqn:lda-rule} and \eqref{eqn:lda-discrim-tensor},
\begin{equation*}
\calB = \bbrackets{\calM_{2}-\calM_{1}; \Sigma_1^{-1}, \Sigma_2^{-1}, \Sigma_3^{-1}} 
\quad \text{and} \quad 
\hat\calB = \bbrackets{\bar\calX^{(2)}-\bar\calX^{(1)}; \hat\Sigma_1^{-1}, \hat\Sigma_2^{-1}, \hat\Sigma_3^{-1}}  .
\end{equation*}
Recall
\begin{align*}
\hat C_{\sigma} =\frac{\prod_{m=1}^M \hat\Sigma_{m,11}}{\hat{\Var}(\cX_{1\cdots1})} ,
\end{align*}
where $\hat{\Var}(\cX_{1\cdots1})$ is the pooled sample variance of the first element of $\cX_i^{(k)}$.
Define $\bD_S = \mats(\cM_{2}-\cM_{1})$. Let $\bU_{M+m}:=\bU_{m}$ for all $1\le m\le M$.

\noindent Let $\bDelta_m = \hat\Sigma_m^{-1} - C_{m,\sigma}^{-1}\Sigma_m^{-1}:=\hat\Sigma_m^{-1} - \tilde\Sigma_m^{-1}$ for $m=1,...,M-1$, and
$\bDelta_M = \hat\Sigma_M^{-1} - C_{M,\sigma}^{-1} C_{\sigma} \Sigma_m^{-1}:=\hat\Sigma_M^{-1} - \tilde\Sigma_M^{-1}$. Due to the identifiability issues associated with the tensor normal distribution, we can rescale the covariance matrices $\Sigma_{m}$ such that $C_{m,\sigma}=1$ for $1\le m\le M-1$. In this sense, we slightly abuse notation by using $\Sigma_m$ instead of $\tilde\Sigma_m$.

Then, we have the following decomposition of the error
\begin{align}\label{eqn: error decomposition}
 \cE &= \hat\calB - \calB = \bbrackets{ (\bar\calX^{(2)}-\bar\calX^{(1)}) 
- (\calM_{2}-\calM_{1});\; \Sigma_1^{-1}, \Sigma_2^{-1}, \Sigma_3^{-1}}  \notag\\
&\quad  + \bbrackets{(\bar\calX^{(2)}-\bar\calX^{(1)})- (\calM_{2}-\calM_{1});\; \bDelta_1, \Sigma_2^{-1}, \Sigma_3^{-1}} 
+ \bbrackets{(\bar\calX^{(2)}-\bar\calX^{(1)})- (\calM_{2}-\calM_{1});\; \Sigma_1^{-1}, \bDelta_2, \Sigma_3^{-1}} \notag\\
&\quad  + \bbrackets{(\bar\calX^{(2)}-\bar\calX^{(1)})- (\calM_{2}-\calM_{1});\; \Sigma_1^{-1}, \Sigma_2^{-1}, \bDelta_3} 
+ \bbrackets{(\bar\calX^{(2)}-\bar\calX^{(1)})- (\calM_{2}-\calM_{1});\; \bDelta_1, \bDelta_2, \Sigma_3^{-1}} \notag\\
&\quad  + \bbrackets{(\bar\calX^{(2)}-\bar\calX^{(1)})- (\calM_{2}-\calM_{1});\; \bDelta_1, \Sigma_2^{-1}, \bDelta_3} 
+ \bbrackets{(\bar\calX^{(2)}-\bar\calX^{(1)})- (\calM_{2}-\calM_{1});\; \Sigma_1^{-1}, \bDelta_2, \bDelta_3} \notag\\
&\quad  + \bbrackets{(\bar\calX^{(2)}-\bar\calX^{(1)}) - (\calM_{2}-\calM_{1});\; \bDelta_1, \bDelta_2, \bDelta_3} \notag\\
&\quad + 
\bbrackets{(\calM_{2}-\calM_{1});\; \bDelta_1, \Sigma_2^{-1}, \Sigma_3^{-1}} 
+ 
\bbrackets{ (\calM_{2}-\calM_{1});\; \Sigma_1^{-1}, \bDelta_2, \Sigma_3^{-1}} 
+ 
\bbrackets{ (\calM_{2}-\calM_{1});\; \Sigma_1^{-1}, \Sigma_2^{-1}, \bDelta_3} \notag\\
&\quad + 
\bbrackets{ (\calM_{2}-\calM_{1});\; \bDelta_1, \bDelta_2, \Sigma_3^{-1}} 
+ 
\bbrackets{ (\calM_{2}-\calM_{1});\; \bDelta_1, \Sigma_2^{-1}, \bDelta_3} 
+ 
\bbrackets{ (\calM_{2}-\calM_{1});\; \Sigma_1^{-1}, \bDelta_2, \bDelta_3} \notag\\
&\quad + \bbrackets{(\calM_{2}-\calM_{1});\; \bDelta_1, \bDelta_2, \bDelta_3} \notag\\
&:=\cE_0+\sum_{k=1}^3 \cE_{1,k} +\sum_{k=1}^3 \cE_{2,k} + \cE_3 +\sum_{k=1}^3 \cE_{4,k} +\sum_{k=1}^3 \cE_{5,k} + \cE_6,
\end{align}
where
\begin{align*}
\cE_0 & = \bbrackets{ (\bar\calX^{(2)}-\bar\calX^{(1)}) 
- (\calM_{2}-\calM_{1});\; \Sigma_1^{-1}, \Sigma_2^{-1}, \Sigma_3^{-1}} ,\\
\cE_{1,1} & = \bbrackets{(\bar\calX^{(2)}-\bar\calX^{(1)})- (\calM_{2}-\calM_{1});\; \bDelta_1, \Sigma_2^{-1}, \Sigma_3^{-1}} ,\\
\cE_{2,1} & = \bbrackets{(\bar\calX^{(2)}-\bar\calX^{(1)})- (\calM_{2}-\calM_{1});\; \bDelta_1, \bDelta_2, \Sigma_3^{-1}} ,\\
\cE_3 & = \bbrackets{(\bar\calX^{(2)}-\bar\calX^{(1)})- (\calM_{2}-\calM_{1});\; \bDelta_1, \bDelta_2, \bDelta_3} ,\\
\cE_{4,1} & = \bbrackets{(\calM_{2}-\calM_{1});\; \bDelta_1, \Sigma_2^{-1}, \Sigma_3^{-1}} ,\\
\cE_{5,1} & = \bbrackets{(\calM_{2}-\calM_{1});\; \bDelta_1, \bDelta_2, \Sigma_3^{-1}} ,\\
\cE_6 & = \bbrackets{(\calM_{2}-\calM_{1});\; \bDelta_1, \bDelta_2, \bDelta_3} .\\
\end{align*}


\subsubsection*{Proof II: Initialization by \textsc{c-PCA} without Procedure \ref{alg:initialize-random}}

Taken $\hat{\cB}$ are the perturbed version of the ground truth, we then calculate the initialization through the \textsc{rc-PCA} Algorithm \ref{alg:initialize-cp}. 

In this step, we prove a general results of \textsc{rc-PCA} on any noisy tensor $\hat{\cB} = \cB + \cE$ where $\cB$ assumes a CP low-rank structure $\cB=\sum_{r=1}^R w_r \circ_{m=1}^M \ba_{rm}$ and $\cE$ is the perturbation error tensor, decomposed in \eqref{eqn: error decomposition}.  
CP basis for the $m$-th mode $\{\ba_{rm}\}$, $r\in[R]$ are not necessarily orthogonal. 

We first derive the convergence rate of the CP basis when Procedure \ref{alg:initialize-random} is not applied. Specifically, we consider eigengaps satisfy $\min\{ w_i- w_{i+1}, w_{i}-w_{i-1}\} \ge c w_R$ for all $1\le i\le R$, with $w_0=\infty, w_{R+1}=0$, and $c$ is sufficiently small constant. Let $\cS$ be a subset of $[M]$ that maximizes $\min(d_{\cS}, d_{\cS^C})$ with $d_{\cS}=\prod_{m\in\cS}d_m$ and $d_{\cS^C}=\prod_{m\in\cS^C}d_m$.

Let $\ba_{r,S}=\vect ( \circ_{m \in S}\ba_{rm})$ and $\ba_{r,S^c} =\vect (\circ_{m \in [M]\backslash S}~\ba_{rm})$. Define let $\bA_{S}=(\ba_{1,S},...,\ba_{R,S}) $ and $\bA_{S^c}=(\ba_{1,S^c},...,\ba_{R,S^c}) $.
Let $\bU = (\bu_1,\ldots,\bu_R)$ and $\bV = (\bv_1,\ldots,\bv_R)$ be the orthonormal matrices in Lemma \ref{lemma-transform-ext} with $\bA$ and $\bB$ there replaced respectively by $\bA_S$ and $\bA_{S^c}$. By Lemma \ref{lemma-transform-ext},  
\begin{align}\label{eq1:thm:initial}
\|\ba_{r,S} \ba_{r,S}^\top -  \bu_{r} \bu_{r}^{\top} \|_{2}\vee 
\| \ba_{r,S^c} \ba_{r,S^c}^\top -  \bv_{r}  \bv_{r}^{\top} \|_{2} \le \delta,\quad \big\|{\rm mat}_{S}(\cB) - \bU\Lambda \bV^\top\big\|_{2}\le \sqrt{2}\delta w_1. 
\end{align}
Let $\cE^* = {\rm mat}_{S}(\cE)={\rm mat}_{S}(\hat\cB- \cB)$. We have 
\begin{align*}
\big\|{\rm mat}_{S}(\hat\cB) - \bU\Lambda \bV^\top\big\|_{2}\le \sqrt{2}\delta w_1 + \|\cE^*\|_{2}. 
\end{align*}
As $w_1>w_2>...>w_R> w_{R+1}=0$, Wedin's perturbation theorem \citep{wedin1972perturbation} provides 
\begin{align} 
\max\left\{ \|\widehat \ba_{r,S}\widehat \ba_{r,S}^\top - \bu_{r} \bu_{r}^{\top} \|_{2}, \|\widehat \ba_{r,S^c} \widehat \ba_{r,S^c}^\top - \bv_{r} \bv_{r}^{\top} \|_{2} \right\} 
\le \frac{2 \sqrt{2} w_1 \delta + 2 \|\cE^*\|_{2} }{\min\{ w_{r-1}-w_r, w_r-w_{r+1}\}}. \label{eq3:thm:initial}
\end{align}
Combining \eqref{eq1:thm:initial} and \eqref{eq3:thm:initial}, we have
\begin{align}\label{eq4:thm:initial}
\max\left\{ \|\widehat \ba_{r,S} \widehat \ba_{r,S}^\top - \ba_{r,S} \ba_{r,S}^{\top} \|_{2}, \|\widehat \ba_{r,S^c} \widehat \ba_{r,S^c}^\top - \ba_{r,S^c} \ba_{r,S^c}^{\top} \|_{2} \right\} &\le \delta+\frac{2\sqrt{2} w_1 \delta + 2 \|\cE^*\|_{2} }{\min\{ w_{r-1}-w_r, w_r-w_{r+1}\}}.
\end{align}

\noindent Consider the decomposition of the error term in \eqref{eqn: error decomposition}. The first term $\cE_0$ is a Gaussian tensor. Specifically, from the assumption on the tensor-variate $\cX$, we have $\cE_0 \sim \cT\cN(0; \frac{1}{n}\bSigma^{-1})$, where $\bSigma^{-1}= [\Sigma_m^{-1}]_{m=1}^M$. As $\| \otimes_{m=1}^M \Sigma_m\|_2 \le C_0$, by Lemma \ref{lemma:Gaussian matrix}, in an event $\Omega_{11}$ with probability at least $1-\exp(-c_1 (d_S+d_{S^c}))$, for all $m=1,2,3$, we have
\begin{align}
\left\|\mats(\cE_0) \right\|_2&\le C \frac{\sqrt{d_S}+\sqrt{d_{S^c}}}{\sqrt{n}}.    \label{eq:init_E0a}
\end{align}
For $\cE_{1,k},\cE_{2,k},\cE_3$, the operator norm of their matricization is of a smaller order compared to that of a Gaussian tensor. By Lemma \ref{lemma:precision matrix} and as $\| \otimes_{m=1}^M \Sigma_m^{-1}\|_2 \le C_0$, in an event $\Omega_{12}$ with probability at least $1-n^{-c_1}-\sum_{m=1}^M \exp(-c_1 d_m)$,
\begin{align*}
\bDelta_m &\le   C \sqrt{ \frac{d_m}{n d_{-m}}  }.
\end{align*}
For the bound of $\bDelta_3$, we need to use Lemma \ref{lemma:precision matrix}(ii) with $t_1\asymp t_2\asymp \log(n)$ to derive $|\hat C_{\sigma} - C_{\sigma}|=o(1)$ in the event $\Omega_{12}$.
Since $nd_{-m}\gtrsim d_m$ for all $m$, in the event $\Omega_{12}$, $\bDelta_m\lesssim 1$.
Thus, by Lemma \ref{lemma:Gaussian matrix}, in the event $\Omega_{11}\cap \Omega_{12}$ with probability at least $1-n^{-c_2}-\sum_{m=1}^M \exp(-c_2 d_m)$, we have
\begin{align}
\left\| \mats(\cE_{1,1}) \right\|_2    & \le \left\| \mats\left( \bbrackets{(\bar\cX^{(2)}-\bar\cX^{(1)})- (\cM_{2}-\cM_{1});\; \bI_{d_1}, \Sigma_2^{-1}, \Sigma_3^{-1}} \right) \right\|_2 \cdot \|\bDelta_1\|_2 \notag \\
&\le C_1 \frac{\sqrt{d_S}+\sqrt{d_{S^c}}}{\sqrt{n}} \cdot \sqrt{ \frac{d_1}{n d_{-1}}  }  \notag \\
&\le C \frac{\sqrt{d_S}+\sqrt{d_{S^c}}}{\sqrt{n}} , \label{eq:init_E1a}
\end{align}
for all $m=1,2,3$. Similarly, in the event $\Omega_{11}\cap \Omega_{12}$, $\| \mats(\cE_{1,k}) \|_2$, $\| \mats(\cE_{2,k}) \|_2$ and $\| \mats(\cE_3) \|_2$, $k=1, 2,3$, have the same upper bound in \eqref{eq:init_E1a}.
Recall $\bD_S = \mats(\cM_{2}-\cM_{1})$, by lemma \ref{lemma:precision matrix}, in the event $\Omega_{12}$,
\begin{align}
\left\| \mats(\cE_{4,k} ) \right\|_2    &\le C  \|\bD_S\|_2 \max_m \sqrt{\frac{d_m}{n  d_{-m}}}, \qquad k=1,2,3. \label{eq:init_E2a}
\end{align}
Again, in the event $\Omega_{12}$, $\| \mats(\cE_{5,k}) \|_2$, $\| \mats(\cE_6) \|_2$ have the same upper bound in \eqref{eq:init_E2a}.
It follows that, in the event $\Omega_1=\Omega_{11}\cap \Omega_{12}$ with probability at least $1-n^{-c_2}-\sum_{m=1}^M \exp(-c_2 d_m)$, we have
\begin{align}\label{eq5:thm:initial}
\| \mats(\cE)\|_2 =\|\cE^*\|_2 &\le  C \frac{\sqrt{d_S}+\sqrt{d_{S^c}}}{\sqrt{n}} + C  \|\bD_S\|_2 \max_{1\le m\le M} \sqrt{\frac{d_m}{n  d_{-m}}}  .    
\end{align}

We formulate each $\widehat \bu_r\in\RR^{d_S}$ to be a $|S|$-way tensor $\widehat \bU_r$. Let $\widehat \bU_{rm}={\rm mat}_m(\widehat \bU_r)$, which is viewed as an estimate of $\ba_{rm}\vect(\circ_{\ell \in S\backslash \{m\}}~\ba_{r\ell})^\top\in\RR^{d_m\times (d_S/d_m)}$. Then $\hat\ba_{rm}^{\rm rcpca}$ is the top left singular vector of $\widehat \bU_{rm}$. 
By Lemma \ref{prop-rank-1-approx}, for any $m\in S$
\begin{align*}
\|\hat\ba_{rm}^{\rm rcpca} \hat\ba_{rm}^{{\rm rcpca}\top} - \ba_{rm} \ba_{rm}^\top \|_{2}^2 \wedge (1/2) \le 
\|\widehat \ba_{r,S} \widehat \ba_{r,S}^\top - \ba_{r,S} \ba_{r,S}^{\top} \|_{2}^2 .
\end{align*}
Similar bound can be obtained for $\|\hat\ba_{rm}^{\rm rcpca} \hat\ba_{rm}^{{\rm rcpca}\top} - \ba_{rm} \ba_{rm}^\top \|_{2}$ 
for $m\in S^c$. Substituting \eqref{eq4:thm:initial} and \eqref{eq5:thm:initial} into the above equation, as $ \|\mats( \cM_{2} - \cM_{1} ) \|_{2} \asymp w_1$, we have the desired results.

\subsubsection*{Proof III: Initialization by \textsc{rc-PCA}}

Consider the situation when Procedure \ref{alg:initialize-random} is applied.

\noindent
\textsc{Step I.} Random projections in Procedure \ref{alg:initialize-random} create desirable eigen-gaps between the first and second largest eigen-values, and achieve desired upper bounds.

Let $\ba_{r,S_1}=\vect ( \circ_{m \in S_1}\ba_{rm})$ and $\ba_{r,S_1^c} =\vect (\circ_{m \in S_1} \ba_{rm})$, $\Lambda=\diag(w_1,...,w_R)$ and $w_1\ge w_2\ge\cdots \ge w_R$. Define $\bA_{S_1}=(\ba_{1,S_1},...,\ba_{R,S_1}) $ and $\bA_{S_1^c}=(\ba_{1,S_1^c},...,\ba_{R,S_1^c}) $. Let $\delta_k = \| \bA_k^\top  \bA_k - I_{R}\|_{2}$ and $\delta_{S_1} = \| \bA_{S_1}^\top  \bA_{S_1} - I_{R}\|_{2}$. Note that $\|\bD_S\|_2 = \|\mats( \cM_{2} - \cM_{1} ) \|_{2} \asymp w_1$.

\begin{lemma}\label{lem:rcpca}
Assume $\delta_1 (w_1/w_R)\le c$ for a sufficiently small positive constant $c$.
Apply random projection in Procedure \ref{alg:initialize-random} to the whole sample discriminant tensor $\widehat\cB$ with $L\ge Cd_1^2 \vee Cd_1R^{2(w_1/w_R)^2}$. Denote the estimated CP basis vectors as $\widetilde \ba_{\ell m}$, for $1\le \ell\le L, 1\le m\le M$. Then in an event with probability at least $1-n^{-c}-d_1^{-c}-\sum_{m=2}^M e^{-c d_m}$, we have for any CP basis vectors tuple $(\ba_{rm}, 1\le m\le M)$, there exist $j_r\in[L]$ such that
\begin{align}
\left\| \widetilde \ba_{j_r,m} \widetilde \ba_{j_r,m}^\top  - \ba_{rm} \ba_{rm}^\top \right\|_2  &\le \psi_i,  \qquad 2\le m\le M,\\
\left\| \widetilde \ba_{j_r,1} \widetilde \ba_{j_r,1}^\top  - \ba_{r1} \ba_{r1}^\top \right\|_2  &\le \psi_i + C \sqrt{R-1}\prod_{m=2}^M \delta_m (w_1/w_R) + C \sqrt{R-1}\psi_i^{M-1} (w_1/w_R),
\end{align}
where $1\le r\le R$ and 
\begin{align}\label{eq:psi-i}
\psi_i=C \left( \frac{\sqrt{d_1d_{S_1}}+\sqrt{d_1d_{S_1^c}} }{\sqrt{n}w_i} + \frac{\|\bD_S\|_2}{w_i} \max_{1\le k\le M} \sqrt{\frac{d_k}{n  d_{-k}}} \right) + \frac{2\delta_1 w_1}{w_R}.    
\end{align}
\end{lemma}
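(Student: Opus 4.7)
The role of Procedure \ref{alg:initialize-random} is to convert a tensor with clustered or near-equal signal strengths into a sequence of near rank-one projected problems, each of which can be handled by the Composite PCA perturbation analysis already used without randomization in Proof II. The plan is to combine (i) a Gaussian anti-concentration argument showing that a polynomial number of random draws $\theta_\ell\sim\mathcal N(0,I_{d_1})$ suffices to cover every CP component $r\in[R]$ by some projection in which that component becomes spectrally dominant, with (ii) the deterministic perturbation bound from Proof II applied mode-by-mode to $\widehat\cB \times_1 \theta_\ell$.

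\textbf{Signal decomposition and coverage.} For each draw $\ell$, write $\theta_{\ell,r}:=\langle \theta_\ell,\ba_{r1}\rangle$, so the noise-free projection decomposes as
\[
\cB \times_1 \theta_\ell \;=\; \sum_{r=1}^R w_r\, \theta_{\ell,r}\,(\ba_{r2}\circ\cdots\circ\ba_{rM}).
\]
Since $\delta_1$ is small, the vector $(\theta_{\ell,1},\ldots,\theta_{\ell,R})$ is jointly Gaussian with covariance $\bA_1^\top \bA_1=I_R+O(\delta_1)$, i.e.\ close to i.i.d.\ standard Gaussian. For each $r$, a direct Gaussian-ratio computation lower-bounds the probability of the event
\[
\cF_r(\ell) \;:=\; \bigl\{\, w_r|\theta_{\ell,r}| \ge (1+c)\, w_1 \max_{s\ne r}|\theta_{\ell,s}| \,\bigr\}
\]
by $p_r \gtrsim R^{-(w_1/w_R)^2}$; this is the classical tail estimate that a single Gaussian coordinate beats $R-1$ others by a ratio $w_1/w_R$. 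A union bound over $r$ and the $L$ i.i.d.\ draws then shows that with $L\gtrsim d_1 R^{2(w_1/w_R)^2}$ every component is captured by some $\ell=j_r$ except on a set of probability at most $d_1^{-c}$.

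\textbf{Perturbation analysis mode by mode.} Conditional on $\cF_r(j_r)$, the matricization ${\rm mat}_{S_1}(\cB\times_1\theta_{j_r})$ has its leading singular value separated from the remaining ones by $\Omega(c\, w_R)$. The stochastic error $\cE\times_1\theta_{j_r}$ has operator norm controlled by Lemmas \ref{lemma:Gaussian matrix}--\ref{lemma:precision matrix} with an extra $\sqrt{d_1}$ factor from $\|\theta_{j_r}\|_2\asymp\sqrt{d_1}$. Applying Wedin's theorem followed by the rank-one extraction of Lemma \ref{prop-rank-1-approx} (exactly as in Proof II, but on the $(M-1)$-way projection) yields, for $2\le m\le M$,
\[
\|\widetilde\ba_{j_r,m}\widetilde\ba_{j_r,m}^\top-\ba_{rm}\ba_{rm}^\top\|_2 \;\lesssim\; \psi_i,
\]
with $\psi_i$ as in \eqref{eq:psi-i}. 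For mode $1$, since $\widetilde\ba_{j_r,1}$ is obtained by contracting $\widehat\cB$ along modes $2,\ldots,M$, expanding $\widehat\cB=\cB+\cE$ produces a signal term $w_r\ba_{r1}$, a cross-term leakage
\[
\sum_{s\ne r} w_s\,\ba_{s1}\prod_{m=2}^M \langle\ba_{sm},\widetilde\ba_{j_r,m}\rangle,
\]
and a projected noise term. Each factor $|\langle\ba_{sm},\widetilde\ba_{j_r,m}\rangle|$ is bounded by $\delta_m+\psi_i$ using $\delta_m\ge |\langle\ba_{sm},\ba_{rm}\rangle|$ for $s\ne r$, so the leakage yields the two extra terms $\sqrt{R-1}(w_1/w_R)\prod_{m=2}^M\delta_m$ and $\sqrt{R-1}(w_1/w_R)\psi_i^{M-1}$ appearing in the statement.

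\textbf{Main obstacle.} The delicate step is the anti-concentration argument, where the coordinates $\theta_{\ell,r}$ are only approximately i.i.d.\ standard normal. The hypothesis $\delta_1(w_1/w_R)\le c$ must be used carefully so that the covariance correction to the Gaussian-ratio tail is absorbed into the constants; otherwise the required probability $R^{-(w_1/w_R)^2}$ could deteriorate. The alternative branch $L\ge C d_1^2$ is needed so that a $\varepsilon$-net argument over $\theta_\ell\in\RR^{d_1}$ controls the stochastic error $\cE\times_1\theta_\ell$ uniformly over all $L$ projections; balancing these two requirements produces the max in the assumed $L$. Once these two concentration pieces are in place, the remainder of the proof is a careful bookkeeping of the Wedin plus rank-one extraction chain already developed in Proof II.
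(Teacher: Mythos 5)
Your overall plan matches the paper's: use random Gaussian projections to make each CP component spectrally dominant in some draw, apply Wedin's theorem plus the rank-one extraction of Lemma~\ref{prop-rank-1-approx} to the projected matricization for modes in $S_1\cup S_1^c$, then contract $\widehat\cB$ along modes $2,\ldots,M$ and bookkeep the cross-term leakage for mode $1$. Your bound $|\langle\ba_{sm},\widetilde\ba_{j_r,m}\rangle|\le\delta_m+\psi_i$ and the resulting $\sqrt{R-1}(w_1/w_R)\big(\prod_{m\ge2}\delta_m + \psi_i^{M-1}\big)$ leakage terms are exactly what the paper's argument produces.

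Two detail points are worth flagging. First, your declared ``main obstacle'' --- that the coordinates $\theta_{\ell,r}=\langle\theta_\ell,\ba_{r1}\rangle$ are only approximately i.i.d.\ --- is not actually an obstacle if one uses the right decomposition. The paper writes $\theta_\ell = (\ba_{i1}^\top\theta_\ell)\ba_{i1} + (I-\ba_{i1}\ba_{i1}^\top)\theta_\ell$, whose two pieces are \emph{exactly} independent for any Gaussian $\theta_\ell$, irrespective of $\delta_1$; anti-concentration is then applied to the scalar $\ba_{i1}^\top\theta_\ell$ and concentration to $\max_r\ba_{r1}^\top(I-\ba_{i1}\ba_{i1}^\top)\theta_\ell$. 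The $\delta_1$-correlation enters only through the deterministic bias $\ba_{r1}^\top\ba_{i1}\ba_{i1}^\top\theta_\ell$, which is absorbed into the $\delta_1 w_1/w_R$ term. This is simpler and sharper than trying to push a Gaussian-ratio argument with a near-identity covariance. Second, your single-draw event $\cF_r(\ell)$ discards the quantitative size of the enlarged gap. The paper takes $\ell_*=\arg\max_\ell\ba_{i1}^\top\theta_\ell$ and shows the gap is of order $\sqrt{\log L}\,w_i\asymp\sqrt{\log d_1}\,w_i$, which is essential because the projected noise $\|{\rm mat}_{S_1}(\cE\times_1\theta_\ell)\|_2$ carries a matching $\sqrt{\log d_1}$ factor from the matrix-Gaussian-sequence concentration bound (Tropp) applied to $\sum_i\theta_{\ell,i}\cE_{i\cdot\cdot}$ --- not from $\|\theta_\ell\|_2\asymp\sqrt{d_1}$ as you suggest; the $\sqrt{d_1 d_{S_1}}$ in $\psi_i$ comes from the operator norm of the flattening ${\rm mat}_{(1,S_1),S_1^c}(\cE)$. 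With your formulation, the cancellation of the $\sqrt{\log d_1}$ factors between gap and noise in Wedin's ratio is not visible, and $\psi_i$ would appear to retain a spurious logarithmic factor. These are fixable gaps, but as written your argument does not yet recover the stated bound.
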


\begin{proof}
Define
\begin{align*}
\Xi(\theta)={\rm mat}_{S_1} (\widehat \cB\times_1\theta )=\sum_{r=1}^R w_r (\ba_{r1}^\top \theta) \ba_{r,S_1} \ba_{r,S_1^c}^\top + {\rm mat}_{S_1}(\cE\times_1\theta).    
\end{align*}

First, consider the upper bound of $\| {\rm mat}_{S_1}(\cE\times_1\theta) \|_2$. By concentration inequality for matrix Gaussian sequence (see, for example Theorem 4.1.1 in \cite{tropp2015introduction}) and employing similar arguments in \eqref{eq5:thm:initial} of Proof II, we have, in an event $\Omega_0$ with probability at least $1-n^{-c}-d_1^{-c}-\sum_{m=2}^M e^{-c d_m}$,
\begin{align} \label{eq:init-e}
\| {\rm mat}_{S_1}(\cE\times_1\theta) \|_2 &= \left\|\sum_{i=1}^{d_1} \theta_{i} \cE_{i \cdot\cdot } \right\|_2    \le C \max\left\{ \left\|{\rm mat}_{(12),(3)} (\cE) \right\|_2  ,   \left\|{\rm mat}_{(13),(2)} (\cE) \right\|_2 \right\} \cdot \sqrt{\log(d_1)} \notag\\
&\le C \left( \frac{\sqrt{d_1d_{S_1}}+\sqrt{d_1d_{S_1^c}}}{\sqrt{n}} + \|\bD_S\|_2 \max_{1\le m\le M} \sqrt{\frac{d_m}{n  d_{-m}}} \right) \sqrt{\log(d_1)} .
\end{align}
When $M=3$, $\cE_{i \cdot\cdot }$ represents the $i$-th slice of $\cE$, ${\rm mat}_{(12),(3)}(\cdot)$ denotes the reshaping of order-three tensor into a matrix by collapsing its first and second indices as rows, and the third indices as columns. In the last step, we apply the arguments in \eqref{eq5:thm:initial} of Proof II.

Consider the $i$-th factor and rewrite $\Xi(\theta)$ as follows
\begin{align}
\Xi(\theta)= w_i (\ba_{i1}^\top \theta) \ba_{i,S_1} \ba_{i,S_1^c}^\top + \sum_{r\neq i}^R w_r (\ba_{r1}^\top \theta) \ba_{r,S_1} \ba_{r,S_1^c}^\top + {\rm mat}_{S_1}(\cE\times_1\theta).
\end{align}
Suppose now we repeatedly sample $\theta_{\ell}\sim \theta$, for $\ell=1,...,L$. By the anti-concentration inequality for Gaussian random variables (see Lemma B.1 in \cite{anandkumar2014tensor}), we have
\begin{align}
\PP\left( \max_{1\le \ell \le L} \ba_{i1}^\top \theta_{\ell} \le \sqrt{2 \log(L)} - \frac{\log\log(L)}{4 \sqrt{\log(L)}} - \sqrt{2\log(8)} \right)\le \frac14,    
\end{align}
where $\odot$ denotes Kronecker product. Let
\begin{align*}
\ell_*=\arg\max_{1\le \ell\le L}  \ba_{i1}^\top \theta_{\ell}   .
\end{align*}
Note that $\ba_{i1}^\top \theta_{\ell}$ and $( I_{d_1} - \ba_{i1}\ba_{i1}^\top ) \theta_{\ell}$ are independent. Since the definition of $\ell_*$ depends only on $\ba_{i1}^\top \theta_{\ell}$, this implies that the distribution of $( I_{d_1} - \ba_{i1}\ba_{i1}^\top ) \theta_{\ell}$ does not depend on $\ell_*$.

By Gaussian concentration inequality of $1$-Lipschitz function, we have
\begin{align*}
\PP\left( \max_{r\le R}  \ba_{r1}^\top \big( I_{d_1} - \ba_{i1}\ba_{i1}^\top \big) \theta_{\ell} \ge \sqrt{4 \log(R)} +\sqrt{2 \log(8)} \right) \le \frac14   . 
\end{align*}
Moreover, for the reminder bias term $\ba_{r1}^\top \ba_{i1}\ba_{i1}^\top \theta_{\ell}$, we have,
\begin{align*}
\left\| \sum_{r\neq i} w_r \ba_{r 1}^\top  \ba_{i1}\ba_{i1}^\top \theta_{\ell} \cdot \ba_{r,S_1} \ba_{r,S_1^c}^\top \right\|_2    
&\le \ba_{i1}^\top \theta_{\ell} \cdot \left\| \bA_{S_1} \left( \Lambda \odot \left(\diag (\bA_1^\top \ba_{i1} ) - e_{ii}\right) \right)\bA_{S_1^c}^\top \right\|_2 \\
&\le \ba_{i1}^\top \theta_{\ell}  \|\bA_{S_1} \|_2\|\bA_{S_1^c} \|_2  \|\Lambda\|_2 \left\| \diag (\bA_1^\top \ba_{i1} ) - e_{ii} \right\|_2 \\
&\le (1+\delta_{S_1}\vee \delta_{S_1^c}) \delta_1 w_1 \ba_{i1}^\top \theta_{\ell},
\end{align*}
where $\odot$ denotes Hadamard product, $e_{ii}$ is a $d_1\times d_1$ matrix with the $(i,i)$-th element be 1 and all the others be 0.

Thus, we obtain the top eigengap 
\begin{align}\label{eq:lem_gap}
&w_i (\ba_{i1}^\top \theta) -  \left\| \sum_{r\neq i}^R w_r (\ba_{r1}^\top \theta) \ba_{r,S_1} \ba_{r,S_1^c}^\top \right\|_2   \notag\\
&\ge (1- 2\delta_1 w_1/w_i)\left( \sqrt{2 \log(L)} - \frac{\log\log(L)}{4 \sqrt{\log(L)}} - \sqrt{2\log(8)} \right) w_{i} - \left( \sqrt{4 \log(R)} +\sqrt{2\log(8)} \right) w_{i}(w_1/w_i) \notag\\
&\ge C_0 \sqrt{\log(d_1)} w_{i},
\end{align}
with probability at least $\frac12$, by letting $L\ge Cd_1 \vee CR^{2(w_1/w_R)^2}$.

Since $\theta_{\ell}$ are independent samples, we instead take $L_i=L_{i1}+\cdots+L_{iK}$ for $K=\lceil C_1\log (d_1)/\log(2) \rceil$ and $L_{i1},...,L_{iK}\ge Cd_1 \vee CR^{2(w_1/w_R)^2}$. We define
\begin{align*}
\ell_*^{(k)}=\arg\max_{1\le \ell\le L_{ik}}  \ba_{i1}^\top \theta_{\ell}, \quad   \ell_{*}=\arg\max_{1\le \ell\le L_{i}}  \ba_{i1}^\top \theta_{\ell}.   
\end{align*}
We then have, by independence of $\theta_{\ell}$, that the above statement \eqref{eq:lem_gap} for the $i$-th factor holds in an event $\Omega_i$ with probability at least $1-d_1^{-C_1}$. By Wedin's perturbation theory, we have in the event $\Omega_0\cap\Omega_i$,
\begin{align*}
\left\| \widetilde \bu_{\ell_*} \widetilde \bu_{\ell_*}^\top -  \ba_{i,S_1} \ba_{i,S_1}^\top \right\|_2 \vee \left\| \widetilde \bv_{\ell_*} \widetilde \bv_{\ell_*}^\top -  \ba_{i,S_1^c} \ba_{i,S_1^c}^\top \right\|_2 \le \frac{\| {\rm mat}_{S_1}(\cE\times_1\theta) \|_2}{w_{i}\sqrt{\log(d_1)}} + \frac{2\delta_1 w_1}{w_R},    
\end{align*}
where $\widetilde \bu_{\ell_*}$ and $\widetilde \bv_{\ell_*}$ are the top left and right singular vector of $\Xi(\theta_{\ell_*})$. By Lemma \ref{prop-rank-1-approx} and \eqref{eq:init-e},
\begin{align}\label{eq1:lem_rcpca}
\left\| \widetilde \ba_{\ell_*,m} \widetilde \ba_{\ell_*,m}^\top - \ba_{im} \ba_{im}^\top \right\|_2 \le C \left( \frac{\sqrt{d_1d_{S_1}}+\sqrt{d_1d_{S_1^c}} }{\sqrt{n}w_i} + \frac{\|\bD_S\|_2}{w_i} \max_{1\le k\le M} \sqrt{\frac{d_k}{n  d_{-k}}} \right) + \frac{2\delta_1 w_1}{w_R}, 
\end{align}
for all $2\le m\le M,1\le i\le R.$

Now consider to obtain $\widetilde \ba_{\ell_*,1}$. Write $\psi_i$ to be the error bound on the right hand side of \eqref{eq1:lem_rcpca}. Note that
\begin{align*}
\widehat \cB \times_{m=2}^M \widetilde \ba_{\ell_*,m}  =& \prod_{m=2}^M \left(\widetilde \ba_{\ell_*,m}^\top \ba_{im} \right) w_{i}  \ba_{i1}  +  \sum_{r\neq i} \prod_{m=2}^M \left(\widetilde \ba_{\ell_*,m}^\top \ba_{rm} \right) w_{r}  \ba_{r1}  +\cE  \times_{m=2}^M \widetilde \ba_{\ell_*,m}   .
\end{align*}
By \eqref{eq1:lem_rcpca}, in the event $\Omega_0\cap\Omega_1$,
\begin{align*}
&\left\| \cE  \times_{m=2}^M \widetilde \ba_{\ell_*,m}  \right\|_2    \le \| \cE \|_{\rm op},\\
&\prod_{m=2}^M \left(\widetilde \ba_{\ell_*,m}^\top \ba_{im} \right)  \ge (1-\psi_i^2)^{(M-1)/2}  .
\end{align*}
Since 
\begin{align}\label{eq:a_decomp}
\max_{j\neq i}\big| \ba_{j m}^\top \widetilde \ba_{\ell_*,m}  \big| &=\max_{j\neq i}\big| \widetilde \ba_{\ell_*,m}^\top \ba_{im} \ba_{im}^\top \ba_{j m} + \widetilde \ba_{\ell_*,m}^\top (I - \ba_{im} \ba_{im}^\top ) \ba_{j m}   \big|    \notag\\
&\le \max_{j\neq i}\big| \widetilde \ba_{\ell_*,m}^\top \ba_{im} \big| \big| \ba_{im}^\top \ba_{j m} \big| + \max_{j\neq i}\big\|\widetilde \ba_{\ell_*,m}^\top (I - \ba_{im} \ba_{im}^\top ) \big\|_2 \big\| (I - \ba_{im} \ba_{im}^\top )  \ba_{j m}   \big\|_2 \notag\\
&\le \sqrt{1-\psi_i^2} \delta_m + \psi_i \sqrt{1-\delta_m^2} \le \delta_m +\psi_i,
\end{align}
we have
\begin{align*}
\left\| \sum_{r\neq i} \prod_{m=2}^M \left(\widetilde \ba_{\ell_*,m}^\top \ba_{rm} \right) w_{r}  \ba_{r1}  \right\|_2^2 &\le (R-1)(1+\delta_1)\prod_{m=2}^M(\delta_m+\psi_i)^2 w_1^2  \\
&\le C_M (R-1)  \left(\prod\nolimits_{m=2}^M \delta_m^2 +\psi_i^{2M-2} \right) w_1^2 \\
&\le C_M \left(\prod\nolimits_{m=3}^M \delta_m^2 +\psi_i^{2M-4} \right) w_1^2 .
\end{align*}
By Wedin's perturbation theory,
\begin{align}\label{eq2:lem_rcpca}
\left\| \widetilde \ba_{\ell_*,1} \widetilde \ba_{\ell_*,1}^\top - \ba_{i1} \ba_{i1}^\top \right\|_2 \le \psi_i + C \sqrt{R-1}\prod_{m=2}^M \delta_m (w_1/w_R) + C \sqrt{R-1}\psi_i^{M-1} (w_1/w_R) .    
\end{align}

Repeat the same argument again for all $1\le i\le R$ factors, and let $L=\sum_{i} L_i\ge Cd_1^2 \vee Cd_1 R^{2(w_1/w_R)^2} \ge Cd_1 R\log(d_1) \vee CR^{2(w_1/w_R)^2+1}\log(d_1)$. We have, in the event $\Omega_0\cap\Omega_1\cap\cdots\cap\Omega_R$ with probability at least $1-n^{-c}-d_1^{-c}-\sum_{m=2}^M e^{-c d_m}$, \eqref{eq1:lem_rcpca} and \eqref{eq2:lem_rcpca} hold for all $i$.

\end{proof}

\noindent \textsc{Step II.} {Clustering.} 

For simplicity, consider the most extreme case where $\min\{w_i-w_{i+1},w_{i}-w_{i-1}\} \le c w_R$ for all $i$, with $w_0=\infty, w_{R+1}=0$, and $c$ is sufficiently small constant. In such cases, we need to employ Procedure \ref{alg:initialize-random} to the entire sample discriminant tensor $\widehat \cB$. Let the eigenvalue ratio $w_1/w_R=O(R)$. In general, the statement in the theorem holds for number of initialization $L\ge C d_1^2 \vee C d_1 R^{2(w_1/w_R)^2}$, where $a \vee b =\max\{a, b\}$. 
We prove the statements through induction on factor index $i$ starting from $i=1$ proceeding to $i=R$. By the induction hypothesis, we already have estimators such that 
\begin{align} \label{eq:general-init}
\left\| \widehat \ba_{jm}^{\rm rcpca} \widehat \ba_{jm}^{\rm rcpca \top}  - \ba_{jm} \ba_{jm}^\top \right\|_2  &\le C\phi_0,  \qquad 1\le j\le i-1, 1\le m\le M,
\end{align}
in an event $\Omega$ with high probability, where
\begin{align*}
\phi_0^2 = C\left(\psi_R + \sqrt{R-1}\left(\psi_R \right)^{M-1} \left(\frac{w_1}{w_R} \right) + \max_{2\le m\le M} \delta_m \left(\frac{w_1}{w_R} \right) \right) .  
\end{align*}
Recall $\psi_i$ is defined in \eqref{eq:psi-i}.

Applying Lemma \ref{lem:rcpca}, we obtain that at the $i$-th step ($i$-th factor), we have 
\begin{align*}
\left\| \widetilde \ba_{\ell m} \widetilde \ba_{\ell m}^{\top}  - \ba_{im} \ba_{im}^\top \right\|_2  &\le \phi_0^2,  \qquad  1\le m\le M,
\end{align*}
in the event $\Omega$ with probability at least $1-n^{-c}-d_1^{-c}-\sum_{m=2}^M e^{-cd_m}$ for at least one $\ell\in[L]$. It follows that this estimator $\widetilde \ba_{\ell m}$ satisfies
\begin{align*}
\left\| \widehat\cB \times_{m=1}^{M} \widetilde \ba_{\ell m} \right\|_2   &\ge  \left\| \sum_{j=1}^R w_{j} \prod_{m=1}^M \ba_{j m}^\top \widetilde \ba_{\ell m} \right\|_2    -  \left\| \cE \times_{m=1}^{M} \widetilde \ba_{\ell m} \right\|_2 \\
&\ge \left\|  w_{i} \prod_{m=1}^M \ba_{i m}^\top \widetilde \ba_{\ell m}  \right\|_2  - \left\| \sum_{j\neq i}^R w_{j} \prod_{m=1}^M \ba_{j m}^\top \widetilde \ba_{\ell m}  \right\|_2   -   \left\| \cE \times_{m=1}^{M} \widetilde \ba_{\ell m} \right\|_2 .
\end{align*}
By \eqref{eq:a_decomp} and the last part of the proof of Lemma \ref{lem:rcpca}, as $\left\| \cE \times_{m=1}^{M} \widetilde \ba_{\ell m} \right\|_2/w_i\le \phi_0^2$ and $C\sqrt{R-1}(1+\delta_1)\prod_{m=2}^M (\delta_m+\psi_i) (w_1/w_R) \le \phi_0^2$, it follows that
\begin{align*}
\left\| \widehat\cB \times_{m=1}^{M} \widetilde \ba_{\ell m} \right\|_2   &\ge  (1-\phi_0^4)^{\frac{M}{2}} w_{i} - C\sqrt{R-1}(1+\delta_1)\prod_{m=2}^M (\delta_m+\psi_i) w_1 -\phi_0^2 w_{i }    \\
&\ge (1-3\phi_0^2) w_{i}.
\end{align*}
Now consider the best initialization $\ell_*\in [L]$ by using $\ell_* =\arg\max_{s} |\widehat\cB \times_{m=1}^{M} \widetilde \ba_{s m} |$. By the calculation above, it is immediate that
\begin{align}\label{eq:lbd}
\left\| \widehat\cB \times_{m=1}^{M} \widetilde \ba_{\ell m} \right\|_2   &\ge (1-3\phi_0^2) w_{i}.    
\end{align}
If $\| \ba_{i} \ba_{i}^\top - \widetilde \ba_{\ell_*} \widetilde \ba_{\ell_*}^\top \|_2 \ge C \phi_0$ for a sufficiently large constant $C$, we have that
\begin{align*}
\left\| \widehat\cB \times_{m=1}^{M} \widetilde \ba_{\ell m} \right\|_2   &\le  \left\| \sum_{j=1}^R w_{j} \prod_{m=1}^M \ba_{j m}^\top \widetilde \ba_{\ell m}  \right\|_2   + \left\| \cE \times_{m=1}^{M} \widetilde \ba_{\ell m} \right\|_2 \\
&\le  \left\| \sum_{j=i}^R w_{j} \prod_{m=1}^M \ba_{j m}^\top \widetilde \ba_{\ell m}  \right\|_2 + \phi_0^2 w_{i }+ R \nu^{M} w_1  \\
&\le  (1+\max_m \delta_m)(1-C^2\phi_0^2/2) w_{i} + \phi_0^2 w_{i }+ R \nu^{M} w_1 .
\end{align*}
If $\nu$ satisfies $R\nu^{M} (w_1/w_R) \le c\phi_0^2$ for a small positive constant $c$, as $\max_m \delta_m \le \phi_0^2$, we have
\begin{align*}
\left\| \widehat\cB \times_{m=1}^{M} \widetilde \ba_{\ell m} \right\|_2 \le (1-C' \phi_0^2) w_{i}   ,
\end{align*}
where $C'$ is a sufficiently large constant. It contradicts \eqref{eq:lbd} above. This implies that for $\ell=\ell_*$, we have 
\begin{align*}
\| \ba_i \ba_i^\top - \widetilde \ba_{\ell_*} \widetilde \ba_{\ell_*}^\top \|_2 \le C \phi_0    .
\end{align*}
By Lemma \ref{prop-rank-1-approx}, with $\widehat \ba_{im}^{\rm rcpca} =\widetilde \ba_{\ell_* ,m}$, in the event $\Omega$ with probability at least $1-n^{-c}-d_1^{-c}-\sum_{m=2}^M e^{-c d_M}$,
\begin{align*}
\left\| \widehat \ba_{im}^{\rm rcpca} \widehat \ba_{im}^{\rm rcpca \top}  - \ba_{im} \ba_{im}^\top \right\|_2  &\le C\phi_0, \quad 1\le m\le M   .
\end{align*}
This finishes the proof by an induction argument along with the requirements $R\nu^{M} (w_1/w_R) \le c\phi_0^2$. The general upper bound of \textsc{rc-PCA} is provided in \eqref{eq:general-init}.

\subsection{Proof of Theorem \ref{thm:cp-converge} for Algorithm \ref{alg:tensorlda-cp} DISTIP-CP}
\label{sec:proof-cp-converge}


\noindent\textsc{Step I.} \textbf{Upper bound for $\widehat \ba_{rm}^{(t)}$}. 


Recall that $\bB_m = \bA_m(\bA_m^{\top} \bA_m)^{-1} = (\bb_{1m},\cdots,\bb_{Rm})$ with $\bA_m = (\ba_{1m}, \cdots, \ba_{Rm})$. Let $\bg_{rm} = \bb_{rm}/\|\bb_{rm}\|_2$, $\widehat\bg_{rm}^{(t)} = \widehat\bb_{rm}^{(t)}/\|\widehat\bb_{rm}^{(t)}\|_2$, and  
\begin{align*}
\alpha&=\sqrt{(1-\delta_{\max})(1-1/(4R))}-(R^{1/2}+1)\psi_0.
\end{align*}  
Let $\psi_{0,\ell}=\psi_0$ and define sequentially 
\begin{align}
\phi_{t,m-1} &= (M-1)\alpha^{-1}\sqrt{2R/(1-1/(4R))}\max_{1\le \ell<M}\psi_{t,m-\ell}, \notag \\
\psi_{t,m} &= \Big(2\alpha^{1-M}\sqrt{R-1}(w_1/w_R) \prod\nolimits_{\ell=1}^{M-1}\psi_{t,m-\ell}\Big)
\vee \Big(C\alpha^{1-M}\eta^{\ideal}_{Rm,\phi_{t,m-1}} \Big),   \label{psi_mk2} \\
\eta^{\ideal}_{rm,\phi}&=\frac{\sqrt{d_m}}{\sqrt{n} w_r} +   (\phi \wedge 1) \frac{\sum_{k=1}^M \sqrt{d_k}}{\sqrt{n}  w_r}   +   \frac{\|\cM_{2} - \cM_{1}\|_{\rm op} }{ w_r} \max_{1\le k\le M} \sqrt{\frac{d_k}{n  d_{-k}}} ,  \notag
\end{align} 
for $m=1,\ldots,M$, $t=1,2,\ldots$ 
By induction, \eqref{eq:rho} and $\alpha>0,\rho<1$ give $\psi_{t,m}\le\psi_{t-1,m} \le\psi_0$. 
Here and in the sequel, we take the convention that $(t,\ell)=(t-1,M+\ell)$ with the subscript $(t,\ell)$, and that $\times_\ell \widehat\theta_{j,\ell}^{(t)} = \times_{M+\ell} \widehat\theta_{j,M+\ell}^{(t-1)}$ for any estimator $\widehat\theta_{j\ell}^{(t)}$.
Let 
\begin{align*}
\Omega^*_{t,m-1} =\cap_{\ell=1}^{M-1}\Omega_{t,m-\ell}
\end{align*} 
with $\Omega_{t,\ell}=\big\{\max\nolimits_{r\le R}\| \widehat \ba_{r\ell }^{(t)}\widehat \ba_{r\ell }^{(t)\top}  - \ba_{r\ell } \ba_{r\ell }^\top \|_{2} \le \psi_{t,\ell}\big\}$.
Given $\{\widehat \ba_{r,m-\ell}^{(t)}, r\in[R], \ell\in[M-1] \}$, the $t$-th iteration for tensor mode $m$ produces estimates $\widehat \ba_{rm}^{(t)}$ as the normalized version of $\widehat\cB \times_{\ell=m-1}^{m-M+1} \widehat \bb_{r,\ell}^{(t)\top}$. Because 
$\widehat\cB =\sum_{r=1}^R w_r \circ_{m=1}^{M} \ba_{rm} + \cE$, the ``noiseless'' version of this update is given by
\begin{equation}
\hat\calB\times_{l=1, l\ne m}^{M} \bb_{rl}= w_{r} \ba_{rm} + \cE\times_{l=1, l\ne m}^{M} \bb_{rl}^\top.
\end{equation}
Similarly, for any $1\le r\le R$, 
\begin{align*}
\tilde\cB_{rm}^{(t)}:=\hat\cB \times_{\ell=m-1}^{m-M+1} \widehat \bb_{r,\ell}^{(t)\top} 
= \sum_{i=1}^R \widetilde w_{i,r} \ba_{im}  + \cE \times_{\ell=m-1}^{m-M+1} \widehat \bb_{r,\ell}^{(t)\top} \in \RR^{d_m},
\end{align*}
where 
$\widetilde w_{i,r}= w_i\prod_{\ell=1 }^{M-1} \ba_{i,m-\ell}^\top\widehat \bb_{r,m-\ell}^{(t)}$.  
At $t$-th iteration, $\widehat \ba_{rm}^{(t)}=\tilde\cB_{rm}^{(t)}/\|\tilde\cB_{rm}^{(t)}\|_2$. 

We may assume without loss of generality $\ba_{j\ell}^\top\widehat \ba_{j\ell}^{(t)}\ge 0$ for all $(j,\ell)$. 
Similar to the proofs of Proposition 4 and Theorem 3 in \cite{han2023tensor}, we can show
\begin{align}\label{a-bd}
&\max_{r\le R}\|\widehat \ba_{r\ell}^{(t)} -  \ba_{r\ell} \|_2 \le \psi_{t,\ell}/\sqrt{1-1/(4R)}, \ \ 
\displaystyle \big\|\widehat \bb_{r\ell}^{(t)}\big\|_2 \le \|\widehat \bB_\ell^{(t)}\|_{\rm 2}
\le \bigg(\sqrt{1-\delta_\ell}-\frac{R^{1/2}\psi_0}{\sqrt{1-1/(4R)}}\bigg)^{-1}, \\
&\big\|\widehat \bg_{r\ell}^{(t)} -  \bb_{r\ell}/\| \bb_{r\ell}\|_2\big\|_2
\le (\psi_{t,\ell}/\alpha)\sqrt{2R/(1-1/(4R))}.  \label{b-bd}
\end{align}
Moreover, 
\eqref{a-bd} provides 
\begin{align}\label{g-bd}
\max_{i\neq r}\big| \ba_{i\ell}^\top\widehat \bg_{r\ell}^{(t)}\big| \le \psi_{t,\ell}/\sqrt{1-1/(4R)},\ \
\big| \ba_{r\ell}^\top\widehat \bg_{r\ell}^{(t)} \big| \ge \alpha, 
\end{align}
as $\widehat \ba_{i\ell}^{(t)\top}\widehat \bg_{r\ell}^{(t)}=I\{i=r\}/\|\widehat \bb_{r\ell}^{(t)}\|_2$. 
Then, for $i\neq r$,
\begin{align*}
\widetilde w_{i,r}/\widetilde w_{r,r} &= \big(w_{1}/w_{r}\big) \prod_{\ell=1 }^{M-1} \frac{ \left| (\ba_{i,m-\ell}- \widehat \ba_{i,m-\ell}^{(t)})^\top \widehat\bb_{r,m-\ell}^{(t)} \right| }{  \left| 1+ (\ba_{r,m-\ell}- \widehat \ba_{r,m-\ell}^{(t)})^\top \widehat\bb_{r,m-\ell}^{(t)} \right| }  \\
&\le \big(w_{1}/w_{r}\big)   \prod_{\ell=1 }^{M-1}  \frac{ [\psi_{t,m-\ell}/\sqrt{1-1/(4R)}]/[\sqrt{1-\delta_{\ell}}-R^{1/2}\psi_{t,m-\ell}/\sqrt{1-1/(4R)} ]  }{1- [\psi_{t,m-\ell}/\sqrt{1-1/(4R)}]/[\sqrt{1-\delta_{\ell}}-R^{1/2}\psi_{t,m-\ell}/\sqrt{1-1/(4R)} ] }  \\
&\le \big(w_{1}/w_{r}\big)   \prod_{\ell=1 }^{M-1}  \frac{\psi_{t,m-\ell} }{\alpha} .
\end{align*} 
It follows that
\begin{align}\label{eq:contraction}
\left\| \sum_{i=1}^R \widetilde w_{i,r} \ba_{im} / \widetilde w_{r,r} - \ba_{rm} \right\|_2^2  &= \sum_{i\neq r}^R \sum_{j\neq r}^R (\ba_{im}^\top \ba_{jm})  (\widetilde w_{i,r}/\widetilde w_{r,r})  (\widetilde w_{j,r}/\widetilde w_{r,r})  \\
&\le (R-1)(1+\delta_{m}) \big(w_{1}/w_{r}\big)^2   \prod_{\ell=1 }^{M-1}   \left( \frac{\psi_{t,m-\ell} }{\alpha} \right)^2 .
\end{align}
By basic geometry, we have
\begin{align}
\| \widehat \ba_{rm}^{(t)}\widehat \ba_{rm}^{(t)\top}  - \ba_{rm} \ba_{rm}^\top \|_{\rm 2} &=\|\sin\angle(\widehat\ba_{rm}^{(t)}, \ba_{rm}) \|_2 \le   \| \tilde\cB_{rm}^{(t)}/\widetilde w_{r,r} - \ba_{rm}\|_2  \notag\\
& \le \frac{w_1\sqrt{(R-1)(1+\delta_{m})}}{w_r} \prod_{\ell=1 }^{M-1}  \frac{\psi_{t,m-\ell} }{\alpha}  + \frac{\| \cE \times_{\ell=m-1}^{m-M+1} \widehat \bg_{r,\ell}^{(t)\top} \|_2 }{w_r \prod_{\ell=1 }^{M-1} \ba_{r,m-\ell}^\top\widehat \bg_{r,m-\ell}^{(t)} }  .
\end{align}
in $\Omega^*_{t,m-1}$. As $\cE \times_{\ell=m-1}^{m-M+1} \widehat \bg_{r,\ell}^{(t)\top}$ is linear in each $\widehat \bg_{r\ell}^{(t)}$, 
\begin{align*}
\big\|  \cE \times_{\ell=m-1}^{m-M+1} \widehat \bg_{r,\ell}^{(t)\top} \big\|_{2}
\le& (M-1)\max_{\ell<M} \| \widehat \bg_{r,m-\ell}^{(t)}- \bg_{r,m-\ell} \|_2 \| \Delta \|
+ \big\| \cE \times_{\ell\in [M] \backslash\{m\} } \bg_{r\ell}^{\top} \big\|_{2}, 
\end{align*}
where $\|\Delta \|= \max_{v_\ell\in \mathbb S^{d_\ell-1}\forall\ell}\big( \cE  \times_{\ell=1}^M v_\ell^\top\big)$.  
As we also have $\big\|  \cE \times_{\ell=m-1}^{m-M+1} \widehat \bg_{r,\ell}^{(t)\top} \big\|_{2}\le \|\Delta\|$, 
\eqref{b-bd} and \eqref{g-bd} yield 
\begin{align}\label{eqthm:norm-bd}
\big\|  \cE \times_{\ell=m-1}^{m-M+1} \widehat \bg_{r,\ell}^{(t)\top} \big\|_{2}
\le& \min\big\{\|\Delta\|, \phi_{t,m-1} \| \Delta \| 
+ \big\| \cE \times_{\ell\in [M] \backslash\{m\} } \bg_{r\ell}^{\top} \big\|_{2}
\end{align}
in $\Omega^*_{t,m-1}$, in view of the definition of $\phi_{t,m-1}$ in \eqref{psi_mk2}. 
By the Sudakov-Fernique and Gaussian concentration inequalities, similar to the proof of \eqref{eq5:thm:initial}, we can show
\begin{align*}
& \|\Delta \| \le C \frac{\sum_{k=1}^M \sqrt{d_k}}{\sqrt{n}} + C  \|\cM_{2} - \cM_{1}\|_{\rm op} \max_{1\le k\le M} \sqrt{\frac{d_k}{n  d_{-k}}} , \\
& \big\| \cE \times_{\ell\in [M] \backslash\{m\} } \bg_{r\ell}^{\top} \big\|_{2}  \le 
C \frac{\sqrt{d_m}}{\sqrt{n}} + C  \|\cM_{2} - \cM_{1}\|_{\rm op} \max_{1\le k\le M} \sqrt{\frac{d_k}{n  d_{-k}}} ,
\end{align*}
in an event $\Omega_1$ with at least probability $1-n^{-c}-\sum_{k=1}^M e^{-cd_k}$. 
Consequently, by \eqref{eqthm:norm-bd}, in $\Omega_1\cap\Omega^*_{m,k-1}$, 
\begin{align}\label{eqthm:bdd-ce}
&\frac{\| \cE \times_{\ell=m-1}^{m-M+1} \widehat \bg_{r,\ell}^{(t)\top} \|_2 }{w_r \prod_{\ell=1 }^{M-1} \ba_{r,m-\ell}^\top\widehat \bg_{r,m-\ell}^{(t)} }  \notag\\
\le& \frac{C \alpha^{1-M}\sqrt{d_m}}{\sqrt{n} w_r} +   \frac{C \alpha^{1-M}\|\cM_{2} - \cM_{1}\|_{\rm op} }{ w_r} \max_{1\le k\le M} \sqrt{\frac{d_k}{n  d_{-k}}} +   C \alpha^{1-M} (\phi_{t,m-1}\vee 1) \frac{\sum_{k=1}^M \sqrt{d_k}}{\sqrt{n}  w_r} .
\end{align}
Substituting \eqref{eqthm:bdd-ce} into \eqref{eqthm:norm-bd}, we have, in the event $\Omega_1\cap\Omega_{t,m-1}^*$, 
\begin{align}\label{bdd1:thm-projection0}
\| \widehat \ba_{rm}^{(t)}\widehat \ba_{rm}^{(t)\top}  - \ba_{rm} \ba_{rm}^\top \|_{\rm 2} 
\le \psi_{t,r,m} 
\end{align}
with 
\begin{align*}
\psi_{t,r,m} =\max\bigg\{ C\alpha^{1-M}\eta^{\ideal}_{rm,\phi_{t,m-1}} , 
\frac{w_1\sqrt{2R-2}}{w_r\alpha^{M-1}} \prod_{\ell=1}^{M-1}\psi_{t,m-\ell}\bigg\}. 
\end{align*}
Consequently, $\Omega_{t,m}\subset \Omega_1\cap\Omega^*_{t,m-1}$.
Let $\Omega_0=\{\max_{r\le R,m\le M}\;\|\hat\ba_{rm}^{(0)}\hat\ba_{rm}^{(0)\top} - \ba_{rm}\ba_{rm}^\top\|_2\le \psi_0 \}$
for any initial estimates $\hat\ba_{rm}^{(0)}$. Then \eqref{bdd1:thm-projection0} holds in the event $\Omega_0\cap\Omega_1$.

\medskip
\noindent\textsc{Step II.} \textbf{Number of iterations. } 

We now consider the number of iterations and the convergence of $\psi_{t,m}$ in \eqref{psi_mk2}. A simple way of dealing with the dynamics of \eqref{psi_mk2} is to compare $\psi_{t,r,m}$ with 
\begin{align}\label{psi*}
\psi^*_{t,r,m} &= \Big(2\alpha^{1-M}\sqrt{R-1}(w_1/w_r) \prod\nolimits_{\ell=1}^{M-1}\psi^*_{t,m-\ell}\Big)
\vee \Big(C\alpha^{1-M}\eta^{\ideal}_{rm,1} \Big) \notag \\
\psi^*_{t,m} &= \psi^*_{t,R,m}, 
\end{align}
with initialization $\psi^*_{0,r,m}=\psi_0$. Compared with \eqref{psi_mk2}, \eqref{psi*} is easier to analyze due to 
the use of static 1 in $\eta^{\ideal}_{rm,\phi_{t,m-1}}$ and the monotonicity of $\psi^*_{t,m}$ in $m$. It follows that 
\begin{align}\label{psi-compare}
& \psi_{t,r,m}\le\psi^*_{t,r,m}\le \psi^*_{t,m},\quad\forall (t,r,m). 
\end{align}
As $\rho<1$ with $\rho$ defined in \eqref{eq:rho}, $\psi_{1,1}^*\le \rho \psi_0 \vee \eta^{\ideal}_{R1,1}$ and this would contribute the extra factor $\rho$ in the application \eqref{psi*} to $\psi_{1,2}^*$, resulting in $\psi_{1,2}^*\le \rho^2 \psi_0 \vee \eta^{\ideal}_{R1,1}$, so on and so forth. In general, $\psi_{t,m}^* \le (\rho^{T_{(t-1)M+m}}\psi_0 ) \vee \eta^{\ideal}_{R1,1} $ with $T_1 =1$, $T_2=2, \ldots, T_{M}=2^{M-1}$, and $T_{k+1} = 1+\sum_{\ell=1}^{M-1}T_{k+1-\ell}$ for $k>M$. By induction, for $k=M, M+1,\ldots$. 
\begin{align*}
T_{k+1} \ge \gamma_M^{k-1}+\cdots+\gamma_M^{k-M+1} = \gamma_M^k \frac{1-\gamma_M^{-M+1} }{\gamma_M-1} 
= \gamma_M^k. 
\end{align*}
The function $f(\gamma) = \gamma^M - 2\gamma^{M-1}+1$ is decreasing in $(1,2-2/M)$ and increasing $(2-2/M,\infty)$. 
Because $f(1)=0$ and $f(2)=1>0$, we have $2-2/M < \gamma_M <2$. It follows that the required number of iteration is at most $T=\lceil M^{-1}\{1+ (\log\gamma_M)^{-1} \log \log (\psi_0/\eta^{\ideal}_{R1,1})/\log(1/\rho) \} \rceil$.  Furthermore, the desired upper bound for $\widehat\ba_{rm}$ after convergence is
\begin{align}
\| \widehat \ba_{rm}^{(t)}\widehat \ba_{rm}^{(t)\top}  - \ba_{rm} \ba_{rm}^\top \|_{\rm 2}  \le   C\alpha^{1-M}\eta^{\ideal}_{r1,1}
\end{align}
in the event $\Omega_0\cap\Omega_1$.

\medskip
\noindent\textsc{Step III.} \textbf{Upper bound for $\|\widehat \cB^{\rm cp} -\cB\|_{\rm F}$}. 

After convergence, let $\hat\bb_{rm}=\hat\bb_{rm}^{(t)}$. 
For weights estimation, we have
\begin{align*}
\hat w_{r} &= \hat\calB\times_{m=1}^M \hat  \bb_{rm}^{\top}   = \cE\times_{m=1}^M \hat  \bb_{rm}^{\top} + \cB\times_{m=1}^M \hat  \bb_{rm}^{\top}   \\
&= \cE\times_{m=1}^M \hat  \bb_{rm}^{\top} + w_r \prod_{m=1}^M (\ba_{rm}^\top \hat\bb_{rm}) + \sum_{i\neq r} w_i \prod_{m=1}^M (\ba_{im}^\top \hat\bb_{rm})   .
\end{align*}
As $\sqrt{R}\psi_{t,m}<1$ and $\rho<1$, it follows that
\begin{align*}
\left| \hat w_{r} - w_{r}  \right|   &\le \left| \cE\times_{m=1}^M \hat  \bb_{rm}^{\top} \right| + w_r \left| \prod_{m=1}^M (\ba_{rm}^\top \hat\bb_{rm}) -1 \right| + \left| \sum_{i\neq r} w_i \prod_{m=1}^M (\ba_{im}^\top \hat\bb_{rm}) \right|\\
&\le C \alpha^{-M} w_r \eta^{\ideal}_{r1,1} + w_r \left(1-  \prod_{m=1}^M (1-\alpha^{-1}\psi_{t,m})  \right) + \left| \sum_{i\neq r} w_i \alpha^{-M} \prod_{m=1}^M \psi_{t,m}   \right|   \\
&\le C \alpha^{-M} w_r \eta^{\ideal}_{r1,1} +  \alpha^{-1}\sum_{m=1}^M w_r \psi_{t,r,m} + \alpha^{-M} w_R (R-1) (w_1/w_R) \prod_{m=1}^M \psi_{t,m} \\
&\le C \alpha^{-M} w_r \eta^{\ideal}_{r1,1} +  \alpha^{-1}\sum_{m=1}^M w_r \psi_{t,r,m} + \alpha^{-M} w_R \min_m \psi_{t,m} \\
&\le C_{\alpha} w_r \eta^{\ideal}_{r1,1},
\end{align*}
which is free of $w_r$ by the definition of $\eta^{\ideal}_{r1,1}$.

We may assume without loss of generality $\ba_{r\ell}^\top\widehat \ba_{r\ell}\ge 0$ for all $(r,\ell)$.
Let $\ba_{r}=\vect(\circ_{m=1}^M \ba_{rm})$ and $\hat\ba_{r}=\vect(\circ_{m=1}^M \hat\ba_{rm})$.
Employing similar arguments in the proof of \eqref{eq:contraction}, we have
\begin{align*}
\norm{\hat{\cB}^{\rm cp}  - \cB}_F & = \norm{\sum_{r\in[R]}\hat w_r\circ_{m\in[M]}\hat{\ba}_{rm} - \sum_{r\in[R]} w_r\circ_{m\in[M]}\ba_{rm}}_{\rm F}  \\
&= \norm{\sum_{r\in[R]}\hat w_r \hat \ba_{r} - \sum_{r\in[R]} w_r \ba_{r}}_{\rm 2}  \\ 
& \le \norm{\sum_{r\in[R]}(\hat w_r - w_r)\hat{\ba}_{r}}_2  + \norm{\sum_{r\in[R]} w_r\hat{\ba}_{r} - \sum_{r\in[R]} w_r\ba_{r}}_2 \\
& \le \norm{\sum_{r\in[R]}(\hat w_r - w_r)\hat{\ba}_{r}}_2  + \sqrt{R}\max_{r\le R}\norm{ w_r\hat{\ba}_{r} - w_r\ba_{r}}_2  \\
& \le 2 \sqrt{R}\max_{r\le R} \abs{\hat w_r - w_r }  + \sqrt{R}\max_{r\le R}\norm{ w_r\hat{\ba}_{r} - w_r\ba_{r}}_2  \\
&\le C_{\alpha} \sqrt{R} w_r \eta^{\ideal}_{r1,1}.
\end{align*}
Note that $ \|\cM_{2} - \cM_{1}\|_{\rm op} \asymp w_1$. We can further simplify the bounds.

\subsection{Proof of Theorem \ref{thm:class-upp-bound}}

For simplicity, we mainly focus on the proof for a simple scenario where the prior probabilities $\pi_1 = \pi_2 = 1/2$. Additionally, we will provide key steps of the proof for more general settings correspondingly. Let $\hat \Delta = \sqrt{\langle \hat \calB^{\rm cp} \times_{m=1}^M \Sigma_m, \; \hat \calB^{\rm cp} \rangle}$, the misclassification error of $\hat\Upsilon_{\rm cp}$ is
\begin{align*}
\cR_{\btheta}(\hat\Upsilon_{\rm cp}) &= \frac{n_{L_1}}{n_{L_1} + n_{L_2}} \phi\left(\hat \Delta^{-1}\log(n_{L_2}/n_{L_1}) -\frac{\langle \hat \cM - \cM_1, \; \hat \calB^{\rm cp} \rangle}{\hat \Delta} \right) \\
& + \frac{n_{L_2}}{n_{L_1} + n_{L_2}} \bar \phi\left(\hat \Delta^{-1}\log(n_{L_2}/n_{L_1}) - \frac{\langle \hat \cM - \cM_2, \; \hat \calB^{\rm cp} \rangle}{\hat \Delta} \right)
\end{align*} 
and the optimal misclassification error is
\begin{align*}
\cR_{\rm opt}=\pi_1\phi(\Delta^{-1}\log(\pi_2/\pi_1)-\Delta/2)+\pi_2 \bar \phi(\Delta^{-1}\log(\pi_2/\pi_1)+\Delta/2),    
\end{align*}
where $\phi$ is the CDF of the standard normal, and $\bar \phi(\cdot) = 1 - \phi(\cdot)$.
While the simpler version when $\pi_1 = \pi_2 = \frac{1}{2},$ are
\begin{align*}
\cR_{\btheta}(\hat\Upsilon_{\rm cp}) = \frac{1}{2} \phi\left(-\frac{\langle \hat \cM - \cM_1, \; \hat \calB^{\rm cp} \rangle}{\hat \Delta} \right) + \frac{1}{2} \bar \phi\left(-\frac{\langle \hat \cM - \cM_2, \; \hat \calB^{\rm cp} \rangle}{\hat \Delta} \right)    
\end{align*}
and $\cR_{\rm opt}=\phi(-\Delta/2)=\frac12\phi(-\Delta/2)+\frac12\bar\phi(\Delta/2),$ respectively.
Define an intermediate quantity
\begin{align*}
\cR^{*} = \frac{1}{2} \phi\left(-\frac{\langle \cD, \; \hat \calB^{\rm cp} \rangle}{2 \hat \Delta} \right) + \frac{1}{2} \bar \phi\left(\frac{\langle \cD, \; \hat \calB^{\rm cp} \rangle}{2 \hat \Delta} \right).    
\end{align*}
By Theorem \ref{thm:cp-converge}, in an event $\Omega_1$ with probability at least $\PP(\Omega_0)-n^{-c_1} - \sum_{m=1}^M  e^{-c_1 d_m}$, \begin{align}\label{eq:B_cp}
\|\hat\calB^{\rm cp} - \cB\|_{\rm F} \le C\frac{ \sqrt{\sum_{k=1}^M  d_k R}}{\sqrt{n} }   +   C w_1  \max_{1\le k\le M} \sqrt{\frac{d_kR}{n  d_{-k}}} =o (\Delta ) .   
\end{align}

\noindent Firstly, we are going to show that $R^* -\cR_{\rm opt}(\btheta) \lesssim e^{-\Delta^2/8} \cdot \Delta^{-1} \cdot \|\hat \calB^{\rm cp}  - \calB \|_{\rm F}^{2}$. 
Applying Taylor's expansion to the two terms in $\cR^{*}$ at $-\Delta/2$ and $\Delta/2$, respectively, we obtain 
\begin{equation}
\begin{split}
\label{eqn: taylor 1}
\cR^{*} - \cR_{\rm opt}(\btheta) =& \frac{1}{2}\left(\frac{\Delta}{2} -\frac{\langle \cD, \; \hat \calB^{\rm cp} \rangle}{2\hat \Delta} \right) \phi^{\prime}(\frac{\Delta}{2}) + \frac{1}{2}\left(\frac{\Delta}{2} -\frac{\langle \cD, \; \hat \calB^{\rm cp} \rangle}{2\hat \Delta} \right) \phi^{\prime}(-\frac{\Delta}{2}) \\
& + \frac{1}{2}\left(\frac{\langle \cD, \; \hat \calB^{\rm cp} \rangle}{2\hat \Delta} - \frac{\Delta}{2} \right)^2 \phi^{\prime \prime}(t_{1,n}) + \frac{1}{2}\left(\frac{\langle \cD, \; \hat \calB^{\rm cp} \rangle}{2\hat \Delta} - \frac{\Delta}{2} \right)^2 \phi^{\prime \prime}(t_{2,n})
\end{split}
\end{equation}
where $t_{1,n}, \; t_{2,n}$ are some constants satisfying $| t_{1,n} |, \; | t_{2,n} |$ are between $\frac{\Delta}{2}$ and $\frac{\langle \cD, \; \hat \calB^{\rm cp} \rangle}{2\hat \Delta}$.

Since $\big(\frac{\Delta}{2} -\frac{\langle \cD, \; \hat \calB^{\rm cp} \rangle}{2\hat \Delta} \big)$ frequently appears in \eqref{eqn: taylor 1}, we need to bound its absolute value. Let $\gamma = \calB \times_{m=1}^M \Sigma_m^{1/2}$ and $\hat \gamma = \hat \calB^{\rm cp} \times_{m=1}^M \Sigma_m^{1/2}$, 
then by Lemma \ref{lemma:tensor norm inequality}, in the event $\Omega_1$, we have
\begin{align*}
& \left|  \Delta - \frac{\langle \cD, \; \hat \calB^{\rm cp} \rangle}{\hat \Delta} \right| = \left| \|\gamma\|_{\rm F} -  \frac{\langle \gamma ,\; \hat \gamma \rangle}{\|\hat \gamma\|_{\rm F}} \right| = \left| \frac{\|\gamma\|_{\rm F} \cdot \|\hat\gamma\|_{\rm F} - \langle \gamma ,\; \hat \gamma \rangle}{\|\hat\gamma\|_2}  \right| \\
\lesssim& \frac{1}{\Delta}\|\hat \gamma - \gamma\|_{\rm F}^2 \lesssim \frac{1}{\Delta}\|\hat \calB^{\rm cp} - \calB\|_{\rm F}^2.
\end{align*}
In fact, by triangle inequality,
\begin{align*}
| \hat \Delta - \Delta | &= \left\|\hat \calB^{\rm cp}  \times_{m=1}^M \Sigma_m^{1/2} \right\|_{\rm F}- \left\| \calB \times_{m=1}^M \Sigma_m^{1/2} \right\|_{\rm F} \le \left\|\left(\hat \calB^{\rm cp} - \calB\right) \times_{m=1}^M \Sigma_m^{1/2} \right\|_{\rm F} \le \left\| \hat \calB^{\rm cp} - \calB \right\|_{\rm F} \prod_{m=1}^M \left\|\Sigma_m\right\|_{2}^{1/2} \\
& \lesssim \left\|\hat \calB^{\rm cp} - \calB\right\|_{\rm F} \lesssim \frac{ \sqrt{\sum_{k=1}^M d_k R}}{\sqrt{n} }   +   w_1  \max_{1\le k\le M} \sqrt{\frac{d_kR}{n  d_{-k}}} = o(\Delta).
\end{align*}
Since $\|\hat \calB^{\rm cp} - \calB\|_{\rm F} = o(\Delta)$, it follows that $\langle \cD, \; \hat \calB^{\rm cp} \rangle/(2\hat \Delta) \rightarrow \Delta/2$.
Then, we have $| \phi^{\prime \prime}(t_{1,n}) | \asymp | \phi^{\prime \prime}(t_{2,n}) | \asymp \Delta  e^{-\frac{(\Delta/2)^2}{2}} = \Delta  e^{-\Delta^2/8}$.
Hence,
\begin{align*}
&\frac{1}{2}\Big(\frac{\langle \cD, \; \hat \calB^{\rm cp} \rangle}{2\hat \Delta} - \frac{\Delta}{2} \Big)^2 \phi^{\prime \prime}(t_{1,n}) + \frac{1}{2}\Big(\frac{\langle \cD, \; \hat \calB^{\rm cp} \rangle}{2\hat \Delta} - \frac{\Delta}{2} \Big)^2 \phi^{\prime \prime}(t_{2,n}) \\ 
&\asymp  \frac{1}{\Delta^2} \left\|\hat \calB^{\rm cp} - \calB\right\|_{\rm F}^4 \cdot \Delta \cdot e^{-\Delta^2/8}  \asymp  \frac{1}{\Delta} e^{-\Delta^2/8} \left\|\hat \calB^{\rm cp} - \calB\right\|_{\rm F}^4   .
\end{align*}
Then \eqref{eqn: taylor 1} can be further bounded such that
\begin{align*}
\cR^{*} - \cR_{\rm opt}(\btheta) &\asymp \Big(\frac{\Delta}{2} - \frac{\langle \cD, \; \hat \calB^{\rm cp} \rangle}{2\hat \Delta} \Big) e^{-\frac{(\Delta/2)^2}{2}} + O\Big(\frac{1}{\Delta}  e^{-\Delta^2/8} \left\|\hat \calB^{\rm cp} - \calB\right\|_{\rm F}^4 \Big)\\
    & \le e^{-\Delta^2/8} \cdot \left| \frac{\Delta}{2} - \frac{\langle \cD, \; \hat \calB^{\rm cp} \rangle}{2\hat \Delta} \right| +  O\left( \frac{1}{\Delta}  e^{-\Delta^2/8} \left\|\hat \calB^{\rm cp} - \calB\right\|_{\rm F}^4 \right) \\
    & \lesssim \frac{1}{\Delta} e^{-\Delta^2/8}  \left\|\hat \calB^{\rm cp} - \calB\right\|_{\rm F}^2  +    \frac{1}{\Delta}  e^{-\Delta^2/8} \left\|\hat \calB^{\rm cp} - \calB\right\|_{\rm F}^4  .
\end{align*}
Eventually we obtain $\cR^{*} - \cR_{\rm opt}(\btheta) \lesssim \Delta^{-1} e^{-\Delta^2/8} (\|\hat \calB^{\rm cp} - \calB \|_{\rm F}^2 \vee \|\hat \calB^{\rm cp} - \calB \|_{\rm F}^4)$ in the event $\Omega_1$ with probability at least $\PP(\Omega_0)-n^{-c_1} - \sum_{m=1}^M  e^{-c_1 d_m}$.

\noindent Next, focus on $\cR_{\btheta}(\hat\Upsilon_{\rm cp}) - \cR^{*}$. We apply Taylor's expansion to $\cR_{\btheta}(\hat\Upsilon_{\rm cp})$:
\begin{align}
\label{eqn: taylor 2}
   \cR_{\btheta}(\hat\Upsilon_{\rm cp}) &= \frac{1}{2} \left\{ \phi\Big(-\frac{\langle \cD, \; \hat \calB^{\rm cp} \rangle}{2\hat \Delta} \Big) + \frac{\langle \cD, \; \hat \calB^{\rm cp} \rangle/2 - \langle \hat \cM - \cM_1, \; \hat \calB^{\rm cp} \rangle}{\hat \Delta}\phi^{\prime} \Big(\frac{\langle \cD, \; \hat \calB^{\rm cp} \rangle}{2\hat \Delta} \Big)  \right.\notag\\
   & \left.+ O\left( \Delta \cdot e^{-\Delta^2/8} \right) \Big( \frac{\langle \hat \cM - \cM_1, \; \hat \calB^{\rm cp} \rangle - \langle \cD, \; \hat \calB^{\rm cp} \rangle/2}{\hat \Delta} \Big)^2 \;  \right\} \notag\\
   &+ \frac{1}{2} \left\{ \bar \phi \Big(\frac{\langle \cD, \; \hat \calB^{\rm cp} \rangle}{2\hat \Delta} \Big) + \frac{\langle \cD, \; \hat \calB^{\rm cp} \rangle/2 + \langle \hat \cM - \cM_2, \; \hat \calB^{\rm cp} \rangle}{\hat \Delta}\phi^{\prime}\Big(\frac{\langle \cD, \; \hat \calB^{\rm cp} \rangle}{2\hat \Delta} \Big) \right.\notag\\ 
   & \left.+ O\big( \Delta \cdot e^{-\Delta^2/8} \big) \Big( \frac{\langle \hat \cM - \cM_2, \; \hat \calB^{\rm cp} \rangle + \langle \cD, \; \hat \calB^{\rm cp} \rangle/2}{\hat \Delta} \Big)^2 \;  \right\} 
\end{align}
where the remaining term can be obtained similarly as \eqref{eqn: taylor 1} by using the fact that $|\phi^{\prime \prime}(t_n)| = O(\Delta \cdot e^{-\Delta^2/8})$. Now we aim to bound the following term:
\begin{align*} 
 \left| \frac{\langle \hat \cM - \cM_1, \; \hat \calB^{\rm cp} \rangle - \langle \cD, \; \hat \calB^{\rm cp} \rangle/2}{\hat \Delta} \right| 
\lesssim & \frac{1}{\Delta} \left| \langle \bar\calX^{(2)} - \cM_2 + \bar\calX^{(1)} - \cM_1, \; \hat \calB^{\rm cp} \rangle \right| \notag\\
\lesssim & \frac{1}{\Delta} \left| \langle  \bar\calX^{(1)} - \cM_1, \; \hat \calB^{\rm cp} \rangle \right|  + \frac{1}{\Delta} \left| \langle \bar\calX^{(2)} - \cM_2, \; \hat \calB^{\rm cp} \rangle \right|.  
\end{align*}
Note that, in the event $\Omega_1$, $\|\hat \calB^{\rm cp}\|_{\rm F} \le \|\hat\calB^{\rm cp} - \cB\|_{\rm F} + \|\cB\|_{\rm F} \lesssim \Delta $. By Lemma \ref{lemma:low-rank-tensor}, in an event $\Omega_2$ with probability at least $1-e^{-c_2\sum_{m=1}^M d_m R}$,
\begin{align*}
\left| \langle  \bar\calX^{(k)} - \cM_k, \; \hat \calB^{\rm cp} \rangle \right| &\lesssim \Delta \sqrt{\frac{\sum_{m=1}^M d_m R}{n}} ,\quad k=1,2. 
\end{align*}
It follows that, in the event $\Omega_1\cap \Omega_2$, 
\begin{align}
\left| \frac{\langle \hat \cM - \cM_1, \; \hat \calB^{\rm cp} \rangle - \langle \cD, \; \hat \calB^{\rm cp} \rangle/2}{\hat \Delta} \right| &\lesssim \Delta \sqrt{\frac{\sum_{m=1}^M d_m R}{n}},    \label{eqn: upper bound of taylor terms 2}\\
\left| \frac{\langle \hat \cM - \cM_2, \; \hat \calB^{\rm cp} \rangle + \langle \cD, \; \hat \calB^{\rm cp} \rangle/2}{\hat \Delta} \right| &\lesssim \Delta \sqrt{\frac{\sum_{m=1}^M d_m R}{n}} .   \label{eqn: upper bound of taylor terms 3}
\end{align}
Substituting \eqref{eqn: upper bound of taylor terms 2} and \eqref{eqn: upper bound of taylor terms 3} into \eqref{eqn: taylor 2}, we obtain,
\begin{align*}
\left| \cR_{\btheta}(\hat\Upsilon_{\rm cp}) - \cR^{*} \right| \lesssim& \left| \frac{\langle \cD, \; \hat \calB^{\rm cp} \rangle/2 - \langle \hat \cM - \cM_1, \; \hat \calB^{\rm cp} \rangle}{\hat \Delta} \phi^{\prime}(\frac{\langle \cD, \; \hat \calB^{\rm cp} \rangle}{2\hat \Delta})  \right.\\
& \left. + \frac{\langle \cD, \; \hat \calB^{\rm cp} \rangle/2 + \langle \hat \cM - \cM_2, \; \hat \calB^{\rm cp} \rangle}{\hat \Delta} \phi^{\prime}(\frac{\langle \cD, \; \hat \calB^{\rm cp} \rangle}{2\hat \Delta}) + O \left(\Delta^3 e^{-\Delta^2/8} \left(\frac{\sum_{m=1}^M d_m R}{n} \right) \right) \right|
\end{align*}
Since $\cD/2- (\hat \cM-\cM_1) + \cD/2 + (\hat \cM-\cM_2) = \cD - (\cM_2-\cM_1) = 0$, then it follows that
\begin{align*}
\left| \cR_{\btheta}(\hat\Upsilon_{\rm cp}) - \cR^{*} \right| \lesssim \Delta^3 e^{-\Delta^2/8} \left(\frac{\sum_{m=1}^M d_m R}{n} \right)  .  
\end{align*}

Finally, combining the two pieces, we obtain
\begin{align*}
\cR_{\btheta}(\hat\Upsilon_{\rm cp}) -\cR_{\rm opt}(\btheta) \le& \cR_{\btheta}(\hat\Upsilon_{\rm cp}) -  \cR^{*} + \cR^{*} - \cR_{\rm opt}(\btheta) \\
\lesssim & \frac{1}{\Delta} e^{-\Delta^2/8}  \left\|\hat \calB^{\rm cp} - \calB\right\|_{\rm F}^2  +    \frac{1}{\Delta}  e^{-\Delta^2/8} \left\|\hat \calB^{\rm cp} - \calB\right\|_{\rm F}^4 + \Delta^3 e^{-\Delta^2/8} \left(\frac{\sum_{m=1}^M d_m R}{n} \right) ,
\end{align*}
in the event $\Omega_1\cap \Omega_2$ with probability at least $\PP(\Omega_0)-n^{-c}-\sum_{m=1}^M e^{-cd_m}$.

Now consider the two case. On the one hand, when $\Delta = O(1)$, by \eqref{eq:B_cp}, with probability at least $\PP(\Omega_0)-n^{-c}-\sum_{m=1}^M e^{-cd_m }$, we have 
\begin{align*}
\cR_{\btheta}(\hat\Upsilon_{\rm cp}) - \cR_{\rm opt}(\btheta) \le C \frac{\sum_{m=1}^M d_m R}{n}  +     C \frac{ w_1^2 R }{\Delta^2 }  \max_{1\le m\le M} \frac{d_m}{n  d_{-m}}  .
\end{align*}
On the other hand, when $\Delta\to\infty$ as $n\to \infty$, by\eqref{eq:B_cp}, with probability at least $1-n^{-c}-\sum_{m=1}^M e^{-cd_m }$, we have 
\begin{align*}
& \cR_{\btheta}(\hat\Upsilon_{\rm cp}) - \cR_{\rm opt}(\btheta) \\
\le& C \Delta^3 e^{-\Delta^2/8} \left(\frac{\sum_{m=1}^M d_m R }{n} \right) +  C \Delta e^{-\Delta^2/8} \frac{w_1^2 R \max_{1\le m\le M}d_m^2}{\Delta^2 nd  } + C \Delta^3 e^{-\Delta^2/8} \left(\frac{w_1^2 R \max_{1\le m\le M}d_m^2}{\Delta^2 nd} \right)^2 \\
\le& C \Delta^3 e^{-\Delta^2/8} \left(\frac{\sum_{m=1}^M d_m R }{n}  +\frac{w_1^2 R \max_{1\le m\le M}d_m^2}{\Delta^2 nd} \right)\\
=& C \exp\left(-\left(\frac18-\frac{3\log(\Delta)}{\Delta^2} \right)\Delta^2\right) \left(\frac{\sum_{m=1}^M d_m R }{n}  +\frac{w_1^2 R \max_{1\le m\le M}d_m^2}{\Delta^2 nd} \right),
\end{align*}
where $3\log(\Delta)/\Delta^2$ is an $o(1)$ term as $n\to\infty$.

As $\Delta^2\asymp w_1^2+\cdots +w_R^2\gg w_1^2 (d_{\max}/d)$, we have
\begin{align*}
\frac{w_1^2 R \max_{1\le m\le M}d_m^2}{\Delta^2 nd} \ll   \frac{\sum_{m=1}^M d_m R }{n}.  
\end{align*}
That is, the second part in the excess misclassification rate, which comes from the estimation accuracy of the mode-$m$ precision matrix is negligible.

\subsection{Proof of Theorem \ref{thm:class-lower-bound}}

Note that the proof is not straightforward, partly because the excess risk $\cR_{\btheta}(\hat\Upsilon_{\rm cp}) -\cR_{\rm opt}(\btheta)$ does not satisfy the triangle inequality required by standard lower bound techniques. A crucial approach in this context is establishing a connection to an alternative risk function.
For a general classification rule $\Upsilon$, we define $L_{\btheta}(\Upsilon)=\PP_{\btheta}(\Upsilon(\cZ) \neq \Upsilon_{\theta}(\cZ))$, where $\Upsilon_{\theta}(\cZ)$ is the Fisher’s linear discriminant rule introduced in (\ref{eqn:lda-rule}). Lemma \ref{lemma:the first reduction} 
allows us to transform the excess risk $\cR_{\btheta}(\hat\Upsilon_{\rm cp}) -\cR_{\rm opt}(\btheta)$ into the risk function $L_{\btheta}(\hat \Upsilon_{\rm cp})$, as shown below:
\begin{equation} \label{eqn:loss function reduction}
\cR_{\btheta}(\hat\Upsilon_{\rm cp}) -\cR_{\rm opt}(\btheta) \ge \frac{\sqrt{2\pi}\Delta}{8} e^{\Delta^2/8} \cdot L_{\theta}^2(\hat \Upsilon_{\rm cp}).    
\end{equation}
We then apply Lemma \ref{lemma:Tsybakov variant} to derive the minimax lower bound for the risk function $L_{\btheta}(\hat \Upsilon_{\rm cp})$.

We carefully construct a finite collection of subsets of the parameter space $\calH$ that characterizes the hardness of the problem. 
Any $M$-th order tensor $\cM \in \mathbb{R}^{d_1 \times \cdots \times d_M}$ with CP rank $R$ can be expressed as $\cM= \cF \times_{m=1}^M \bA_m$. Here, the latent core tensor $\cF=\diag(w_1,...,w_R)$ is a diagonal tensor of dimensions $R \times \cdots \times R$, i.e. $\cF_{i,...,i}$ is non zero for all $i=1,..,R$, and all the other elements of $\cF$ are zero.
Denote the loading matrices $\bA_m=(\ba_{1m},...,\ba_{Rm}) \in \mathbb{R}^{d_m \times R}$ for each mode $m=1,\ldots,M$. 

First, let $\bA_m$ be a fixed matrix where the $(i,i)$-th elements, $i=1,...,R$, are set to one and all other elements are zero. Denote 
$\bA=\bA_M\otimes \bA_{M-1}\otimes \cdots\otimes \bA_1$. According to basic tensor algebra, this setup implies that 
$\vect(\cM) = \bA \vect(\cF)$ and $\|\cM\|_{\rm F} = \|\vect(\cF)\|_2$. Let $\be_1$ be the basis vector in the standard Euclidean space whose first entry is 1 and 0 elsewhere, and $\bI_{d}=[\bI_{d_m}]_{m=1}^M$.
Define the following parameter space
\begin{align*}
\cH_0 =& \big\{ \theta = (\cM_1, \; \cM_2, \; \bI_{d}) : \; \cM_1 = \cF \times_{m=1}^M \bA_m, \; \cM_2= -\cM_1; 
\ \vect(\cF)=\epsilon \bff+ \lambda \be_1, \bff \in \{ 0,1\}^R, \bff^\top \be_1=0 \big\},
\end{align*}
where $\epsilon=c/\sqrt{n}$, $c=O(1)$ and $\lambda$ is chosen to ensure that $\theta\in\cH$ such that
\begin{align*}
\Delta=(\cM_2-\cM_1)^\top\bSigma^{-1} (\cM_2 -\cM_1) = 4 \| \epsilon \bff+ \lambda \be_1\|_2^2 = 4 \epsilon^2 \| \bff\|_2^2 + 4 \lambda^2. 
\end{align*}
In addition to $\cH_0$, we also define $\bA_{\ell}$, for $\ell\neq m$, as fixed matrices where the $(i,i)$-th elements, $i=1,...,R$, are set to one and all other elements are zero. Let $\cF$ be a diagonal tensor such that the $(i,...,i)$-th elements, $i=1,..., R$, are set to one, and all other elements are zero. It implies that $\|\cM\|_{\rm F} = \| \bA_m\|_{\rm F}$. For $m=1,...,M$, define the following parameter spaces 
\begin{align*}
\cH_m = \big\{& \theta = (\cM_1, \; \cM_2, \; \bI_{d}) : \; \cM_1 = \cF \times_{k=1}^M \bA_k, \; \cM_2= -\cM_1; 
\ \vect(\bA_m)=\epsilon \bg_m+ \lambda_m \be_1, \bg_m \in \{ 0,1\}^{d_m R},\\
&\bg_m^\top \be_1=0 \big\},
\end{align*}
where $\epsilon=c/\sqrt{n}$, $c=O(1)$ and $\lambda_m$ is chosen to ensure that $\theta\in\cH$ such that
\begin{align*}
\Delta=(\cM_2-\cM_1)^\top\bSigma^{-1} (\cM_2 -\cM_1) = 4 \| \epsilon \bg_m+ \lambda_m \be_1\|_2^2 = 4 \epsilon^2 \| \bg_m\|_2^2 + 4 \lambda_m^2. 
\end{align*}
It is clear that $\cap_{\ell=0}^M \cH_{\ell} \subset \cH$. We shall show below separately for the minimax risks over each parameter space $\cH_{\ell}$.

First, consider $\cH_0$. By Lemma \ref{lemma:Varshamov-Gilbert Bound}, we can construct a sequence of $R$-dimensional vectors $\bff_1, \ldots ,\bff_N \in \{0,1\}^R$, such that $\bff_{i}^\top \be_1=0$, $\rho_H(\bff_i, \bff_j) \ge R/8, \; \forall 0 \le i < j \le N$, and $R \le (8/\log 2)\log N$, where $\rho_H$ denotes the Hamming distance.
To apply Lemma \ref{lemma:Tsybakov variant}, for $\forall \theta_{\bu}, \; \theta_{\bv} \in \cH_0, \; \theta_{\bu} \neq \theta_{\bv},$ we need to verify two conditions:  
\begin{enumerate}
\item[(i)] the upper bound on the Kullback-Leibler divergence between $\PP_{\theta_{\bu}}$ and $\PP_{\theta_{\bv}}$, and
\item[(ii)] the lower bound of $L_{\theta_{\bu}}(\hat \Upsilon_{\rm cp}) + L_{\theta_{\bv}}(\hat \Upsilon_{\rm cp})$ for $\bu \neq \bv$ and $\bu^\top \be_1=0, \bv^\top \be_1=0$.
\end{enumerate}

We calculate the Kullback-Leibler divergence first. For $\bff_{\bu} \in\{ 0,1\}^R$ and $\bff_{\bu}^\top \be_1=0$, define
\begin{align*}
\cM_{\bu} = \cF_{\bu} \times_{m=1}^M \bA_m, \; \vect(\cF_{\bu})= \epsilon \bff_{\bu} + \lambda \be_1, \; \theta_{\bu} = \big(\cM_{\bu}, \; -\cM_{\bu}, \; \bI_d \big) \in \cH_0 .   
\end{align*}
and consider the distribution $\cT\cN(\cM_{\bu}, \; \bI_d)$. 
Then the Kullback-Leibler divergence between $\PP_{\theta_{\bu}}$ and $\PP_{\theta_{\bv}}$ can be bounded by
\begin{align*}
    {\rm KL}(\PP_{\theta_{\bu}}, \PP_{\theta_{\bv}}) &= \frac{1}{2} \norm{\vect(\cM_{\bu}) - \vect(\cM_{\bv})}_2^2 = \frac{1}{2}\norm{\vect(\cF_{\bu}) - \vect(\cF_{\bv})}_2^2 \le \frac{c^2 R}{2n}  .
\end{align*}
In addition, by applying Lemma \ref{lemma:probability inequality}, we have that for any $\bff_{\bu}, \bff_{\bv} \in\{ 0,1\}^R$,
\begin{align*}
    L_{\theta_{\bu}}(\hat \Upsilon_{\rm cp}) + L_{\theta_{\bv}}(\hat \Upsilon_{\rm cp}) &\ge \frac{1}{\Delta} e^{-\Delta^2/8} \cdot \norm{\vect(\cF_{\bu}) - \vect(\cF_{\bv})}_2 \\
    &\ge \frac{1}{\Delta} e^{-\Delta^2/8} \sqrt{\frac{R}{8} \cdot \frac{c^2}{n}  } \\
    &\gtrsim \frac{1}{\Delta} e^{-\Delta^2/8} \sqrt{\frac{R}{n}} .
\end{align*}
So far we have verified the aforementioned conditions (i) and (ii). Lemma \ref{lemma:Tsybakov variant} immediately implies that, there exists some constant $C_{\gamma} > 0$, such that 
\begin{equation}\label{eqn:inter1}
\inf_{\hat \Upsilon_{\rm cp}} \sup_{\theta \in \calH_0} \PP\left(L_{\theta}(\hat \Upsilon_{\rm cp}) \ge C_{\gamma} \frac{1}{\Delta} e^{-\Delta^2/8} \sqrt{ \frac{R}{n} } \right) \ge 1-\gamma
\end{equation}
Combining \eqref{eqn:inter1} and \eqref{eqn:loss function reduction}, we have
\begin{equation}\label{eqn:inter2}
\inf_{\hat \Upsilon_{\rm cp}} \sup_{\theta \in \calH_0} \PP\left(\cR_{\btheta}(\hat\Upsilon_{\rm cp}) -\cR_{\rm opt}(\btheta) \ge C_{\gamma} \frac{1}{\Delta} e^{-\Delta^2/8} \cdot \frac{R}{n}  \right) \ge 1-\gamma   .
\end{equation}

Similarly, for each $\cH_m$, $m=1,...,M$, we can obtain that,  there exists some constant $C_{\gamma} > 0$, such that 
\begin{equation}\label{eqn:inter3}
\inf_{\hat \Upsilon_{\rm cp}} \sup_{\theta \in \calH_m} \PP\left(\cR_{\btheta}(\hat\Upsilon_{\rm cp}) -\cR_{\rm opt}(\btheta) \ge C_{\gamma} \frac{1}{\Delta} e^{-\Delta^2/8} \cdot \frac{d_m R }{n}  \right) \ge 1-\gamma   .
\end{equation}

Finally combining \eqref{eqn:inter2} and \eqref{eqn:inter3}, we obtain the desired lower bound for the excess misclassficiation error
\begin{equation}\label{eqn:mis}
\inf_{\hat \Upsilon_{\rm cp}} \sup_{\theta \in \calH} \PP\left(\cR_{\btheta}(\hat\Upsilon_{\rm cp}) -\cR_{\rm opt}(\btheta) \ge C_{\gamma} \frac{1}{\Delta} e^{-\Delta^2/8} \cdot \frac{\sum_{m=1}^M d_m R + R}{n}  \right) \ge 1-\gamma   .
\end{equation}

This implies that if $c_1<\Delta \le c_2$ for some $c_1,c_2>0$, we have 
\begin{align*}
\inf_{\hat \Upsilon_{\rm cp}} \sup_{\theta \in \calH} \PP\left(\cR_{\btheta}(\hat\Upsilon_{\rm cp}) -\cR_{\rm opt}(\btheta) \ge C_{\gamma} \cdot \frac{\sum_{m=1}^M d_m R }{n}  \right) \ge 1-\gamma   .    
\end{align*}
On the other hand, if $\Delta\to\infty$ as $n\to \infty$, then for any $\vartheta>0$
\begin{align*}
\inf_{\hat \Upsilon_{\rm cp}} \sup_{\theta \in \calH} \PP\left(\cR_{\btheta}(\hat\Upsilon_{\rm cp}) -\cR_{\rm opt}(\btheta) \ge C_{\gamma} \exp\left\{-\left(\frac18+\vartheta\right)\Delta^2 \right\} \frac{\sum_{m=1}^M d_m R }{n}  \right) \ge 1-\gamma   .    
\end{align*}



\section{Proofs for Section \ref{sec:Tensor LDA-TNN}}

\subsection{Proof of Proposition \ref{prop:log-likelihood}}
\label{app:proof1}
We derive the log-likelihood in \eqref{eqn:objective function} and establish the corresponding population objective.  By convention, maximum likelihood estimation minimizes the negative log-likelihood (the loss function). 

\noindent\textbf{Sample log-likelihood.} 
Given a labeled dataset $\{(\mathcal{X}_i, Y_i)\}_{i=1}^{n_0}$ where $\mathcal{X}_i \in \mathbb{R}^{d_1 \times \cdots \times d_M}$ are tensor-valued observations and $Y_i \in [K]$ are class labels, the sample log-likelihood is
$\mathcal{L} = \sum_{i=1}^{n_0} [\log \pi_{Y_i} + \log p_{\mathcal{X}|Y}(\mathcal{X}_i \mid Y_i)]$,
where $\pi_k$ are the class priors and $p_{\mathcal{X}|Y}(\cdot \mid k)$ is the class-conditional density. Since the prior terms $\log \pi_{Y_i}$ do not depend on the flow parameters $\varphi$ or the dataset $\{(\mathcal{X}_i, Y_i)\}_{i=1}^{n_0}$, they can be omitted during optimization.

Fix an observation $i$ and denote $\mathcal{X}^{(0)} = \mathcal{X}_\beta(\mathcal{Z}_i)$ as the encoder output. The Tensor RealNVP flow applies $L$ transformations sequentially: for $\ell = 1, \ldots, L$, define $\mathcal{X}^{(\ell)} = g^{(\ell)}(\mathcal{X}^{(\ell-1)})$, where each $g^{(\ell)}$ consists of a mode-wise linear mixer followed by an affine coupling, as specified in Definition \ref{def:realnvp}. Since the mode-wise mixer $\mathbf{H}_m^{(\ell)} \in \mathbb{R}^{d_m \times d_m}$ are invertible matrices, and the affine coupling uses elementwise exponentiation, each block $g^{(\ell)}$ is a bijection. Therefore, the full flow $g_\varphi = g^{(L)} \circ \cdots \circ g^{(1)}$ is a composition of bijections, with Jacobian given by the chain rule:
\[
\mathbf{J}_{g_{\varphi}}(\mathcal{X}^{(0)}) = \mathbf{J}_{g^{(L)}}(\mathcal{X}^{(L-1)}) \cdots \mathbf{J}_{g^{(2)}}(\mathcal{X}^{(1)}) \mathbf{J}_{g^{(1)}}(\mathcal{X}^{(0)}).
\]
Let $f(\cdot \mid Y = k; \mathcal{M}_k, \bSigma)$ denote the tensor normal density in the latent space, where $\bSigma = [\Sigma_m]_{m=1}^M$ represents the mode-wise covariances. By the change of variables formula for probability densities:
\begin{align*}
p_{\mathcal{X}|Y}(\mathcal{X}^{(0)} \mid k) &= f(g_{\varphi}(\mathcal{X}^{(0)}) \mid Y = k; \mathcal{M}_k, \bSigma) \left| \det \mathbf{J}_{g_{\varphi}}(\mathcal{X}^{(0)}) \right| \\
&= f(\mathcal{X}^{(L)} \mid Y = k; \mathcal{M}_k, \Sigma) \prod_{\ell=1}^{L} \left| \det \mathbf{J}_{g^{(\ell)}}(\mathcal{X}^{(\ell-1)}) \right|,
\end{align*}
where we used the multiplicative property of determinants: $\det(\mathbf{A}\mathbf{B}) = \det(\mathbf{A})\det(\mathbf{B})$.
Taking logarithms and summing over all observations $i = 1, \ldots, n_0$ yields the sample loss:
\begin{align*}
\mathcal{L}_{n_0, \text{Flow}}(\varphi) = -\sum_{i=1}^{n_0} \left[ \log f(\mathcal{X}_i^{(L)} \mid Y = Y_i; \widehat{\mathcal{M}}_{Y_i}, \widehat{\Sigma}) + \sum_{\ell=1}^{L} \log \left| \det \mathbf{J}_{g^{(\ell)}}(\mathcal{X}_i^{(\ell-1)}) \right| \right], 
\end{align*}
where $\mathcal{X}_i^{(\ell)} = g^{(\ell)}(\mathcal{X}_i^{(\ell-1)})$ for $\ell = 1, \ldots, L$ with $\mathcal{X}_i^{(0)} = \mathcal{X}_\beta(\mathcal{Z}_i)$, and $\widehat{\mathcal{M}}_k, \widehat{\Sigma}$ are computed from the latent features $\{\mathcal{X}_i^{(L)}\}_{i=1}^{n_0}$. This establishes \eqref{eqn:objective function}.

\noindent\textbf{Population loss.}
Let $\mathbb{P}_{\beta,k}$ denote the unknown distribution of encoder features $\mathcal{X}_\beta(\mathcal{Z}_i)$ conditioned on label $k$ with prior $\pi_k$. The population loss is obtained by replacing the empirical average with expectation and using population parameters:
\begin{align}
\mathcal{L}^*_{\text{Flow}}(\varphi) = -\sum_{k=1}^{K} \pi_k \mathbb{E}_{\mathcal{X} \sim \mathbb{P}_{\beta, k}} \left[ \log f(g_{\varphi}(\mathcal{X}) \mid Y = k; \mathcal{M}_k, \bSigma) + \sum_{\ell=1}^{L} \log \left| \det \mathbf{J}_{g^{(\ell)}}(\mathcal{X}^{(\ell-1)}) \right| \right], \label{eq:population_loss}
\end{align}
where $\mathcal{M}_k = \mathbb{E}[\mathcal{X}^{(L)} \mid Y = k]$ and $\bSigma$ are the population class means and mode-wise covariances.

\noindent\textbf{Log-Jacobian of a Tensor RealNVP block.}
We now derive the log-Jacobian formula \eqref{eqn:Jacobian} for a single flow block $g^{(\ell)}$. We suppress the superscript $\ell$ for notational clarity. By Definition \ref{def:realnvp}, each block $g$ consists of two components applied sequentially: (i) a mode-wise linear mixer, and (ii) an affine coupling based on a binary mask. The mixer transforms $\mathcal{X}^{(\ell-1)}$ to $\widetilde{\mathcal{X}}^{(\ell)} := \mathcal{X}^{(\ell-1)} \times_{m=1}^{M} \mathbf{H}_m^{(\ell)}$, where each $\mathbf{H}_m^{(\ell)} \in \mathbb{R}^{d_m \times d_m}$ is an invertible mode-mixer matrix. Under vectorization, this multilinear operation becomes
\[
\text{vec}(\widetilde{\mathcal{X}}^{(\ell)}) = (\mathbf{H}_M^{(\ell)} \otimes \cdots \otimes \mathbf{H}_1^{(\ell)}) \text{vec}(\mathcal{X}^{(\ell-1)}),
\]
where $\otimes$ denotes the Kronecker product. Therefore, the Jacobian of the mixer is
\[
\mathbf{J}_{\text{mix}} = \frac{\partial \;\text{vec}(\widetilde{\mathcal{X}}^{(\ell)})}{\partial \;\text{vec}(\mathcal{X}^{(\ell-1)})} = \mathbf{H}_M^{(\ell)} \otimes \cdots \otimes \mathbf{H}_1^{(\ell)}.
\]
Using the Kronecker product determinant formula, we obtain $\det \mathbf{J}_{\text{mix}} = \prod_{m=1}^{M} (\det \mathbf{H}_m^{(\ell)})^{d/d_m}$, where $d = \prod_{m=1}^M d_m$. Since each $\mathbf{H}_m^{(\ell)}$ is orthogonal, taking logarithms yields
\[
\log \left| \det \mathbf{J}_{\text{mix}} \right| = \sum_{m=1}^{M} \frac{d}{d_m} \log \left| \det \mathbf{H}_m^{(\ell)} \right|=0.
\]
Next, we analyze the affine coupling transformation. By Definition \ref{def:realnvp}, the affine coupling uses a pre-specified binary mask $\mathcal{K}^{(\ell)} \in \{0,1\}^{d_1 \times \cdots \times d_M}$ to partition the indices into two disjoint sets:
\[
\mathcal{A}_1^{(\ell)} = \{\mathbf{i} = (i_1, \ldots, i_M) : \mathcal{K}_{\mathbf{i}}^{(\ell)} = 1\}, \quad \mathcal{A}_0^{(\ell)} = \{\mathbf{i} : \mathcal{K}_{\mathbf{i}}^{(\ell)} = 0\}.
\]
The coupling transformation is defined as
\[
\mathcal{Y}_{\mathcal{A}_1^{(\ell)}} = \widetilde{\mathcal{X}}_{\mathcal{A}_1^{(\ell)}}^{(\ell)}, \quad
\mathcal{Y}_{\mathcal{A}_0^{(\ell)}} = \widetilde{\mathcal{X}}_{\mathcal{A}_0^{(\ell)}}^{(\ell)} \odot \exp\left(v^{(\ell)}\left(\widetilde{\mathcal{X}}_{\mathcal{A}_1^{(\ell)}}^{(\ell)}\right)\right) + t^{(\ell)}\left(\widetilde{\mathcal{X}}_{\mathcal{A}_1^{(\ell)}}^{(\ell)}\right),
\]
where $v^{(\ell)}, t^{(\ell)}$ are tensor-valued conditioner networks, $\odot$ denotes elementwise multiplication, and $\exp(\cdot)$ is applied elementwise. Ordering coordinates as $(\mathcal{A}_1^{(\ell)}, \mathcal{A}_0^{(\ell)})$ in both input and output, the Jacobian matrix has a block lower-triangular structure:
\[
\mathbf{J}_{\text{cpl}} = \begin{bmatrix} 
\mathbf{I}_{|\mathcal{A}_1^{(\ell)}|} & 0 \\[0.5em]
* & \text{Diag}\left(\exp\left(v^{(\ell)}\left(\widetilde{\mathcal{X}}_{\mathcal{A}_1^{(\ell)}}^{(\ell)}\right)\right)\right)
\end{bmatrix},
\]
where the upper-left block is identity since $\mathcal{Y}_{\mathcal{A}_1^{(\ell)}} = \widetilde{\mathcal{X}}_{\mathcal{A}_1^{(\ell)}}^{(\ell)}$; the upper-right block is zero since $\mathcal{Y}_{\mathcal{A}_1^{(\ell)}}$ does not depend on $\widetilde{\mathcal{X}}_{\mathcal{A}_0^{(\ell)}}^{(\ell)}$; the lower-right diagonal block comes from the elementwise scaling $\widetilde{\mathcal{X}}_{\mathcal{A}_0^{(\ell)}}^{(\ell)} \odot \exp(v^{(\ell)}(\cdot))$; and the lower-left block ($*$) contains derivatives through both $v^{(\ell)}$ and $t^{(\ell)}$, which do not affect the determinant. For a block-triangular matrix, the determinant is the product of diagonal block determinants:
\[
\det \mathbf{J}_{\text{cpl}} = \prod_{\mathbf{j} \in \mathcal{A}_0^{(\ell)}} \exp\left(v^{(\ell)}\left(\widetilde{\mathcal{X}}_{\mathcal{A}_1^{(\ell)}}^{(\ell)}\right)_{\mathbf{j}}\right) = \exp\left(\sum_{\mathbf{j} \in \mathcal{A}_0^{(\ell)}} v^{(\ell)}\left(\widetilde{\mathcal{X}}_{\mathcal{A}_1^{(\ell)}}^{(\ell)}\right)_{\mathbf{j}}\right).
\]
Taking logarithms: $\log \left| \det \mathbf{J}_{\text{cpl}} \right| = \sum_{\mathbf{j} \in \mathcal{A}_0^{(\ell)}} v^{(\ell)}\left(\widetilde{\mathcal{X}}_{\mathcal{A}_1^{(\ell)}}^{(\ell)}\right)_{\mathbf{j}}$. Since $g^{(\ell)} = $ (affine coupling) $\circ$ (mode-wise mixer), by the chain rule:
\[
\log \left| \det \mathbf{J}_{g^{(\ell)}}(\mathcal{X}^{(\ell-1)}) \right| = \log \left| \det \mathbf{J}_{\text{cpl}} \right| + \log \left| \det \mathbf{J}_{\text{mix}} \right|.
\]
Plugging in the expressions for $\log |\det \mathbf{J}_{\text{mix}}|$ and $\log |\det \mathbf{J}_{\text{cpl}}|$ establishes the log-Jacobian formula \eqref{eqn:Jacobian}.

\subsection{Proof of Theorem \ref{thm:semi-risk}}
\label{append:proof2}
We proceed in three steps: distributional mismatch, CP structural bias under the surrogate distribution, and finite‑sample estimation of the oracle CP rank-$R$ discriminant under the surrogate.

\noindent \textbf{Step 1. Distributional mismatch.} 
We first define the weighted total variation distance. For two probability measures $\mathbb{P}$ and $\mathbb{Q}$ with densities $p$ and $q$ with respect to a common dominating measure, the total variation distance is
\[
\text{TV}(\mathbb{P}, \mathbb{Q}) = \frac{1}{2}\int |p(z) - q(z)| \, dz = \frac{1}{2}\|p - q\|_1.
\]
We define the weighted total variation between the true latent distributions and their TGMM surrogates as
\[
\text{TV}_{\hat\beta,\hat\varphi} := \pi_1 \text{TV}(\mathbb{P}_{\hat\beta,\hat\varphi,1}, \mathbb{Q}_{\hat\beta,\hat\varphi,1}) + \pi_2 \text{TV}(\mathbb{P}_{\hat\beta,\hat\varphi,2}, \mathbb{Q}_{\hat\beta,\hat\varphi,2}).
\]
For a classifier $\Upsilon : \mathbb{R}^{d_1 \times \cdots \times d_M} \to \{1,2\}$ with decision region $\mathcal{R}_1 = \{\mathcal{X} : \Upsilon(\mathcal{X}) = 1\}$, the misclassification risk under $\mathbb{P}_{\hat\beta,\hat\varphi}$ is
\[
\mathcal{R}_{\mathbb{P}_{\hat\beta,\hat\varphi}}(\Upsilon) = \pi_1 \mathbb{P}_{\hat\beta,\hat\varphi,1}(\mathcal{R}_2) + \pi_2 \mathbb{P}_{\hat\beta,\hat\varphi,2}(\mathcal{R}_1),
\]
where $\mathcal{R}_2 = \mathcal{R}_1^c$ is the complement. The Bayes-optimal risk is $\mathcal{R}_{\mathbb{P}_{\hat\beta,\hat\varphi}}^* = \inf_\Upsilon \mathcal{R}_{\mathbb{P}_{\hat\beta,\hat\varphi}}(\Upsilon)$. Similarly, we define $\mathcal{R}_{\mathbb{Q}_{\hat\beta,\hat\varphi}}(\Upsilon)$ and $\mathcal{R}_{\mathbb{Q}_{\hat\beta,\hat\varphi}}^*$ under the surrogate distribution. We establish two key properties relating risks under different distributions:

\textbf{Claim 1.} For any fixed classifier $\Upsilon$, $|\mathcal{R}_{\mathbb{P}_{\hat\beta,\hat\varphi}}(\Upsilon) - \mathcal{R}_{\mathbb{Q}_{\hat\beta,\hat\varphi}}(\Upsilon)| \leq \text{TV}_{\hat\beta,\hat\varphi}$.

\textbf{Claim 2.} For the Bayes-optimal risks, $|\mathcal{R}_{\mathbb{P}_{\hat\beta,\hat\varphi}}^* - \mathcal{R}_{\mathbb{Q}_{\hat\beta,\hat\varphi}}^*| \leq \text{TV}_{\hat\beta,\hat\varphi}$.

\noindent Using these claims, for the estimated classifier $\widehat{\Upsilon}_{\hat\beta,\hat\varphi}$, we decompose the excess risk:
\begin{align}
\small
\label{eqn:distributional mismatch}
\mathcal{R}_{\mathbb{P}_{\hat\beta,\hat\varphi}}(\widehat{\Upsilon}_{\hat\beta,\hat\varphi}) - \mathcal{R}_{\mathbb{P}_{\hat\beta,\hat\varphi}}^* &= \left(\mathcal{R}_{\mathbb{P}_{\hat\beta,\hat\varphi}}(\widehat{\Upsilon}_{\hat\beta,\hat\varphi}) - \mathcal{R}_{\mathbb{Q}_{\hat\beta,\hat\varphi}}(\widehat{\Upsilon}_{\hat\beta,\hat\varphi})\right) + \left(\mathcal{R}_{\mathbb{Q}_{\hat\beta,\hat\varphi}}(\widehat{\Upsilon}_{\hat\beta,\hat\varphi}) - \mathcal{R}_{\mathbb{Q}_{\hat\beta,\hat\varphi}}^*\right) \notag + \left(\mathcal{R}_{\mathbb{Q}_{\hat\beta,\hat\varphi}}^* - \mathcal{R}_{\mathbb{P}_{\hat\beta,\hat\varphi}}^*\right) \notag \\
&\leq \left(\mathcal{R}_{\mathbb{Q}_{\hat\beta,\hat\varphi}}(\widehat{\Upsilon}_{\hat\beta,\hat\varphi}) - \mathcal{R}_{\mathbb{Q}_{\hat\beta,\hat\varphi}}^*\right) + 2\text{TV}_{\hat\beta,\hat\varphi}.
\end{align}
\textit{Proof of Claim 1.} Starting from the definition and splitting by classes:
\begin{align}
\label{eqn:claim1}
\mathcal{R}_{\mathbb{P}_{\hat\beta,\hat\varphi}}(\Upsilon) - \mathcal{R}_{\mathbb{Q}_{\hat\beta,\hat\varphi}}(\Upsilon) &= \pi_1 \left(\mathbb{P}_{\hat\beta,\hat\varphi,1}(\mathcal{R}_2) - \mathbb{Q}_{\hat\beta,\hat\varphi,1}(\mathcal{R}_2)\right) + \pi_2 \left(\mathbb{P}_{\hat\beta,\hat\varphi,2}(\mathcal{R}_1) - \mathbb{Q}_{\hat\beta,\hat\varphi,2}(\mathcal{R}_1)\right) \notag \\
&\leq \pi_1 |\mathbb{P}_{\hat\beta,\hat\varphi,1}(\mathcal{R}_2) - \mathbb{Q}_{\hat\beta,\hat\varphi,1}(\mathcal{R}_2)| + \pi_2 |\mathbb{P}_{\hat\beta,\hat\varphi,2}(\mathcal{R}_1) - \mathbb{Q}_{\hat\beta,\hat\varphi,2}(\mathcal{R}_1)|. 
\end{align}
Fix $k \in \{1,2\}$. Let $\mu_k = \mathbb{P}_{\hat\beta,\hat\varphi,k} + \mathbb{Q}_{\hat\beta,\hat\varphi,k}$ be a dominating measure and denote $p_k = \frac{d\mathbb{P}_{\hat\beta,\hat\varphi,k}}{d\mu_k}$, $q_k = \frac{d\mathbb{Q}_{\hat\beta,\hat\varphi,k}}{d\mu_k}$ as the Radon-Nikodym derivatives. For any measurable set $B$,
\[
\mathbb{P}_{\hat\beta,\hat\varphi,k}(B) - \mathbb{Q}_{\hat\beta,\hat\varphi,k}(B) = \int_B (p_k - q_k) \, d\mu_k.
\]
By the triangle inequality:
\[
|\mathbb{P}_{\hat\beta,\hat\varphi,k}(B) - \mathbb{Q}_{\hat\beta,\hat\varphi,k}(B)| \leq \int_B |p_k - q_k| \, d\mu_k \leq \int |p_k - q_k| \, d\mu_k = 2\text{TV}(\mathbb{P}_{\hat\beta,\hat\varphi,k}, \mathbb{Q}_{\hat\beta,\hat\varphi,k}).
\]
Since $\mathcal{R}_1$ and $\mathcal{R}_2$ are complementary, $|\mathbb{P}_{\hat\beta,\hat\varphi,k}(\mathcal{R}_1) - \mathbb{Q}_{\hat\beta,\hat\varphi,k}(\mathcal{R}_1)| = |\mathbb{P}_{\hat\beta,\hat\varphi,k}(\mathcal{R}_2) - \mathbb{Q}_{\hat\beta,\hat\varphi,k}(\mathcal{R}_2)|$. Therefore,
\[
|\mathbb{P}_{\hat\beta,\hat\varphi,k}(\mathcal{R}_1) - \mathbb{Q}_{\hat\beta,\hat\varphi,k}(\mathcal{R}_1)| \leq \text{TV}(\mathbb{P}_{\hat\beta,\hat\varphi,k}, \mathbb{Q}_{\hat\beta,\hat\varphi,k}).
\]
Substituting into \eqref{eqn:claim1} establishes Claim 1.

\noindent \textit{Proof of Claim 2.} Let $p_k$ and $q_k$ denote the densities of $\mathbb{P}_{\hat\beta,\hat\varphi,k}$ and $\mathbb{Q}_{\hat\beta,\hat\varphi,k}$ with respect to a common dominating measure $\mu$. For any decision region $A$,
\[
\mathcal{R}_{\mathbb{P}_{\hat\beta,\hat\varphi}}(A) = \int \left[\mathbf{1}_A(z) \pi_1 p_1(z) + \mathbf{1}_{A^c}(z) \pi_2 p_2(z)\right] d\mu(z).
\]
The Bayes-optimal classifier chooses $A^* = \{z : \pi_1 p_1(z) \geq \pi_2 p_2(z)\}$, yielding
\[
\mathcal{R}_{\mathbb{P}_{\hat\beta,\hat\varphi}}^* = \int \min\{\pi_1 p_1(z), \pi_2 p_2(z)\} \, d\mu(z).
\]
Using the identity $\min\{u, v\} = \frac{1}{2}(u + v - |u - v|)$ and noting that $\int \pi_k p_k \, d\mu = \pi_k$:
\[
\mathcal{R}_{\mathbb{P}_{\hat\beta,\hat\varphi}}^* = \frac{1}{2}\left(1 - \|\pi_1 p_1 - \pi_2 p_2\|_1\right).
\]
Similarly, $\mathcal{R}_{\mathbb{Q}_{\hat\beta,\hat\varphi}}^* = \frac{1}{2}\left(1 - \|\pi_1 q_1 - \pi_2 q_2\|_1\right)$. Therefore,
\begin{align*}
|\mathcal{R}_{\mathbb{P}_{\hat\beta,\hat\varphi}}^* - \mathcal{R}_{\mathbb{Q}_{\hat\beta,\hat\varphi}}^*| &= \frac{1}{2} \left| \|\pi_1 p_1 - \pi_2 p_2\|_1 - \|\pi_1 q_1 - \pi_2 q_2\|_1 \right| \\
&\leq \frac{1}{2} \|(\pi_1 p_1 - \pi_2 p_2) - (\pi_1 q_1 - \pi_2 q_2)\|_1 \quad \text{(reverse triangle inequality)} \\
&= \frac{1}{2} \|\pi_1(p_1 - q_1) - \pi_2(p_2 - q_2)\|_1 \\
&\leq \frac{1}{2} \left(\pi_1 \|p_1 - q_1\|_1 + \pi_2 \|p_2 - q_2\|_1\right) \quad \text{(triangle inequality)} \\
&= \pi_1 \text{TV}(\mathbb{P}_{\hat\beta,\hat\varphi,1}, \mathbb{Q}_{\hat\beta,\hat\varphi,1}) + \pi_2 \text{TV}(\mathbb{P}_{\hat\beta,\hat\varphi,2}, \mathbb{Q}_{\hat\beta,\hat\varphi,2}) = \text{TV}_{\hat\beta,\hat\varphi}.
\end{align*}
This establishes Claim 2 and completes the proof of inequality \eqref{eqn:distributional mismatch}.

\vspace{1em}
\noindent\textit{Bounding the total variation distance.} The preceding arguments establish inequality \eqref{eqn:distributional mismatch}, which decomposes the excess risk into two components: (i) the excess risk of $\widehat{\Upsilon}_{\hat\beta,\hat\varphi}$ under the surrogate TGMM distribution $\mathbb{Q}_{\hat\beta,\hat\varphi}$, and (ii) the distributional mismatch term $2\text{TV}_{\hat\beta,\hat\varphi}$. We now bound the total variation distance by relating it to the FlowTGMM training objectives. 

Applying Pinsker's inequality class-wise and then the Cauchy-Schwarz inequality yields:
\begin{align*}
\text{TV}_{\hat\beta,\hat\varphi} &= \pi_1 \text{TV}(\mathbb{P}_{\hat\beta,\hat\varphi,1}, \mathbb{Q}_{\hat\beta,\hat\varphi,1}) + \pi_2 \text{TV}(\mathbb{P}_{\hat\beta,\hat\varphi,2}, \mathbb{Q}_{\hat\beta,\hat\varphi,2}) \\
&\leq \pi_1 \sqrt{\frac{1}{2}\text{KL}(\mathbb{P}_{\hat\beta,\hat\varphi,1} \| \mathbb{Q}_{\hat\beta,\hat\varphi,1})} + \pi_2 \sqrt{\frac{1}{2}\text{KL}(\mathbb{P}_{\hat\beta,\hat\varphi,2} \| \mathbb{Q}_{\hat\beta,\hat\varphi,2})} \\
&\leq \sqrt{\frac{1}{2} \left(\pi_1 \text{KL}(\mathbb{P}_{\hat\beta,\hat\varphi,1} \| \mathbb{Q}_{\hat\beta,\hat\varphi,1}) + \pi_2 \text{KL}(\mathbb{P}_{\hat\beta,\hat\varphi,2} \| \mathbb{Q}_{\hat\beta,\hat\varphi,2})\right)},
\end{align*}
where $\text{KL}(\cdot \| \cdot)$ denotes the Kullback-Leibler divergence. Thus, controlling the weighted KL sum between the class-conditional latent distributions and their TGMM surrogates suffices to bound the total variation.

Let $p_{\hat\beta,\hat\varphi,k}$ and $q_{\hat\beta,\hat\varphi,k}$ denote the densities of $\mathbb{P}_{\hat\beta,\hat\varphi,k}$ and $\mathbb{Q}_{\hat\beta,\hat\varphi,k}$, respectively, where $\mathbb{P}_{\hat\beta,\hat\varphi,k}$ is the distribution of latent features $\mathcal{X}^{(L)} = g_{\hat\varphi}(\mathcal{X}_\beta(\mathcal{Z}))$ for class $k$. Let $\mathbb{P}_{\beta,k}$ denote the distribution of encoder outputs $\mathcal{X}_\beta(\mathcal{Z})$ before applying the flow. By the change of variables formula and the invertibility of $g_\varphi$:
\begin{align*}
\text{KL}(\mathbb{P}_{\hat\beta,\hat\varphi,k} \| \mathbb{Q}_{\hat\beta,\hat\varphi,k}) &= \int p_{\hat\beta,\hat\varphi,k}(\mathcal{X}^{(L)}) \log \frac{p_{\hat\beta,\hat\varphi,k}(\mathcal{X}^{(L)})}{q_{\hat\beta,\hat\varphi,k}(\mathcal{X}^{(L)})} \, d\mathcal{X}^{(L)} \\
&= \mathbb{E}_{\mathcal{X} \sim \mathbb{P}_{\beta,k}} \left[ \log p_{\beta,k}(\mathcal{X}) - \log \left| \det \mathbf{J}_{g_{\hat\varphi}}(\mathcal{X}) \right| - \log q_{\hat\beta,\hat\varphi,k}(g_{\hat\varphi}(\mathcal{X})) \right],
\end{align*}
where $p_{\beta,k}$ is the density of the encoder features before the flow transformation. Weighting by class priors and summing over $k \in \{1,2\}$:
\begin{align}
\label{eq:KL_decomposition}
\sum_{k=1}^{2} \pi_k \text{KL}(\mathbb{P}_{\hat\beta,\hat\varphi,k} \| \mathbb{Q}_{\hat\beta,\hat\varphi,k}) = \underbrace{-\sum_{k=1}^{2} \pi_k \mathbb{E}_{\mathcal{X} \sim \mathbb{P}_{\beta,k}} \left[\log f(g_{\hat\varphi}(\mathcal{X}) \mid Y = k; \mathcal{M}_k, \bSigma) + \log |\det \mathbf{J}_{g_{\hat\varphi}}(\mathcal{X})|\right]}_{\mathcal{L}^*_{\text{Flow}}(\hat\varphi)} - C_\beta,
\end{align}
where $f(\cdot \mid Y = k; \mathcal{M}_k, \bSigma)$ denotes the tensor normal density and $C_\beta = -\sum_{k=1}^{2} \pi_k \mathbb{E}_{\mathcal{X} \sim \mathbb{P}_{\beta,k}}[\log p_{\beta,k}(\mathcal{X})]$ is a constant independent of $\hat\varphi$, $\mathcal{M}_k$, and $\bSigma$. The underbraced term is precisely the population FlowTGMM loss from \eqref{eq:population_loss}.

By Definition \ref{def:realnvp}, the Tensor RealNVP $g_\varphi = g^{(L)} \circ \cdots \circ g^{(1)}$ is a composition of bijective transformations (mode-wise linear mixers and affine couplings), each of which is $C^2$-smooth with $C^2$-smooth inverse. Therefore, $g_\varphi$ is a $C^2$-diffeomorphism. By Theorem 1 of \citet{Teshima2020CouplingINN}, coupling-based normalizing flows are $L^p$-universal approximators for $C^2$-diffeomorphisms. Furthermore, by Corollary 1 of \citet{Teshima2020CouplingINN} (see also Theorem 42 and Corollary 41 of \citet{Ishikawa2023UAPINN}), this $L^p$-universality implies distributional universality: there exist flow parameters $\varphi'$ such that the transformed distribution $\mathbb{P}_{\hat\beta,\varphi',k}$ is arbitrarily close to its moment-matched TGMM surrogate $\mathbb{Q}_{\hat\beta,\varphi',k}$ in KL divergence. This implies
\[
\inf_{\varphi'} \text{KL}(\mathbb{P}_{\hat\beta,\varphi',k} \| \mathbb{Q}_{\hat\beta,\varphi',k}) = 0.
\]
Consequently, from \eqref{eq:KL_decomposition},
\[
\sum_{k=1}^{2} \pi_k \text{KL}(\mathbb{P}_{\hat\beta,\hat\varphi,k} \| \mathbb{Q}_{\hat\beta,\hat\varphi,k}) = \mathcal{L}^*_{\text{Flow}}(\hat\varphi) - \inf_{\varphi'} \mathcal{L}^*_{\text{Flow}}(\varphi'),
\]
since the constant $C_\beta$ cancels when taking the difference. Combining with Pinsker's inequality, the total variation can be bounded by $\text{TV}_{\hat\beta,\hat\varphi} \leq \sqrt{\frac{1}{2}\left(\mathcal{L}^*_{\text{Flow}}(\hat\varphi) - \inf_{\varphi'} \mathcal{L}^*_{\text{Flow}}(\varphi')\right)}$. By Lemma~\ref{lem:flow_rate} under Assumption~\ref{assump:flow_regularity},
with probability at least $1 - n_0^{-1}$, $\mathcal{L}^*_{\text{Flow}}(\hat\varphi)
- \inf_{\varphi'} \mathcal{L}^*_{\text{Flow}}(\varphi')
\;\lesssim\;
\Big(\tfrac{\log n_0}{n_0}\Big)^{1/2}$, and hence $\text{TV}_{\hat\beta,\hat\varphi}
\;\lesssim\;
\Big(\tfrac{\log n_0}{n_0}\Big)^{1/4}$.

~\\
\noindent \textbf{Step 2. CP structural bias under the latent TGMM.} For any tensor $\calB'$, consider the linear score
\[
s_{\calB'}(\calX) := \langle \calX - \calM, \calB' \rangle + \eta, \qquad \eta := \log(\pi_1/\pi_0),
\]
where $\calM = (\calM_0 + \calM_1)/2$. Under the TGMM surrogate $\mathbb{Q}_{\hat\beta,\hat\varphi}$, the distribution of $s_{\calB'}(\calX) \mid Y = k$ is Gaussian with
\[
\mathbb{E}[s_{\calB'}(\calX) \mid Y = 0] = \eta - \frac{1}{2}\langle \calD, \calB' \rangle, \qquad \mathbb{E}[s_{\calB'}(\calX) \mid Y = 1] = \eta + \frac{1}{2}\langle \calD, \calB' \rangle,
\]
and common variance $v(\calB') := \| \calB' \times_{m=1}^M \Sigma_m^{-1/2} \|_F^2 \equiv \|\gamma'\|_F^2, \; \gamma' := \calB' \times_{m=1}^M \Sigma_m^{-1/2}$.
Therefore, with $s(\calB') = \sqrt{v(\calB')} > 0$,
\begin{equation*}
\calR_{\QQ}(\calB') = \pi_0 \phi\left(\frac{\eta}{s(\calB')} - \frac{\langle \calD, \calB' \rangle}{2\,s(\calB')}\right) + \pi_1 \phi\left(-\frac{\eta}{s(\calB')} - \frac{\langle \calD, \calB' \rangle}{2\,s(\calB')}\right).
\end{equation*}
For the Fisher discriminant tensor $\calB := \calD \times_{m=1}^M \Sigma_m^{-1}$, $\Delta := \sqrt{\langle \calD, \calB \rangle} = \| \calB \times_{m=1}^M \Sigma_m^{-1/2} \|_F = s(\calB)$ it reduces to 
\begin{equation*}
\calR_{\QQ}(\calB) = \pi_0 \phi\left(\frac{\eta}{\Delta} - \frac{\Delta}{2}\right) + \pi_1 \phi\left(-\frac{\eta}{\Delta} - \frac{\Delta}{2}\right),
\end{equation*}
where $\phi$ is the CDF of the standard normal. To isolate the structural CP bias, we compare the Bayes rule $\calB$ to the best CP rank-$R$ discriminant at the same score scale:
\[
\mathcal{B}_R \in \arg\max_{\tilde{\mathcal{B}}_R \in \mathcal{C}_R} \langle \mathcal{D}, \tilde{\mathcal{B}}_R \rangle 
\quad \text{s.t.} \quad 
\left\| \tilde{\mathcal{B}}_R \times_{m=1}^M (\Sigma_m)^{1/2} \right\|_F
=
\left\| \mathcal{B} \times_m (\Sigma_m)^{1/2} \right\|_F,
\]
where $\mathcal{C}_R$ denotes the cone of CP rank-$R$ tensors yielding $\mathrm{SNR}$ 
$\Delta_R = \sqrt{\langle \mathcal{D}, \mathcal{B}_R \rangle}$ and efficiency loss 
$\delta_{\mathrm{cp}} := \Delta - \Delta_R \ge 0$. This fixing $s(\calB_R) = \Delta$ makes the variance of $s_{\calB_R}(\calX)$ match that of $s_{\calB}(\calX)$ and converts the risk difference into a function of a single margin parameter: $\mu(\calB') := \frac{\langle \calD, \calB' \rangle}{2\,s(\calB')}$. Write
\[
\mu(\calB) = \frac{\Delta^2}{2\Delta} = \frac{\Delta}{2}, \qquad \mu(\calB_{R,*}) = \frac{\langle \calD, \calB_{R} \rangle}{2\Delta} = \frac{\Delta_R^2}{2\Delta},
\]
where $\Delta_R := \sqrt{\langle \calD, \calB_{R} \rangle} \leq \Delta$. Recall $\delta_{\mathrm{cp}} := \Delta - \Delta_R \geq 0$. Then the difference is
\begin{equation}
\label{eqn:margin difference}
\mu(\calB) - \mu(\calB_{R}) = \frac{\Delta^2 - \Delta_R^2}{2\Delta} = \frac{(\Delta - \Delta_R)(\Delta + \Delta_R)}{2\Delta} \in \left[\frac{\delta_{\text{CP}}}{2}, \delta_{\text{CP}}\right],
\end{equation}
since $0 \leq \Delta_R \leq \Delta$. Fix $\Delta > 0$ and $\eta = \log(\pi_1/\pi_0)$. With the scale constraint $s(\calB_R) = \Delta$, define 
\begin{equation*}
f(\mu) := \pi_0 \phi(a - \mu) + \pi_1 \phi(-a - \mu), \quad a := \frac{\eta}{\Delta}.
\end{equation*}
Then $\calR_{\QQ}(\calB) = f(\mu(\calB)) = f\left(\frac{\Delta}{2}\right), \; \calR_{\QQ}(\calB_{R}) = f(\mu(\calB_{R}))$. Using $\phi'(t) = \phi(t)$ and $\phi''(t) = -t\phi(t)$,
\begin{align*}
f'(\mu) &= -\pi_0 \phi(a - \mu) - \pi_1 \phi(-a - \mu) < 0, \\
f''(\mu) &= -\pi_0 (a - \mu) \phi(a - \mu) - \pi_1 (-a - \mu) \phi(-a - \mu).
\end{align*}
Set $\mu_* := \mu(\calB) = \Delta/2$ and $\mu_R := \mu(\calB_{R}) \leq \mu_*$. By Taylor's theorem,
\begin{equation}
\label{eqn:taylor}
\calR_{\QQ}(\calB_{R}) - \calR_{\QQ}(\calB) = f(\mu_R) - f(\mu_*) = f'(\mu_*)(\mu_R - \mu_*) + \frac{1}{2}f''(\xi)(\mu_R - \mu_*)^2
\end{equation}
for some $\xi$ between $\mu_R$ and $\mu_*$. Since $f'(\mu_*) < 0$ and $\mu_R - \mu_* \leq 0$, the leading term is nonnegative. We now bound the two terms in \eqref{eqn:taylor} in two regimes:

When $\Delta=O(1)$, $\phi(a - \mu)$ and $\phi(-a - \mu)$ are bounded away from $\pm\infty$. Using $\phi(t) \leq (2\pi)^{-1/2}$ and $|a| = |\eta/\Delta| \lesssim_{\pi_0, \pi_1} 1$ for fixed priors, $|f'(\mu_*)| \leq \frac{1}{\sqrt{2\pi}}, \; |f''(\xi)| \leq C$ for some universal constant $C>0$. We have, from \eqref{eqn:taylor},
\begin{equation*}
\calR_{\QQ}(\calB_{R}) - \calR_{\QQ}(\calB) \leq C_1 |\mu_* - \mu_R| + C_2 |\mu_* - \mu_R|^2 \lesssim \delta_{\text{CP}} + \delta_{\text{CP}}^2 \lesssim \delta_{\text{CP}},
\end{equation*}
since $\Delta$ is bounded.

When $\Delta$ is large (unbounded), by the fundamental calculus,
\[
\calR_{\QQ}(\calB_{R}) - \calR_{\QQ}(\calB) = f(\mu_R) - f(\mu_*) = \int_{\mu_R}^{\mu_*} (-f'(\mu)) d\mu. 
\]
For $\mu_R \geq |a|$ under the large margin regime, both coefficients in $f''(\mu)$ are non-positive, hence $-f'$ is decreasing on $[\mu_R, \mu_*]$, so
\[
\int_{\mu_R}^{\mu_*} (-f'(\mu)) d\mu \leq (\mu_* - \mu_R) (-f'(\mu_R)). 
\]
Using $|a - \mu_R| \geq \mu_R - |a|$, $|-a - \mu_R| = \mu_R + |a| \geq \mu_R - |a|$ and $\phi$ is decreasing on $[0, \infty)$, 
\[
-f'(\mu_R) = \pi_0 \phi(a - \mu_R) + \pi_1 \phi(-a - \mu_R) \leq (\pi_0 + \pi_1) \phi(\mu_R - |a|) = \phi(\mu_R - |a|). \]
Combining the above two inequalities and $\phi(u) \leq (2\pi)^{-1/2} e^{-u^2/2}$ yields 
\begin{equation}
\begin{aligned}
\label{eqn:general pointwise bound}
\calR_{\QQ}(\calB_{R}) - \calR_{\QQ}(\calB) \leq (\mu_* - \mu_R) \phi(\mu_R - |a|) &\leq \frac{1}{\sqrt{2\pi}} (\mu_* - \mu_R) \exp\left(-\frac{(\mu_R - |a|)^2}{2}\right) \\
&\leq \frac{1}{\sqrt{2\pi}} \delta_{\mathrm{cp}} \cdot \exp\left(-\frac{(\Delta_R^2 - 2|\eta|)^2}{8\Delta^2}\right).
\end{aligned}
\end{equation}
The bound depends on $\Delta_R$ through $\mu_R = \Delta_R^2/(2\Delta)$. Denote $\varepsilon_R := \delta_{\mathrm{cp}}/\Delta \in [0,1]$. Since $|\eta| = |\log(\pi_1/\pi_0)|$ is an $O(1)$ constant,
\[
\frac{(\Delta_R^2 - 2|\eta|)^2}{8\Delta^2} = \frac{(1-\varepsilon_R)^4}{8} \Delta^2 + O(1),
\]
if $\varepsilon_R$ is bounded away from 1, which happens for reasonable oracle CP approximation. Furthermore, $\varepsilon_R \to 0$ as $\Delta \to \infty$ gives
\[
\calR_{\QQ}(\calB_{R}) - \calR_{\QQ}(\calB) \leq \frac{1}{\sqrt{2\pi}} \delta_{\mathrm{cp}} \cdot \exp\left(-\left(\frac{1}{8} + o(1)\right) \Delta^2\right).
\]
To summarize, under $\QQ_{\hat\beta,\hat\varphi}$ and the scale match $s(\calB_{R}) = \Delta$, 
\[
\calR_{\QQ_{\hat\beta,\hat\varphi}}(\Upsilon_{B_{R}}) - \calR_{\QQ_{\hat\beta,\hat\varphi}}^* \lesssim \begin{cases}
\delta_{\mathrm{cp}}, & (\Delta=O(1)), \\
\delta_{\mathrm{cp}} \cdot \exp\left(-\frac{(1-\varepsilon_R)^4}{8} \Delta^2 + O(1)\right), & (0<\varepsilon_R<1, \Delta \to \infty).
\end{cases}
\]
\textbf{Step 3. Statistical error of the oracle CP rank-R discriminant under the latent TGMM.}
Theorem \ref{thm:class-upp-bound} applies to the TGMM surrogate $\QQ_{\hat\beta,\hat\varphi}$ with shared mode-wise covariances and a CP rank-$R$ discriminant head. The discriminant is computed using the procedure from Section \ref{sec:method} by replacing $\calX_i$ with $\calX_i^{(L)}$, therefore we may invoke Theorem \ref{thm:class-upp-bound} with $\Delta$ replaced by $\Delta_R$. Concretely, under the conditions of Theorem \ref{thm:cp-converge} and the signal condition in Theorem \ref{thm:class-upp-bound}, we have with high probability:
\begin{equation}
\calR_{\QQ_{\hat\beta,\hat\varphi}}(\widehat{\Upsilon}_{\hat\beta,\hat\varphi}) - \calR_{\QQ_{\hat\beta,\hat\varphi}}(\Upsilon_{B_{R}}) \lesssim \begin{cases}
\frac{\sum_m d_m R}{n}, & \Delta_R=O(1), \\
e^{-\left(\frac{1}{8}+o(1)\right)\Delta_R^2} \cdot \frac{\sum_m d_m R}{n}, & \Delta_R \to \infty \; \text{as} \; n\to \infty.
\end{cases} 
\end{equation}
\textbf{Combining the three steps.} We decompose the excess risk as follows:
\begin{align*}
&\mathcal{R}_{\mathbb{P}_{\hat{\beta},\hat{\varphi}}}(\hat{\Upsilon}_{\hat{\beta},\hat{\varphi}}) - \mathcal{R}_{\mathbb{P}_{\hat{\beta},\hat{\varphi}}}^* = \left[\mathcal{R}_{\mathbb{P}_{\hat{\beta},\hat{\varphi}}}(\hat{\Upsilon}_{\hat{\beta},\hat{\varphi}}) - \mathcal{R}_{\mathbb{Q}_{\hat{\beta},\hat{\varphi}}}(\hat{\Upsilon}_{\hat{\beta},\hat{\varphi}})\right] + \left[\mathcal{R}_{\mathbb{Q}_{\hat{\beta},\hat{\varphi}}}(\hat{\Upsilon}_{\hat{\beta},\hat{\varphi}}) - \mathcal{R}_{\mathbb{Q}_{\hat{\beta},\hat{\varphi}}}(\Upsilon_{\mathcal{B}_R})\right] \\
& \qquad \qquad + \left[\mathcal{R}_{\mathbb{Q}_{\hat{\beta},\hat{\varphi}}}(\Upsilon_{\mathcal{B}_R}) - \mathcal{R}_{\mathbb{Q}_{\hat{\beta},\hat{\varphi}}}^*\right] + \left[\mathcal{R}_{\mathbb{Q}_{\hat{\beta},\hat{\varphi}}}^* - \mathcal{R}_{\mathbb{P}_{\hat{\beta},\hat{\varphi}}}^*\right] \\
&\leq \underbrace{2\text{TV}_{\hat{\beta},\hat{\varphi}}}_{\text{Step 1: distributional mismatch}} + \underbrace{\left(\mathcal{R}_{\mathbb{Q}_{\hat{\beta},\hat{\varphi}}}(\hat{\Upsilon}_{\hat{\beta},\hat{\varphi}}) - \mathcal{R}_{\mathbb{Q}_{\hat{\beta},\hat{\varphi}}}(\Upsilon_{\mathcal{B}_R})\right)}_{\text{Step 3: statistical estimation error}} + \underbrace{\left(\mathcal{R}_{\mathbb{Q}_{\hat{\beta},\hat{\varphi}}}(\Upsilon_{\mathcal{B}_R}) - \mathcal{R}_{\mathbb{Q}_{\hat{\beta},\hat{\varphi}}}^*\right)}_{\text{Step 2: CP structural bias}}.
\end{align*}
Substituting the bounds from Steps 1-3 completes the proof.

\vspace{1em}
To formalize the approximation and generalization properties of FlowTGMM, we introduce the following regularity assumption and generalization bound.
\begin{assumption}[Regularity conditions for FlowTGMM]
\label{assump:flow_regularity}
Consider the Tensor RealNVP flow $g_\varphi = g^{(L)} \circ \cdots \circ g^{(1)}$ from Definition \ref{def:realnvp} with bounded input $\|\mathcal{X}^{(0)}\|_F \leq C_0$ for some constant $C_0 > 0$. The conditioner networks $v^{(\ell)}, t^{(\ell)}$ for each block $g^{(\ell)}$, $\ell = 1, \ldots, L$, are $L_c$-Lipschitz continuous and the scale network satisfies $\|v^{(\ell)}(\mathcal{X})\|_1 \leq C_v$ for all $\mathcal{X}$, where $L_c, C_v > 0$ are constants.
\end{assumption}

The Lipschitz continuity assumption on both $v^{(\ell)}$ and $t^{(\ell)}$ is standard for networks with bounded weights and Lipschitz activation functions (e.g.\ ReLU, $\tanh$). The $\ell_1$ bound on the scale network $v^{(\ell)}$ is a common design choice in normalizing-flow architectures: the original RealNVP \citep{dinh2017density} apply a $\tanh$ activation followed by a learnable scaling factor, which naturally bounds the sum of scale values and prevents numerical instabilities from exponentially large Jacobian determinants.

\begin{lemma}[Generalization bound for FlowTGMM]
\label{lem:flow_rate}
Under Assumption \ref{assump:flow_regularity}, suppose the training algorithm produces parameters $\widehat{\varphi}$ satisfying $\mathcal{L}_{n_0,\mathrm{Flow}}(\widehat{\varphi}) - \inf_{\varphi'} \mathcal{L}_{n_0,\mathrm{Flow}}(\varphi') = o_{\mathbb{P}}(n_0^{-1/2})$, where $\mathcal{L}_{n_0,\mathrm{Flow}}(\varphi)$ is the empirical loss defined in \eqref{eqn:objective function} and the infimum is taken over all $\varphi'$ satisfying Assumption \ref{assump:flow_regularity}. Then with probability at least $1-n_0^{-1}$,
\[
\mathcal{L}^*_{\text{Flow}}(\widehat{\varphi}) - \inf_{\varphi'} \mathcal{L}^*_{\text{Flow}}(\varphi') \lesssim \big(\frac{\log n_0}{n_0}\big)^{1/2},
\]
where $\mathcal{L}^*_{\text{Flow}}(\varphi)$ is the population loss defined in \eqref{eq:population_loss}.
\end{lemma}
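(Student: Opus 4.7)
}

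The plan is to prove the generalization bound through the standard excess-risk decomposition from empirical process theory, augmented by a uniform concentration argument over the admissible parameter class. Write $\varphi^\star \in \arg\inf_{\varphi'} \mathcal{L}^*_{\text{Flow}}(\varphi')$ and decompose
\begin{align*}
\mathcal{L}^*_{\text{Flow}}(\widehat{\varphi}) - \mathcal{L}^*_{\text{Flow}}(\varphi^\star)
&= \big[\mathcal{L}^*_{\text{Flow}}(\widehat{\varphi}) - \mathcal{L}_{n_0,\text{Flow}}(\widehat{\varphi})\big] + \big[\mathcal{L}_{n_0,\text{Flow}}(\widehat{\varphi}) - \mathcal{L}_{n_0,\text{Flow}}(\varphi^\star)\big] \\
&\quad + \big[\mathcal{L}_{n_0,\text{Flow}}(\varphi^\star) - \mathcal{L}^*_{\text{Flow}}(\varphi^\star)\big].
\end{align*}
I would bound the first and third terms by $2\sup_{\varphi \in \Phi} |\mathcal{L}^*_{\text{Flow}}(\varphi) - \mathcal{L}_{n_0,\text{Flow}}(\varphi)|$ and absorb the middle term via the hypothesis that the empirical optimization gap is $o_{\PP}(n_0^{-1/2})$, which is negligible compared to the target $n_0^{-1/2}\sqrt{\log n_0}$ rate.

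Next I would establish a uniform envelope and Lipschitz modulus for the per-sample loss $\ell_\varphi(\calZ,Y)$. Under Assumption \ref{assump:flow_regularity}, the input tensor satisfies $\|\calX^{(0)}\|_F \le C_0$; $L_c$-Lipschitz conditioners combined with orthogonal mode-wise mixers ensure that the iterates $\calX^{(\ell)}$ remain bounded through all $L$ blocks, so $\|\calX^{(L)}\|_F \le C_L$ for a constant depending only on $(L, L_c, C_v, C_0)$. The plug-in means and mode covariances $\widehat \calM_k, \widehat \bSigma$ inherit boundedness from the latents, with mode-wise spectra controllable under a standard regularization (or, equivalently, by treating $(\calM_k, \bSigma)$ as part of the parameter tuple over a compact set consistent with the TGMM surrogate). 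The tensor-normal log-likelihood is then bounded uniformly by a constant, while the log-Jacobian sum $\sum_{\ell=1}^{L}\sum_{\mathbf{j}\in \cA_0^{(\ell)}} v^{(\ell)}(\cdot)_{\mathbf{j}}$ is dominated by $LC_v$ via the $\ell_1$ control in the assumption. This yields $|\ell_\varphi(\calZ,Y)| \le B$ for a constant $B$ depending only on the regularity parameters. Analogously, smoothness of each block in its parameters together with the Lipschitz bounds gives a Lipschitz modulus $L_\ell$ of $\ell_\varphi$ with respect to $\varphi$.

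With the envelope and Lipschitz modulus in hand, I would apply a standard covering argument. Let $p$ denote the total number of trainable flow parameters and $\Phi$ the compact admissible set from Assumption \ref{assump:flow_regularity}. Then $\Phi$ admits an $\varepsilon$-net of cardinality at most $(c/\varepsilon)^p$. Applying Hoeffding's inequality to each covering element, taking a union bound, and adding a discretization correction yields
\begin{equation*}
\sup_{\varphi \in \Phi} \big|\mathcal{L}_{n_0,\text{Flow}}(\varphi) - \mathcal{L}^*_{\text{Flow}}(\varphi)\big| \;\le\; C_1 B \sqrt{\frac{p\log(1/\varepsilon) + \log(1/\delta)}{n_0}} + L_\ell\, \varepsilon,
\end{equation*}
with probability at least $1-\delta$. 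Choosing $\varepsilon = n_0^{-1}$ and $\delta = n_0^{-1}$ absorbs the discretization term into a $\sqrt{\log n_0}$ factor and delivers a uniform deviation of order $(\log n_0/n_0)^{1/2}$. Combined with Step~1, this yields the claimed bound on $\mathcal{L}^*_{\text{Flow}}(\widehat \varphi) - \inf_{\varphi'} \mathcal{L}^*_{\text{Flow}}(\varphi')$ with probability at least $1 - n_0^{-1}$.

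The main obstacle is the propagation step: the empirical loss uses plug-in estimators $\widehat \calM_k, \widehat \bSigma$ that themselves depend on $\varphi$ through the latents, so $\ell_\varphi$ is a profiled rather than a standard iid functional. The cleanest resolution is to enlarge the parameter vector to $(\varphi, \calM_1, \calM_2, \bSigma)$ over a compact set consistent with the boundedness in Step~2 (so that the plug-in MLEs remain admissible along the optimization path), thereby restoring an iid sample-average structure at the cost of inflating $p$ by a fixed mode-wise sum $\sum_m d_m + \sum_m d_m^2$. Since this enlargement is polylogarithmic in $n_0$ under the assumed dimension regime, it preserves the $(\log n_0/n_0)^{1/2}$ rate and completes the argument.
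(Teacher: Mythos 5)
Your proposal follows the same overall architecture as the paper's proof: the three-term excess-risk decomposition, absorbing the optimization error via the stated $o_{\mathbb{P}}(n_0^{-1/2})$ hypothesis, and then a uniform boundedness argument for the per-sample loss (the orthogonal-mixer norm preservation, the $(e^{C_v}+L_c)$ growth bound through the affine coupling layers, and the $LC_v$ bound on the log-Jacobian are essentially identical to the paper's computations). Where you diverge is the concentration step: the paper bounds $\sup_{\varphi\in\Phi}|\mathcal{L}^*_{\text{Flow}}(\varphi)-\mathcal{L}_{n_0,\text{Flow}}(\varphi)|$ via the empirical Rademacher complexity machinery (the Bartlett--Mendelson bound for uniformly bounded losses, then norm-based Rademacher bounds for deep networks to get $\widehat{\mathfrak R}_{n_0}(\mathcal F)\lesssim n_0^{-1/2}$), whereas you use an explicit $\varepsilon$-net over the compact parameter set with Hoeffding plus a union bound and a discretization correction, choosing $\varepsilon=\delta=n_0^{-1}$. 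Both routes give the $(\log n_0/n_0)^{1/2}$ rate. Your covering argument is more elementary and self-contained, but it requires a Lipschitz modulus in $\varphi$ and introduces explicit dependence on the parameter count $p$ (and, after the enlargement you describe, on $\sum_m d_m + \sum_m d_m^2$), so it implicitly needs a dimension regime that the lemma as stated does not impose; the paper's Rademacher route hides that dependence in the norm-based bounds, which is less explicit but does not require a parameter-count condition.

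One thing worth flagging in your favor: you correctly identify that the empirical loss is a profiled objective, since $\hat{\calM}_k$ and $\hat\bSigma$ are themselves computed from the $\varphi$-dependent latents, so $\ell_\varphi$ is not literally an i.i.d.\ sample average of a fixed function of $(\calZ_i,Y_i)$. The paper's proof defines the loss class $\mathcal F=\{\ell(\cdot;\varphi):\varphi\in\Phi\}$ and treats this as a standard empirical process without addressing the profiling. Your suggestion of enlarging the parameter tuple to $(\varphi,\calM_1,\calM_2,\bSigma)$ over a compact set containing the plug-in path is a clean way to restore the i.i.d.\ structure, though as noted it adds $\sum_m d_m + \sum_m d_m^2$ to the covering dimension, which is only harmless if those are controlled relative to $n_0$.
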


\begin{proof}
We decompose the population loss gap into optimization and generalization errors:
\begin{align}
\label{eqn:loss_decomposition}
\mathcal{L}^*_{\text{Flow}}(\widehat{\varphi}) - \inf_{\varphi'} \mathcal{L}^*_{\text{Flow}}(\varphi') 
&= \left[\mathcal{L}^*_{\text{Flow}}(\widehat{\varphi}) - \mathcal{L}_{n_0,\text{Flow}}(\widehat{\varphi})\right] + \left[\mathcal{L}_{n_0,\text{Flow}}(\widehat{\varphi}) - \inf_{\varphi'} \mathcal{L}_{n_0,\text{Flow}}(\varphi')\right] \notag \\
&+ \left[\inf_{\varphi'} \mathcal{L}_{n_0,\text{Flow}}(\varphi') - \inf_{\varphi'} \mathcal{L}^*_{\text{Flow}}(\varphi')\right],
\end{align}
where the first and third terms measure generalization gaps at $\widehat{\varphi}$ and at the optimum, respectively, while the second term is the optimization error. Since both $\widehat{\varphi}$ and the minimizers of the empirical and population losses lie in the parameter space $\Phi$ satisfying Assumption \ref{assump:flow_regularity}, the first and third terms are both bounded by the uniform generalization gap $\sup_{\varphi \in \Phi} |\mathcal{L}^*_{\text{Flow}}(\varphi) - \mathcal{L}_{n_0,\text{Flow}}(\varphi)|$. By assumption, the optimization error is negligible. Therefore,
\begin{equation}
\label{eqn:loss_gap_by_sup}
\mathcal{L}^*_{\text{Flow}}(\widehat{\varphi}) - \inf_{\varphi'} \mathcal{L}^*_{\text{Flow}}(\varphi') \leq 2\sup_{\varphi \in \Phi} \left|\mathcal{L}^*_{\text{Flow}}(\varphi) - \mathcal{L}_{n_0,\text{Flow}}(\varphi)\right| + o_{\mathbb{P}}(n_0^{-1/2}).
\end{equation}
Let $\ell(\mathcal{X};\varphi)$ denote the per-sample FlowTGMM loss, so that
\[
\mathcal{L}_{n_0,\text{Flow}}(\varphi)
= \frac{1}{n_0}\sum_{i=1}^{n_0} \ell(\mathcal{X}_i;\varphi),
\quad
\mathcal{L}^*_{\text{Flow}}(\varphi)
= \mathbb{E}[\ell(\mathcal{X};\varphi)].
\]
Define the loss class $\mathcal{F} := \{\ell(\cdot;\varphi):\varphi\in\Phi\}$. We first establish that the per-sample loss is uniformly bounded. Under Assumption \ref{assump:flow_regularity}, the input satisfies $\|\mathcal{X}^{(0)}\|_F \leq C_0$. Each Tensor RealNVP block $g^{(\ell)}$ transforms $\mathcal{X}^{(\ell-1)}$ via a mode-wise orthogonal mixer followed by an affine coupling. Since $\mathbf{H}_m^{(\ell)}$ are orthogonal matrices, the mixer preserves the Frobenius norm:
\[
\|\widetilde{\mathcal{X}}^{(\ell)}\|_F = \|\mathcal{X}^{(\ell-1)} \times_{m=1}^M \mathbf{H}_m^{(\ell)}\|_F \le \prod_{m=1}^M \|\mathbf{H}_m^{(\ell)}\| \cdot \|\mathcal{X}^{(\ell-1)}\|_F = \|\mathcal{X}^{(\ell-1)}\|_F.
\]
For the affine coupling, by the $L_c$-Lipschitz assumption and the uniform boundedness,
\[
\|\mathcal{X}^{(\ell)}\|_F \leq \|\widetilde{\mathcal{X}}^{(\ell)}\|_F \cdot e^{C_v} + L_c \|\widetilde{\mathcal{X}}_{\mathcal{A}_1^{(\ell)}}^{(\ell)}\|_F \leq (e^{C_v} + L_c) \|\widetilde{\mathcal{X}}^{(\ell)}\|_F \le (e^{C_v} + L_c) \|\mathcal{X}^{(\ell-1)}\|_F.
\]
Combining these bounds across all $L$ layers yields $\|\mathcal{X}^{(L)}\|_F \leq C_0 (e^{C_v} + L_c)^L =: C_L$.

For the tensor normal log-density, the only term that depends on $\mathcal{X}^{(L)}$ is the quadratic form, as the normalizing constant does not affect the generalization gap. Since $\|\mathcal{X}^{(L)}\|_F \le C_L$ and the class means satisfy $\|\mathcal{M}_k\|_F = \|\mathbb{E}[\mathcal{X}^{(L)} \mid Y=k]\|_F \le C_L$, we have $\|\mathcal{X}^{(L)} - \mathcal{M}_k\|_F \le 2C_L$. Let $\lambda_{\min}>0$ denote the smallest eigenvalue among all mode-wise covariances $\{\Sigma_m\}_{m=1}^M$. Under vectorization, 
\[
\left|\left\langle \text{vec}(\mathcal{X}^{(L)} - \mathcal{M}_k),\,
  (\Sigma_M^{-1} \otimes \cdots \otimes \Sigma_1^{-1}) \text{vec}(\mathcal{X}^{(L)} - \mathcal{M}_k) \right\rangle\right|
\;\le\;
\lambda_{\min}^{-M} \|\mathcal{X}^{(L)} - \mathcal{M}_k\|_F^2
\;\le\; 4\lambda_{\min}^{-M} C_L^2.
\]
For the log-Jacobian, by \eqref{eqn:Jacobian} and Assumption \ref{assump:flow_regularity}, $\left|\log |\det \mathbf{J}_{g^{(\ell)}}(\mathcal{X}^{(\ell-1)})|\right| = \left|\sum_{\mathbf{j} \in \mathcal{A}_0^{(\ell)}} v^{(\ell)}(\widetilde{\mathcal{X}}_{\mathcal{A}_1^{(\ell)}}^{(\ell)})_{\mathbf{j}}\right| \leq \|v^{(\ell)}(\widetilde{\mathcal{X}}_{\mathcal{A}_1^{(\ell)}}^{(\ell)})\|_1 \leq C_v$. Summing over $\ell = 1, \ldots, L$ and combining with the log-density bound, there exists a finite constant $B > 0$ depending on $(C_0, L_c, C_v, L, \lambda_{\min})$ such that $|\ell(\mathcal{X};\varphi)| \leq B$ for all $\calX$ and $\varphi \in \Phi$.

Let $\widehat{\mathfrak{R}}_{n_0}(\mathcal{F})$ denote the empirical Rademacher complexity of $\mathcal{F}$. By standard Rademacher generalization bounds for bounded loss functions \citep{bartlett2002rademacher, koltchinskii2002empirical}, for any $\delta \in (0,1)$, with probability at least $1-\delta$,
\begin{equation}
\label{eqn:sup_rad_bound}
\sup_{\varphi \in \Phi} \left|\mathcal{L}^*_{\text{Flow}}(\varphi) - \mathcal{L}_{n_0,\text{Flow}}(\varphi)\right|
\leq 2\,\widehat{\mathfrak{R}}_{n_0}(\mathcal{F})
+ 4B\sqrt{\frac{2\log(2/\delta)}{n_0}}.
\end{equation}
We now bound $\widehat{\mathfrak{R}}_{n_0}(\mathcal{F})$ using the structure of the FlowTGMM loss. Under Assumption \ref{assump:flow_regularity}, the conditioner networks are Lipschitz continuous with bounded inputs, and the loss is bounded by constant $B$. Norm-based Rademacher complexity bounds for deep networks \citep{bartlett2017spectrally} establish that $\widehat{\mathfrak{R}}_{n_0}(\mathcal{F})
\lesssim \frac{1}{\sqrt{n_0}}$. Substituting into \eqref{eqn:sup_rad_bound} with $\delta = 1/n_0$ yields
\[
\sup_{\varphi \in \Phi} \left|\mathcal{L}^*_{\text{Flow}}(\varphi) - \mathcal{L}_{n_0,\text{Flow}}(\varphi)\right|
\lesssim \frac{1}{\sqrt{n_0}}
+ 4B\sqrt{\frac{2\log(2n_0)}{n_0}}
= O\!\left(\frac{\sqrt{\log n_0}}{\sqrt{n_0}}\right)
\]
with probability at least $1 - 1/n_0$. Combining with \eqref{eqn:loss_gap_by_sup} completes the proof.
\end{proof}



\section{Technical Lemmas}
\label{append:proof:lemma}

We collect all technical lemmas that has been used in the theoretical proofs throughout the paper in this section. Let $d=d_1d_2\cdots d_M$ and $d_{-m}=d/d_m.$ Denote $\|A\|_2$ or $\|A\|$ as the spectral norm of a matrix $A$. Also let $\otimes$ be the Kronecker product, and $\circ$ be the tensor outer product.

\begin{lemma}\label{lemma-transform-ext} 
Let  $\bA \in \RR^{d_1\times r}$ and $\bB \in \RR^{d_2\times r}$ with 
$\|\bA^\top \bA - \bI_r\|_{\rm 2}\vee \|\bB^\top \bB - \bI_r\|_{\rm 2} \le\delta$ and $d_1\wedge d_2\ge r$. 
Let $\bA=\widetilde \bU_1 \widetilde \bD_1 \widetilde \bU_2^\top$ be the SVD of $\bA$, 
$\bU = \widetilde \bU_1\widetilde \bU_2^\top$, $\bB=\widetilde \bV_1 \widetilde \bD_2 \widetilde \bV_2^\top$ 
the SVD of $\bB$, and $\bV = \widetilde \bV_1\widetilde \bV_2^\top$. 
Then, $\|\bA \Lambda \bA^\top - \bU \Lambda \bU^{\top}\|_{\rm 2}\le \delta \|\Lambda\|_{\rm 2}$  
for all nonnegative-definite matrices $\Lambda$ in $\RR^{r\times r}$, and 
$\|\bA \bQ \bB^\top - \bU \bQ \bV^{\top}\|_{\rm 2}\le \sqrt{2}\delta \|\bQ\|_{\rm 2}$  
for all $r\times r$ matrices $\bQ$. 
\end{lemma}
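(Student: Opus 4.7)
The plan is to substitute the SVDs of $\bA$ and $\bB$ directly, which diagonalizes both sides and reduces each assertion to a spectral-norm bound on a difference involving the diagonal rescalings $\widetilde \bD_1, \widetilde \bD_2$. Both bounds then hinge on the elementary algebraic identity $(\sigma - 1)(\sigma + 1) = \sigma^2 - 1$ together with the hypothesis, which (once diagonalized) states that $\|\widetilde \bD_1^2 - \bI_r\|_2 \vee \|\widetilde \bD_2^2 - \bI_r\|_2 \le \delta$.

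First I would verify, by direct substitution of $\bA = \widetilde \bU_1 \widetilde \bD_1 \widetilde \bU_2^\top$, $\bU = \widetilde \bU_1 \widetilde \bU_2^\top$ (and analogously for $\bB$, $\bV$), the identities
\begin{align*}
\bA \Lambda \bA^\top - \bU \Lambda \bU^\top
&= \widetilde \bU_1 \bigl(\widetilde \bD_1 \widetilde \Lambda \widetilde \bD_1 - \widetilde \Lambda\bigr) \widetilde \bU_1^\top, \\
\bA \bQ \bB^\top - \bU \bQ \bV^\top
&= \widetilde \bU_1 \bigl(\widetilde \bD_1 \widetilde \bQ \widetilde \bD_2 - \widetilde \bQ\bigr) \widetilde \bV_1^\top,
\end{align*}
where $\widetilde \Lambda := \widetilde \bU_2^\top \Lambda \widetilde \bU_2$ is again nonnegative definite with $\|\widetilde \Lambda\|_2 = \|\Lambda\|_2$, and $\widetilde \bQ := \widetilde \bU_2^\top \bQ \widetilde \bV_2$ satisfies $\|\widetilde \bQ\|_2 = \|\bQ\|_2$. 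Since $\widetilde \bU_1, \widetilde \bV_1$ have orthonormal columns, the outer factors drop off in spectral norm and it suffices to control the two bracketed differences.

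For part (i), I would exploit the PSD structure of $\widetilde \Lambda$ by writing $\widetilde \Lambda = \widetilde \Lambda^{1/2}\widetilde \Lambda^{1/2}$ and evaluating the quadratic form, for any unit $\bx$,
\begin{equation*}
\bx^\top (\widetilde \bD_1 \widetilde \Lambda \widetilde \bD_1 - \widetilde \Lambda) \bx = \|\widetilde \Lambda^{1/2} \widetilde \bD_1 \bx\|_2^2 - \|\widetilde \Lambda^{1/2} \bx\|_2^2,
\end{equation*}
then factor via $a^2 - b^2 = (a - b)(a + b)$ and apply Cauchy--Schwarz. The essential point is that this representation co-locates the factors $(\sigma_i - 1)$ and $(\sigma_i + 1)$ at the same diagonal index $i$, so that their product is controlled pointwise by $|\sigma_i^2 - 1| \le \delta$ rather than by the looser operator-norm product $\|\widetilde \bD_1 - \bI\|_2 \cdot \|\widetilde \bD_1 + \bI\|_2$. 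This yields the $\delta \|\Lambda\|_2$ bound.

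For part (ii), since $\widetilde \bQ$ is no longer PSD, I would replace the quadratic-form device by the symmetric splitting
\begin{equation*}
\widetilde \bD_1 \widetilde \bQ \widetilde \bD_2 - \widetilde \bQ = \tfrac{1}{2}\bigl[(\widetilde \bD_1 - \bI) \widetilde \bQ (\widetilde \bD_2 + \bI) + (\widetilde \bD_1 + \bI) \widetilde \bQ (\widetilde \bD_2 - \bI)\bigr],
\end{equation*}
and combine it with the subadditivity $\|\bA + \bB\|_2^2 \le 2(\|\bA\|_2^2 + \|\bB\|_2^2)$, invoking the same $(\sigma - 1)(\sigma + 1) = \sigma^2 - 1$ identity on each of $\widetilde \bD_1$ and $\widetilde \bD_2$ separately to reduce the product factors to $\|\widetilde \bD_i^2 - \bI\|_2 \le \delta$. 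The factor $2$ in the subadditive inequality is precisely what produces the $\sqrt{2}$ in the stated bound. The main obstacle throughout is exactly this pairing issue: a naive triangle inequality after splitting would leave un-paired norms such as $\|\widetilde \bD_i - \bI\|_2 \cdot \|\widetilde \bD_j + \bI\|_2$, which generically exceed $\delta$; the PSD quadratic form in part (i) and the symmetric split plus squared subadditivity in part (ii) are the devices that force the telescoping back into $\|\widetilde \bD_i^2 - \bI\|_2$.
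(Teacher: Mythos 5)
The paper does not prove this lemma; it cites it verbatim as Proposition 5 of Han, Luo, Wang and Zhang (2023), so there is no in-paper proof to compare against. I will therefore assess your argument on its own terms.

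Your setup is correct: substituting the SVDs and using the polar decompositions $\bA=\bU\bH$, $\bB=\bV\bK$ gives the identities
$\bA\Lambda\bA^\top - \bU\Lambda\bU^\top = \widetilde\bU_1\bigl(\widetilde\bD_1\widetilde\Lambda\widetilde\bD_1-\widetilde\Lambda\bigr)\widetilde\bU_1^\top$
and the analogous one for $\bQ$, and both bracketed differences carry the constraint $\|\widetilde\bD_m^2-\bI_r\|_2\le\delta$. You also correctly identify the crux: a naive triangle inequality leaves $\|\widetilde\bD_1-\bI\|_2\,\|\widetilde\bD_1+\bI\|_2$, which is of order $\delta$ only with a loose constant. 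However, the device you propose to avoid this does not actually do so.

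For part (i), after writing $\bx^\top(\widetilde\bD_1\widetilde\Lambda\widetilde\bD_1-\widetilde\Lambda)\bx$ as a difference of squared norms and factoring, the Cauchy--Schwarz step produces (in the best case, after symmetrizing) the product
$\|\widetilde\Lambda^{1/2}(\widetilde\bD_1-\bI)\bx\|_2\,\|\widetilde\Lambda^{1/2}(\widetilde\bD_1+\bI)\bx\|_2$. Each of those norms is a sum over all indices, so the product expands into $\sum_{i,j}(\sigma_i-1)^2(\sigma_j+1)^2 x_i^2 x_j^2$; the $(\sigma_i-1)$ and $(\sigma_j+1)$ factors are \emph{not} co-located, and for $i\ne j$ one cannot invoke $|\sigma_i^2-1|\le\delta$. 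This is not just a cosmetic gap: with $\widetilde\bD_1=\mathrm{diag}(1/2,\sqrt{7}/2)$, $\delta=3/4$, $\bx=(1,1)/\sqrt2$ and $\widetilde\Lambda=\bI$, the product $\|(\widetilde\bD_1-\bI)\bx\|_2\|(\widetilde\bD_1+\bI)\bx\|_2\approx 0.82>\delta$, even though the quantity you are trying to bound (here $|\bx^\top(\widetilde\bD_1^2-\bI)\bx|=0$) is well within $\delta$. So the Cauchy--Schwarz upper bound genuinely overshoots; the co-location only happens in dimension $r=1$.

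Part (ii) has the same defect, compounded: $\widetilde\bD_1$ and $\widetilde\bD_2$ come from the \emph{different} matrices $\bA$ and $\bB$, so their singular values are unrelated and there is no index to co-locate at all in the products $\|\widetilde\bD_1-\bI\|_2\|\widetilde\bQ\|_2\|\widetilde\bD_2+\bI\|_2$ that your symmetric splitting produces. Moreover the claimed role of $\|A+B\|_2^2\le 2(\|A\|_2^2+\|B\|_2^2)$ is backwards: that inequality gives $\|\tfrac12(A+B)\|_2\le\bigl(\tfrac12(\|A\|_2^2+\|B\|_2^2)\bigr)^{1/2}$, which is at most $\max(\|A\|_2,\|B\|_2)$ and hence \emph{introduces no factor of $\sqrt2$}; if you really could bound each half by $\delta\|\bQ\|_2$, you would obtain $\delta\|\bQ\|_2$, not $\sqrt2\,\delta\|\bQ\|_2$. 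The $\sqrt2$ in the lemma is sharp (take $d_1=d_2=r=2$, $\bA=\bB$ with singular values $0$ and $\sqrt2$, i.e. $\delta=1$, and $\bQ$ a suitable rotation), and it does not arise from this subadditivity step. What is missing, in both parts, is an argument that exploits the coupling between $\widetilde\bD_m-\bI$ and $\widetilde\bD_m+\bI$ \emph{before} splitting into separate spectral norms --- e.g. via the Hadamard/Schur-multiplier structure $(\widetilde\bD_1\widetilde\bQ\widetilde\bD_2-\widetilde\bQ)_{ij}=(\sigma_i\tau_j-1)\widetilde\bQ_{ij}$ together with the PSD hypothesis in part (i) --- rather than the free-floating Cauchy--Schwarz you invoke.
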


\begin{lemma}\label{prop-rank-1-approx} 
Let $\bM\in \RR^{d_1\times d_2}$ be a matrix with $\|\bM\|_{\rm F}=1$ and 
${\ba}$ and ${\bb}$ be unit vectors respectively in $\RR^{d_1}$ and $\RR^{d_2}$. 
Let $\widehat \ba$ be the top left singular vector of $\bM$. 
Then, 
\begin{equation}\label{prop-rank-1-approx-1}  
\big(\|\hat\ba\hat\ba^{\top} - \ba\ba^{\top}\|_{\rm 2}^2\big) \wedge (1/2)
\le \|\vect(\bM)\vect(\bM)^{\top} - \vect(\ba\bb^\top)\vect(\ba\bb^\top)^\top\|_{\rm 2}^2. 
\end{equation}
\end{lemma}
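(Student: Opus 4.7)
The plan is to convert the statement into a Davis-Kahan-style bound on the angle between $\hat\ba$ and $\ba$. Setting $c := \langle \bM, \ba\bb^\top\rangle = \ba^\top \bM\bb$ and $\theta := \arccos|c|$, the right-hand side equals $\sin^2\theta$: both $\bM$ and $\ba\bb^\top$ are unit Frobenius vectors in $\RR^{d_1 d_2}$, so the difference of their rank-one projectors has spectral norm $\sqrt{1-c^2} = \sin\theta$. Hence it suffices to establish $\|\hat\ba\hat\ba^\top - \ba\ba^\top\|_2^2 \wedge (1/2) \le \sin^2\theta$. The case $\sin^2\theta \ge 1/2$ is immediate from the clipping, so the analysis focuses on $\sin^2\theta < 1/2$, where I will prove the stronger bound $\|\hat\ba\hat\ba^\top - \ba\ba^\top\|_2^2 \le \sin^2\theta$ without clipping.

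In this regime I decompose $\bM = |c|\,\ba\bb^\top + \sin\theta\cdot\bE$, where $\bE$ is orthogonal to $\ba\bb^\top$ in the Frobenius inner product and $\|\bE\|_{\rm F} = 1$. The orthogonality $\ba^\top\bE\bb=0$ eliminates the cross term in the expansion of $\ba^\top \bM\bM^\top \ba$, yielding the clean identity
\begin{equation*}
\ba^\top \bM\bM^\top \ba \;=\; \cos^2\theta + \sin^2\theta\,\|\bE^\top\ba\|_2^2 \;\ge\; \cos^2\theta.
\end{equation*}
Combined with $\sigma_1^2 + \sigma_2^2 \le \|\bM\|_{\rm F}^2 = 1$, this shows $\sigma_1^2 \ge \cos^2\theta > 1/2 > \sigma_2^2$, so the top singular value is strictly separated and $\hat\ba=\bu_1$ is uniquely defined.

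Next I expand $\ba$ in the orthonormal left-singular basis $\{\bu_i\}$ of $\bM$, writing $\alpha := \bu_1^\top\ba$ and using $\sigma_i^2 \le \sigma_2^2$ for $i\ge 2$ to obtain
\begin{equation*}
\cos^2\theta \;\le\; \ba^\top \bM\bM^\top \ba \;=\; \sum_{i} (\bu_i^\top\ba)^2 \sigma_i^2 \;\le\; \alpha^2 \sigma_1^2 + (1-\alpha^2)\sigma_2^2,
\end{equation*}
whence $1 - \alpha^2 \le (\sigma_1^2 - \cos^2\theta)/(\sigma_1^2 - \sigma_2^2)$. The main obstacle is to show that this ratio is at most $\sin^2\theta$. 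Straightforward algebraic rearrangement reduces the desired inequality to $\sigma_1^2 \le 1 - \tan^2\theta\cdot \sigma_2^2$, which follows from $\sigma_1^2 + \sigma_2^2 \le 1$ together with $\tan^2\theta \le 1$; the latter condition is precisely $\sin^2\theta \le 1/2$. This is exactly where the $\wedge (1/2)$ clipping in the statement becomes essential: outside this regime the ratio $(\sigma_1^2-\cos^2\theta)/(\sigma_1^2-\sigma_2^2)$ can blow up as the eigengap $\sigma_1^2-\sigma_2^2$ collapses, and the clipping then takes over to give the trivially valid bound. Finally, since $\|\hat\ba\hat\ba^\top - \ba\ba^\top\|_2^2 = 1 - \alpha^2$ by elementary geometry, combining the two regimes yields the claimed inequality.
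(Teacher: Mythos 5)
The paper does not prove this lemma internally; it cites Proposition 3 of \cite{han2023tensor}, so there is no in-paper proof to compare your argument against. That said, your proof is correct and self-contained, and I checked each step.

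The key observations all hold: the right-hand side equals $\sin^2\theta$ with $\cos\theta = |\langle\bM,\ba\bb^\top\rangle|$ because both $\vect(\bM)$ and $\vect(\ba\bb^\top)$ are unit Euclidean vectors and the spectral norm of a difference of unit rank-one projectors is the sine of the angle between them; the clipping immediately disposes of $\sin^2\theta\geq 1/2$; in the complementary regime the orthogonal decomposition $\bM = c\,\ba\bb^\top + \sin\theta\,\bE$ with $\langle\bE,\ba\bb^\top\rangle=0$ kills the cross term, giving $\ba^\top\bM\bM^\top\ba\geq\cos^2\theta$, whence $\sigma_1^2\geq\cos^2\theta>1/2>\sigma_2^2$ and the top singular vector is unique; the Rayleigh-quotient expansion gives $1-\alpha^2\leq(\sigma_1^2-\cos^2\theta)/(\sigma_1^2-\sigma_2^2)$; and the final algebraic reduction $(\sigma_1^2-\cos^2\theta)/(\sigma_1^2-\sigma_2^2)\leq\sin^2\theta\iff\sigma_1^2\leq 1-\tan^2\theta\,\sigma_2^2$ is correct and follows from $\sigma_1^2+\sigma_2^2\leq 1$ and $\tan^2\theta\leq 1$, which is exactly where the $\wedge(1/2)$ clipping earns its keep. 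One cosmetic point: writing $\bM=|c|\,\ba\bb^\top+\sin\theta\,\bE$ implicitly absorbs a sign into $\ba$ or $\bb$; it would be cleaner to keep $\bM=c\,\ba\bb^\top+\sin\theta\,\bE$ and use $c^2=\cos^2\theta$, since the sign never matters because $\ba\ba^\top$ and $\sin^2\theta$ are both sign-invariant. The argument is a tidy, elementary variance/eigengap computation that avoids invoking Wedin's theorem explicitly, and it would be a legitimate addition to the appendix if the authors wanted the paper self-contained.
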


Lemmas \ref{lemma-transform-ext} and \ref{prop-rank-1-approx} are Propositions 5 and 3 in \cite{han2023tensor}, respectively.

\begin{lemma}\label{lemma:epsilonnet}
Let $d, d_j, d_*, r\le d\wedge d_j$ be positive integers, $\epsilon>0$ and
$N_{d,\epsilon} = \lfloor(1+2/\epsilon)^d\rfloor$. \\
(i) For any norm $\|\cdot\|$ in $\RR^d$, there exist
$M_j\in \RR^d$ with $\|M_j\|\le 1$, $j=1,\ldots,N_{d,\epsilon}$,
such that
\begin{align*}
\max_{\|M\|\le 1}\min_{1\le j\le N_{d,\epsilon}}\|M - M_j\|\le \epsilon .    
\end{align*}
Consequently, for any linear mapping $f$ and norm $\|\cdot\|_*$,
\begin{align*}
\sup_{M\in \RR^d,\|M\|\le 1}\|f(M)\|_* \le 2\max_{1\le j\le N_{d,1/2}}\|f(M_j)\|_*.    
\end{align*}
(ii) Given $\epsilon >0$, there exist $U_j\in \RR^{d\times r}$
and $V_{j'}\in \RR^{d'\times r}$ with $\|U_j\|_{2}\vee\|V_{j'}\|_{2}\le 1$ such that
\begin{align*}
\max_{M\in \RR^{d\times d'},\|M\|_{2}\le 1,\text{rank}(M)\le r}\
\min_{j\le N_{dr,\epsilon/2}, j'\le N_{d'r,\epsilon/2}}\|M - U_jV_{j'}^\top\|_{2}\le \epsilon.    
\end{align*}
Consequently, for any linear mapping $f$ and norm $\|\cdot\|_*$ in the range of $f$,
\begin{equation}\label{lm-3-2}
\sup_{M, \widetilde M\in \RR^{d\times d'}, \|M-\widetilde M\|_{2}\le \epsilon
\atop{\|M\|_{2}\vee\|\widetilde M\|_{2}\le 1\atop
\text{rank}(M)\vee\text{rank}(\widetilde M)\le r}}
\frac{\|f(M-\widetilde M)\|_*}{\epsilon 2^{I_{r<d\wedge d'}}}
\le \sup_{\|M\|_{2}\le 1\atop \text{rank}(M)\le r}\|f(M)\|_*
\le 2\max_{1\le j \le N_{dr,1/8}\atop 1\le j' \le N_{d'r,1/8}}\|f(U_jV_{j'}^\top)\|_*.
\end{equation}
(iii) Given $\epsilon >0$, there exist $U_{j,k}\in \RR^{d_k\times r_k}$
and $V_{j',k}\in \RR^{d'_k\times r_k}$ with $\|U_{j,k}\|_{2}\vee\|V_{j',k}\|_{2}\le 1$ such that
\begin{align*}
\max_{M_k\in \RR^{d_k\times d_k'},\|M_k\|_{2}\le 1\atop \text{rank}(M_k)\le r_k, \forall k\le K}\
\min_{j_k\le N_{d_kr_k,\epsilon/2} \atop j'_k\le N_{d_k'r_k,\epsilon/2}, \forall k\le K}
\Big\|\otimes_{k=2}^K M_k - \otimes_{k=2}^K(U_{j_k,k}V_{j_k',k}^\top)\Big\|_{2}\le \epsilon (K-1).    
\end{align*}
For any linear mapping $f$ and norm $\|\cdot\|_*$ in the range of $f$,
\begin{equation}\label{lm-3-3}
\sup_{M_k, \widetilde M_k\in \RR^{d_k\times d_k'},\|M_k-\widetilde M_k\|_{2}\le\epsilon\atop
{\text{rank}(M_k)\vee\text{rank}(\widetilde M_k)\le r_k \atop \|M_k\|_{2}\vee\|\widetilde M_k\|_{2}\le 1\ \forall k\le K}}
\frac{\|f(\otimes_{k=2}^KM_k-\otimes_{k=2}^K\widetilde M_k)\|_*}{\epsilon(2K-2)}
\le \sup_{M_k\in \RR^{d_k\times d_k'}\atop {\text{rank}(M_k)\le r_k \atop \|M_k\|_{2}\le 1, \forall k}}
\Big\|f\big(\otimes_{k=2}^K M_k\big)\Big\|_*
\end{equation}
and
\begin{equation}\label{lm-3-4}
\sup_{M_k\in \RR^{d_k\times d_k'},\|M_k\|_{2}\le 1\atop \text{rank}(M_k)\le r_k\ \forall k\le K}
\Big\|f\big(\otimes_{k=2}^K M_k\big)\Big\|_*
\le 2\max_{1\le j_k \le N_{d_kr_k,1/(8K-8)}\atop 1\le j_k' \le N_{d_k'r_k,1/(8K-8)}}
\Big\|f\big(\otimes_{k=2}^K U_{j_k,k}V_{j_k',k}^\top\big)\Big\|_*.
\end{equation}
\end{lemma}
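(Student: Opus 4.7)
}

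The plan is to treat the three parts as a ladder: (i) supplies a standard $\epsilon$-net in a normed space, (ii) bootstraps it to low-rank matrices via the factorization $M = UV^\top$, and (iii) telescopes Kronecker products of low-rank matrices. All three ``consequences'' for linear mappings $f$ are derived from the same resolution trick, so I will carry that out once in abstract form and reuse it.

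\emph{Part (i).} I would use the classical volumetric argument: for any norm $\|\cdot\|$ on $\RR^d$, a maximal $\epsilon$-separated set inside the unit ball has cardinality at most $(1+2/\epsilon)^d$ by comparing volumes of disjoint balls of radius $\epsilon/2$ sitting inside the ball of radius $1+\epsilon/2$. That set is automatically an $\epsilon$-net. For the second assertion, given any $M$ with $\|M\| \le 1$, pick $M_j$ from the $(1/2)$-net so that $\|M - M_j\| \le 1/2$; since $f$ is linear and $\|\cdot\|_*$ is a norm, $\|f(M)\|_* \le \|f(M_j)\|_* + \|f(M - M_j)\|_*$, and $M - M_j$ has norm $\le 1/2$. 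Taking the supremum on both sides and using that $\{2(M - M_j)\}$ lies in the unit ball gives a self-bounding inequality; rearranging yields the factor of $2$ in front of the max.

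\emph{Part (ii).} Every rank-$r$ matrix with $\|M\|_2 \le 1$ can be written as $M = UV^\top$ with $U \in \RR^{d\times r}$, $V \in \RR^{d'\times r}$, and $\|U\|_2, \|V\|_2 \le 1$ (take the thin SVD $M = \widetilde U_1 \widetilde D \widetilde U_2^\top$ and absorb $\widetilde D^{1/2}$ into each side). Viewing $\RR^{d\times r}$ as a normed space under the spectral norm and applying Part~(i) with $\epsilon/2$ yields an $(\epsilon/2)$-net of cardinality $N_{dr,\epsilon/2}$, and likewise for $V$. Then a two-step bound gives
\[
\|UV^\top - U_j V_{j'}^\top\|_2 \le \|U - U_j\|_2 \|V\|_2 + \|U_j\|_2 \|V - V_{j'}\|_2 \le \epsilon,
\]
which is the claimed net. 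The two consequences in \eqref{lm-3-2} again follow from the resolution trick: the upper bound uses $1/8$ as the resolution so that two passes of the telescoping yield a geometric factor $\le 1/2$, and the lower inequality in \eqref{lm-3-2} is just linearity of $f$ applied to $M - \widetilde M$ (with an extra factor of $2$ when $r < d\wedge d'$ because the difference of two rank-$r$ matrices may have rank up to $2r$, forcing a rescaling).

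\emph{Part (iii).} I would apply Part (ii) factorwise with resolution $\epsilon/2$ to obtain the product nets $\{U_{j,k} V_{j',k}^\top\}$, and then control the error of the Kronecker product by a telescoping identity
\[
\otimes_{k=2}^K M_k - \otimes_{k=2}^K \widetilde M_k
= \sum_{\ell=2}^K \Big(\otimes_{k<\ell} \widetilde M_k\Big) \otimes (M_\ell - \widetilde M_\ell) \otimes \Big(\otimes_{k>\ell} M_k\Big).
\]
Using the multiplicativity $\|A\otimes B\|_2 = \|A\|_2 \|B\|_2$ and the unit spectral-norm constraint on each factor, each summand is bounded by $\|M_\ell - \widetilde M_\ell\|_2 \le \epsilon$, giving total error $\le \epsilon(K-1)$, which proves the covering claim. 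The two consequences \eqref{lm-3-3}--\eqref{lm-3-4} then follow by the same resolution argument as in (i)--(ii), this time choosing $1/(8K-8)$ so that $(K-1)\cdot(1/(8K-8)) = 1/8$, letting the two passes again collapse into a factor of $2$.

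The steps are largely mechanical once the factorizations and telescoping are in place; the one place to be careful is tracking the correct resolution constant in Part (iii) so that the $(K-1)$ copies of the per-factor $\epsilon$-net error sum to $1/8$ (not $\epsilon$) when bootstrapping the ``sup $\le$ max over the net'' consequence. I also need to be mindful of the rank-doubling issue that produces the indicator $2^{I_{r<d\wedge d'}}$ in \eqref{lm-3-2}: when $r = d\wedge d'$ the rank constraint is vacuous on the difference and no rescaling is needed, whereas for $r < d\wedge d'$ the difference has rank at most $2r$ and one must rescale by $2$ before reapplying Part~(ii). This bookkeeping is the main (modest) obstacle in an otherwise routine argument.
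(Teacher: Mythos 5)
The paper does not actually prove this lemma — it only cites Lemma G.1 of \cite{han2020iterative} and sketches the role (``applies an $\epsilon$-net argument for matrices'') — so there is no internal proof to compare against. Your reconstruction is the standard one and is correct in outline: volumetric bound for (i), SVD factorization $M = UV^\top$ with $\|U\|_2,\|V\|_2\le 1$ plus a product net for (ii), and the telescoping identity for Kronecker products in (iii), with the self-bounding trick $\sup \le \max + \tfrac12\sup$ driving every ``$\sup \le 2\max$'' consequence.

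One piece of bookkeeping in part (iii) is stated imprecisely though the chosen resolution is right. You write that $1/(8K-8)$ is chosen ``so that $(K-1)\cdot(1/(8K-8)) = 1/8$.'' That product is not the quantity that needs to equal $1/2$. The correct accounting is: with per-net resolution $1/(8K-8)$, the per-factor error from part (ii) is $\epsilon = 2\cdot\frac{1}{8K-8} = \frac{1}{4K-4}$ (two nets, $U$ and $V$). Feeding $\epsilon = \frac{1}{4K-4}$ into \eqref{lm-3-3} gives $\epsilon(2K-2) = \tfrac12$, which is exactly what the self-bounding step requires. Equivalently, telescoping error $(K-1)\cdot\frac{1}{4K-4} = \tfrac14$, doubled by the rank-doubling split of each $M_\ell - \widetilde M_\ell$, again yields $\tfrac12$. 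So your constant works, but the justification should track $\epsilon(2K-2)$, not $(K-1)\cdot(1/(8K-8))$. A second small remark: in the left-hand inequality of \eqref{lm-3-3}, the factor $2K-2$ comes from applying the rank-doubling split to each of the $K-1$ telescope terms, so it should be mentioned explicitly there as you did for \eqref{lm-3-2}; your current write-up only invokes the rank-doubling issue in part (ii).
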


\begin{lemma}\label{lm-pertubation}
Let $r \le d_1\wedge d_2$, $M$ be a $d_1\times d_2$ matrix, 
$U$ and $V$ be, respectively, the left and right singular matrices associated 
with the $r$ largest singular values of $M$,
$U_{\perp}$ and $V_{\perp}$ be the orthonormal complements of $U$ and $V$, 
and $\lam_r$ be the $r$-th largest singular value of $M$. 
Let $\widehat M = M + \Delta$ be a noisy version of $M$, 
$\{\widehat U, \widehat V, {\widehat U}_{\perp}, {\widehat V}_{\perp}\}$ 
be the counterpart of $\{U,V,V_\perp,V_\perp\}$, and 
${\widehat \lam}_{r+1}$ be the $(r+1)$-th largest singular value of $\widehat M$. 
Let $\|\cdot\|$ be a
matrix norm satisfying $\|ABC\|\le \|A\|_{2}\|C\|_{2}\|B\|$, 
$\epsilon_1= \|U^\top \Delta {\widehat V}_{\perp}\|$ and $\epsilon_2 = \|{\widehat U}_{\perp}^\top \Delta V\|$. Then,
\begin{align}\label{wedin+}
\| U_{\perp}^\top \widehat U \| 
\le \frac{{\widehat \lam}_{r+1}\epsilon_1+\lam_r\epsilon_2}{\lambda_r^2 - {\widehat \lam}_{r+1}^2}
\le \frac{\epsilon_1\vee\epsilon_2}{\lambda_r - {\widehat \lam}_{r+1}}.   
\end{align}
In particular, for the spectral norm $\|\cdot\|=\|\cdot\|_{2}$, $\hbox{\rm error}_1 =\|\Delta\|_{2}/\lambda_r$ 
and $\hbox{\rm error}_2 =\epsilon_2/\lambda_r$, 
\begin{align}\label{wedin-2}
\|\widehat U \widehat U^\top  -U U^\top\|_{2}\le \frac{\hbox{\rm error}_1^2+\hbox{\rm error}_2}{1-\hbox{\rm error}_1^2}.
\end{align}
\end{lemma}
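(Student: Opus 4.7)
The plan is to derive two coupled linear matrix equations for $P := U^\top \widehat U_\perp$ and $Q := V^\top \widehat V_\perp$ from the SVD identities, and then close the system by taking norms to obtain the refined bound in \eqref{wedin+}.

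First I would extract the two identities by computing $U^\top \widehat M \widehat V_\perp$ in two different ways (using $\widehat M \widehat V_\perp = \widehat U_\perp \widehat\Sigma_\perp$ on one side and $\widehat M = M + \Delta$ with $U^\top M = \Sigma V^\top$ on the other), obtaining
\begin{equation*}
P\widehat\Sigma_\perp - \Sigma Q = U^\top \Delta \widehat V_\perp,
\end{equation*}
and a symmetric calculation applied to $\widehat U_\perp^\top M V$ (using $MV = U\Sigma$ and $\widehat U_\perp^\top \widehat M = \widehat\Sigma_\perp \widehat V_\perp^\top$), then transposed, gives
\begin{equation*}
\Sigma P - Q\widehat\Sigma_\perp^\top = -(\widehat U_\perp^\top \Delta V)^\top.
\end{equation*}

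Next I would take norms. The hypothesis $\|ABC\|\le\|A\|_2\|C\|_2\|B\|$ yields two basic estimates: $\|X\widehat\Sigma_\perp\|\le\widehat\lambda_{r+1}\|X\|$ by direct application, and the reverse estimate $\|\Sigma X\|\ge\lambda_r\|X\|$ via the chain $\|X\|=\|\Sigma^{-1}\Sigma X\|\le\|\Sigma^{-1}\|_2\|\Sigma X\|=\lambda_r^{-1}\|\Sigma X\|$. Applying these to the two equations and rearranging yields the coupled scalar inequalities
\begin{equation*}
\lambda_r\|P\| \le \widehat\lambda_{r+1}\|Q\| + \epsilon_2, \qquad \lambda_r\|Q\| \le \widehat\lambda_{r+1}\|P\| + \epsilon_1.
\end{equation*}
Multiplying the first by $\lambda_r$ and the second by $\widehat\lambda_{r+1}$ and adding eliminates $\|Q\|$ and produces $(\lambda_r^2-\widehat\lambda_{r+1}^2)\|P\|\le\widehat\lambda_{r+1}\epsilon_1+\lambda_r\epsilon_2$. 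Combined with the principal-angle identity $\|U_\perp^\top\widehat U\|=\|U^\top\widehat U_\perp\|=\|P\|$ (which holds for spectral, Frobenius, and indeed every unitarily invariant norm via the CS decomposition), this is the first inequality in \eqref{wedin+}. The second inequality in \eqref{wedin+} is then purely algebraic, via $\widehat\lambda_{r+1}\epsilon_1+\lambda_r\epsilon_2\le(\lambda_r+\widehat\lambda_{r+1})(\epsilon_1\vee\epsilon_2)$ and the factorization $\lambda_r^2-\widehat\lambda_{r+1}^2=(\lambda_r-\widehat\lambda_{r+1})(\lambda_r+\widehat\lambda_{r+1})$.

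For the spectral-norm corollary \eqref{wedin-2}, I would use the sin-theta identity $\|\widehat U\widehat U^\top-UU^\top\|_2=\|U_\perp^\top\widehat U\|_2$, then substitute $\epsilon_1\le\|\Delta\|_2$ and $\widehat\lambda_{r+1}\le\|\Delta\|_2$ (the latter by Weyl's inequality, which in the paper's tensor applications holds because the signal has exact rank $r$, so $\lambda_{r+1}=0$) into the first bound, and finally divide numerator and denominator by $\lambda_r^2$ to recover the stated $(\mathrm{error}_1^2+\mathrm{error}_2)/(1-\mathrm{error}_1^2)$ form. The main obstacle I anticipate is the bookkeeping in deriving the two SVD identities, in particular handling the transposes and the possibly rectangular shape of $\widehat\Sigma_\perp$ so that the pair of equations admits the symmetric $2\times 2$ elimination. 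The conceptual heart of the proof—getting $\lambda_r$ rather than $\lambda_1$ in front of $\epsilon_2$—comes entirely from the reverse norm bound $\|\Sigma X\|\ge\lambda_r\|X\|$, which is what sharpens the classical Wedin argument into the version stated here.
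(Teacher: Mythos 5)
Your proof is correct, and it reconstructs the standard argument behind this bound: the paper itself does not prove the lemma but cites it as Lemma 4.1 of \cite{han2020iterative}, and your route via the two coupled identities for $U^\top\widehat U_\perp$ and $V^\top\widehat V_\perp$, the reverse bound $\|\Sigma X\|\ge\lambda_r\|X\|$, and the $2\times 2$ elimination is exactly the mechanism that produces the sharpened numerator ${\widehat\lambda}_{r+1}\epsilon_1+\lambda_r\epsilon_2$. One point worth making explicit: your derivation of \eqref{wedin-2} needs ${\widehat\lambda}_{r+1}\le\|\Delta\|_2$, which by Weyl requires $\lambda_{r+1}=0$ (i.e.\ $\mathrm{rank}(M)\le r$); this hypothesis is absent from the lemma as stated but holds in every application in the paper, and you correctly flag it rather than glossing over it. The only other implicit requirements are $\lambda_r>{\widehat\lambda}_{r+1}$ (else the bound is vacuous) and transpose-invariance of $\|\cdot\|$ for the identities $\|(\widehat U_\perp^\top\Delta V)^\top\|=\epsilon_2$ and $\|U_\perp^\top\widehat U\|=\|U^\top\widehat U_\perp\|$, both harmless for the spectral and Frobenius norms actually used.
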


Lemma \ref{lemma:epsilonnet} applies an $\epsilon$-net argument for matrices, as derived in Lemma G.1 in \cite{han2020iterative}. Lemma \ref{lm-pertubation} enhances the matrix perturbation bounds of \cite{wedin1972perturbation}, as derived in Lemma 4.1 in \cite{han2020iterative}. This sharper perturbation bound, detailed in the middle of \eqref{wedin+}, improves the commonly used version of the \cite{wedin1972perturbation} bound on the right-hand side,  
compared with Theorem 1 of \cite{cai2018rate} and Lemma 1 of \cite{chen2022rejoinder}. As \cite{cai2018rate} pointed out, such variations of the \cite{wedin1972perturbation} bound offer more precise convergence rates when when $\hbox{\rm error}_2\le \hbox{\rm error}_1$ in \eqref{wedin-2},  typically in the case of $d_1\ll d_2$.

The following lemma characterizes the accuracy of estimating the inverse of the covariance matrix $\Sigma_m^{-1}, m=1,\dots,M$ using sample covariance, and the convergence rate of the normalization constant. 
\begin{lemma} \label{lemma:precision matrix}
(i) Let $\bX_1, \cdots ,\bX_n \in \RR^{d_m \times d_{-m}}$ be i.i.d. random matrices, each following the matrix normal distribution $\bX \sim \cM\cN_{d_m \times d_{-m}}(\mu, \; \Sigma_m, \; \Sigma_{-m})$. To estimate $\Sigma_m$ and its inverse $\Sigma_m^{-1}$, we utilize the sample covariance $\hat\Sigma_m := (nd_{-m})^{-1} \sum\nolimits_{i=1}^n (\bX_i-\bar \bX) (\bX_i-\bar \bX)^\top$ with $\bar \bX=n^{-1}\sum_{i=1}^n \bX_i$, and its inverse $\hat\Sigma_m^{-1}$. Then there exists constants $C>0$ such that if $n d_{-m} \gtrsim d_m(1 \vee \|\Sigma_{-m} \|_2^2)$, in an event with probability at least $1-\exp(-cd_m)$, we have
\begin{align}
& \left\|\hat\Sigma_m - C_{m,\sigma} \Sigma_m\right\|_2 \leq C \cdot \left\| \Sigma_m \right\|_2 \left\| \Sigma_{-m}\right\|_2 \sqrt{\frac{d_m}{nd_{-m}}} ,  \label{eqn:covariance matrix} \\
& \left\| \hat\Sigma_m^{-1} - (C_{m,\sigma})^{-1} \Sigma_m^{-1} \right\|_2 \leq C (C_{m,\sigma}^{-2} \vee C_{m,\sigma}^{-4}) \left\|\Sigma_m^{-1} \right\|_2 \left\|\Sigma_{-m}\right\|_2 \sqrt{\frac{d_m}{nd_{-m}}}, \label{eqn:inverse matrix}
\end{align}
where $C_{m,\sigma}=(1-n^{-1})\tr(\Sigma_{-m})/d_{-m}$.

(ii) Let $\cX_1, \cdots ,\cX_n \in \RR^{d_1\times \cdots \times d_{M}}$ be i.i.d. random tensors, each following the tensor normal distribution $\cX \sim \cT\cN(\mu, \; \bSigma)$, where $\bSigma=[\Sigma_m]_{m=1}^M$ and $\Sigma_m\in\RR^{d_m \times d_m}$. Define $\hat{\Var}(\cX_{1\cdots1})=n^{-1}\sum_{i=1}^n (\cX_{i,1\cdots1}- \bar \cX_{1\cdots1})^2$ as the sample variance of the first element of $\cX$, and $\Var(\cX_{1\cdots 1})=\prod_{m=1}^M\Sigma_{m,11}$, $\bar \cX_{1\cdots1}=n^{-1}\sum_{i=1}^n\cX_{i,1\cdots1}$. Then in an event with probability at least $1-\exp(-c(t_1+t_2))$, we have
\begin{align}
& \left|\frac{\prod_{m=1}^M \hat\Sigma_{m,11}}{\hat{\Var}(\cX_{1\cdots1})} - C_{\sigma} \right| \leq C_{M} \Var(\cX_{1\cdots 1}) \left(\max_m \frac{\|\otimes_{k\neq m}\Sigma_{k}\|_2}{\sqrt{d_{-m}}} \cdot \sqrt{\frac{ t_1}{n }} +   \sqrt{\frac{ t_2}{n }}   \right),  \label{eqn:const} 
\end{align}
where $C_{\sigma}=\prod_{m=1}^M C_{m,\sigma} =(1-n^{-1})^M [\prod_{m=1}^M\tr(\Sigma_m)/d]^{M-1} = (1-n^{-1})^M [\tr(\bSigma)/d]^{M-1}$ and $C_{M}$ depends on $M$.
\end{lemma}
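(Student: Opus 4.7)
}

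For part (i), the plan is to exploit the matrix-normal representation $\bX_i-\mu = \Sigma_m^{1/2}\bZ_i\Sigma_{-m}^{1/2}$, where the entries of $\bZ_i\in\RR^{d_m\times d_{-m}}$ are i.i.d. standard normals. A direct moment calculation gives $\EE[(\bX_i-\mu)(\bX_i-\mu)^\top]=\tr(\Sigma_{-m})\Sigma_m$, and the sample-mean correction contributes the $(1-1/n)$ factor, so that $\EE[\hat\Sigma_m]=C_{m,\sigma}\Sigma_m$. This reduces \eqref{eqn:covariance matrix} to a spectral-norm concentration bound for the centered quadratic form
\begin{align*}
\hat\Sigma_m-C_{m,\sigma}\Sigma_m = \Sigma_m^{1/2}\Big[(nd_{-m})^{-1}\sum\nolimits_{i=1}^n \bZ_i\Sigma_{-m}\bZ_i^\top - C_{m,\sigma} I_{d_m}\Big]\Sigma_m^{1/2} + (\text{centering term}).
\end{align*}

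The key technical step is controlling the bracketed Wishart-type fluctuation. I would deploy an $\epsilon$-net argument on the unit sphere in $\RR^{d_m}$ (Lemma \ref{lemma:epsilonnet}(i)): for fixed unit $\bu$, the scalar $\bu^\top\bZ_i\Sigma_{-m}\bZ_i^\top\bu$ is a Gaussian quadratic form with mean $\tr(\Sigma_{-m})$ and sub-exponential tails whose variance scale is $\|\Sigma_{-m}\|_F^2\le \tr(\Sigma_{-m})\|\Sigma_{-m}\|_2$. Applying Hanson--Wright (or matrix Bernstein after symmetrization) and union-bounding over an $e^{cd_m}$ net yields deviation of order $\|\Sigma_{-m}\|_2\sqrt{d_m/(nd_{-m})}$ for the bracketed matrix, which sandwiched by $\Sigma_m^{1/2}$ gives the claimed spectral bound up to a factor of $\|\Sigma_m\|_2$. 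The sample-mean centering contributes lower-order terms handled identically.

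For the inverse bound \eqref{eqn:inverse matrix} I would use the identity
\begin{equation*}
\hat\Sigma_m^{-1}-C_{m,\sigma}^{-1}\Sigma_m^{-1}
= C_{m,\sigma}^{-1}\hat\Sigma_m^{-1}\big(C_{m,\sigma}\Sigma_m-\hat\Sigma_m\big)\Sigma_m^{-1}.
\end{equation*}
Under the scaling $nd_{-m}\gtrsim d_m(1\vee\|\Sigma_{-m}\|_2^2)$, the error bound on $\hat\Sigma_m$ is $o(\lambda_{\min}(C_{m,\sigma}\Sigma_m))$, so Weyl's inequality ensures $\|\hat\Sigma_m^{-1}\|_2\lesssim C_{m,\sigma}^{-1}\|\Sigma_m^{-1}\|_2$ with high probability; plugging this and \eqref{eqn:covariance matrix} into the identity produces the stated bound, with the $C_{m,\sigma}^{-2}\vee C_{m,\sigma}^{-4}$ prefactor tracking whether $C_{m,\sigma}$ is bounded or vanishing.

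For part (ii), the strategy is to show that both $\prod_{m=1}^M\hat\Sigma_{m,11}$ and $\hat{\Var}(\cX_{1\cdots 1})$ concentrate around their means and then Taylor-expand the ratio. Each $\hat\Sigma_{m,11}=\be_1^\top\hat\Sigma_m\be_1$ is a scalar quadratic form in Gaussians of the same form analyzed in part (i), yielding $|\hat\Sigma_{m,11}-C_{m,\sigma}\Sigma_{m,11}|\lesssim \Sigma_{m,11}\|\Sigma_{-m}\|_2 \sqrt{t_1/(nd_{-m})}$ on an event of probability $1-e^{-ct_1}$. The denominator $\hat{\Var}(\cX_{1\cdots 1})$ is the sample variance of $n$ i.i.d. $\cN(0,\Var(\cX_{1\cdots 1}))$ variables, which concentrates around $\Var(\cX_{1\cdots 1})=\prod_m\Sigma_{m,11}$ at rate $\sqrt{t_2/n}$ by standard $\chi^2$ concentration. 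A telescoping identity $\prod\hat a_m - \prod a_m = \sum_{m}\big(\prod_{k<m}\hat a_k\big)(\hat a_m-a_m)\big(\prod_{k>m}a_k\big)$ converts the multiplicative error on the numerator into an additive bound of the stated form, and a first-order Taylor expansion of $x\mapsto 1/x$ at $\Var(\cX_{1\cdots 1})$ (valid because the denominator is bounded away from zero on the concentration event) yields \eqref{eqn:const}, noting that $\prod_m(C_{m,\sigma}\Sigma_{m,11})/\prod_m\Sigma_{m,11}=C_\sigma$.

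The main obstacle will be the spectral-norm concentration of the Wishart fluctuation in part (i): getting the sharp $\|\Sigma_{-m}\|_2\sqrt{d_m/(nd_{-m})}$ rate (rather than the cruder $\|\Sigma_{-m}\|_F$-type rate that a naive Bernstein argument gives) requires combining the $\epsilon$-net with Hanson--Wright and correctly separating the mean contribution $\tr(\Sigma_{-m})$ from the fluctuation scale. Everything downstream, including the inverse bound and the normalization-constant bound in part (ii), follows by algebraic manipulation once this concentration is established.
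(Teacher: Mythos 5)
Your plan is substantially the same as the paper's proof: part (i) is handled by the representation $\bX_i-\mu=\Sigma_m^{1/2}\bZ_i\Sigma_{-m}^{1/2}$, Hanson--Wright for the quadratic form $v^\top\hat\Sigma_m v$ at fixed $v$, and a union bound over a size-$5^{d_m}$ net of the unit sphere; part (ii) uses the same scalar concentration for $\hat\Sigma_{m,11}$ and $\chi^2$ concentration for $\hat{\Var}(\cX_{1\cdots 1})$, then assembles the ratio.

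The one place you differ is the inverse bound \eqref{eqn:inverse matrix}: you invoke the resolvent identity $\hat\Sigma_m^{-1}-\tilde\Sigma_m^{-1}=\hat\Sigma_m^{-1}(\tilde\Sigma_m-\hat\Sigma_m)\tilde\Sigma_m^{-1}$ together with Weyl's inequality to control $\|\hat\Sigma_m^{-1}\|_2$, whereas the paper writes $\hat\Sigma_m^{-1}=\tilde\Sigma_m^{-1/2}\big(\bI+\tilde\Sigma_m^{-1/2}\Delta_m\tilde\Sigma_m^{-1/2}\big)^{-1}\tilde\Sigma_m^{-1/2}$, expands by Neumann series, and isolates the linear term plus a geometric-series remainder. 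The two routes are interchangeable here: your identity is exactly the Neumann series resummed, and the condition $nd_{-m}\gtrsim d_m\|\Sigma_{-m}\|_2^2$ guarantees the perturbation is small enough that both arguments close. Either way the emerging prefactor naturally involves the condition number $\|\Sigma_m^{-1}\|_2\|\Sigma_m\|_2$, which both you and the paper implicitly treat as bounded (this is enforced by the standing eigenvalue assumption $C_0^{-1}\le\lambda_{\min}(\otimes_m\Sigma_m)\le\lambda_{\max}(\otimes_m\Sigma_m)\le C_0$). No gap; the proposal is sound.
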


\begin{proof}
We first show \eqref{eqn:covariance matrix}.
Note that $\EE[(\bX-\mu)(\bX-\mu)^\top] = \tr(\Sigma_{-m})\cdot \Sigma_m = (n/(n-1))C_{m,\sigma}d_{-m} \cdot \Sigma_m$, and thus $\Sigma_m = \EE[(\bX-\mu)(\bX-\mu)^\top] (n-1) / (n C_{m,\sigma} d_{-m})$. 
Consider a sequence of independent copies $\bZ_1, \dots, \bZ_n$ of $\bZ \in \RR^{d_m \times d_{-m}}$ with entries $z_{ij}$ that are i.i.d. and follow $N(0, 1)$. The Gaussian random matrices $\bX_i$ are then given by $\bX_i - \mu = \bA\bZ_i\bB^\top$, where $\bA\bA^\top = \Sigma_m$ and $\bB\bB^\top = \Sigma_{-m}$. Then, $\bar \bX=n^{-1}\sum_{i=1}^n \bX_i = n^{-1}\sum_{i=1}^n \bA\bZ_i\bB^\top$. 
Note that
\begin{align*}
\hat\Sigma_m - C_{m,\sigma} \Sigma_m =& \left[ \frac{1}{nd_{-m}} \sum_{i=1}^n (\bX_i-\mu)(\bX_i-\mu)^\top - \frac{\tr(\Sigma_{-m})}{d_{-m}} \cdot \Sigma_m   \right]   \\
&\quad -  \left[ \frac{1}{d_{-m}} (\bar\bX-\mu)(\bar\bX-\mu)^\top  - \frac{\tr(\Sigma_{-m})}{nd_{-m}} \cdot \Sigma_m   \right].
\end{align*}
For any unit vector $v \in \RR^{d_m}$, 
$v^\top(\bX_i-\mu) = v^\top \bA \bZ_i \bB^\top =\vec1(v^\top \bA \bZ_i \bB^\top) = (\bB\otimes v^\top \bA) \vec1(\bZ_i)$.
It follows that
\begin{align*} 
&  \sum_{i=1}^n v^\top\left( (\bX_i-\mu)(\bX_i-\mu)^\top - \EE[(\bX-\mu)(\bX-\mu)^\top] \right) v \\
=&  \sum_{i=1}^n v^\top\left(\bA\bZ_i\bB^\top \bB \bZ_i^\top\bA^\top - \bA\EE(\bZ\bB^\top \bB\bZ^\top )\bA^\top \right) v \\
=&  \sum_{i=1}^n {\vec1}^\top(\bZ_i) (\bB^\top \otimes \bA^\top v) (\bB \otimes v^\top \bA) \vec1(\bZ_i) - \tr(\bB^\top \bB \otimes \bA^\top v v^\top \bA) \\
=&  \sum_{i=1}^n {\vec1}^\top(\bZ_i) (\bB^\top \otimes \bA^\top v) (\bB \otimes v^\top \bA) \vec1(\bZ_i) - \tr(\Sigma_{-m}) v^\top \Sigma_m v.
\end{align*}
By Hanson-Wright inequality, for any $t>0$,
\begin{align*}
&\PP\left( \sum_{i=1}^n \left[ {\vec1}^\top(\bZ_i) (\bB^\top \otimes \bA^\top v) (\bB \otimes v^\top \bA) \vec1(\bZ_i) - \tr(\Sigma_{-m}) v^\top \Sigma_m v     \right] \ge t \right) \\
\le& 2 \exp\left( - c \min\left\{ \frac{t^2}{16n \left\| \bB^\top \bB \otimes \bA^\top v v^\top \bA \right\|_{\rm F}^2} , \frac{t }{4 \left\| \bB^\top \bB \otimes \bA^\top v v^\top \bA \right\|_{2}}\right\} \right) \\
\le& 2 \exp\left( - c \min\left\{ \frac{t^2}{16n \left\| \Sigma_{-m} \right\|_{\rm F}^2 (v^\top \Sigma_m v)^2 } , \frac{t }{4 \left\| \Sigma_{-m} \right\|_{2}(v^\top \Sigma_m v) }\right\} \right) \\ 
\le&2 \exp\left( - c \min\left\{ \frac{t^2}{16nd_{-m} \left\| \Sigma_{-m} \right\|_{2}^2 (v^\top \Sigma_m v)^2 } , \frac{t }{4 \left\| \Sigma_{-m} \right\|_{2}(v^\top \Sigma_m v) }\right\} \right).
\end{align*}
Similarly, by Hanson-Wright inequality, for any $t>0$,
\begin{align*}
&\PP\left( v^\top\left( n^2(\bar\bX-\mu)(\bar\bX-\mu)^\top - n\EE[(\bX-\mu)(\bX-\mu)^\top] \right) v \ge t \right) \\
=&\PP\left( \left[\sum_{i,j=1}^n  {\vec1}^\top (\bZ_i) (\bB^\top \otimes \bA^\top v) (\bB \otimes v^\top \bA) \vec1(\bZ_j) - n\tr(\Sigma_{-m}) v^\top \Sigma_m v     \right] \ge t \right) \\
\le&2 \exp\left( - c \min\left\{ \frac{t^2}{16n^2 d_{-m} \left\| \Sigma_{-m} \right\|_{2}^2 (v^\top \Sigma_m v)^2 } , \frac{t }{4 n \left\| \Sigma_{-m} \right\|_{2}(v^\top \Sigma_m v) }\right\} \right).
\end{align*}
By $\eps-net$ argument, there exist unit vectors $v_1,...,v_{5^p}$ such that for all $p\times p$ symmetric matrix $M$,
\begin{align}\label{eq:epsilon_net}
\left\| M\right\|_2 \le  4 \max_{j\le 5^p} \left| v_j^\top M v_j\right|.   
\end{align} 
See also Lemma 3 in \cite{cai2010optimal}. Then
\begin{align*}
\PP\left( \left\| \hat\Sigma_m - C_{m,\sigma} \Sigma_m \right\|_2 \ge x) \right) &\le \PP\left( 4 \max_{j\le 5^{d_m}} \left| v_j^\top \left( \hat\Sigma_m - C_{m,\sigma} \Sigma_m \right) v_j\right| \ge x  \right)   \\
&\le 5^{d_m} \PP\left( 4 \left| v_j^\top \left( \hat\Sigma_m - C_{m,\sigma} \Sigma_m \right) v_j\right| \ge x  \right).
\end{align*}
As $d_m\lesssim n d_{-m}$, this implies that with $x\asymp \|\Sigma_m\|_2 \|\Sigma_{-m}\|_2 \sqrt{d_m/(n d_{-m})}$,
\begin{align*}
\PP\left( \left\| \hat\Sigma_m - C_{m,\sigma} \Sigma_m \right\|_2 \ge C \|\Sigma_m\|_2 \|\Sigma_{-m}\|_2 \sqrt{\frac{d_m}{n d_{-m}}} \right) \le    5^{d_m} \exp\left(-c_0 d_{m} \right) \le \exp\left(-c d_m \right).
\end{align*}

\noindent Second, we prove \eqref{eqn:inverse matrix}. For simplicity, denote $\Delta_m:= \hat\Sigma_m - C_{m,\sigma} \Sigma_m=\hat\Sigma_m - \tilde \Sigma_m$ with $\tilde \Sigma_m=C_{m,\sigma} \Sigma_m$. Then write
\begin{equation*}
\hat\Sigma_m^{-1} =  \tilde\Sigma_m^{-1/2}\left( \bI_{d_m} + \tilde\Sigma_m^{-1/2}\Delta_m\tilde\Sigma_m^{-1/2}\right)^{-1} \tilde\Sigma_m^{-1/2}   .
\end{equation*}
Using Neumann series expansion, we obtain
\begin{align*}
\hat\Sigma_m^{-1} &= \tilde\Sigma_m^{-1/2} \sum_{k=0}^\infty \left(-\tilde\Sigma_m^{-1/2} \Delta_m \tilde\Sigma_m^{-1/2} \right)^k \tilde\Sigma_m^{-1/2} \\
&= \tilde\Sigma_m^{-1} + \tilde\Sigma_m^{-1/2} \sum_{k=1}^\infty \left(-\tilde\Sigma_m^{-1/2} \Delta_m \tilde\Sigma_m^{-1/2} \right)^k \tilde\Sigma_m^{-1/2}
\end{align*}
Rearranging the term, we have
\begin{align*}
\hat\Sigma_m^{-1} - \tilde\Sigma_m^{-1} = -\tilde\Sigma_m^{-1} \Delta_m \tilde\Sigma_m^{-1} + \tilde\Sigma_m^{-1} \Delta_m \tilde\Sigma_m^{-1/2} \sum_{k=0}^\infty \left(-\tilde\Sigma_m^{-1/2} \Delta_m \tilde\Sigma_m^{-1/2} \right)^k \tilde\Sigma_m^{-1/2} \Delta_m \tilde\Sigma_m^{-1}
\end{align*}
Employing similar arguments in the proof of \eqref{eqn:covariance matrix}, we can show that in an event $\Omega$ with probability at least $1-\exp(-cd_m)$,
\begin{align*}
\left\| \tilde\Sigma_m^{-1/2} \Delta_m \tilde\Sigma_m^{-1/2}  \right\|_2 \le C C_{m,\sigma}^{-1} \| \Sigma_{-m}\|_2 \sqrt{\frac{d_m}{n d_{-m}} }    .
\end{align*}
As $d_m \|\Sigma_{-m}\|_2^2 \lesssim n d_{-m}$, in the same event $\Omega$, 
\begin{align*}
\left\|\tilde\Sigma_m^{-1} \Delta_m \tilde\Sigma_m^{-1}\right\|_2 &\leq C C_{m,\sigma}^{-2}\|\Sigma_m^{-1}\|_2 \|\Sigma_{-m}\|_2 \sqrt{\frac{d_m}{nd_{-m}}}, \\
\left\|\tilde\Sigma_m^{-1} \Delta_m \tilde\Sigma_m^{-1/2}\right\|_2 &\leq C C_{m,\sigma}^{-3/2} \norm{\Sigma_{-m}} \sqrt{\frac{d_m}{nd_{-m}}}   , \\
\left\|\tilde\Sigma_m^{-1/2} \Delta_m \tilde\Sigma_m^{-1/2}\right\|_2 &\leq C C_{m,\sigma}^{-1} \| \Sigma_{-m}\|_2 \sqrt{\frac{d_m}{n d_{-m}} } \le \frac12  C_{m,\sigma}^{-1} .
\end{align*}
Combining the above bounds together, we have, in the event $\Omega$,
\begin{align*}
\left\| \hat\Sigma_m^{-1} - \tilde\Sigma_m^{-1} \right\|_2 &\le C C_{m,\sigma}^{-2} \|\Sigma_m^{-1}\|_2 \| \Sigma_{-m}\|_2 \sqrt{\frac{d_m}{n d_{-m}} } + C C_{m,\sigma}^{-3} \left( \| \Sigma_{-m}\|_2 \sqrt{\frac{d_m}{n d_{-m}} } \right)^2 \cdot C_{m,\sigma}^{-1} \\
&\le C (C_{m,\sigma}^{-2} \vee C_{m,\sigma}^{-4})   \|\Sigma_m^{-1}\|_2 \| \Sigma_{-m}\|_2 \sqrt{\frac{d_m}{n d_{-m}} } .
\end{align*}

\noindent Next, we prove \eqref{eqn:const}. Employing similar arguments in the proof of \eqref{eqn:covariance matrix}, we can show
\begin{align*}
\PP\left( | \hat\Sigma_{m,11} - C_{m,\sigma} \Sigma_{m,11}| \ge C \Sigma_{m,11}\|\otimes_{k\neq m}\Sigma_{k}\|_2  \sqrt{\frac{ t_1}{n d_{-m}}} \right) \le\exp(-c_1 t_1).    
\end{align*}
Using tail probability bounds for $\chi_n^2$ (see e.g. Lemma D.2 in \cite{ma2013sparse}), we have
\begin{align*}
\PP\left( \left| \hat{\rm Var}(\cX_{1\cdots1}) - \prod_{m=1}^M\Sigma_{m,11} \right| \ge C \prod_{m=1}^M\Sigma_{m,11} \sqrt{\frac{ t_2}{n }} \right) \le\exp(-c_1 t_2).    
\end{align*}
It follows that in an event with probability at least $1-\exp(-c(t_1+t_2))$,
\begin{align*}
\left|\frac{\prod_{m=1}^M \hat\Sigma_{m,11}}{\hat{\Var}(\cX_{1\cdots1})} - C_{\sigma} \right| \le  C_{M} \prod_{m=1}^M\Sigma_{m,11} \left(\max_m \frac{\|\otimes_{k\neq m}\Sigma_{k}\|_2}{\sqrt{d_{-m}}} \cdot \sqrt{\frac{ t_1}{n }} +   \sqrt{\frac{ t_2}{n }}   \right)
\end{align*}
where $C_{M}$ depends on $M$. As $\Var(\cX_{1\cdots 1})=\prod_{m=1}^M\Sigma_{m,11}$, we finish the proof of \eqref{eqn:const}.

\end{proof}

The following lemma presents the tail bound for the spectral norm of a Gaussian random matrix.
\begin{lemma} \label{lemma:Gaussian matrix}
Let $\bE$ be an $p_1 \times p_2$ random matrix with $\bE\sim \cM\cN_{p_1 \times p_2}(0, \; \Sigma_1, \; \Sigma_{2}).$ Then for any $t>0$, with constant $C>0$, we have
\begin{equation} \label{eqn: Gaussian tail bound}
\PP\left(\|\bE\|_2 \ge C\|\Sigma_1\|_2^{1/2} \|\Sigma_{2}\|_2^{1/2} (\sqrt{p_1} + \sqrt{p_2} + t) \right) \leq \exp(-t^2) 
\end{equation}
Let $\OO_{p_1, r} = \{\bU \in \RR^{p_1\times r}, \; \bU^\top \bU = \bI_{r} \}$ be the set of all $p_1 \times r$ orthonormal columns and let $\bU_{\perp}$ be the orthogonal complement of $\bU$. Denote $\bE_{12} = \bU^\top\bE\bV_{\perp}, \; \bE_{21} = \bU_{\perp}^\top\bE\bV$, where $\bU \in \OO_{p_1\times r}, \; \bV \in \OO_{p_2\times r}.$ We have
\begin{align}
&\PP \left( \|\bE_{21}\|_2 \geq C\|\Sigma_1\|_2^{1/2} \|\Sigma_{2}\|_2^{1/2} (\sqrt{p_1} + t) \right) \leq \exp(-t^2)   \\
& \PP \left( \|\bE_{12}\|_2 \geq C\|\Sigma_1\|_2^{1/2} \|\Sigma_{2}\|_2^{1/2} (\sqrt{p_2} + t) \right) \leq \exp(-t^2)
\end{align}
\end{lemma}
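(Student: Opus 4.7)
The plan is to reduce both statements to the classical concentration bound for the spectral norm of a matrix with i.i.d. standard Gaussian entries. The key representation is that a matrix normal $\bE \sim \cM\cN_{p_1 \times p_2}(0, \Sigma_1, \Sigma_2)$ can be written as $\bE = \Sigma_1^{1/2} \bZ \Sigma_2^{1/2}$, where $\bZ \in \RR^{p_1 \times p_2}$ has i.i.d.\ $N(0,1)$ entries. This factorization pulls all covariance dependence outside the spectral norm via sub-multiplicativity, reducing the problem to a tail bound on $\|\bZ\|_2$.

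For the first bound, I would invoke the Davidson--Szarek / Gordon estimate (e.g., Corollary 7.3.3 of Vershynin's \emph{High-Dimensional Probability}):
\begin{equation*}
\PP\big(\|\bZ\|_2 \ge \sqrt{p_1} + \sqrt{p_2} + t\big) \le 2\exp(-t^2/2).
\end{equation*}
Combined with $\|\bE\|_2 \le \|\Sigma_1^{1/2}\|_2 \|\bZ\|_2 \|\Sigma_2^{1/2}\|_2 = \|\Sigma_1\|_2^{1/2} \|\Sigma_2\|_2^{1/2} \|\bZ\|_2$, this yields the claimed tail after rescaling $t$ and inflating $C$ to absorb the factor of $2$ in the prefactor and the $1/2$ in the exponent (converting $\exp(-t^2/2)$ to $\exp(-t^2)$).

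For the bounds on $\bE_{21}$ and $\bE_{12}$, the key observation is that left/right multiplication by matrices with orthonormal columns preserves the matrix normal family. Concretely,
\begin{equation*}
\bE_{21} = \bU_\perp^\top \bE \bV \sim \cM\cN_{(p_1 - r) \times r}\big(0,\, \bU_\perp^\top \Sigma_1 \bU_\perp,\, \bV^\top \Sigma_2 \bV\big),
\end{equation*}
with row- and column-covariance spectral norms bounded by $\|\Sigma_1\|_2$ and $\|\Sigma_2\|_2$ respectively, since $\bU_\perp$ and $\bV$ have orthonormal columns. Applying the first bound to $\bE_{21}$ with its native dimensions $(p_1 - r) \times r$ produces a tail of the form $\sqrt{p_1 - r} + \sqrt{r} + t$, which is at most $2\sqrt{p_1} + t$ and hence absorbed into $C(\sqrt{p_1} + t)$ after enlarging $C$. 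The bound for $\bE_{12} = \bU^\top \bE \bV_\perp$, with dimensions $r \times (p_2 - r)$, follows by the symmetric argument with $p_1 \leftrightarrow p_2$.

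No genuine obstacle is anticipated: the only non-trivial ingredient is the standard Gaussian matrix concentration, and the remaining work is purely algebraic bookkeeping of the matrix-normal factorization, dimension counting for the projected matrix, and constant-adjustment to convert between $\exp(-t^2/2)$ and the stated $\exp(-t^2)$ form.
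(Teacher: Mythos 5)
Your proposal is correct. Both you and the paper start from the same factorization $\bE = \Sigma_1^{1/2}\bZ\Sigma_2^{1/2}$ with $\bZ$ having i.i.d.\ $N(0,1)$ entries, but you diverge in how the spectral-norm tail is obtained: you invoke the classical Davidson--Szarek / Gordon estimate for $\|\bZ\|_2$ directly and push the covariance factors outside by sub-multiplicativity, whereas the paper re-derives the tail bound from scratch via a $1/3$-net over both unit spheres (giving a $7^{p_1+p_2}$ union bound) combined with a Chernoff bound on the scalar Gaussian $x^\top\bE y$. These are interchangeable --- the $\epsilon$-net argument is one standard way to \emph{prove} the classical bound, with slightly worse constants, which disappear into the unspecified $C$ anyway; your conversion of $2\exp(-t^2/2)$ to $\exp(-t^2)$ by inflating $C$ works precisely because $\sqrt{p_1}+\sqrt{p_2}\ge 2$ absorbs the $\log 2$ prefactor. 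For $\bE_{21}$ and $\bE_{12}$ your route is genuinely cleaner: you observe that $\bU_\perp^\top \bE \bV$ is itself matrix normal with row/column covariances $\bU_\perp^\top\Sigma_1\bU_\perp$ and $\bV^\top\Sigma_2\bV$ whose spectral norms are dominated by $\|\Sigma_1\|_2$ and $\|\Sigma_2\|_2$, so the first bound applied at dimensions $(p_1-r)\times r$ gives $\sqrt{p_1-r}+\sqrt r + t \le \sqrt 2\sqrt{p_1}+t$, absorbed into $C(\sqrt{p_1}+t)$. The paper instead re-runs the full $\epsilon$-net argument on $\bE_{21}$; both give the stated bound, but your reduction avoids repeating the machinery and makes the dimension-counting transparent.
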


\begin{proof}
Similar to \eqref{eq:epsilon_net}, using $\eps$-net argument for unit ball (see e.g. Lemma 5 in \cite{cai2018rate}), we have
\begin{align*}
\PP \left(\|\bE\|_2 \ge 3 u \right) \le 7^{p_1 +p_2} \cdot \max_{\|x \|_2 \le 1, \|y\|_2\le 1} \PP\left( |x^\top \bE y|\ge u \right).    
\end{align*}
Decompose $\bE=\Sigma_1^{1/2}\bZ\Sigma_{2}^{1/2}$, where $\bZ \in \RR^{p_1\times p_2}, \; \bZ_{ij} \stackrel {\text{i.i.d}}{\sim} N(0, 1).$ 
Then,
\begin{align*}
x^\top \bE y=  \left( y^\top \Sigma_{2}^{1/2} \otimes x^\top \Sigma_{1}^{1/2} \right) \vec1 (\bZ) \sim N\left(0,  y^\top \Sigma_{2}y \cdot x^\top\Sigma_1 x \right)    .
\end{align*}
By Chernoff bound of Gaussian random variables,
\begin{align*}
\PP\left( |x^\top \bE y|\ge u \right) \le 2 \exp \left( -\frac{ u^2}{y^\top \Sigma_{2}y \cdot x^\top\Sigma_1 x }  \right) \le 2   \exp\left( -\frac{ u^2}{\| \Sigma_1\|_2 \|\Sigma_{2}\|_2 }  \right)  .
\end{align*}
Setting $u\asymp \|\Sigma_1\|_2^{1/2} \|\Sigma_{2}\|_2^{1/2} (\sqrt{p_1} + \sqrt{p_2} + t)$, for certain $C>0$, we have
\begin{align*}
\PP\left(\|\bE\|_2 \ge C\|\Sigma_1\|_2^{1/2} \|\Sigma_{2}\|_2^{1/2} (\sqrt{p_1} + \sqrt{p_2} + t) \right) \le 2 \cdot 7^{p_1 +p_2} \exp(-c(p_1+p_2)-t^2)  \le \exp(-t^2) .    
\end{align*}

For $\bE_{21}$, following the same $\epsilon$-net arguments, we have
\begin{align*}
\PP \left(\| \bE_{21}\|_2 \ge 3 u \right) \le 7^{(p_1-r) +r} \cdot \max_{\|x \|_2 \le 1, \|y\|_2\le 1} \PP\left( |x^\top \bE_{21} y|\ge u \right).    
\end{align*}
Note that
\begin{align*}
x^\top \bE_{21} y&=  x^\top \bU_{\perp}^\top \Sigma_1^{1/2} \bZ \Sigma_{2}^{1/2} \bV y=  \left( y^\top \bV^\top \Sigma_{2}^{1/2} \otimes x^\top \bU_{\perp}^\top \Sigma_{1}^{1/2} \right) \vec1 (\bZ) \\
&\sim N\left(0, y^\top \bV^\top \Sigma_{2} \bV y \cdot x^\top \bU_{\perp}^\top \Sigma_1 \bU_{\perp} x \right)    .
\end{align*}
By Chernoff bound of Gaussian random variables, setting $u\asymp \|\Sigma_1\|_2^{1/2} \|\Sigma_{2}\|_2^{1/2} (\sqrt{p_1} + t)$, we have
\begin{align*}
\PP\left(\|\bE_{21}\| \ge C\|\Sigma_1\|_2^{1/2} \|\Sigma_{2}\|_2^{1/2} (\sqrt{p_1} + t) \right) \le 2 \cdot 7^{p_1} \exp(-c(p_1)-t^2)  \le \exp(-t^2) .    
\end{align*}
Similarly, we can derive the tail bound of $\|\bE_{12}\|_2$.
\end{proof}

The following lemma characterizes the maximum of norms for zero-mean Gaussian tensors after any projections.
\begin{lemma} \label{lemma:Guassian tensor projection}
Let $\cE \in \RR^{d_1 \times d_2 \times d_3}$ be a Gaussian tensor, $\cE \sim \cT\cN(0; \; \frac{1}{n}\Sigma_1^{-1}, \Sigma_2^{-1}, \Sigma_3^{-1})$, where there exists a constant $C_0>0$ such that $C_0^{-1} \le \mathop{\min}\limits_{m \in \{1,2,3 \}} \lam_{\min}(\Sigma_m) \le \mathop{\max}\limits_{m \in \{1,2,3 \}} \lam_{\max}(\Sigma_m) \le C_0$. Then we have the following bound for projections, with probability at most $C\exp(-Ct(d_2r_2 + d_3r_3) )$,
\begin{align}
\label{eqn:matrice version}
\mathop{\max}\limits_{\bV_2 \in \mathbb{R}^{d_2 \times r_2}, 
\bV_3 \in \mathbb{R}^{d_3 \times r_3} \atop \|\bV_2\|_2 \le 1, \|\bV_3\|_2 \le 1} \left\| \mat1(\cE \times_2 \bV_2^\top \times_3 \bV_3^\top)\right\|_2 \ge C\|\Sigma_1^{-1/2}\|_2 \|\Sigma_{-1}^{-1/2}\|_2 \frac{\sqrt{d_1 + r_2r_3} + \sqrt{1+t}(\sqrt{d_2r_2 + d_3r_3})}{\sqrt{n}} ,
\end{align}
for any t>0. Similar results also hold for ${\rm mat}_2(\cE \times_1 \bV_1^\top \times_3 \bV_3^\top)$ and ${\rm mat}_3(\cE \times_1 \bV_1^\top \times_2 \bV_2^\top)$.

Meanwhile, we have with probability at most $\exp(-Ct(d_1r_1 + d_2r_2 + d_3r_3) )$
\begin{align}
\label{eqn:tensor version}
\mathop{\max}\limits_{\bV_1, \bV_2, \bV_3\in \mathbb{R}^{d_m \times r_m} \atop
\|\bV_m\|_2 \le 1,\; m=1,2,3 } \left\| \cE \times_1 \bV_1^\top \times_2 \bV_2^\top \times_3 \bV_3^\top \right\|_{\rm F}^2 \ge C \|\Sigma_1^{-1}\|_2 \|\Sigma_2^{-1}\|_2 \|\Sigma_3^{-1}\|_2 \cdot \frac{r_1r_2r_3 + (1+t)(d_1r_1 + d_2r_2 + d_3r_3)}{n}, 
\end{align}
for any $t>0.$
\end{lemma}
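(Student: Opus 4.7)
\medskip
\noindent\textbf{Proof Proposal for Lemma \ref{lemma:Guassian tensor projection}.}

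\medskip
The plan is to combine a pointwise Gaussian tail bound with an $\varepsilon$-net argument over the low-rank projection matrices, using the machinery of Lemma \ref{lemma:epsilonnet} and Lemma \ref{lemma:Gaussian matrix} developed earlier.

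\medskip
\emph{Step 1: Pointwise bound for fixed projections.} For part (i), fix matrices $\bV_2, \bV_3$ with $\|\bV_m\|_2\le 1$ and $\mathrm{rank}(\bV_m)\le r_m$. Because $\cE$ is tensor normal, the projection $\cE\times_2\bV_2^\top\times_3\bV_3^\top$ is again tensor normal, and its mode-1 matricization is a $d_1\times r_2r_3$ Gaussian matrix whose ``row'' covariance is $\tfrac{1}{n}\Sigma_1^{-1}$ and whose ``column'' covariance is $\tfrac{1}{n}(\bV_3^\top\Sigma_3^{-1}\bV_3)\otimes(\bV_2^\top\Sigma_2^{-1}\bV_2)$, whose spectral norm is at most $\|\Sigma_2^{-1}\|_2\|\Sigma_3^{-1}\|_2$. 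Applying Lemma \ref{lemma:Gaussian matrix} therefore gives, for any $s>0$,
\begin{equation*}
\PP\Big(\|\mat1(\cE\times_2\bV_2^\top\times_3\bV_3^\top)\|_2 \ge C\|\Sigma_1^{-1/2}\|_2\|\Sigma_{-1}^{-1/2}\|_2\,(\sqrt{d_1}+\sqrt{r_2r_3}+s)/\sqrt{n}\Big)\le e^{-s^2}.
\end{equation*}
For part (ii), a fixed projection $\cE\times_1\bV_1^\top\times_2\bV_2^\top\times_3\bV_3^\top$ is a Gaussian tensor whose vectorization has covariance bounded by $\tfrac{1}{n}\prod_{m}\|\Sigma_m^{-1}\|_2$ times the identity on an $r_1r_2r_3$-dimensional space. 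The Hanson-Wright (or $\chi^2$) tail bound then yields, for any $s>0$,
\begin{equation*}
\PP\Big(\|\cE\times_{m=1}^3\bV_m^\top\|_F^2 \ge C\prod_m\|\Sigma_m^{-1}\|_2\,(r_1r_2r_3 + s)/n\Big)\le e^{-cs}.
\end{equation*}

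\medskip
\emph{Step 2: Discretization of the low-rank parameter set.} Lemma \ref{lemma:epsilonnet}(ii) constructs, for each mode $m$, an $\varepsilon$-net $\{\bV_m^{(j)}\}$ for the set $\{\bV_m\in\RR^{d_m\times r_m}:\|\bV_m\|_2\le 1\}$ of cardinality at most $(C/\varepsilon)^{d_m r_m}$, and shows that the supremum of any linear functional over low-rank unit-ball matrices is controlled (up to a factor of $2$) by its maximum over the net, using the product bound \eqref{lm-3-3}--\eqref{lm-3-4}. The map $(\bV_2,\bV_3)\mapsto \mat1(\cE\times_2\bV_2^\top\times_3\bV_3^\top)$ is bilinear in $(\bV_2,\bV_3)$ and the map $(\bV_1,\bV_2,\bV_3)\mapsto \cE\times_{m=1}^3\bV_m^\top$ is trilinear, so the product-net bound in Lemma \ref{lemma:epsilonnet}(iii) applies to both targets.

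\medskip
\emph{Step 3: Union bound and simplification.} Taking the union over the product $\varepsilon$-net (of cardinality at most $\exp(C(d_2 r_2+d_3 r_3))$ for part (i) and $\exp(C(d_1 r_1+d_2 r_2+d_3 r_3))$ for part (ii)), I choose $s^2 \asymp (1+t)(d_2r_2+d_3r_3)$ in part (i) and $s \asymp (1+t)(d_1r_1+d_2r_2+d_3r_3)$ in part (ii), so that the tail probability from Step 1 dominates the log-cardinality of the net by a factor of $t$. Converting the bound on the net to a bound on the full low-rank set via \eqref{lm-3-4} introduces only a constant factor. After simplification, this yields
\begin{equation*}
\mathrm{prob.\ bound}\ \lesssim\ \exp\!\big(-Ct(d_2r_2+d_3r_3)\big)\quad\text{and}\quad \exp\!\big(-Ct(d_1r_1+d_2r_2+d_3r_3)\big),
\end{equation*}
for parts (i) and (ii) respectively, matching \eqref{eqn:matrice version} and \eqref{eqn:tensor version}. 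The analogous bounds for $\mat2$ and $\mat3$ follow by symmetry.

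\medskip
\emph{Main obstacle.} The delicate point is tracking the correct exponents when combining the pointwise Gaussian tail with the $\varepsilon$-net cardinality: the net covers the product of three low-rank balls but the covariance on the projected Gaussian depends multiplicatively on $(\bV_m)$, so the discretization error must be absorbed into the deviation rather than the mean. Lemma \ref{lemma:epsilonnet}(iii) is designed exactly to handle this multilinear discretization issue via \eqref{lm-3-3}, turning what would otherwise be a brittle computation into a routine application.
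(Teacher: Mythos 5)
Your proposal is correct and follows essentially the same route as the paper's own proof: a pointwise Gaussian tail bound (Lemma \ref{lemma:Gaussian matrix} for the spectral-norm statement, a $\chi^2$/Hanson--Wright bound for the squared Frobenius-norm statement), an $\varepsilon$-net discretization over the low-rank unit balls, and a union bound with tail level tuned to dominate the net cardinality by a factor of $t$. The only stylistic difference is that you invoke the general multilinear-discretization machinery of Lemma \ref{lemma:epsilonnet}(iii) (equations \eqref{lm-3-3}--\eqref{lm-3-4}) directly, whereas the paper unrolls the ``subtraction argument'' by hand; the content is identical.
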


\begin{proof} The key idea for the proof of this lemma is via $\epsilon$-net. We first prove \eqref{eqn:matrice version}. By Lemma \ref{lemma:epsilonnet}, for $m=1,2,3$, there exists $\epsilon$-nets: $\bV_m^{(1)}, \dots ,\bV_m^{(N_m)}$ for $\{ \bV_m \in \RR^{d_m \times r_m}: \|\bV_m\|_2 \le 1 \}$, $|\cN_m| \le ((4+\eps)/\eps)^{d_mr_m}$, such that for any $\bV_m \in \RR^{d_m \times r_m}$ satisfying $\|\bV_m\|_2 \le 1$, there exists $\bV_m^{(j)}$ such that
$\|\bV_m^{(j)} - \bV_m\|_2 \le \eps.$

For fixed $\bV_2^{(i)}$ and $\bV_3^{(j)}$, we define
\begin{align*}
\bZ_1^{(ij)} = \mat1 \left( \calE \times_2 (\bV_2^{(i)})^\top \times_3 (\bV_3^{(j)})^\top \right) \in \RR^{d_1 \times (r_2r_3)}.  
\end{align*}
It is easy to obtain that
$\bZ_1^{(ij)} \sim \cM\cN_{d_1 \times r_2r_3}\left(0; \; \frac{1}{n}\Sigma_1^{-1}, \; (\bV_2^{(i)} \otimes \bV_3^{(j)}) \cdot \Sigma_{-1}^{-1} \cdot (\bV_2^{(i)} \otimes \bV_3^{(j)})^\top \right).$ Then employing similar arguments of Lemma \ref{lemma:Gaussian matrix},
\begin{equation*}
\PP\left(\|\bZ_1^{(ij)}\|_2 \le C \|\Sigma_1^{-1/2}\|_2 \|\Sigma_{-1}^{-1/2}\|_2 \left(\frac{\sqrt{d_1} + \sqrt{r_2r_3} + t}{\sqrt{n}}\right)\right) \ge 1 - 2\exp(-t^2).   
\end{equation*}
Then we further have:
\begin{equation}
\label{eqn: lemma 4 proof}
\PP\left( \max\limits_{i,j} \|\bZ_1^{(ij)}\|_2 \le C \|\Sigma_1^{-1/2}\|_2 \|\Sigma_{-1}^{-1/2}\|_2 \left(\frac{\sqrt{d_1} + \sqrt{r_2r_3} + t}{\sqrt{n}}\right) \right) \ge 1 - 2((4+\eps)/\eps)^{d_2r_2 + d_3r_3} \exp(-t^2) 
\end{equation}
for all $t>0$. Denote
\begin{align*}
\bV_2^{*}, \bV_3^{*} &= \mathop{\rm argmax}\limits_{\bV_2 \in \RR^{d_2 \times r_2}, \bV_3 \in \RR^{d_3 \times r_3} \atop
\|\bV_2\|_2 \le 1, \|\bV_3\|_2 \le 1} \left\|\mat1 \left(\cE \times_2 \bV_2^\top \times_3 \bV_3^\top\right) \right\|_2 \\
M &= \mathop{\max}\limits_{\bV_2 \in \RR^{d_2 \times r_2}, \bV_3 \in \RR^{d_3 \times r_3} \atop
\|\bV_2\|_2 \le 1, \|\bV_3\|_2 \le 1} \left\| \mat1 \left(\calE \times_2 \bV_2^\top \times_3 \bV_3^\top\right) \right\|_2
\end{align*}
Using $\eps$-net arguments, we can find $1 \le i \le N_2$ and $1 \le j \le N_3$ such that $\|\bV_2^{(i)} - \bV_2^{*}\|_2 \le \eps$ and $\|\bV_3^{(i)} - \bV_3^{*}\|_2 \le \eps$. In this case, under \eqref{eqn: lemma 4 proof},
\begin{align*}
M =& \left\| \mat1 \left( \cE \times_2 (\bV_2^{*})^\top \times_3 ( \bV_3^{*})^\top\right) \right\|_2 \\
\le & \left\|\mat1 \left(\cE \times_2 (\bV_2^{(i)})^\top \times_3 (\bV_3^{(j)})^\top\right) \right\|_2
+ \left\|\mat1 \left(\cE \times_2 (\bV_2^{*} -\bV_2^{(i)})^\top \times_3 (\bV_3^{(j)})^\top\right) \right\|_2 \\
+& \left\|\mat1 \left(\cE \times_2 (\bV_2^{*})^\top \times_3 (\bV_3^{*} - \bV_3^{(j)})^\top\right) \right\|_2 \\
\le & C \left\|\Sigma_1^{-1/2} \right\|_2 \left\|\Sigma_{-1}^{-1/2}\right\|_2 \left(\frac{\sqrt{d_1} + \sqrt{r_2r_3} + t}{\sqrt{n}} \right) + \eps M + \eps M,
\end{align*}
Therefore, we have
\begin{equation*}
\PP \left(M \le C\cdot \frac{1}{1-2\eps}\left\|\Sigma_1^{-1/2}\right\|_2 \left\|\Sigma_{-1}^{-1/2}\right\|_2 \left(\frac{\sqrt{d_1} + \sqrt{r_2r_3} + t}{\sqrt{n}}\right) \right) \ge 1 - 2((4+\eps)/\eps)^{d_2r_2 + d_3r_3} \exp(-t^2)    
\end{equation*}
By setting $\eps=1/3$, and $t^2 = 2\log(13)(d_2r_2 + d_3r_3)(1+x)$, we have proved the first part of the lemma.

\noindent Then we prove the claim \eqref{eqn:tensor version}. Consider a Gaussian tensor $\cZ \in \RR^{d_1 \times d_2 \times d_3}$ with entries $z_{ijk} \stackrel{\text{i.i.d}}{\sim} N(0, \frac{1}{n}),$ then we have $\cE \times_1 \bV_1^\top \times_2 \bV_2^\top \times_3 \bV_3^\top := \cZ \times_1 ( \bV_1^\top \Sigma_1^{-1/2}) \times_2 (\bV_2^\top \Sigma_2^{-1/2}) \times_3 (\bV_3^\top \Sigma_1^{-1/2}).$ 
By Lemma 8 in \cite{zhang2018tensor}, we know
\begin{align*}
&\PP \Bigg( \left\|\cZ \times_1 (\Sigma_1^{-1/2} \bV_1)^\top \times_2 (\Sigma_2^{-1/2} \bV_2)^\top \times_3 (\Sigma_3^{-1/2} \bV_3)^\top\right\|_{\rm F}^2 - \frac{1}{n}\left\|\left(\Sigma_1^{-1/2} \bV_1\right) \otimes \left(\Sigma_2^{-1/2} \bV_2\right) \otimes \left(\Sigma_3^{-1/2} \bV_3\right) \right\|_{\rm F}^2 \\ 
&\quad \ge \frac{2}{n} \sqrt{t \left\|\left(\bV_1^\top \Sigma_1^{-1} \bV_1\right) \otimes \left(\bV_2^\top \Sigma_2^{-1} \bV_2\right) \otimes \left(\bV_3^\top \Sigma_3^{-1} \bV_3\right) \right\|_{\rm F}^2}  + \frac{2t}{n} \left\|\left(\Sigma_1^{-1/2} \bV_1\right) \otimes \left(\Sigma_2^{-1/2} \bV_2\right) \otimes \left(\Sigma_3^{-1/2} \bV_3\right) \right\|_2^2 \Bigg) \\
&\le \exp(-t).
\end{align*}

Note that for any given $\bV_k \in \RR^{d_k \times r_k}$ satisfying $\|\bV_k\|_2 \le 1, k=1, 2, 3,$ we have 
\begin{equation*}
\left\|\left(\Sigma_1^{-1/2} \bV_1\right) \otimes \left(\Sigma_2^{-1/2} \bV_2\right) \otimes \left(\Sigma_3^{-1/2} \bV_3 \right)\right\|_2 \le \|\Sigma_1^{-1/2}\|_2 \|\Sigma_2^{-1/2}\|_2 \|\Sigma_3^{-1/2}\|_2 := C_\lam^{1/2}.    
\end{equation*}
Then,
\begin{equation*}
\left\|\left(\Sigma_1^{-1/2} \bV_1 \right) \otimes \left(\Sigma_2^{-1/2} \bV_2\right) \otimes \left(\Sigma_3^{-1/2} \bV_3 \right)\right\|_{\rm F}^2 \le C_\lam r_1r_2r_3,    
\end{equation*}
and 
\begin{align*}
\left\|\left( \bV_1^\top \Sigma_1^{-1} \bV_1\right) \otimes \left(\bV_2^\top \Sigma_2^{-1} \bV_2 \right) \otimes \left( \bV_3^\top \Sigma_3^{-1} \bV_3 \right) \right\|_{\rm F}^2 \le C_\lam^2 r_1r_2r_3,
\end{align*}
we have for any fixed $\bV_1, \bV_2, \bV_3$ and $t > 0$ that
\begin{equation*}
\PP \left( \left\|\cZ \times_1 (\Sigma_1^{-1/2} \bV_1)^\top \times_2 (\Sigma_2^{-1/2} \bV_2)^\top \times_3 (\Sigma_3^{-1/2} \bV_3)^\top\right\|_{\rm F}^2 
\ge \frac{C_\lam r_1r_2r_3 + 2C_\lam \sqrt{r_1r_2r_3 t} + 2C_\lam t}{n} \right) \le \exp(-t).    
\end{equation*}
By geometric inequality, $2\sqrt{r_1r_2r_3t} \le r_1r_2r_3 + t$, then we further have
\begin{equation*}
\PP \left( \left\|\cE \times_1 \bV_1^\top \times_2 \bV_2^\top \times_3 \bV_3^\top\right\|_{\rm F}^2 
\ge \frac{C_\lam(2r_1r_2r_3 + 3t)}{n} \right) \le \exp(-t).   
\end{equation*}

The rest proof for this claim is similar to the first part. One can find three $\eps$-nets: $\bV_m^{(1)}, \dots ,\bV_m^{(N_m)}$ for $\{ \bV_m \in \RR^{d_m \times r_m}: \|\bV_m\|_2 \le 1 \}$, $N_m \le ((4+\eps)/\eps)^{d_mr_m}$, and we have the tail bound:
\begin{align} \label{eqn:lemma 4 proof 2}
        &\max\limits_{\bV_1^{(i)}, \bV_2^{(j)}, \bV_3^{(k)}} \PP \left( \left\|\cE \times_1 (\bV_1^{(i)})^\top \times_2 (\bV_2^{(j)})^\top \times_3 (\bV_3^{(k)})^\top\right\|_{\rm F}^2 \ge \frac{C_\lam (2r_1r_2r_3 + 3t)}{n} \right) \notag \\
        &\le \exp(-t) \cdot ((4+\eps)/\eps)^{d_1r_1 + d_2r_2 + d_3r_3} ,
\end{align}
for all $t > 0$. Assume
\begin{align*}
\bV_1^{*}, \bV_2^{*}, \bV_3^{*} &= \mathop{\rm argmax}\limits_{\bV_m \in \mathbb{R}^{d_m \times r_m} \atop
\|\bV_m\|_2 \le 1} \left\|\cE \times_1 \bV_1^\top \times_2 \bV_2^\top \times_3 \bV_3^\top\right\|_{\rm F}^2 \\
T &= \left\|\cE \times_1 (\bV_1^*)^\top \times_2 (\bV_2^*)^\top \times_3 (\bV_3^*)^\top\right\|_{\rm F}^2
\end{align*}
Then we can find $\bV_1^{(i)}, \bV_2^{(j)}, \bV_3^{(k)}$ in the corresponding $\eps$-nets such that $\|\bV_m^{*} - \bV_m\|_2 \le \eps$, $m=1,2,3$. And
\begin{align*}
T =& \left\|\cE \times_1 (\bV_1^*)^\top \times_2 (\bV_2^*)^\top \times_3 (\bV_3^*)^\top\right\|_{\rm F}^2 \\
\le & \left\|\cE \times_1 (\bV_1^{(i)})^\top \times_2 (\bV_2^{(j)})^\top \times_3 (\bV_3^{(k)})^\top\right\|_{\rm F}^2
+ \left\|\cE \times_1 (\bV_1^{(i)} - \bV_1^{*})^\top \times_2 (\bV_2^{*})^\top \times_3 (\bV_3^{*})^\top\right\|_{\rm F}^2 \\
+ & \left\|\cE \times_1 (\bV_1^{(i)})^\top \times_2 (\bV_2^{(j)} - \bV_2^*)^\top \times_3 (\bV_3^*)^\top\right\|_{\rm F}^2
+ \left\|\cE \times_1 (\bV_1^{(i)})^\top \times_2 (\bV_2^{(j)})^\top \times_3 (\bV_3^{(k)} - \bV_3^{*})^\top\right\|_{\rm F}^2 \\
\le & \frac{C_\lam (2r_1r_2r_3 + 3t)}{n} + 3\eps T
\end{align*}
which implies 
\begin{align*}
\left\|\cE \times_1 (\bV_1^*)^\top \times_2 (\bV_2^*)^\top \times_3 (\bV_3^*)^\top\right\|_{\rm F}^2 \le \frac{C_\lam \cdot (2r_1r_2r_3 + 3t)}{n(1-3\eps)}.    
\end{align*}
If we set $\eps = 1/9$ and $t=(1+x)\log(37)\cdot (d_1r_1 + d_2r_2 + d_3r_3)$, by \eqref{eqn:lemma 4 proof 2} we have proved \eqref{eqn:tensor version}. 
\end{proof}

\begin{lemma}\label{lemma:low-rank-tensor}
Suppose that $\cX_1,...,\cX_n \in \RR^{d_1\times\cdots\times d_M}$ are i.i.d. $\cT\cN(\mu,\bSigma)$ with $\bSigma=[\Sigma_m]_{m=1}^M$, and $\bar\cX$ is the sample mean. Then, with probability at least $1-e^{-c\sum_{m=1}^M d_m r}$,
\begin{align}
\sup_{\substack{\cV=\sum_{i=1}^r w_i \circ_{m=1}^M u_{im}\in \RR^{d_1\times\cdots\times d_M} \\ \sum_{i=1}^r w_i^2\le 1 , \| u_{im}\|_2=1, \forall i\le r, m\le M }}  \left| <\bar\cX-\mu,\cV> \right| \lesssim \sqrt{\frac{\sum_{m=1}^M d_m r }{n}}.
\end{align}
\end{lemma}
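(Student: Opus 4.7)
Since $\bar\cX - \mu \sim \cT\cN(0, n^{-1}\bSigma)$, the quantity of interest is a supremum of a centered Gaussian process over the set $\mathcal{S}_r$ of rank-$r$ CP tensors with normalized factors and weight vector in the unit ball. My plan is to strip off the CP parameterization via a duality step and reduce everything to the multilinear operator norm of $\bar\cX-\mu$, which is then bounded by a standard $\epsilon$-net argument on the product of unit spheres.

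\textbf{Step 1 (duality).} For any feasible $\cV = \sum_{i=1}^r w_i \circ_{m=1}^M u_{im}$, the triangle inequality and Cauchy--Schwarz on the weight vector give
\[
|\langle \bar\cX - \mu,\, \cV\rangle|
 \;\le\; \sum_{i=1}^r |w_i|\,\bigl|(\bar\cX-\mu)\times_{m=1}^M u_{im}^\top\bigr|
 \;\le\; \sqrt{r}\,\|w\|_2\,\|\bar\cX-\mu\|_{\rm op}
 \;\le\; \sqrt{r}\,\|\bar\cX-\mu\|_{\rm op},
\]
where $\|\cE\|_{\rm op} := \sup_{v_m \in \mathbb{S}^{d_m-1}}\bigl|\cE\times_{m=1}^M v_m^\top\bigr|$ is the rank-one multilinear spectral norm. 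This eliminates the CP structure entirely and reduces the problem to bounding the operator norm of a single Gaussian tensor.

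\textbf{Step 2 (operator norm of $\bar\cX-\mu$).} Writing $\bar\cX - \mu \stackrel{d}{=} n^{-1/2}\,\cZ\times_{m=1}^M \Sigma_m^{1/2}$ with $\cZ$ having i.i.d.\ $N(0,1)$ entries, the assumption $\lambda_{\max}(\otimes_m \Sigma_m)\le C_0$ bounds each $\|\Sigma_m^{1/2}\|_2$ uniformly. A product $\epsilon$-net on $\prod_m \mathbb{S}^{d_m-1}$ (Lemma \ref{lemma:epsilonnet}(i) applied mode-wise with $\epsilon = 1/4$, total net size $\exp(C\sum_m d_m)$) combined with Gaussian tail bounds for each net element and a union bound yields
\[
\PP\!\Bigl(\|\bar\cX-\mu\|_{\rm op} \ge C_1\sqrt{\tfrac{\sum_{m=1}^M d_m}{n}}\Bigr) \le e^{-c\sum_{m=1}^M d_m}.
\]
Combined with Step 1, this produces the stated rate $\sqrt{\sum_m d_m\,r/n}$.

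\textbf{Main obstacle.} The clean Cauchy--Schwarz route above gives the right rate but probability only $1 - e^{-c\sum_m d_m}$, whereas the lemma asserts the stronger $1 - e^{-c r\sum_m d_m}$. Matching the exponent requires working directly on $\mathcal{S}_r$, whose intrinsic dimension is $r(1+\sum_m d_m)$ and whose elements satisfy $\|\cV\|_F \le \sqrt{r}$ (so per-point variance is at most $Cr/n$); however, a naive union bound over such a net loses an extra $\sqrt{r}$ in the rate. The sharp version therefore likely proceeds via a Sudakov--Fernique reduction to an isotropic Gaussian tensor combined with the inclusion $\mathcal{S}_r \subset \sqrt{r}\,\mathrm{conv}(\pm F)$, where $F$ is the set of unit rank-one tensors; this gives Gaussian width $\lesssim \sqrt{r\sum_m d_m}$, after which Gaussian concentration around this mean (with maximal standard deviation $\sqrt{Cr/n}$) delivers the rate $\sqrt{r\sum_m d_m/n}$ with the claimed exponential exponent $r\sum_m d_m$. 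Reconciling rate and probability through a single sharp concentration inequality is the principal technical point.
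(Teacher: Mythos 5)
Your Step 1 route, $|\langle\bar\cX-\mu,\cV\rangle|\le\|w\|_1\,\|\bar\cX-\mu\|_{\rm op}\le\sqrt{r}\,\|\bar\cX-\mu\|_{\rm op}$, followed by an $\epsilon$-net on the product of spheres to bound the multilinear operator norm, is a genuinely different and cleaner way to get the rate $\sqrt{r\sum_m d_m/n}$. The paper instead rewrites $\cV=\cS\times_{m=1}^M U_m$ with a super-diagonal core $\cS$ and factor matrices $U_m$, builds $\epsilon$-nets for $\{\|\cS\|_F\le 1\}$ and for $\{\|U_m\|_2\le 1\}$ of total log-cardinality $O(r\sum_m d_m)$, passes from the supremum to a discrete maximum by a telescoping subtraction argument, and applies Gaussian Lipschitz concentration at each net point. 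The reason the paper's computation lands on the exponent $r\sum_m d_m$ is that under $\|\cS\|_F\le 1$ and $\|U_m\|_2\le 1$ every Tucker tensor $\cS\times_m U_m$ on the net has $\|\cdot\|_F\le 1$, so the per-point tail is $\exp(-cnt^2)$ with a unit Lipschitz constant; pairing this with the $\exp(C\,r\sum_m d_m)$ net size yields the claimed exponent at $t\asymp\sqrt{r\sum_m d_m/n}$.

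You correctly flag that your argument only gives $1-e^{-c\sum_m d_m}$. The Sudakov--Fernique/Gaussian-width repair you sketch cannot reach $e^{-c\,r\sum_m d_m}$ either, and the reason is worth making explicit: once the mean is controlled by the Gaussian width, deviations of the supremum are governed by the $L^2$-diameter of the index set, i.e.\ by $\sup_{\cV\in\mathcal{S}_r}\|\cV\|_F$. Taking $u_{im}\equiv u_m$ for all $i$ and $w_i=1/\sqrt r$ gives $\|\cV\|_F=\sqrt r$, so the per-point standard deviation is of order $\sqrt{r/n}$; a deviation $t\asymp\sqrt{r\sum_m d_m/n}$ then has tail $\exp(-nt^2/(2r))\asymp e^{-c\sum_m d_m}$, matching the exponent you already have and no better. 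The sharper exponent in the paper hinges on the spectral-norm cap $\|U_m\|_2\le 1$ built into its nets, which forces $\|\cV\|_F\le 1$ throughout the covered set; but if the columns $u_{im}$ are unit-norm yet non-orthogonal within a mode, then $\|U_m\|_2>1$ and $\|\cV\|_F$ ranges up to $\sqrt r$, so the lemma as stated (with no orthogonality imposed) sits in tension with that step of the paper's own proof. Over the full non-orthogonal CP cone, the single extremal point above already shows the failure probability at this rate cannot drop below $e^{-c\sum_m d_m}$, which is exactly what your argument delivers.
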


\begin{proof}
For any CP low-rank tensor $\cV=\sum_{i=1}^r w_i \circ_{m=1}^M u_{im}$ with $w_i>0$ and $\| u_{im}\|_2=1$, we can reformulate a Tucker type decomposition, $\cV=\cS\times_{m=1}^M U_m$ with $\cS=\diag(w_1,...,w_r) \in \RR^{r\times\cdots\times r}$ and $U_m=(u_{1m},...,u_{rm})$. Let $N_0=\lfloor(1+2/\epsilon)^{r}\rfloor$ and $N_m=\lfloor(1+2/\epsilon)^{d_mr}\rfloor$. There exists $\epsilon$-nets $\cS_{j_0}^*  \in \RR^{r\times\cdots\times r}$ with diagonal elements $\cS_{j_0,k,...,k}^*\neq0$ and all the off-diagonal elements being 0, $\|\cS_{j_0}^*\|_{\rm F}\le 1, j_0=1,...,N_0$, and $U_{m,j_m}^* \in \RR^{d_m\times r}$ with $\|U_{m,j_m}^*\|_{2}\le 1,  j_m=1,...,N_m, 1\le m\le M$, such that
\begin{align*}
\max_{\|\cS\|_{\rm F}\le 1} \min_{1\le j_0\le N_0} \|\cS-\cS_{j_0}^*\|_{\rm F}\le \epsilon, \qquad \max_{\|U_m\|_{2}\le 1} \min_{1\le j_m\le N_m} \|U_m- U_{m,j_m}^*\|_{2}\le \epsilon, 1\le m\le M.   
\end{align*}
Note that $\sum_{i=1}^r w_i^2\le 1$ is equivalent to $\|\cS\|_{\rm F}\le 1$. Let $\cY=\bar X-\mu$.
Then by the ``subtraction argument",
\begin{align*}
&\sup_{\substack{\cV=\sum_{i=1}^r w_i \circ_{m=1}^M u_{im}\in \RR^{d_1\times\cdots\times d_M} \\ \sum_{i=1}^r w_i^2\le 1 , \| u_{im}\|_2=1, \forall i\le r, m\le M }} \left| <\cY,\cV> \right| -   \max_{\substack{\|\cS_{j_0}^*\|_{\rm F}\le 1, \|U_{m,j_m}^*\|_{2}\le 1 ,\\ j_0\le N_0,   j_m\le N_m, \forall 1\le m\le M}} \left| <\cY,\cS_{j_0}^*\times_{m=1}^M U_{m,j_m}^*> \right|   \\
=& \sup_{\substack{\cV=\sum_{i=1}^r w_i \circ_{m=1}^M u_{im}\in \RR^{d_1\times\cdots\times d_M} \\ \sum_{i=1}^r w_i^2\le 1 , \| u_{im}\|_2=1, \forall i\le r, m\le M }} \left| <\cY,\cV> \right| -   \max_{\substack{\|\cS_{j_0}^*\|_{\rm F}\le 1, j_0\le N_0}} \left| <\cY,\cS_{j_0}^*\times_{m=1}^M U_{m}> \right| \\
&+\sum_{k=0}^{M-1} \left( \max_{\substack{\|\cS_{j_0}^*\|_{\rm F}\le 1, \|U_{m,j_m}^*\|_{2}\le 1 ,\\ j_0\le N_0,   j_m\le N_m, \forall m\le k}} \left| <\cY,\cS_{j_0}^*\times_{m=1}^k U_{m,j_m}^*\times_{m=k+1}^M U_{m}> \right| \right.\\
&\qquad\qquad \left. - \max_{\substack{\|\cS_{j_0}^*\|_{\rm F}\le 1, \|U_{m,j_m}^*\|_{2}\le 1 ,\\ j_0\le N_0,   j_m\le N_m, \forall m\le k+1}} \left| <\cY,\cS_{j_0}^*\times_{m=1}^{k+1} U_{m,j_m}^*\times_{m=k+2}^M U_{m}> \right| \right)\\
\le& (M+1)\epsilon  \sup_{\substack{\cV=\sum_{i=1}^r w_i \circ_{m=1}^M u_{im}\in \RR^{d_1\times\cdots\times d_M} \\ \sum_{i=1}^r w_i^2\le 1 , \| u_{im}\|_2=1, \forall i\le r, m\le M }} \left| <\cY,\cV> \right|
\end{align*}
Setting $\epsilon=1/(2M+2)$,
\begin{align*}
\sup_{\substack{\cV=\sum_{i=1}^r w_i \circ_{m=1}^M u_{im}\in \RR^{d_1\times\cdots\times d_M} \\ \sum_{i=1}^r w_i^2\le 1 , \| u_{im}\|_2=1, \forall i\le r, m\le M }} \left| <\cY,\cV> \right| \le 2     \max_{\substack{\|\cS_{j_0}^*\|_{\rm F}\le 1, \|U_{m,j_m}^*\|_{2}\le 1 ,\\ j_0\le N_0,   j_m\le N_m, \forall 1\le m\le M}} \left| <\cY,\cS_{j_0}^*\times_{m=1}^M U_{m,j_m}^*> \right| .
\end{align*}
It follows that
\begin{align*}
&\PP\left(\sup_{\substack{\cV=\sum_{i=1}^r w_i \circ_{m=1}^M u_{im}\in \RR^{d_1\times\cdots\times d_M} \\ \sum_{i=1}^r w_i^2\le 1 , \| u_{im}\|_2=1, \forall i\le r, m\le M }} \left| <\cY,\cV> \right| \ge x\right) \\
\le& \PP\left(     \max_{\substack{\|\cS_{j_0}^*\|_{\rm F}\le 1, \|U_{m,j_m}^*\|_{2}\le 1 ,\\ j_0\le N_0,   j_m\le N_m, \forall 1\le m\le M}} \left| <\cY,\cS_{j_0}^*\times_{m=1}^M U_{m,j_m}^*> \right|  \ge x/2 \right)\\
\le& \prod_{k=0}^M N_k \cdot \PP\left(     \left| <\cY,\cS_{j_0}^*\times_{m=1}^M U_{m,j_m}^*> \right|  \ge x/2 \right) \\
\le& (4M+5) ^{\sum_{m=1}^M d_m r +r} \cdot \PP\left(     \left| <\cY,\cS_{j_0}^*\times_{m=1}^M U_{m,j_m}^*> \right|  \ge x/2 \right).
\end{align*}
Since $\cY=\bar X-\mu\sim \cT\cN(0,\frac1n \bSigma)$, we can show $| <\cY,\cS_{j_0}^*\times_{m=1}^M U_{m,j_m}^*> |$ is a $n^{-1/2}\prod_{m=1}^M \|\Sigma_m\|_{2}^{1/2}$ Lipschitz function, and $\E{| <\cY,\cS_{j_0}^*\times_{m=1}^M U_{m,j_m}^*> |}\le \sqrt{2/\pi} n^{-1/2}\prod_{m=1}^M \|\Sigma_m\|_{2}^{1/2}$. Then, by Gaussian concentration inequalities for Lipschitz functions,
\begin{align*}
\PP\left(     \left| <\cY,\cS_{j_0}^*\times_{m=1}^M U_{m,j_m}^*> \right|  \ge \E{| <\cY,\cS_{j_0}^*\times_{m=1}^M U_{m,j_m}^*> |} + t \right) \le \exp\left( -\frac{nt^2}{\prod_{m=1}^M \|\Sigma_m\|_{2}} \right).
\end{align*}
Setting $x\asymp t\asymp \sqrt{(\sum_{m=1}^M d_m r +r)/n}$, in an event with at least probability at least $1-e^{-c\sum_{m=1}^M d_m r}$,
\begin{align*}
\sup_{\substack{\cV=\sum_{i=1}^r w_i \circ_{m=1}^M u_{im}\in \RR^{d_1\times\cdots\times d_M} \\ \sum_{i=1}^r w_i^2\le 1 , \| u_{im}\|_2=1, \forall i\le r, m\le M }}  \left| <\bar\cX-\mu,\cV> \right| \lesssim \sqrt{\frac{\sum_{m=1}^M d_m r}{n}}.
\end{align*}
\end{proof}

The following lemma gives an inequality in terms of Frobenius norm between two tensors. 
\begin{lemma} 
\label{lemma:tensor norm inequality}
For two $M$-th order tensors $\gamma, \; \hat \gamma \in \RR^{d_1 \times \cdots \times d_M}$, if $\norm{\gamma - \hat\gamma}_{\rm F} = o(\|\gamma\|_{\rm F})$ as $n \rightarrow \infty$, and $\norm{\gamma}_{\rm F} \ge c$ for some constant $c>0$, then when $n \rightarrow \infty$,
\begin{equation*}
\norm{\gamma}_{\rm F} \cdot \norm{\hat\gamma}_{\rm F} - \langle \gamma ,\; \hat \gamma \rangle \asymp \norm{\gamma - \hat\gamma}_{\rm F}^2.    
\end{equation*}
\end{lemma}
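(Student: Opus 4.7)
}
The plan is to reduce the inequality to a two-vector identity by vectorizing the tensors: set $g := \vect(\gamma), \hat g := \vect(\hat\gamma) \in \RR^{d}$, so that all norms and inner products become Euclidean ones, and put $\Delta := \hat g - g$. Rather than manipulating $\|g\|\|\hat g\| - \langle g,\hat g\rangle$ directly, I would rationalize it against the conjugate quantity,
\begin{equation*}
\|g\|\|\hat g\| - \langle g,\hat g\rangle \;=\; \frac{\|g\|^2\|\hat g\|^2 - \langle g,\hat g\rangle^2}{\|g\|\|\hat g\| + \langle g,\hat g\rangle},
\end{equation*}
which is valid as long as the denominator is nonzero. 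Expanding $\hat g = g+\Delta$ and applying the Lagrange (or Binet–Cauchy) identity gives the clean form
\begin{equation*}
\|g\|^2\|\hat g\|^2 - \langle g,\hat g\rangle^2 \;=\; \|g\|^2\|\Delta\|^2 - \langle g,\Delta\rangle^2,
\end{equation*}
which is nonnegative by Cauchy–Schwarz and bounded above by $\|g\|^2\|\Delta\|^2$.

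Next I would control the denominator using the hypotheses. From $\|\Delta\| = o(\|g\|)$ and $\|g\|\ge c>0$, the reverse triangle inequality yields $\|\hat g\| = \|g\|(1+o(1))$, while $\langle g,\hat g\rangle = \|g\|^2 + \langle g,\Delta\rangle$ with $|\langle g,\Delta\rangle|\le \|g\|\|\Delta\| = o(\|g\|^2)$. Hence for $n$ large enough,
\begin{equation*}
\|g\|\|\hat g\| + \langle g,\hat g\rangle \;\asymp\; \|g\|^2.
\end{equation*}
Combining this with the numerator bound above gives the upper estimate
\begin{equation*}
\|g\|\|\hat g\| - \langle g,\hat g\rangle \;\le\; \frac{\|g\|^2\|\Delta\|^2}{\|g\|\|\hat g\| + \langle g,\hat g\rangle} \;\lesssim\; \|\Delta\|^2,
\end{equation*}
which is exactly what is invoked in the proof of Theorem~\ref{thm:class-upp-bound}.

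The main obstacle is the matching lower bound implicit in the $\asymp$ symbol: the numerator $\|g\|^2\|\Delta\|^2 - \langle g,\Delta\rangle^2$ degenerates to zero whenever $\Delta$ is (nearly) parallel to $g$, so no purely algebraic lower bound of order $\|\Delta\|^2$ is available from $\|\Delta\|=o(\|g\|)$ and $\|g\|\ge c$ alone. To close this gap I would exploit the specific setting in which the lemma is used, where $\hat\gamma = \hat\calB^{\rm cp}\times_{m=1}^M \Sigma_m^{1/2}$ differs from $\gamma$ through independent Gaussian fluctuations and CP rank-$R$ perturbations: such $\Delta$ has a genuinely multidimensional component orthogonal to $g$ with high probability, which forces $\langle g,\Delta\rangle^2 \le (1-c_0)\|g\|^2\|\Delta\|^2$ for some $c_0>0$ on the relevant high-probability event. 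Feeding this into the rationalized identity yields the companion lower bound $\|g\|\|\hat g\|-\langle g,\hat g\rangle \gtrsim \|\Delta\|^2$ and completes the $\asymp$. Since only the upper direction is actually used in the downstream proof, the cleanest write-up will state the generic upper bound in full and confine the lower bound to the non-degenerate regime that is already guaranteed by the perturbation analysis in Section~\ref{append:proof:main}.
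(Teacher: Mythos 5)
Your route through the rationalized form
\begin{equation*}
\|g\|\|\hat g\| - \langle g,\hat g\rangle
\;=\;
\frac{\|g\|^2\|\hat g\|^2-\langle g,\hat g\rangle^2}{\|g\|\|\hat g\|+\langle g,\hat g\rangle}
\;=\;
\frac{\|g\|^2\|\Delta\|^2-\langle g,\Delta\rangle^2}{\|g\|\|\hat g\|+\langle g,\hat g\rangle}
\end{equation*}
is genuinely different from the paper's, and it is sharper. The paper writes $\|\gamma\|_F\|\hat\gamma\|_F-\langle\gamma,\hat\gamma\rangle$ as $\|\gamma\|_F^2\sqrt{1+x}-\|\gamma\|_F^2-\langle\gamma,\calE\rangle$ with $x=(2\langle\gamma,\calE\rangle+\|\calE\|_F^2)/\|\gamma\|_F^2$, truncates $\sqrt{1+x}\approx 1+x/2$, and reports the result $\|\calE\|_F^2/2$. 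That truncation is not rigorous: the discarded term $-\|\gamma\|_F^2 x^2/8$ has leading contribution $-\langle\gamma,\calE\rangle^2/(2\|\gamma\|_F^2)$, which by Cauchy--Schwarz can equal $-\|\calE\|_F^2/2$ and cancel the reported answer exactly. Your Lagrange-identity decomposition makes this cancellation structural rather than hidden, and it immediately delivers the correct one-sided estimate: the numerator $\|g\|^2\|\Delta\|^2-\langle g,\Delta\rangle^2\in[0,\|g\|^2\|\Delta\|^2]$ and the denominator is $\asymp\|g\|^2$ under the stated hypotheses, giving $\lesssim\|\Delta\|^2$.

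You are also right to flag the lower bound as a genuine gap in the lemma as stated. The explicit counterexample $\hat\gamma=(1+t)\gamma$ with $t\to 0$ satisfies both hypotheses ($\|\gamma-\hat\gamma\|_F=t\|\gamma\|_F=o(\|\gamma\|_F)$, $\|\gamma\|_F\ge c$) but yields $\|\gamma\|_F\|\hat\gamma\|_F-\langle\gamma,\hat\gamma\rangle=0$ while $\|\gamma-\hat\gamma\|_F^2=t^2\|\gamma\|_F^2>0$, so the two-sided $\asymp$ is false without a non-degeneracy condition on the angle between $\Delta$ and $\gamma$. The paper's own proof silently relies on exactly the cancellation this example exhibits. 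Your diagnosis that only the $\lesssim$ direction is invoked downstream (in the step bounding $|\Delta-\langle\cD,\hat\calB^{\rm cp}\rangle/\hat\Delta|$ in the proof of Theorem~\ref{thm:class-upp-bound}) is correct, so the theorem itself is unaffected; but the lemma should be stated with $\lesssim$, or the lower bound should be conditioned on the high-probability non-parallel regime as you suggest. One small caution: your plan to repair the lower bound by appealing to the specific distribution of $\hat\calB^{\rm cp}$ would require an additional high-probability argument not currently present in the paper, so for the cleanest fix you should simply weaken the lemma's conclusion to an upper bound, which is all that is used.
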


\begin{proof}
Let $\calE = \hat \gamma - \gamma$, when $\norm{\gamma - \hat \gamma}_{\rm F} = o(\|\gamma\|_F)$ and $\norm{\gamma}_{\rm F} \ge c,$ we have
\begin{align*}
\norm{\gamma}_{\rm F} \cdot \norm{\hat \gamma}_{\rm F} - \langle \gamma, \; \hat \gamma \rangle &= \norm{\gamma}_{\rm F} \cdot \norm{\gamma + \calE}_{\rm F} - \langle \gamma, \; \gamma+\calE \rangle \\
&= \norm{\gamma}_{\rm F} \sqrt{\norm{\gamma}_{\rm F}^2 + 2\langle \gamma, \; \calE \rangle + \norm{\calE}_{\rm F}^2} - \norm{\gamma}_{\rm F}^2 - \langle \gamma, \; \calE \rangle \\
&= \norm{\gamma}_{\rm F}^2 \sqrt{1 + \frac{2\langle \gamma, \; \calE \rangle + \norm{\calE}_{\rm F}^2}{\norm{\gamma}_{\rm F}^2}} - \norm{\gamma}_{\rm F}^2 - \langle \gamma, \; \calE \rangle \\
&\asymp \norm{\gamma}_{\rm F}^2 \big( 1+\frac{1}{2} \frac{2\langle \gamma, \; \calE \rangle + \norm{\calE}_{\rm F}^2}{\norm{\gamma}_{\rm F}^2} \big) - \norm{\gamma}_{\rm F}^2 - \langle \gamma, \; \calE \rangle \\
&= \frac{\norm{\calE}_{\rm F}^2}{2} \asymp \norm{\hat \gamma - \gamma}_{\rm F}^2.
\end{align*}
\end{proof}

The following lemma illustrates the relationship between the risk function $\cR_{\btheta}(\hat\delta) -\cR_{\rm opt}(\btheta)$ and a more “standard” risk function $L_{\theta}(\hat \delta)$, which fulfills a role similar to that of the triangle inequality, as demonstrated in Lemma \ref{lemma:probability inequality}. Lemmas \ref{lemma:the first reduction} and \ref{lemma:probability inequality} correspond to Lemmas 3 and 4, respectively, in \cite{cai2019high}.

\begin{lemma} \label{lemma:the first reduction}
Let $\cZ \sim \frac{1}{2}\cT\cN(\cM_1; \; \bSigma) + \frac{1}{2}\cT\cN(\cM_2; \; \bSigma)$ with parameter $\theta = (\cM_1, \; \cM_2, \; \bSigma)$ where $\bSigma= [\Sigma_m]_{m=1}^M$. If a classifier $\hat \delta$ satisfies $L_{\theta}(\hat \delta) = o(1)$ as $n \rightarrow \infty$, then for sufficiently large n, 
\begin{align*}
\cR_{\btheta}(\hat\delta) -\cR_{\rm opt}(\btheta) \ge \frac{\sqrt{2\pi}\Delta}{8} e^{\Delta^2/8} \cdot L_{\theta}^2(\hat \delta) .    
\end{align*}
\end{lemma}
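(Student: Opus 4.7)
The plan is to reduce the excess risk to a one-dimensional problem in the Fisher discriminant statistic, then apply a bathtub-type rearrangement inequality together with a small-$L$ Taylor expansion to extract the sharp constant $\frac{\sqrt{2\pi}\Delta}{8}e^{\Delta^2/8}$.

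First I would use the standard identity for excess $0$-$1$ risk in binary classification,
\begin{equation*}
\cR_{\btheta}(\hat\delta)-\cR_{\rm opt}(\btheta)=\EE\bigl[|2\eta(\cZ)-1|\,\bbone\{\hat\delta(\cZ)\neq \Upsilon_\theta(\cZ)\}\bigr],
\end{equation*}
where $\eta(\cX)=\PP(Y=2\mid \cX)$. Under the TGMM with equal priors, $\log(p_2/p_1)(\cX)=\langle \cX-\cM,\calB\rangle=:T(\cX)$, so $\eta(\cX)=\sigma(T(\cX))$ and $|2\eta-1|=|\tanh(T/2)|$. Let $V:=T(\cZ)/\Delta$. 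Under the mixture distribution, $V$ has density
\begin{equation*}
g(v)=\tfrac{1}{2}\varphi(v-\Delta/2)+\tfrac{1}{2}\varphi(v+\Delta/2),
\end{equation*}
which is even with $g(0)=\varphi(\Delta/2)=(2\pi)^{-1/2}e^{-\Delta^2/8}$. Writing $q(v)=\PP(\hat\delta(\cZ)\neq \Upsilon_\theta(\cZ)\mid V=v)\in[0,1]$, I get $L_\theta(\hat\delta)=\int q\,g$ and $\cR_{\btheta}(\hat\delta)-\cR_{\rm opt}(\btheta)=\int |\tanh(\Delta v/2)|\,q(v)g(v)\,dv$.

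Next I would apply a bathtub-rearrangement argument: because $|\tanh(\Delta v/2)|$ is symmetric and strictly increasing in $|v|$, for any $q:\RR\to[0,1]$ with $\int qg=L_\theta$, defining $t^*$ by $G(t^*):=\int_{-t^*}^{t^*}g=L_\theta$, a direct two-sided comparison (splitting the integrand against the indicator $\bbone\{|v|\le t^*\}$ and using that $\psi(v)\le \psi(t^*)\le\psi(|v|>t^*)$) yields
\begin{equation*}
\cR_\btheta(\hat\delta)-\cR_{\rm opt}(\btheta)\;\ge\;\int_{-t^*}^{t^*}|\tanh(\Delta v/2)|\,g(v)\,dv.
\end{equation*}
This replaces the optimization over classifiers by a deterministic integral depending only on $L_\theta$ and $\Delta$.

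Finally, since $L_\theta=o(1)$ forces $t^*\to 0$, I would Taylor-expand: $G(t^*)=2g(0)t^*+O((t^*)^3)$ gives $t^*=L_\theta/(2g(0))\cdot(1+o(1))$, and $\int_{-t^*}^{t^*}|\tanh(\Delta v/2)|g(v)\,dv=g(0)\Delta (t^*)^2/2\cdot(1+o(1))$. Substituting yields
\begin{equation*}
\cR_\btheta(\hat\delta)-\cR_{\rm opt}(\btheta)\;\ge\;\frac{\sqrt{2\pi}\,\Delta}{8}\,e^{\Delta^2/8}\,L_\theta^2(\hat\delta)\cdot(1+o(1)),
\end{equation*}
which gives the stated inequality for $n$ large enough. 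The main obstacle is keeping the constant sharp: I need $\tanh(\Delta v/2)\sim \Delta v/2$ uniformly on $|v|\le t^*$, which requires $\Delta t^*\to 0$, i.e.\ $\Delta L_\theta\,e^{\Delta^2/8}\to 0$. When $\Delta$ is bounded this follows immediately from $L_\theta\to 0$; when $\Delta$ grows with $n$ (as in the moderate-dimension regime of Theorem \ref{thm:class-upp-bound}), I would verify this condition against the available rates on $L_\theta$, or else absorb a harmless multiplicative constant into ``$n$ sufficiently large''.
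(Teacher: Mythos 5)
The paper does not prove this lemma itself but cites it from Cai and Zhang (2019); your reconstruction matches the structure of the published argument. Reducing the excess $0$-$1$ risk to $\EE[|\tanh(\Delta V/2)|\,q(V)]$ along the one-dimensional Fisher statistic $V = T(\cZ)/\Delta$ with mixture density $g(v)=\tfrac12\varphi(v-\Delta/2)+\tfrac12\varphi(v+\Delta/2)$ is correct (the crucial facts being that $\eta(\cX)$, and hence $|2\eta-1|$, is a function of $V$ alone, and that $g$ is even). The bathtub step is also correct: since $|\tanh(\Delta v/2)|$ is even and increasing in $|v|$, the worst disagreement pattern concentrates in a symmetric band $[-t^*,t^*]$ with $G(t^*)=L_\theta$, and the two-sided comparison against $\psi(t^*)$ does give $\cR_{\btheta}(\hat\delta)-\cR_{\rm opt}(\btheta)\ge\int_{-t^*}^{t^*}|\tanh(\Delta v/2)|\,g(v)\,dv$. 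Your Taylor expansion is arithmetic-correct and produces the constant $\sqrt{2\pi}\Delta e^{\Delta^2/8}/8$.

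Two remarks. First, as you note, the expansion yields the lower bound only up to a $(1+o(1))$ factor, and the sign of the correction is not uniform: $\tanh(u)<u$ hurts the bound, while the correction from $G(t^*)$ versus $2g(0)t^*$ helps (when $g''(0)<0$, equivalently $\Delta<2$, one gets $t^* > L_\theta/(2g(0))$, and oppositely when $\Delta>2$). So, strictly speaking, the displayed constant holds only asymptotically; this is consistent with the lemma's "for sufficiently large $n$" and immaterial downstream since the constant is absorbed into $C_\gamma$ in Theorem~\ref{thm:class-lower-bound}. Second, your flagged condition $\Delta\,L_\theta\,e^{\Delta^2/8}\to 0$ (equivalently $\Delta t^*\to 0$) is genuinely needed for the sharp constant to emerge — the hypothesis $L_\theta=o(1)$ alone does not imply it when $\Delta\to\infty$. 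In the applications in the paper one always has either $\Delta\asymp 1$ (where $L_\theta\to 0$ suffices) or $L_\theta$ itself is of order $\Delta^{-1}e^{-\Delta^2/8}\sqrt{R/n}$, so this condition is satisfied, but it is worth stating explicitly as a hypothesis of the reconstructed lemma.
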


\begin{lemma} \label{lemma:probability inequality}
Let $\theta = (\cM, \; -\cM, \; [I_{d_m}]_{m=1}^M)$ and $\Tilde{\theta} = (\Tilde{\cM}, \; -\Tilde{\cM}, \; [I_{d_m}]_{m=1}^M)$ with $\norm{\cM}_{\rm F} = \norm{\Tilde{\cM}}_{\rm F} = \Delta/2$. For any classifier $\delta$, if $\norm{\cM - \Tilde{\cM}}_{\rm F} = o(1)$ as $n \rightarrow \infty$, then for sufficiently large n,
\begin{align*}
L_{\theta}(\delta) + L_{\Tilde{\theta}}(\delta) \ge \frac{1}{\Delta} e^{-\Delta^2/8} \cdot \norm{\cM - \Tilde{\cM}}_{\rm F} .    
\end{align*}
\end{lemma}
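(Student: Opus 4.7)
\textbf{Proof plan for Lemma \ref{lemma:probability inequality}.}

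My plan is to reduce $L_{\theta}(\delta)+L_{\tilde\theta}(\delta)$ to the measure of the disagreement region between the two Fisher rules, then evaluate this measure by a two-dimensional Gaussian calculation. The key identity $\mathbf{1}\{\delta\neq\delta_\theta\}+\mathbf{1}\{\delta\neq\delta_{\tilde\theta}\}\ge \mathbf{1}\{\delta_\theta\neq\delta_{\tilde\theta}\}$, combined with bounding the two mixture densities below by their pointwise minimum, will give
\begin{equation*}
L_{\theta}(\delta)+L_{\tilde\theta}(\delta)\ \ge\ \int \min\bigl(p_\theta,p_{\tilde\theta}\bigr)\,\mathbf{1}\{\delta_\theta\neq\delta_{\tilde\theta}\}\,d\mu,
\end{equation*}
valid for every classifier $\delta$. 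Crucially, the right-hand side does not depend on $\delta$ at all, which is what makes the lemma unconditional.

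Next I would carry out a reduction to two dimensions. Vectorize the tensors and consider the (at most) two-dimensional subspace $V=\mathrm{span}\{\vect(\cM),\vect(\tilde\cM)\}$ with the orthonormal basis $e_1=\cM/\|\cM\|_F$, $e_2=(\tilde\cM-\cos\phi\,\cM)/(\|\tilde\cM\|_F\sin\phi)$, where $\phi$ is the angle between $\cM$ and $\tilde\cM$. Writing $W_1=\langle\cZ,e_1\rangle$, $W_2=\langle\cZ,e_2\rangle$, and integrating out the orthogonal $d-2$ coordinates (which contribute the same isotropic Gaussian factor under both measures), the problem collapses to a planar one. Under $\PP_\theta$ the pair $(W_1,W_2)$ has density $f_\theta(w)=\tfrac12\phi_2(w-m_1)+\tfrac12\phi_2(w+m_1)$ with $m_1=(\Delta/2,0)$, and under $\PP_{\tilde\theta}$ the density $f_{\tilde\theta}$ is the analogous mixture centered at $\pm m_2=\pm(\Delta/2)(\cos\phi,\sin\phi)$. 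The Fisher rules read $\delta_\theta=\mathbf{1}\{W_1\ge 0\}$ and $\delta_{\tilde\theta}=\mathbf{1}\{W_1\cos\phi+W_2\sin\phi\ge 0\}$, so the disagreement region is the symmetric pair of wedges
\begin{equation*}
\mathcal{W}=\bigl\{(W_1,W_2):\ 0\le W_1<-W_2\tan\phi\bigr\}\ \cup\ \bigl\{(W_1,W_2):\ -W_2\tan\phi<W_1<0\bigr\}.
\end{equation*}

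For the 2D integral, I will parameterize the upper wedge by $W_2=t>0$, $W_1\in(-t\tan\phi,0)$, bound $\min(f_\theta,f_{\tilde\theta})\ge c\,f_\theta$ on this region (since $\phi=o(1)$ makes $m_1$ and $m_2$ differ by $o(1)$, the two mixture densities are within a bounded ratio by an elementary Gaussian comparison), and use
\begin{equation*}
f_\theta(W_1,t)\ \ge\ \tfrac{1}{2\pi}\,\exp\!\bigl\{-\tfrac12[(W_1-\Delta/2)^2+t^2]\bigr\}\ \ge\ \tfrac{c}{2\pi}e^{-\Delta^2/8}\,e^{-t^2/2}
\end{equation*}
for $|W_1|\le 1$ (absorbing the residual $e^{W_1\Delta/2-W_1^2/2}$ into a constant). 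Integrating the wedge width $t\tan\phi$ against $e^{-t^2/2}$ then yields
\begin{equation*}
\int_{\mathcal{W}}\min(f_\theta,f_{\tilde\theta})\,dw\ \gtrsim\ \tan\phi\cdot e^{-\Delta^2/8}\int_0^\infty t\,e^{-t^2/2}\,dt\ =\ \tan\phi\cdot e^{-\Delta^2/8}.
\end{equation*}
Finally I translate $\tan\phi$ back to $\|\cM-\tilde\cM\|_F$: from $\|\cM-\tilde\cM\|_F^2=2(\Delta/2)^2(1-\cos\phi)$ one obtains $\sin\phi=2\|\cM-\tilde\cM\|_F/\Delta\cdot(1+o(1))$ under $\|\cM-\tilde\cM\|_F=o(1)$, so $\tan\phi\gtrsim\Delta^{-1}\|\cM-\tilde\cM\|_F$, which completes the proof.

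The main obstacle will be the last step of the 2D calculation: making sure the exponent is exactly $-\Delta^2/8$ without picking up an extra $\Delta$-dependent factor. The temptation is to restrict $(W_1,W_2)$ to a small ball around the origin, but the origin is at distance $\Delta/2$ from the mixture centers, so a naive box bound gives $e^{-(\Delta/2+R)^2/2}$ and the $R^2$ wedge-area factor kills the bound when $R$ is small. Integrating in the $W_2$-direction to infinity with the density $e^{-t^2/2}$ carrying the convergence, as above, is what avoids this loss and produces the sharp $e^{-\Delta^2/8}$ prefactor; the $|W_1|\le 1$ restriction plays no essential role because the wedge width shrinks linearly in $\tan\phi$ and the contribution from $|W_1|>1$ is negligible.
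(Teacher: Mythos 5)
The paper gives no proof of this lemma; it cites Lemma~4 of \cite{cai2019high} and invokes it after vectorizing the tensors, so your plan is assessed on its own. Its architecture is the right one and is essentially Cai--Zhang's argument: the indicator triangle inequality and pointwise $\min$ reduce the left side to the mass of the disagreement region, the problem factors onto the 2-plane $\mathrm{span}\{\vect\cM,\vect\tilde\cM\}$, the disagreement set is a pair of opposite wedges of angle $\phi$ (the angle between $\vect\cM$ and $\vect\tilde\cM$), and $\|\cM-\tilde\cM\|_F=\Delta\sin(\phi/2)$ converts the wedge mass into the stated rate.

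Two steps of the 2D calculation fail as written. On the upper wedge $W_2=t>0$, $W_1\in(-t\tan\phi,0)$ you lower bound $f_\theta$ by the mixture component centered at $m_1=(\Delta/2,0)$, which is the \emph{farther} center since $W_1\le 0$ there; the residual $e^{W_1\Delta/2-W_1^2/2}$ is then as small as $e^{-\Delta/2}$ over $|W_1|\le 1$ and cannot be absorbed into a $\Delta$-free constant, so for $\Delta\to\infty$ (a regime Theorem~\ref{thm:class-lower-bound} needs) the exponent degrades to $e^{-\Delta^2/8-\Delta/2}$. Likewise, the pointwise claim $\min(f_\theta,f_{\tilde\theta})\ge c\,f_\theta$ ``by an elementary Gaussian comparison'' is not true: the density ratio involves $e^{-\langle w,\,m_1-m_2\rangle}$, which is unbounded in $\|w\|$ even with $\|m_1-m_2\|=o(1)$, and is tamed only after the Gaussian-tail integration you later invoke. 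Both problems disappear by applying AM--GM to the two mixture components: since $\|m_1\|=\|m_2\|=\Delta/2$,
\begin{equation*}
\min\bigl(f_\theta(w),f_{\tilde\theta}(w)\bigr)\ \ge\ \tfrac{1}{2\pi}\,e^{-\|w\|^2/2-\Delta^2/8}
\end{equation*}
pointwise, so the wedge mass equals $e^{-\Delta^2/8}$ times the angular fraction $2\phi/(2\pi)$ of the plane. Note this delivers the prefactor $2/\pi$, not the displayed $1$; in fact the constant $1$ is unattainable --- for $\delta=\delta_\theta$ one has $L_\theta+L_{\tilde\theta}=\PP_{\tilde\theta}(\delta_\theta\neq\delta_{\tilde\theta})=\tfrac{\phi}{\pi}e^{-\Delta^2/8}(1+o(1))$, which falls short of the right side $\sin(\phi/2)\,e^{-\Delta^2/8}$ --- so do not try to manufacture it; a universal constant is implicit and the downstream use in Theorem~\ref{thm:class-lower-bound} only needs the rate. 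Your closing observation about integrating $W_2$ to infinity rather than boxing around the origin is the correct instinct for keeping the exponent at $\Delta^2/8$.
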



Although $L_{\theta}(\hat \delta)$ is not a distance function and does not satisfy an exact triangle inequality, the following lemma provides a variant of Fano's lemma. It suggests that it suffices to provide a lower bound for $L_{\theta}(\hat \delta)$, and $L_{\theta}(\hat \delta)$ satisfies an approximate triangle inequality (Lemma \ref{lemma:probability inequality}). 

\begin{lemma}[\cite{tsybakov2009}] \label{lemma:Tsybakov variant}
Let $N \ge 2$ and $\theta_0, \theta_1, \ldots ,\theta_N \in \Theta_d$. For some constants $\alpha_0 \in (0, 1/8), \gamma > 0$ and any classifier $\hat\delta$, if ${\rm KL}(\PP_{\theta_i}, \PP_{\theta_j}) \le \alpha_0 \log N/n$ for all $1 \le i \le N$, and $L_{\theta_i}(\hat\delta) < \gamma$ implies $L_{\theta_j}(\hat\delta) \ge \gamma$ for all $0 \le i \neq j \le N$, then 
\begin{align*}
\inf_{\hat\delta} \sup_{i \in [N]} \PP_{\theta_i}(L_{\theta_i}(\hat\delta)) \ge \gamma) \ge \frac{\sqrt{N}}{1+\sqrt{N}} (1-2\alpha_0-\sqrt{\frac{2\alpha_0}{\log N}}) >0  .    
\end{align*}
\end{lemma}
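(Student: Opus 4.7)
}

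The plan is to reduce the minimax lower bound on the classification risk $\PP_{\theta_i}(L_{\theta_i}(\hat\delta)\ge\gamma)$ to a minimax lower bound on the error probability of a multiple-hypothesis testing problem among $\theta_0,\theta_1,\ldots,\theta_N$, and then invoke the standard Tsybakov-Fano bound (Theorem 2.5 of \cite{tsybakov2009}) under the stated KL constraint. The separation-type condition ``$L_{\theta_i}(\hat\delta)<\gamma\Rightarrow L_{\theta_j}(\hat\delta)\ge\gamma$ for all $j\ne i$'' plays the role of the metric separation $d(\theta_i,\theta_j)\ge 2s$ used in Tsybakov's theorem: it forces the sets $A_i:=\{L_{\theta_i}(\hat\delta)<\gamma\}$ to be pairwise disjoint, which is exactly the structural property needed to carry the testing-to-estimation reduction over to our (non-metric) loss $L_{\theta}$.

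Concretely, first I would build from any given classifier $\hat\delta$ a (possibly randomized) test $\psi:\calZ^n\to\{0,1,\ldots,N\}$ by setting $\psi=i$ on the event $A_i$ (which is well-defined by the disjointness above) and, say, $\psi=0$ on the complement of $\bigcup_i A_i$. By construction $\{\psi\neq i\}\supseteq A_i^c=\{L_{\theta_i}(\hat\delta)\ge\gamma\}^c$ for $i\ge 1$, so
\begin{equation*}
\PP_{\theta_i}\big(L_{\theta_i}(\hat\delta)\ge\gamma\big)\;\ge\;\PP_{\theta_i}(\psi\neq i),\qquad i=1,\ldots,N,
\end{equation*}
and taking $\sup_i$ then $\inf_{\hat\delta}$ yields $\inf_{\hat\delta}\sup_i \PP_{\theta_i}(L_{\theta_i}(\hat\delta)\ge\gamma)\ge \inf_\psi\sup_i\PP_{\theta_i}(\psi\neq i)$, where the right-hand infimum is over all measurable $\{0,1,\ldots,N\}$-valued tests. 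This converts the problem into a pure multiple-testing lower bound over the reduced hypothesis set.

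The main step is then to lower-bound $\inf_\psi\sup_i\PP_{\theta_i}(\psi\neq i)$ using the KL hypothesis $\mathrm{KL}(\PP_{\theta_i},\PP_{\theta_j})\le\alpha_0\log N/n$. Lifting to the $n$-fold product measures $\PP_{\theta_i}^{\otimes n}$ gives $\mathrm{KL}(\PP_{\theta_i}^{\otimes n},\PP_{\theta_j}^{\otimes n})\le\alpha_0\log N$ by tensorization (or, if the $\PP_{\theta_i}$ already denote the joint distributions of the sample, the $1/n$ factor is absent and one works directly with the given bound). In either case the bound takes the form $\tfrac{1}{N}\sum_{i=1}^N \mathrm{KL}(\PP_{\theta_i},\PP_{\theta_0})\le\alpha_0\log N$, which is exactly the hypothesis of Tsybakov's Theorem 2.5. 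Applying that theorem delivers
\begin{equation*}
\inf_\psi\max_{0\le i\le N}\PP_{\theta_i}(\psi\neq i)\;\ge\;\frac{\sqrt{N}}{1+\sqrt{N}}\Bigl(1-2\alpha_0-\sqrt{\tfrac{2\alpha_0}{\log N}}\Bigr),
\end{equation*}
and combining with the reduction above finishes the proof.

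The hard part will be justifying the sharp constant $\sqrt{N}/(1+\sqrt{N})$; this is not just Fano's inequality but its refined form due to Tsybakov, which combines the data-processing inequality for KL with a careful likelihood-ratio truncation. I would cite \cite{tsybakov2009} for this step rather than reprove it. The only nontrivial aspect specific to our setting is verifying that the disjointness of the $A_i$ coming from the separation condition is strong enough to replace the usual metric assumption $d(\theta_i,\theta_j)\ge 2s$, but this is immediate: the metric separation is used in Tsybakov's proof only to ensure the decoder $\psi$ built from the estimator is well-defined and satisfies $\{\psi\neq i\}\supseteq\{d(\hat\theta,\theta_i)<s\}^c$, which is precisely the role played here by our separation property on $L_\theta$.
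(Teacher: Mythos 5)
Your proposal is correct and matches the intended argument: the paper itself offers no proof of this lemma, deferring entirely to \cite{tsybakov2009}, and your reduction-to-testing step (using the separation condition to make the events $\{L_{\theta_i}(\hat\delta)<\gamma\}$ pairwise disjoint, hence a well-defined decoder $\psi$) followed by Tsybakov's Theorem 2.5 under the stated KL constraint is exactly the standard derivation of this variant. The only blemish is a harmless typo where you write $A_i^c=\{L_{\theta_i}(\hat\delta)\ge\gamma\}^c$ instead of $A_i^c=\{L_{\theta_i}(\hat\delta)\ge\gamma\}$; the displayed inequality that follows is correct.
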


\begin{lemma}[Varshamov-Gilbert Bound, \cite{tsybakov2009}] \label{lemma:Varshamov-Gilbert Bound}
Consider the set of all binary sequences of length m: $\Omega = \big\{ \omega= (\omega_1,\ldots,\omega_m), \omega_i \in \{ 0,1\} \big\} = \{0,1\}^m$. Let $m \ge 8$, then there exists a subset $\{ \omega^{(0)}, \omega^{(1)},$ $\ldots, \omega^{(N)} \}$ of $\Omega$ such that $\omega^{(0)}=(0,\ldots, 0)$, $\rho_H(\omega^{(i)}, \omega^{(j)}) \ge m/8, \forall 0 \le i < j \le N$, and $N \ge 2^{m/8}$, where $\rho_H$ denotes the Hamming distance.
\end{lemma}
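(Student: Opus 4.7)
The plan is to give the classical greedy-plus-volume argument that underlies the Varshamov--Gilbert bound, mirroring the treatment in \cite{tsybakov2009}. First I would build the code greedily: set $\omega^{(0)}=(0,\ldots,0)$, and at each step append any $\omega\in\{0,1\}^m$ whose Hamming distance from every previously chosen codeword is at least $m/8$, stopping once no such $\omega$ exists. Call $N+1$ the final number of codewords. The pairwise separation $\rho_H(\omega^{(i)},\omega^{(j)})\geq m/8$ then holds automatically by construction, so the only remaining task is to show $N\geq 2^{m/8}$.

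The key step is a covering argument. At termination, every $\omega\in\{0,1\}^m$ must lie within Hamming distance strictly less than $m/8$ of some $\omega^{(i)}$ — otherwise it could still have been added, contradicting termination. Hence the closed balls $B(\omega^{(i)},\lfloor m/8\rfloor)$ cover $\{0,1\}^m$, giving
\begin{equation*}
(N+1)\,V(m,\lfloor m/8\rfloor)\;\geq\;2^m, \qquad V(m,r):=\sum_{k=0}^{r}\binom{m}{k}.
\end{equation*}
I would then bound the ball volume by the standard entropy estimate $V(m,\lfloor pm\rfloor)\leq 2^{m H(p)}$, where $H(p)=-p\log_2 p-(1-p)\log_2(1-p)$, which follows from $\binom{m}{k}\leq 2^{m H(k/m)}$ together with monotonicity of $\binom{m}{k}$ in $k$ up to $k\leq m/2$. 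For $p=1/8$, direct computation yields $H(1/8)=\tfrac{3}{8}+\tfrac{7}{8}\log_2(8/7)<\tfrac{7}{8}$, hence $V(m,\lfloor m/8\rfloor)\leq 2^{7m/8}$. Substituting this back gives $N+1\geq 2^{m-7m/8}=2^{m/8}$, which is exactly the claimed lower bound.

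The main obstacle is really only numerical bookkeeping: one must verify $1-H(1/8)\geq 1/8$ carefully, and confirm that the hypothesis $m\geq 8$ ensures $\lfloor m/8\rfloor\geq 1$ so that both the greedy stopping criterion and the entropy bound apply without edge cases (in particular so that no codeword is simultaneously within the exclusion ball of itself and usable). Nothing in the proof depends on the tensor-LDA machinery developed earlier in the paper; this lemma functions purely as a combinatorial packing tool, used only when constructing the Hamming-separated hypothesis family behind the minimax lower bound in Theorem~\ref{thm:class-lower-bound}.
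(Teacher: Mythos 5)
The paper does not give its own proof of this lemma; it is quoted directly from Tsybakov (2009), so there is no internal argument to compare against. Your greedy-packing-plus-volume route is a standard and legitimate way to prove the Varshamov--Gilbert bound. Tsybakov's own treatment bounds the Hamming-ball volume via a Hoeffding tail estimate on a $\mathrm{Binomial}(m,1/2)$ variable rather than the entropy estimate $V(m,\lfloor pm\rfloor)\le 2^{mH(p)}$, but the two devices are interchangeable here and nothing else in the argument depends on the choice.

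There is, however, a real off-by-one gap at the end. With the lossy estimate $H(1/8)<7/8$, the covering inequality $(N+1)\,V(m,\lfloor m/8\rfloor)\ge 2^m$ yields only $N+1\ge 2^{m/8}$, i.e.\ $N\ge 2^{m/8}-1$, which is strictly weaker than the lemma's conclusion $N\ge 2^{m/8}$; your closing line ``which is exactly the claimed lower bound'' is not correct. The fix is to keep the slack you threw away: $H(1/8)=\tfrac38+\tfrac78\log_2(8/7)\approx 0.544$, so in fact $N+1\ge 2^{(1-H(1/8))m}\ge 2^{0.456\,m}$, and since $0.456\,m\ge m/8+1$ for every $m\ge 4$, this gives $N+1\ge 2\cdot 2^{m/8}$, hence $N\ge 2^{m/8}$ under the hypothesis $m\ge 8$. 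Note also that the verification you propose, ``$1-H(1/8)\ge 1/8$,'' only certifies $N+1\ge 2^{m/8}$; you need the strict margin above, not just the inequality, to absorb the $+1$.
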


\end{appendices}

\end{document}